\documentclass{article}




\usepackage[final,nonatbib]{neurips_2020}


\usepackage[utf8]{inputenc} 
\usepackage[T1]{fontenc}    
\usepackage{hyperref}       
\usepackage{url}            
\usepackage{booktabs}       
\usepackage{amsfonts}       
\usepackage{nicefrac}       
\usepackage{microtype}      
\usepackage{cite}
\usepackage{amsmath,amssymb,amsfonts}
\usepackage{graphics}
\usepackage{graphicx}
\usepackage{textcomp}
\usepackage{xcolor}
\usepackage{mathtools}
\usepackage{enumitem}
\usepackage{bbm}
\usepackage{amsthm}
\usepackage[caption = false]{subfig}
\usepackage{tikz}
\usepackage{pgfplots}
\usepgfplotslibrary{fillbetween}
\pgfplotsset{compat=1.9}
\usepackage{algorithm}
\usepackage[noend]{algpseudocode}
\usepackage{adjustbox}

\def\BibTeX{{\rm B\kern-.05em{\sc i\kern-.025em b}\kern-.08em
    T\kern-.1667em\lower.7ex\hbox{E}\kern-.125emX}}


\newcommand\given{\;|\;}
\newcommand{\cP}{\mathcal{P}}

\newcommand{\gtMat}[0]{M_0}
\newcommand{\matSet}[0]{\mathcal{M}^{(\diff)}}
\newcommand{\vecV}[2]{v_{#1}^{#2}}
\newcommand{\vecU}[2]{u_{#1}^{#2}}

\newcommand{\grpG}[2]{G_{#1}^{#2}}
\newcommand{\kUsrs}[4]{k_{#1,#2}^{(#3,#4)}}
\newcommand{\dElmnts}[3]{d_{#1}^{#3}(#2)}
\newcommand{\estML}[0]{\psi_{\text{ML}}}

\newcommand{\numDiffElmnt}[2]{\Lambda (#1,#2)}
\newcommand{\hamDist}[2]{d_{\text{H}} \left(#1,#2\right)}
\newcommand{\diff}{\delta}
\newcommand{\intraTau}[0]{\diff_{{g}}}
\newcommand{\interTau}[0]{\diff_{{c}}}
\newcommand{\numEdge}[3]{e_{#1} \left(#2,#3\right)}
\newcommand{\indicatorFn}[1]{\mathbbm{1} \left[#1\right]}

\newcommand{\Ig}[0]{I_g}
\newcommand{\IcOne}[0]{I_{c1}}
\newcommand{\IcTwo}[0]{I_{c2}}
\newcommand{\Tau}{\mathcal{T}}
\newcommand{\tG}[2]{\tilde{G}_{#1}^{#2}}
\newcommand{\uG}[2]{\overline{G}_{#1}^{#2}}
\newcommand{\E}{\mathbb{E}}
\newcommand{\M}[1]{M_{\langle #1\rangle}}

\newcommand{\vecVhat}[2]{\widehat{v}_{#1}^{#2}}
\newcommand{\vecVElmnt}[3]{v_{#1}^{#2}(#3)}
\newcommand{\vecVhatElmnt}[3]{\widehat{v}_{#1}^{#2}(#3)}
\newcommand{\grpGhat}[3]{\widehat{G}_{#1}^{#2}(#3)}

\newcommand{\deltaDiff}[3]{\rho_{#1,#2}(#3)}
\newcommand{\Yelmnt}[2]{Y_{#1#2}}
\newcommand{\Zelmnt}[2]{Z_{#1#2}}
\newcommand{\misClassUsr}[1]{R_{#1}}
\newcommand{\misClassFrac}[1]{\eta_{#1}}

\newcommand{\estPrmtr}[1]{\widehat{#1}}
\newcommand{\alphaTild}[0]{\widetilde{\alpha}}
\newcommand{\betaTild}[0]{\widetilde{\beta}}
\newcommand{\gammaTild}[0]{\widetilde{\gamma}}

\newcommand{\userPairSet}[2]{\mathcal{P}_{#1}\left(#2\right)}
\newcommand{\alphaEdge}[0]{\alpha}
\newcommand{\betaEdge}[0]{\beta}
\newcommand{\gammaEdge}[0]{\gamma}
\newcommand{\cV}{\mathcal{V}}
\newcommand{\cZ}{\mathcal{Z}}
\newcommand{\tupleSetDelta}[0]{\mathcal{T}^{(\delta)}}

\newcommand{\misClassfSet}[2]{\mathcal{P}_{#1 \rightarrow #2}}
\newcommand{\diffEntriesSet}[0]{\mathcal{P}_{\text{d}}}
\newcommand{\Pleftrightarrow}[2]{P_{#1 \leftrightarrow #2}}
\newcommand{\Ir}{I_r}
\newcommand{\TsmallErr}[0]{\mathcal{T}_{\textrm{small}}^{(\delta)}}
\newcommand{\TlargeErr}[0]{\mathcal{T}_{\textrm{large}}^{(\delta)}}
\newcommand{\dElmntsGen}[4]{d_{#1,#2}^{(#3,#4)}}
\newcommand{\partitionSet}[0]{\mathcal{Z}}

\newcommand{\relabelConst}[0]{\tau}
\newcommand{\numErrColConfig}[0]{\kappa}
\newcommand{\totNumErrCol}[1]{f^{(#1)}}
\newcommand{\numErrColElemnt}[2]{f_{#1}^{(#2)}}
\newcommand{\colVecRhat}[1]{w_{#1}}
\newcommand{\ratingVecSet}[1]{\mathcal{R}^{(#1)}}
\newcommand{\kUsrsHat}[4]{\widehat{k}_{#1,#2}^{(#3,#4)}}
\newcommand{\dElmntsGenHat}[4]{\widehat{d}_{#1, #2}^{\:(#3, #4)}}

\newcommand{\gtRatingCluster}[1]{R_{0}^{(#1)}}
\newcommand{\gtRatingClusterHat}[1]{\widehat{R}^{(#1)}}

\newcommand{\XRatingCluster}[1]{R^{(#1)}}

\newcommand{\TlargeOne}[0]{\mathcal{R}_{1}}
\newcommand{\TlargeTwo}[0]{\mathcal{R}_{2}}

\newcommand{\TlargeOneOne}[0]{\mathcal{R}_{1,1}}
\newcommand{\TlargeOneTwo}[0]{\mathcal{R}_{1,2}}
\newcommand{\TlargeOneThree}[0]{\mathcal{R}_{1,3}}

\DeclareMathOperator*{\argmax}{arg\,max}

\newtheorem{theorem}{Theorem}
\newtheorem{lemma}{Lemma}

\newtheorem{prop}{Proposition}
\newtheorem{remark}{Remark}

\title{Matrix Completion with Hierarchical \\Graph Side Information}

\author{
	Adel Elmahdy\thanks{Equal contribution.} \\
	ECE, University of Minnesota\\
	\texttt{adel@umn.edu} \\
	\And
	Junhyung Ahn\footnotemark[1] \\
	EE, KAIST\\
	\texttt{tonyahn96@kaist.ac.kr } \\
	\AND
	\hspace{6mm}Changho Suh \\
	\hspace{6mm}EE, KAIST\\
	\hspace{6mm}\texttt{chsuh@kaist.ac.kr} \\
	\And
	\hspace{6mm}Soheil Mohajer \\
	\hspace{6mm}ECE, University of Minnesota\\
	\hspace{6mm}\texttt{soheil@umn.edu}
}

\allowdisplaybreaks

\begin{document}
\maketitle

\begin{abstract}
We consider a matrix completion problem that exploits social or item similarity graphs as side information. 
We develop a universal, parameter-free, and computationally efficient algorithm that starts with hierarchical graph clustering and then iteratively refines estimates both on graph clustering and matrix ratings. 
Under a hierarchical stochastic block model that well respects practically-relevant social graphs and a low-rank rating matrix model (to be detailed), we demonstrate that our algorithm achieves the information-theoretic limit on the number of observed matrix entries (i.e., optimal sample complexity) that is derived by maximum likelihood estimation together with a lower-bound impossibility result. 
One consequence of this result is that exploiting the \emph{hierarchical} structure of social graphs yields a substantial gain in sample complexity relative to the one that simply identifies different groups without resorting to the relational structure across them. 
We conduct extensive experiments both on synthetic and real-world datasets to corroborate our theoretical results as well as to demonstrate significant performance improvements over other matrix completion algorithms that leverage graph side information.
\end{abstract}

\section{Introduction}
Recommender systems have been powerful in a widening array of applications for providing users with   relevant items of their potential interest~\cite{koren2009matrix}. A prominent well-known technique for operating the systems is low-rank matrix completion~\cite{nguyen2019low,fazel2002matrix,candes2009exact,candes2010matrix,keshavan2010matrix,candes2010power,cai2010singular,fornasier2011low,mohan2012iterative,lee2010admira,wang2014rank,tanner2016low,wen2012solving,dai2705set,vandereycken2013low,hu2012fast,gotoh2018dc}: Given partially observed entries of an interested matrix, the goal is to predict the values of missing entries. One challenge that arises in the big data era is the so-called \emph{cold start problem} in which high-quality recommendations are not feasible for new users/items that bear little or no information. One natural and popular way to address the challenge is to exploit other available side information. Motivated by the social homophily theory~\cite{mcpherson2001birds} that users within the same community are more likely to share similar preferences, social networks such as Facebook's friendship graph have often been employed to improve the quality of recommendation. 

While there has been a proliferation of social-graph-assisted recommendation algorithms~\cite{koren2009matrix,tang2013social,cai2010graph,jamali2010matrix,li2009relation,ma2011recommender,kalofolias2014matrix,ma2008sorec,ma2009learning,guo2015trustsvd,zhao2017collaborative,chouvardas2017robust,massa2005controversial,golbeck2006filmtrust,jamali2009trustwalker,jamali2009using,yang2012bayesian,yang2012top,monti2017geometric,berg2017graph,rao2015collaborative,zhou2012kernelized}, few works were dedicated to developing theoretical insights on the usefulness of side information, and therefore the maximum gain due to side information has been unknown.
A few recent efforts have been made from an information-theoretic perspective~\cite{ahn2018binary,yoon2018joint,tan2019community,jo2020discrete}. Ahn et~al.~\cite{ahn2018binary} have identified the maximum  gain by characterizing the optimal sample complexity of matrix completion in the presence of graph side information under a simple setting in which there are two clusters and users within each cluster share the same ratings over items. A follow-up work~\cite{yoon2018joint} extended to an arbitrary number of clusters while maintaining the same-rating-vector assumption per user in each cluster. 
While~\cite{ahn2018binary,yoon2018joint} lay out the theoretical foundation for the problem, the assumption of the single rating vector per cluster limits  the practicality of the considered model.

In an effort to make a further progress on theoretical insights, and motivated by \cite{wang2015exploring}, we consider a more generalized setting in which each cluster exhibits another sub-clustering structure, each sub-cluster (or that we call a ``group'') being represented by a different rating vector yet intimately-related to other rating vectors within the same cluster. More specifically, we focus on a hierarchical graph setting wherein users are categorized into two clusters, each of which comprises three groups in which rating vectors are broadly similar yet distinct subject to a linear subspace of two basis vectors.

{\bf Contributions: } Our contributions are two folded. First we characterize the information-theoretic sharp threshold on the minimum number of observed matrix entries required for reliable matrix completion, as a function of the quantified quality (to be detailed) of the considered hierarchical graph side information. The second yet more practically-appealing contribution is to develop a computationally efficient algorithm that achieves the optimal sample complexity for a wide range of scenarios. 
One implication of this result is that our algorithm fully utilizing the \emph{hierarchical graph} structure yields a significant gain in sample complexity, compared to a simple variant of~\cite{ahn2018binary,yoon2018joint} that does not exploit the relational structure across rating vectors of groups. Technical novelty and algorithmic distinctions also come in the process of exploiting the hierarchical structure; see Remarks~\ref{rmrk:groupLimit} and \ref{rmrk:clusterLimited}. Our experiments conducted on both synthetic and real-world datasets corroborate our theoretical results as well as demonstrate the efficacy of our proposed algorithm.

{\bf Related works: } In addition to the initial works~\cite{ahn2018binary,yoon2018joint}, more generalized settings have been taken into consideration with distinct directions. Zhang et~al. \cite{tan2019community} explore a setting in which both social and item similarity graphs are given as side information, thus demonstrating a synergistic effect due to the availability of two graphs. Jo~et~al.~\cite{jo2020discrete} go beyond binary matrix completion to investigate a setting in which a matrix entry, say $(i,j)$-entry, denotes the probability of user $i$ picking up item $j$ as the most preferable, yet chosen from a known finite set of probabilities. 

Recently a so-called \emph{dual} problem has been explored in which clustering is performed with a partially observed matrix as side information~\cite{ashtiani2016clustering,mazumdar2017query}. Ashtiani et~al. \cite{ashtiani2016clustering} demonstrate that the use of side information given in the form of pairwise queries plays a crucial role in making an NP-hard clustering problem tractable via an efficient k-means algorithm. Mazumdar et~al. \cite{mazumdar2017query} characterize the optimal sample complexity of clustering in the presence of similarity matrix side information together with the development of an efficient algorithm. One distinction of our work compared to~\cite{mazumdar2017query} is that we are interested in both clustering and matrix completion, while~\cite{mazumdar2017query} only focused on finding the clusters, from which the rating matrix cannot be necessarily inferred.

Our problem can be viewed as the prominent low-rank matrix completion problem \cite{fazel2002matrix, candes2009exact,keshavan2010matrix,candes2010power, koren2009matrix, cai2010singular, fornasier2011low, mohan2012iterative, lee2010admira, wang2014rank,tanner2016low,wen2012solving,dai2705set,vandereycken2013low,hu2012fast,gotoh2018dc, nguyen2019low} which has been considered notoriously difficult. Even for the simple scenarios such as rank-1 or rank-2 matrix settings, the optimal sample complexity has been open for decades, although some upper and lower bounds are derived. The matrix of our consideration in this work is of rank 4. Hence, in this regard, we could make a progress on this long-standing open problem by exploiting the structural property posed by our considered application.

The statistical model that we consider for theoretical guarantees of our proposed algorithm relies on the Stochastic Block Model (SBM)~\cite{holland1983stochastic} and its hierarchical counterpart~\cite{clauset2008hierarchical,peixoto2014hierarchical,lyzinski2016community,cohen2019hierarchical} which have been shown to well respect many practically-relevant scenarios ~\cite{traud2012social,leskovec2012learning,newman2006modularity,newman2001structure}. Also our algorithm builds in part upon prominent clustering~\cite{abbe2015exact,gao2017achieving} and hierarchical clustering~\cite{lyzinski2016community,cohen2019hierarchical} algorithms, although it exhibits a notable distinction in other matrix-completion-related procedures together with their corresponding technical analyses.

{\bf Notations: } Row vectors and matrices are denoted by lowercase and uppercase letters, respectively. 
Random matrices are denoted by boldface uppercase letters, while their realizations are denoted by uppercase letters.
Sets are denoted by calligraphic letters. Let $\mathbf{0}_{m \times n}$ and $\mathbf{1}_{m \times n}$ be all-zero and all-one matrices of dimension $m \times n$, respectively. For an integer $n \geq 1$, $[n]$ indicates the set of integers $\{1,2,\ldots,n\}$. Let $\{0,1\}^n$ be the set of all binary numbers with $n$ digits. The hamming distance between two binary vectors $u$ and $v$ is denoted by $\hamDist{u}{v} := \| u \oplus v \|_{0}$, where $\oplus$ stands for modulo-2 addition operator. Let $\indicatorFn{\cdot}$ denote the indicator function. For a graph $\mathcal{G}=(V, E)$ and two disjoint subsets $X$ and $Y$ of $V$, $e \left(X,Y\right)$ indicates the number of edges between $X$ and $Y$.

\section{Problem Formulation}
\label{sec:probForm}
{\bf Setting:} Consider a rating matrix with $n$ users and $m$ items. Each user rates $m$ items by a binary vector, where $0/1$ components denote ``dislike''/``like'' respectively.
We assume that there are two clusters of users, say $A$ and $B$. To capture the low-rank of the rating matrix, we assume that each user's rating vector within a cluster lies in a linear subspace of \emph{two} basis vectors. Specifically, let $v_1^A \in \mathbb{F}_2^{1 \times m}$ and $v_2^A \in \mathbb{F}_2^{1 \times m}$ be the two linearly-independent basis vectors of cluster $A$. 
Then users in Cluster $A$ can be  split into three groups (e.g., say $G_1^A$, $G_2^A$ and $G_3^A$) based on their rating vectors. More precisely, we denote by $G_i^A$ the set of users whose rating vector is $v_i^A$ for $i=1,2$. Finally, the remaining users of cluster $A$ from group $G_3^A$, and their rating vector is $v_3^A = v_1^A \oplus v_2^A $ (a linear combination of the basis vectors). Similarly we have $v_1^B, v_2^B$ and $v_3^B=v_1^B \oplus v_2^B$ for cluster $B$. For presentational simplicity, we assume equal-sized groups (each being of size $n/6$), although our algorithm (to be presented in Section~\ref{sec:proposedAlgo}) allows for any group size, and our theoretical guarantees (to be presented in Theorem~\ref{Thm:Alg_theory}) hold as long as the group sizes are order-wise same. Let $M \in \mathbb{F}^{n \times m}$ be a rating matrix wherein the $i^{\text{th}}$ row corresponds to user $i$'s rating vector. 

We find the Hamming distance instrumental in expressing our main results (to be stated in Section~\ref{sec:result}) as well as proving the main theorems. Let $\intraTau$ be the normalized Hamming distance among distinct pairs of \emph{group's} rating vectors within the same cluster: $\intraTau = \frac{1}{m} \min_{c \in \{A,B\}} \min_{\substack{i,j \in [3]}} \hamDist{\vecV{i}{c}}{\vecV{j}{c}}$. Also let $\interTau$ be the counterpart w.r.t. distinct pairs of rating vectors \emph{across different clusters}: $\interTau  = \frac{1}{m}  \min_{\substack{i,j \in [3]}} \hamDist{\vecV{i}{A}}{\vecV{j}{B}}$, and define $\diff := \{ \intraTau, \interTau\}$.
We partition all the possible rating matrices into subsets depending on $\diff$. Let $\matSet$ be the set of rating matrices subject to $\diff$.

{\bf Problem of interest:} Our goal is to estimate a rating matrix $M \in \matSet$ given two types of information: (1) partial ratings $Y \in \{0,1,* \}^{n \times m}$; (2) a graph, say social graph ${\cal G}$. Here $*$ indicates no observation, and we denote the set of observed entries of $Y$ by $\Omega$, that is $\Omega=\{(r,c)\in [n]\times [m]: Y_{rc}\neq *\}$. Below is a list of assumptions made for the analysis of the optimal sample complexity (Theorem 1) and theoretical guarantees of our proposed algorithm (Theorem 2), but not for the algorithm itself. We assume that each element of $Y$ is observed with probability $p \in [0,1]$, independently from others, and its observation can possibly be flipped with probability $\theta \in [0, \frac{1}{2})$.  Let social graph ${\cal G} = ([n],E)$ be an undirected graph, where $E$ denotes the set of edges, each capturing the social connection between two associated users. The set $[n]$ of vertices is partitioned into two disjoint clusters, each being further partitioned into three disjoint groups. We assume that the graph follows the hierarchical stochastic block model (HSBM)~\cite{abbe2017community,lyzinski2016community} with three types of edge probabilities: (i) $\alpha$ indicates an edge probability between two users in the \emph{same group}; (ii) $\beta$ denotes the one w.r.t. two users of \emph{different groups} yet within the \emph{same cluster}; (iii) $\gamma$ is associated with two users of \emph{different clusters}. We focus on realistic scenarios in which users within the same group (or cluster) are more likely to be connected as per the social homophily theory~\cite{mcpherson2001birds}: $\alpha \geq \beta \geq \gamma$.

{\bf Performance metric:} Let $\psi$ be a rating matrix estimator that takes $(Y, \mathcal{G})$ as an input, yielding an estimate. As a performance metric, we consider the worst-case probability of error: 
\begin{align}
	\label{eq:errorProb_wc}
	P_e^{(\diff)} (\psi) := \max_{M \in \matSet} \mathbb{P}\left[\psi(Y, \mathcal{G}) \neq M\right].
\end{align}
Note that $\matSet$ is the set of ground-truth matrices $M$ subject to $\diff := \{ \intraTau, \interTau\}$. Since the error probability may vary depending on different choices of $M$ (i.e., some matrices may be harder to estimate), we employ a conventional minimax approach wherein the goal is to minimize the maximum error probability.
We characterize the optimal sample complexity for reliable exact matrix recovery, concentrated around $nmp^{\star}$ in the limit of $n$ and $m$. Here $p^{\star}$ indicates the sharp threshold on the observation probability: (i) above which the error probability can be made arbitrarily close to 0 in the limit; (ii) under which  $P_e^{(\diff)} (\psi) \nrightarrow 0$ no matter what and whatsoever.

\section{Optimal sample complexity}
\label{sec:result}
We first present the optimal sample complexity characterized under the considered model. We find that an intuitive and insightful expression can be made via the quality of hierarchical social graph, which can be quantified by the following: (i) $\Ig:= (\sqrt{\alpha} - \sqrt{\beta} )^2$ represents the capability of separating distinct \emph{groups} within a cluster; (ii) $\IcOne:= (\sqrt{\alpha} - \sqrt{\gamma} )^2$ and $\IcTwo:= (\sqrt{\beta} - \sqrt{\gamma} )^2$ capture the \emph{clustering capabilities} of the social graph. Note that the larger the quantities, the easier to do grouping/clustering. Our sample complexity result is formally stated below as a function of $(\Ig, \IcOne, \IcTwo)$. As in~\cite{ahn2018binary}, we make the same assumption on $m$ and $n$ that turns out to ease the proof via prominent large deviation theories: $m = \omega(\log n)$ and $\log m = o(n)$. This assumption is also practically relevant as it rules out highly asymmetric matrices.

\begin{theorem}[Information-theoretic limits] 
	\label{Thm:p_star}
	Assume that $m = \omega(\log n)$ and $\log m = o(n)$.
	Let the item ratings be drawn from a finite field $\mathbb{F}_{q}$. Let $c$ and~$g$ denote the number of clusters and groups, respectively. Within each cluster, let the set of $g$ rating vectors be spanned by any $r \leq g$ vectors in the same set. Let $\theta \in [0,(q-1)/q)$. Then, the following holds for any constant $\epsilon >0$: if 
	\begin{align}
		p^{\star} 
		\geq 
		\frac{1}{\left(\sqrt{1\!-\!\theta} \!-\! \sqrt{\frac{\theta}{q\!-\!1}}\right)^2} 
		\max \left\{ 
		\frac{(1\!+\!\epsilon) gc}{g\!-\!r\!+\!1} \frac{\log m }{n}, 
		\frac{(1\!+\!\epsilon) \log n \!-\! \frac{n}{gc} \Ig}{\intraTau m},
		\frac{(1\!+\!\epsilon) \log n \!-\!\frac{n}{gc} \IcOne \!-\! \frac{(g-1)n}{gc} \IcTwo}{\interTau m}
		\right\}\!,
		\label{eq:pstar-general_achv}
	\end{align}
	then there exists an estimator $\psi$ that outputs a rating matrix $X \in \matSet$ given $Y$ and ${\cal G}$ such that $\lim_{n \rightarrow \infty} P_e^{(\diff)} (\psi) = 0$; conversely, if
	\begin{align}
		p^{\star} 
		\leq 
		\frac{1}{\left(\sqrt{1\!-\!\theta} \!-\! \sqrt{\frac{\theta}{q\!-\!1}}\right)^2} 
		\max \left\{ 
		\frac{(1\!-\!\epsilon) gc}{g\!-\!r\!+\!1} \frac{\log m }{n}, 
		\frac{(1\!-\!\epsilon) \log n \!-\! \frac{n}{gc} \Ig}{\intraTau m},
		\frac{(1\!-\!\epsilon) \log n \!-\! \frac{n}{gc} \IcOne \!-\! \frac{(g-1)n}{gc} \IcTwo}{\interTau m}
		\right\}\!,
		\label{eq:pstar-general_conv}
	\end{align}
	then $ \lim_{n \rightarrow \infty} P_e^{(\diff)}(\psi) \neq 0$ for any estimator $\psi$.
	Therefore, the optimal observation probability $p^{\star}$ is given by
	\begin{align}
		p^{\star} 
		=
		\frac{1}{\left(\sqrt{1\!-\!\theta} \!-\! \sqrt{\frac{\theta}{q\!-\!1}}\right)^2} 
		\max \left\{ 
		\frac{gc}{g\!-\!r\!+\!1} \frac{\log m }{n}, 
		\frac{\log n- \frac{n}{gc} \Ig}{\intraTau m},
		\frac{\log n-\frac{n}{gc} \IcOne - \frac{(g-1)n}{gc} \IcTwo}{\interTau m}
		\right\}\!.
		\label{eq:pstar-general}
	\end{align} 
	Setting $(c,g,r,q)=(2,3,2,2)$, the bound in~\eqref{eq:pstar-general} reduces to 
	\begin{align}
		\label{eq:pstar}
		p^{\star} = \frac{1}{ (\sqrt{1- \theta} - \sqrt{\theta})^2  } \max \left \{ 
		\frac{3 \log m}{n}, \frac{\log n - \frac{1}{6} n \Ig }{ m \intraTau}, \frac{\log n - \frac{1}{6} n \IcOne - \frac{1}{3} n \IcTwo }{m \interTau} 
		\right \},
	\end{align}
	which is the optimal sample complexity of the problem formulated in Section~\ref{sec:probForm}.
\end{theorem}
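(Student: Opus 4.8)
The plan is to establish the achievability bound~\eqref{eq:pstar-general_achv} by analyzing a maximum-likelihood estimator $\estML$ over $\matSet$, and the converse~\eqref{eq:pstar-general_conv} through a sequence of genie-aided reductions; the combined statement~\eqref{eq:pstar-general} is then immediate, and~\eqref{eq:pstar} follows by substituting $(c,g,r,q)=(2,3,2,2)$. Throughout, I would use the fact that, conditioned on the ground truth $\gtMat$, the social graph $\cG$ and the observation mask and noise of $Y$ are mutually independent, so that every log-likelihood ratio decomposes into a ``graph part'' and a ``matrix part'' whose large-deviation exponents add.

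\textbf{Achievability.} Let $\estML$ maximize the joint likelihood of $(Y,\cG)$ over all candidates in $\matSet$, i.e.\ over all labelings of the $n$ users into $gc$ equal groups together with a choice of $r$ basis vectors per cluster (the remaining group vectors being forced by the subspace constraint). An error occurs only if some candidate $M\neq\gtMat$ attains joint likelihood at least that of $\gtMat$; I would union-bound over such $M$ after partitioning them into three classes: (i) same user labeling but a different set of rating vectors (pure matrix errors); (ii) users relocated across \emph{groups within a cluster}; (iii) users relocated across \emph{clusters}. For a class-(i) candidate the matrix log-likelihood ratio is a sum of i.i.d.\ terms over the differing-and-observed entries, and a Chernoff bound gives a per-entry exponent equal to $\bigl(\sqrt{1-\theta}-\sqrt{\theta/(q-1)}\bigr)^2$, the one minus the Bhattacharyya coefficient of the $q$-ary symmetric channel; a counting/coupon-collector argument over the $m$ columns, in which the rank-$r$ constraint reduces the effective number of unknowns per column from $g$ to $g-r+1$ when the $g$ group-rows of a cluster are pooled, produces the first term of~\eqref{eq:pstar-general_achv}. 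For classes (ii) and (iii), each relocated user incurs a graph penalty — governed by $\Ig$ within a cluster, and by $\IcOne$ and $\IcTwo$ across clusters, via standard HSBM large-deviation estimates — and a matrix penalty of order $p$ times the relevant Hamming distance ($\intraTau m$ or $\interTau m$) times the channel exponent above; adding the independent contributions yields exponents $\tfrac{n}{gc}\Ig+\intraTau m\,p(\cdot)$ and $\tfrac{n}{gc}\IcOne+\tfrac{(g-1)n}{gc}\IcTwo+\interTau m\,p(\cdot)$. Balancing each exponent against the $\log n$ (resp.\ $\log m$) growth of the number of candidates of that type shows the union bound vanishes as soon as $p$ exceeds the right-hand side of~\eqref{eq:pstar-general_achv}.

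\textbf{Converse.} For each of the three terms I would reveal, through a genie, almost the entire ground truth and reduce exact recovery to a minimal detection problem that provably fails below threshold. For the first term: disclose all user labels and all rating vectors except those of one group; recovering that length-$m$ vector over $\mathbb{F}_q$ (modulo the subspace constraint) from $n/(gc)$ identical rows observed at rate $p$ requires every one of its $m$ coordinates to be seen, a coupon-collector lower bound forcing $p\gtrsim\frac{gc}{g-r+1}\frac{\log m}{n}$. For the second term: disclose everything except the within-cluster group membership of a vanishing fraction of users, and lower-bound the error of classifying even a single such user by a change-of-measure argument pitting its graph edges (parameter $\alpha$ versus $\beta$) against its observed row (Hamming distance $\intraTau m$, rate $p$, through the channel); this fails unless $\tfrac{n}{gc}\Ig+\intraTau m\,p\bigl(\sqrt{1-\theta}-\sqrt{\theta/(q-1)}\bigr)^2\gtrsim\log n$. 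The third term is the cluster analogue: a relocated user's edges change parameter with its $n/(gc)$ co-group members in the other cluster ($\alpha$ versus $\gamma$, contributing $\IcOne$) and with the remaining $(g-1)n/(gc)$ users ($\beta$ versus $\gamma$, contributing $\IcTwo$), explaining the $\Ig$-free combination. Since all three failure conditions are simultaneously necessary, their maximum gives~\eqref{eq:pstar-general_conv}, and together with achievability this pins down $p^\star$.

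\textbf{Main obstacle.} I expect the crux to lie in the first term on both sides: for achievability, pooling the noisy observations of the $g$ group-vectors of a cluster so as to genuinely realize the rank-$r$ savings (the per-column redundancy being $g-r+1$ rather than $1$) requires a careful linear-algebraic reduction of the ML search and a correspondingly refined coupon-collector count; for the converse, the coupon-collector lower bound must be made robust to the noise $\theta$ and to the genie's partial disclosure. A secondary difficulty, essential for the \emph{sharp} (not merely order-wise) threshold, is showing in classes~(ii)/(iii) that the graph and matrix log-likelihood deviations add at the exponent level with matching first-order constants — demanding tight large-deviation estimates for the HSBM and the $q$-ary channel — and handling simultaneous relocations of several users so as to rule out that clustered label errors are cheaper than isolated ones.
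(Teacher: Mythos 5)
Your achievability outline is, in substance, the paper's route: ML estimation over $\matSet$, a union bound organized by how a candidate differs from the truth (rating-vector errors, within-cluster relocations, cross-cluster relocations), per-entry Chernoff exponent $p\bigl(\sqrt{1-\theta}-\sqrt{\theta/(q-1)}\bigr)^2$, and additive graph exponents $\Ig,\IcOne,\IcTwo$; this is exactly Lemmas~\ref{lm:neg_log_likelihood}--\ref{lemma:eta_omega(nm)}. One correction to your stated mechanism for the first term: the factor $g-r+1$ is not an ``effective number of unknowns per column'' (the free symbols per column are $r$); it is the minimum distance of the $(g,r)$ MDS-type code formed by the $g$ group symbols of a column, so any erroneous column must disagree with the truth in at least $g-r+1$ group-rows, each carrying $n/(gc)$ users. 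That minimum-distance count is precisely how the paper's error-column enumeration (around \eqref{eq:mds_code_1}) produces the first term, and without it your union bound over class-(i) candidates does not close.

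The genuine gap is in your converse, chiefly in the first regime. If the genie reveals all user labels and all rating vectors except that of one group, the span constraint determines the withheld vector: for $r\le g-1$, any $g-1$ symbols of a column codeword pin down the remaining one (in the paper's case $v_3^A=v_1^A\oplus v_2^A$), so the reduction is vacuous. Even setting that aside, a coupon-collector count over the $n/(gc)$ rows of a single group yields the requirement $p\gtrsim gc\log m/n$, which is \emph{stronger} than the true threshold $\frac{gc}{g-r+1}\frac{\log m}{n}$ and hence cannot be a valid converse (it would contradict achievability); it also ignores $\theta$, whereas the sharp constant is $\bigl(\sqrt{1-\theta}-\sqrt{\theta/(q-1)}\bigr)^{-2}$. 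The paper instead exhibits explicit confusable candidates $\M{c}$ obtained by altering a single column consistently with the code (so $n/3$ entries change for $(c,g,r)=(2,3,2)$) and lower-bounds $\mathbb{P}[\mathsf{L}(\M{c})\le\mathsf{L}(\gtMat)]$ by the tilted-measure bound of Lemma~\ref{lm:lowerB_prob}, then uses independence across columns to drive the ML success probability to zero. The grouping/clustering regimes have a related issue in your plan: a lower bound on misclassifying ``a single user'' does not give $P_e^{(\diff)}\nrightarrow 0$, and single relocations are not even in $\matSet$ (group sizes are fixed), so one must swap \emph{pairs} of users and exhibit polynomially many nearly independent confusable swaps; the paper achieves the needed independence by restricting the swapped pairs to edge-free subsets of size $n/\log^3 n$ (Lemma~\ref{lm:randGraph}). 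Without these elements your converse would at best be order-wise, not the sharp statement \eqref{eq:pstar-general_conv}.
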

\begin{proof}
	We provide a proof sketch for $(c,g,r)=(2,3,2)$. We defer the complete proof of Theorem~\ref{Thm:p_star} for $(c,g,r)=(2,3,2)$ to the supplementary material. The extension to general $(c,g,r)$ is a natural generalization of the analysis for the parameters $(c,g,r)=(2,3,2)$. We refer the interested reader to \cite{ahn2021fundamental} for the complete proof of Theorem~\ref{Thm:p_star} for general $(c,g,r)$.
	The achievability proof is based on maximum likelihood estimation (MLE). We first evaluate the likelihood for a given clustering/grouping of users and the corresponding rating matrix. We then show that if $p \geq (1+\epsilon)p^{\star}$, the likelihood is maximized only by the ground-truth rating matrix in the limit of $n$: $\lim_{n \rightarrow \infty} P_e^{(\diff)} (\psi_{\sf ML}) = 0$. 
	For the converse (impossibility) proof, we first establish a lower bound on the error probability, and show that it is minimized when employing the maximum likelihood estimator. Next we prove that if $p$ is smaller than any of the three terms in the RHS of~\eqref{eq:pstar}, then there exists another solution that yields a larger likelihood, compared to the ground-truth matrix. More precisely, if $p \leq \frac{(1-\epsilon) 3 \log m}{ (\sqrt{1-\theta}- \sqrt{\theta})^2 n}$, we can find a grouping with the only distinction in two user-item pairs relative to the ground truth, yet yielding a larger likelihood. Similarly when $p \leq \frac{(1-\epsilon) (\log n - \frac{1}{6} n \Ig ) }{ (\sqrt{1-\theta}- \sqrt{\theta})^2 m \intraTau}$, consider two users in the same cluster yet from distinct groups such that the hamming distance between their rating vectors is~$m \intraTau$. We can then show that a grouping in which their rating vectors are swapped provides a larger likelihood. Similarly when $p \leq \frac{(1-\epsilon) (\log n - \frac{1}{6} n \IcOne - \frac{1}{3} n \IcTwo ) }{ (\sqrt{1-\theta}- \sqrt{\theta})^2 m \interTau} $, we can swap the rating vectors of two users from different clusters with a hamming distance of $m \interTau$, and get a greater likelihood. 
	
	The technical distinctions w.r.t. the prior works~\cite{ahn2018binary,yoon2018joint} are three folded: (i) the likelihood computation requires more involved combinatorial arguments due to the hierarchical structure; (ii) sophisticated upper/lower bounding techniques are developed in order to exploit the relational structure across different groups; (iii) delicate choices are made for two users to be swapped in the converse proof.
\end{proof}

We next present the second yet more practically-appealing contribution: Our proposed algorithm in Section~\ref{sec:proposedAlgo} achieves the information-theoretic limits. The algorithm optimality is guaranteed for a certain yet wide range of scenarios in which graph information yields negligible clustering/grouping errors, formally stated below. We provide the proof outline in Section~\ref{sec:proposedAlgo} throughout the description of the algorithm, leaving details in the supplementary material.  

\begin{theorem}[Theoretical guarantees of the proposed algorithm]
	\label{Thm:Alg_theory}
	Assume that $m = \omega(\log n)$, $\log m = o(n)$,  $m = O(n)$, $\IcTwo > \frac{2 \log n}{n}$ and $\Ig > \omega (\frac{1}{n})$. Then, as long as the sample size is beyond the optimal sample complexity in Theorem~\ref{Thm:p_star} (i.e., $mnp > mnp^{\star}$), then the algorithm presented in Section~\ref{sec:proposedAlgo} with $T = O(\log n)$ iterations ensures the worse-case error probability tends to $0$ as $n \rightarrow \infty$. That is, the algorithm returns $\widehat{M}$ such that  $\mathbb{P}[\widehat{M} = M] = 1 - o(1)$.
\end{theorem}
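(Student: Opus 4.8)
The plan is to track, round by round, the two error quantities of the scheme --- the fraction of misgrouped users and the number of incorrect entries of the estimated rating vectors --- and to show that both vanish within $O(\log n)$ rounds. A useful first observation is that the hypothesis $\IcTwo > \frac{2\log n}{n}$ together with $\alpha \ge \beta \ge \gamma$ (so $\IcOne \ge \IcTwo$) makes the third term inside the $\max$ in~\eqref{eq:pstar} non-positive, since $\frac{1}{6}n\IcOne + \frac{1}{3}n\IcTwo \ge \frac{1}{2}n\IcTwo > \log n$. Hence the target threshold collapses to $p^{\star} = \frac{1}{(\sqrt{1-\theta}-\sqrt{\theta})^2}\max\{\frac{3\log m}{n},\, \frac{\log n-\frac{1}{6}n\Ig}{m\intraTau}\}$, and it is enough to show that the algorithm (i) recovers the two clusters exactly --- which will hold for free from $\cG$ --- (ii) reconstructs the two basis rating vectors of each cluster exactly, which we will need only once $np$ exceeds a constant multiple of the first term $3\log m/[n(\sqrt{1-\theta}-\sqrt{\theta})^2]$, and (iii) assigns every user to the correct group of its cluster, which we will achieve exactly whenever $p$ exceeds the second term, i.e.\ whenever $\frac{1}{6}n\Ig + mp\,\intraTau(\sqrt{1-\theta}-\sqrt{\theta})^2 > (1+\epsilon)\log n$.

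\emph{Initialization.} First I would invoke the guarantee of the hierarchical-clustering step run on $\cG$ alone. Exact recovery of the two clusters follows from a genie-aided computation: conditioned on all other labels, a user of cluster $A$ connects to its own group with probability $\alpha$, to the other two groups of $A$ with probability $\beta$, and to cluster $B$ with probability $\gamma$, so the Chernoff exponent of the binary test ``$A$ vs.\ $B$'' is $(1+o(1))(\frac{1}{6}n\IcOne + \frac{1}{3}n\IcTwo) \ge (1+o(1))\frac{1}{2}n\IcTwo > \log n$; a union bound over the $n$ users gives exact cluster recovery w.h.p. For the finer partition, $\Ig = \omega(1/n)$ makes the within-cluster grouping SNR $\Theta(n\Ig)$ diverge, i.e.\ we are in the weak-recovery regime, so the fraction $\eta_0$ of misgrouped users is $e^{-\Theta(n\Ig)} = o(1)$ w.h.p.\ (though $\eta_0 n$ may be as large as $n^{1-o(1)}$).

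\emph{One refinement round.} Given the exact clusters and a grouping with a vanishing error fraction, a round of the algorithm (a) re-estimates, for each cluster $c$ and coordinate $j\in[m]$, the pair $(\vecVElmnt{1}{c}{j},\vecVElmnt{2}{c}{j})\in\mathbb{F}_2^2$ by the coordinate-wise maximum-likelihood rule that pools the observed entries of all $\approx n/2$ users currently in cluster $c$ --- here the low-rank relation $\vecV{3}{c}=\vecV{1}{c}\oplus\vecV{2}{c}$ is essential, as it lets the $n/6$ users of all three groups inform the same two bits, producing the factor $gc/(g-r+1)=3$ --- and (b) re-assigns each user $i$ to the group whose combined (graph plus observed-ratings) log-likelihood is largest. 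For (a): since only an $o(1)$ fraction of the pooled users carry a wrong rating vector and $\theta$ is a constant below $1/2$, the effective flip probability seen on each coordinate stays bounded away from $1/2$, so the per-coordinate error is $e^{-(1-o(1))\frac{np}{3}(\sqrt{1-\theta}-\sqrt{\theta})^2}\le m^{-(1+\epsilon/2)}$ whenever $np \ge (1+\epsilon)\,3\log m/(\sqrt{1-\theta}-\sqrt{\theta})^2$, and a union bound over the $O(m)$ coordinates yields exact basis vectors for both clusters w.h.p. For (b): with the rating vectors now exact, the log-likelihood of a fixed user $i$ splits into an independent graph part and matrix part, and the exponent controlling a same-cluster competing group evaluates to $(1+o(1))(\frac{1}{6}n\Ig + mp\,\intraTau(\sqrt{1-\theta}-\sqrt{\theta})^2)$ --- the graph part from the $n/6$ edges of density $\alpha$ versus $\beta$ on each side, the matrix part from the $\ge m\intraTau$ coordinates on which the two candidate vectors disagree --- while the exponent for a cross-cluster competitor is $(1+o(1))(\frac{1}{6}n\IcOne + \frac{1}{3}n\IcTwo + mp\,\interTau(\sqrt{1-\theta}-\sqrt{\theta})^2)$, which exceeds $\log n$ automatically under the standing hypotheses. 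Hence, when $p>p^{\star}$, the probability that any fixed user is misassigned is $o(1/n)$, and a union bound over all $n$ users shows the refined grouping is exactly correct w.h.p.

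\emph{Iteration and conclusion.} The only circularity --- step (a) needs a small enough input error, step (b) supplies it --- I would close by induction on the round index: writing $\eta_t$ for the misgrouping fraction after round $t$, one shows a constant-factor contraction $\eta_{t+1}\le \eta_t/2$ as long as $\eta_t$ is not yet small enough for step (a) to be exact, and once it is small enough the previous paragraph gives $\eta_{t+1}=0$ w.h.p.; since $\eta_0 n\le n$ this takes at most $\log_2 n = O(\log n)$ rounds, which explains the choice $T=O(\log n)$. (In fact $\eta_0=o(1)$ already suffices to make step (a) exact on the first round, so a single round is enough and the remaining rounds merely preserve exactness; I would nonetheless keep the contraction argument for robustness to weaker initializations.) A union bound over the $T$ rounds keeps the total failure probability $o(1)$; once the grouping is exact, step (a) reconstructs every rating vector exactly, so $\widehat{M}=M$ with probability $1-o(1)$. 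The assumptions $m=\omega(\log n)$, $\log m=o(n)$, $m=O(n)$ enter only through the concentration steps --- large-deviation control of the pooled per-coordinate statistics and of the per-user degree/agreement statistics, and keeping $mp$ and $np$ of comparable order. I expect the main obstacle to be the large-deviation analysis behind step (b): showing that the graph and matrix log-likelihoods concentrate, that their exponents \emph{add}, and that the graph exponent is exactly $\frac{1}{6}n\Ig$ (resp.\ $\frac{1}{6}n\IcOne+\frac{1}{3}n\IcTwo$) while the matrix exponent is exactly $mp\,\intraTau(\sqrt{1-\theta}-\sqrt{\theta})^2$, so that the induced threshold coincides with the $p^{\star}$ of Theorem~\ref{Thm:p_star}; the secondary difficulty is handling the statistical dependence across rounds between the estimated rating vectors and the current grouping, e.g.\ via a careful induction or a mild sample-splitting device.
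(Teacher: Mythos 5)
Your proposal follows essentially the same route as the paper's proof: exact recovery of the two clusters from the graph alone under $I_{c2}>\frac{2\log n}{n}$ (the paper cites the community-detection result rather than re-deriving it), almost-exact grouping under $I_g=\omega(1/n)$, pooled ML recovery of the rating vectors with per-column error exponent $\tfrac{n}{3}I_r$ exploiting $v_3^x=v_1^x\oplus v_2^x$ (this is exactly the paper's Phase 3 counting rule and its $o(m^{-1})$-per-column analysis), and local refinement that halves the misgrouping fraction each round so that $T=O(\log n)$ iterations suffice (the paper imports this contraction from Yoon et al., Theorem 2). The only substantive point you gloss over is that the algorithm is parameter-free --- the assumption $m=O(n)$ is there precisely so that the plug-in estimates $(\widehat{\alpha},\widehat{\beta},\widehat{\theta})$ used in the refinement step are reliable --- but this does not alter the structure of the argument.
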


Theorem~\ref{Thm:p_star} establishes the optimal sample complexity (the number of entries of the rating matrix to be observed) to be $mnp^\star$, where $p^\star$ is given in \eqref{eq:pstar}. The required sample complexity is a non-increasing function of $\intraTau$ and  $\interTau$. This makes an intuitive sense because increasing $\intraTau$ (or $\interTau$) yields more distinct rating vectors, thus ensuring easier grouping (or clustering). We emphasize three regimes depending on $(\Ig, \IcOne, \IcTwo)$. The first refers to the so-called \emph{perfect clustering/grouping regime} in which $(\Ig, \IcOne, \IcTwo)$ are large enough, thereby activating the $1^{\text{st}}$ term in the $\max$ function. The second is the \emph{grouping-limited regime}, in which the quantity $\Ig$ is not large enough so that the $2^{\text{nd}}$ term becomes dominant. The last is the \emph{clustering-limited regime} where the $3^{\text{rd}}$ term is activated.  A few observations are in order. For illustrative simplicity, we focus on the noiseless case, i.e., $\theta = 0$. 

\begin{remark}[Perfect clustering/grouping regime]
	\label{rmrk:perfectCluster}
	The optimal sample complexity reads $3m \log m$. This result is interesting. A naive generalization  of~\cite{ahn2018binary, yoon2018joint} requires $4m \log m$, as we have four rating vectors $(v_1^A, v_2^A, v_1^B, v_2^B)$ to estimate and each requires $m \log m$ observations under our random sampling, due to the coupon-collecting effect. On the other hand, we exploit the relational structure across rating vectors of different group, reflected in $v_3^A = v_1^A \oplus v_2^A$ and $v_3^B = v_1^B \oplus v_2^B$; and we find this serves to estimate $(v_1^A, v_2^A, v_1^B, v_2^B)$ more efficiently, precisely by a factor of $\frac{4}{3}$ improvement, thus yielding $3m \log m$. This exploitation is reflected as novel technical contributions in the converse proof, as well as the achievability proofs of MLE and the proposed algorithm. $\hfill \blacksquare$
\end{remark}

\begin{remark}[Grouping-limited regime] 
	\label{rmrk:groupLimit}
	We find that the sample complexity $\frac{n \log n - \frac{1}{6} n^2 \Ig }{ \intraTau}$ in this regime coincides with that of~\cite{yoon2018joint}. This implies that exploiting the relational structure across different groups does not help improving sample complexity when grouping information is not reliable. $\hfill \blacksquare$
\end{remark}

\begin{remark}[Clustering-limited regime] 
	\label{rmrk:clusterLimited}
	This is the most challenging scenario which has not been explored by any prior works. The challenge is actually reflected in the complicated sample complexity formula: $\frac{ n \log n - \frac{1}{6} n^2 \IcOne - \frac{1}{3} n^2 \IcTwo }{\interTau}$. When $\beta = \gamma$, i.e., groups and clusters are not distinguishable, $\Ig = \IcOne$ and $\IcTwo = 0$. Therefore, in this case, it indeed reduces to a 6-group setting: $\frac{ n \log n - \frac{1}{6} n^2 I_{g} }{\interTau}$. The only distinction appears in the denominator. We read $\interTau$ instead of $\intraTau$ due to different rating vectors across clusters and groups. When $\IcTwo \neq 0$, it reads the complicated formula, reflecting non-trivial technical contribution as well. $\hfill\blacksquare$
\end{remark}

\begin{figure}
	\centering
	\subfloat[$\intraTau = \frac{1}{3}$ and $\interTau = \frac{1}{6}$.]{\includegraphics[width=0.33\textwidth]{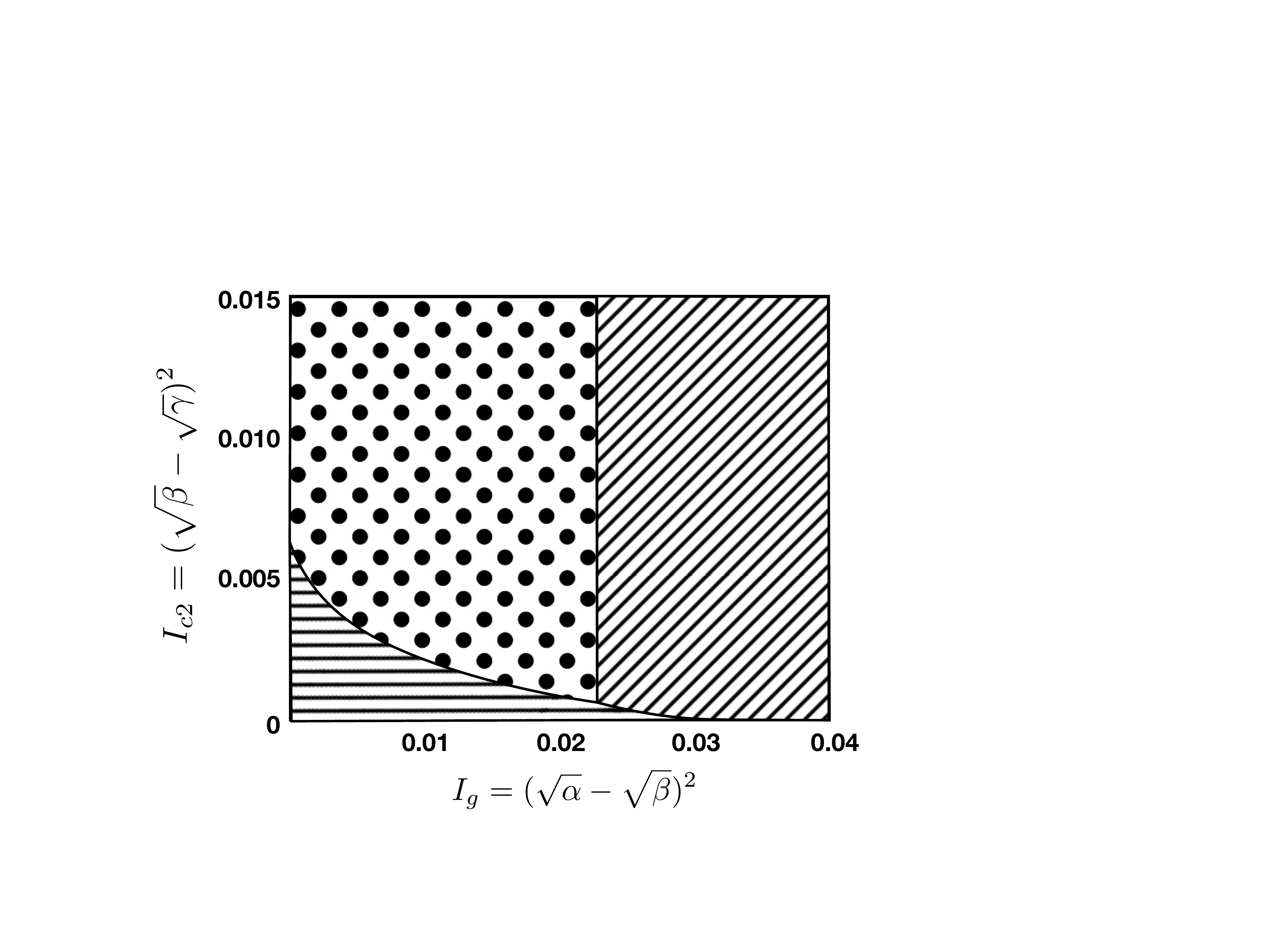}
		\label{fig:3reg}}
	\hfill
	\subfloat[$\intraTau = \frac{1}{7}$ and $\interTau = \frac{1}{6}$.]{\includegraphics[width=0.33\textwidth]{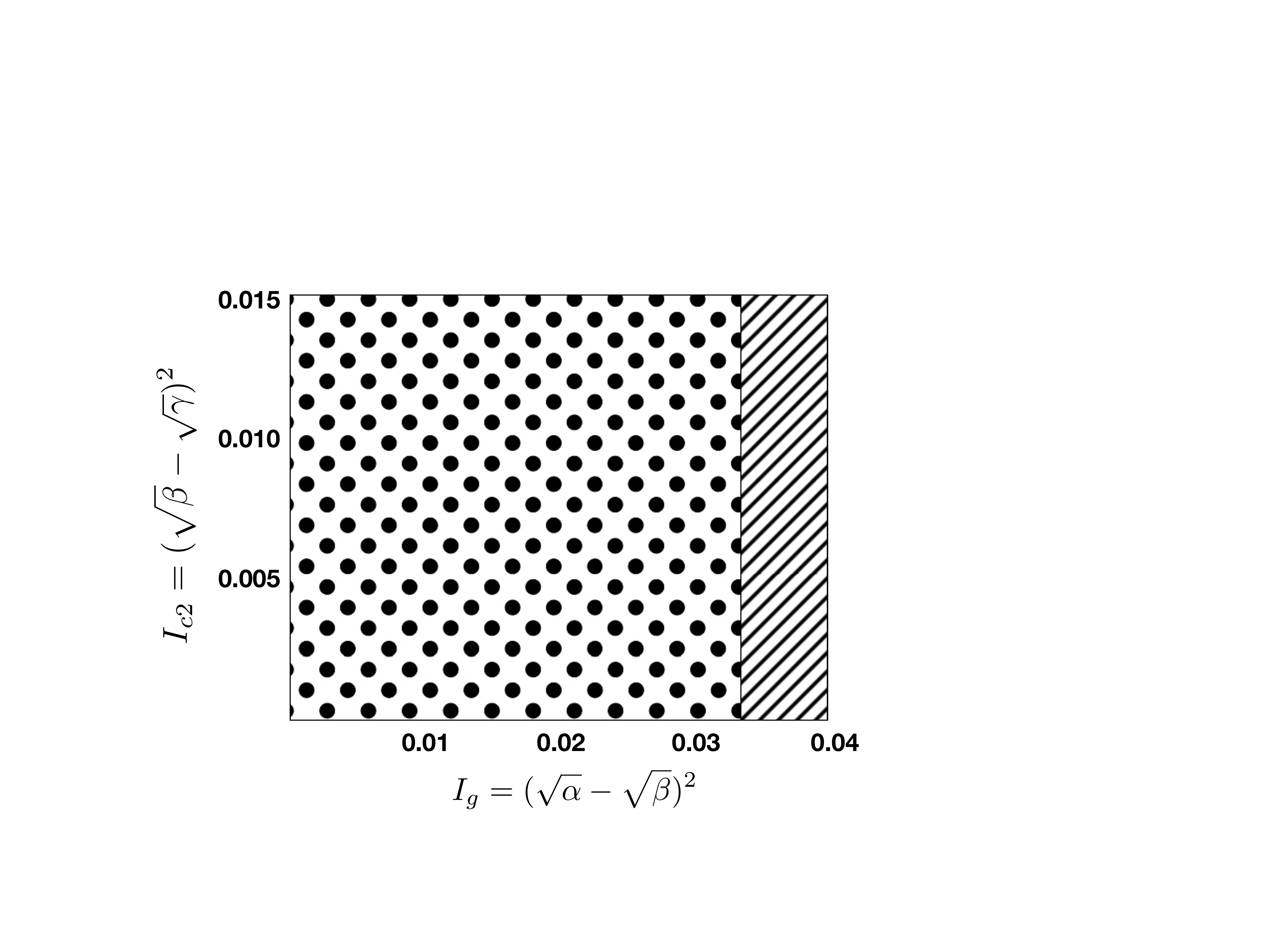}
		\label{fig:2reg}}
	\hfill
	\subfloat[$\intraTau = \frac{1}{3}$ and $\interTau = \frac{1}{6}$.]{\includegraphics[width=0.31\textwidth]{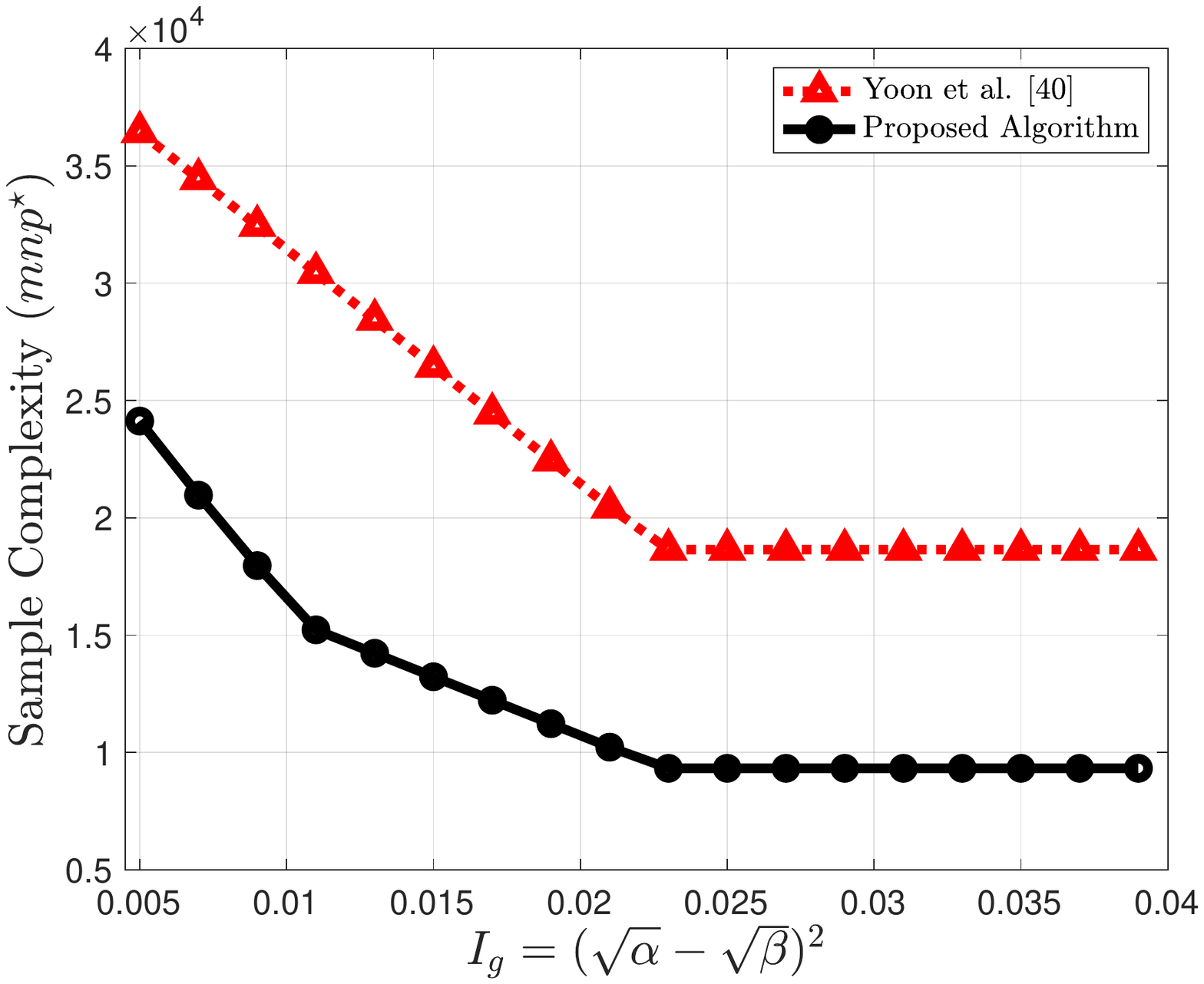}
		\label{fig:baselineComp}}
	\caption{Let $n=1000$, $m=500$ and $\theta=0$. (a), (b) The different regimes of the optimal sample complexity reported in \eqref{eq:pstar}, where in (a) $\interTau < \intraTau$ and in (b) $\interTau > \intraTau$.
	Diagonal stripes, dots, and horizontal stripes refer to perfect clustering/grouping, grouping-limited, and clustering-limited regimes, respectively.
	(c) Comparison between the sample complexity reported in \eqref{eq:pstar} for $\gamma = 0.01$ and $\IcTwo = 0.002$ and that of \cite{yoon2018joint}.
	}
	\label{fig:regions}
	\vspace{-3mm}
\end{figure}

Fig.~\ref{fig:regions} depicts the different regimes of the optimal sample complexity as a function of $(\Ig,\IcTwo)$ for $n=1000$, $m=500$ and $\theta=0$. In Fig.~\ref{fig:3reg}, where $\intraTau = \frac{1}{3}$ and $\interTau = \frac{1}{6}$, the region depicted by diagonal stripes corresponds to the perfect clustering/grouping regime. Here, $\Ig$ and $\IcTwo$ are large, and graph information is rich enough to perfectly retrieve the clusters and groups. In this regime, the $1^{\text{st}}$ term in \eqref{eq:pstar} dominates. The region shown by dots corresponds to grouping-limited regime, where the $2^{\text{nd}}$ term in \eqref{eq:pstar} is dominant. In this regime, graph information suffices to exactly recover the clusters, but we need to rely on rating observation to exactly recover the groups. Finally, the $3^{\text{rd}}$ term in \eqref{eq:pstar} dominates in the region captured by horizontal stripes. This indicates the clustering-limited regimes, where neither clustering nor grouping is exact without the side information of the rating vectors. It is worth noting that in practically-relevant systems, where $\interTau > \intraTau$ (for rating vectors of users in the same cluster are expected to be more similar compared to those in a different cluster), the third regime vanishes, as shown by Fig.~\ref{fig:2reg}, where $\intraTau = \frac{1}{7}$ and $\interTau = \frac{1}{6}$.  It is straightforward to show that the third term in \eqref{eq:pstar} is inactive whenever $\interTau > \intraTau$.
Fig.~\ref{fig:baselineComp} compares the optimal sample complexity between the one reported in~\eqref{eq:pstar}, as a function of $I_g$, and that of \cite{yoon2018joint}. The considered setting is $n \!=\! 1000$, $m\!=\!500$, $\theta\!=\!0$, $\intraTau \!=\! \frac{1}{3}$, $\interTau \!=\! \frac{1}{6}$, $\gamma \!=\! 0.01$ and $\IcTwo \!=\! 0.002$. Note that \cite{yoon2018joint} leverages neither the hierarchical structure of the graph, nor the linear dependency among the rating vectors. Thus, the problem formulated in Section~\ref{sec:probForm} will be translated to a graph with six clusters with linearly independent rating vectors in the setting of  \cite{yoon2018joint}. Also, the minimum hamming distance for \cite{yoon2018joint} is $\interTau$. In Fig.~\ref{fig:baselineComp}, we can see that the noticeable gain in the sample complexity of our result in the diagonal parts of the plot (the two regimes on the left side) is due to leveraging the hierarchical graph structure, while the improvement in the sample complexity in the flat part of the plot is a consequence of exploiting the linear dependency among the rating vectors within each cluster (See Remark~\ref{rmrk:perfectCluster}).

\section{Proposed Algorithm}
\label{sec:proposedAlgo}
We propose a computationally feasible matrix completion algorithm that achieves the optimal sample complexity characterized by Theorem~\ref{Thm:p_star}. 
The proposed algorithm is motivated by a line of research on iterative algorithms that solve non-convex optimization problems \cite{keshavan2010matrix,jain2013low,yun2014accurate,abbe2015community,chen2016community,gao2017achieving,chen2015spectral,netrapalli2013phase,candes2015phase,yi2016fast,balakrishnan2017statistical,wu2015clustering,shah2017reducing}. The idea is to first find a good initial estimate, and then successively refine this estimate until the optimal solution is reached. This approach has been employed in several problems such as matrix completion \cite{keshavan2010matrix,jain2013low}, community recovery \cite{yun2014accurate,abbe2015community,chen2016community,gao2017achieving}, rank aggregation \cite{chen2015spectral}, phase retrieval \cite{netrapalli2013phase,candes2015phase}, robust PCA \cite{yi2016fast}, EM-algorithm \cite{balakrishnan2017statistical}, and rating estimation in crowdsourcing \cite{wu2015clustering,shah2017reducing}. In the following, we  describe the proposed algorithm that consists of four phases to recover clusters, groups and rating vectors. Then, we discuss the computational complexity of the algorithm.

Recall that $Y \in \left\{0,+1,*\right\}^{n \times m}$. For the sake of tractable analysis, it is convenient to map $Y$ to $Z \in \left\{-1,0,+1\right\}^{n \times m}$ where the mapping of the alphabet of $Y$ is as follows: $0 \longleftrightarrow +1$, $+1 \longleftrightarrow -1$ and $* \longleftrightarrow 0$. Under this mapping, the modulo-2 addition over $\{0, 1\}$ in $Y$ is represented by the multiplication of integers over $\{+1, -1\}$ in $Z$. Also, note that all recovery guarantees are asymptotic, i.e., they are characterized with high probability as $n \rightarrow \infty$. Throughout the design and analysis of the proposed algorithm, the number and size of clusters and groups are assumed to be known. 

\subsection{Algorithm Description}
\textbf{Phase~1 (Exact Recovery of Clusters):}
We use the community detection algorithm in \cite{abbe2015exact} on $\mathcal{G}$ to  \emph{exactly} recover the two clusters $A$ and $B$. As proved in \cite{abbe2015exact}, the decomposition of the graph into two clusters is correct  with high probability when $\IcTwo > \frac{2 \log n}{n}$. 

\textbf{Phase~2 (Almost Exact Recovery of Groups):}
The goal of  Phase~$2$ is to decompose the set of users in cluster $A$ (cluster $B$) into three groups, namely $\grpG{1}{A}$, $\grpG{2}{A}$, $\grpG{3}{A}$ (or $\grpG{1}{B}$, $\grpG{2}{B}$, $\grpG{3}{B}$ for cluster $B$). It is worth noting that grouping at this stage is \emph{almost exact}, and will be further refined in the next phases.  To this end, we run a spectral clustering algorithm \cite{gao2017achieving} on $A$ and $B$ separately. Let $\grpGhat{i}{x}{0}$ denote the initial estimate of the $i^{\text{th}}$ group of cluster $x$ that is recovered by Phase~$2$ algorithm, for $i\in[3]$ and $x\in\{A,B\}$. It is shown that the groups within each cluster are recovered with a vanishing fraction of error if $\Ig = \omega(1/n)$.
It is worth mentioning that there are other clustering algorithms \cite{shi2000normalized,ng2002spectral,abbe2015community,chin2015stochastic,lei2015consistency,javanmard2016phase,krzakala2013spectral,mossel2016density} that can be employed for this phase. Examples include: spectral clustering \cite{shi2000normalized,ng2002spectral,abbe2015community,chin2015stochastic,lei2015consistency}, semidefinite programming (SDP) \cite{javanmard2016phase}, non-backtracking matrix spectrum \cite{krzakala2013spectral}, and belief propagation \cite{mossel2016density}.

\textbf{Phase~3 (Exact Recovery of Rating Vectors)}: 
We propose a novel algorithm that optimally recovers the rating vectors of the groups within each cluster. The algorithm is based on maximum likelihood~(ML) decoding of users' ratings based on the partial and noisy observations. For this model, the ML decoding boils down to a counting rule: for each item, find the group with maximum gap between the number of observed zeros and ones, and set the rating entry of this group to $0$. The other two rating vectors are either both $0$ or both $1$ for this item, which will be determined based on the majority of the union of their observed entries. It turns out that the vector recovery is exact with probability $1\!-\!o(1)$. This is one of the technical distinctions, relative to the prior works~\cite{ahn2018binary,yoon2018joint} which employ the simple majority voting rule under non-hierarchical SBMs.

Define $\vecVhat{i}{x}$ as the estimated rating vector of $\vecV{i}{x}$, i.e., the output of Phase~$3$ algorithm. 
Let the $c^{\text{th}}$ element of the rating vector $\vecV{i}{x}$ (or $\vecVhat{i}{x}$) be denoted by $\vecVElmnt{i}{x}{c}$ (or $\vecVhatElmnt{i}{x}{c}$), for $i \in [3]$, $x \in \{ A, B\}$ and $c \in [m]$. Let $Y_{r,c}$ be the entry of matrix $Y$ at row $r$ and column $c$, and $Z_{r,c}$ be its mapping to $\{+1,0,-1\}$. The pseudocode of Phase~$3$ algorithm is given by Algorithm~\ref{algo:phase3}. 

\textbf{Phase~4 (Exact Recovery of Groups):}
Finally, the goal is to \emph{refine} the groups which are \emph{almost recovered} in Phase~$2$, to obtain an \emph{exact} grouping. To this end, we propose an iterative algorithm that locally refines the estimates on the user grouping within each cluster for $T$ iterations. 
Specifically, at each iteration, the affiliation of each user is updated to the group that yields the maximum local likelihood. This is determined based on (i) the number of edges between the user and the set of users which belong to that group, and (ii) the number of observed rating matrix entries of the user that coincide with the corresponding entries of the rating vector of that group.
Algorithm~\ref{algo:phase4_groups} describes the pseudocode of Phase~$4$ algorithm. Note that we do not assume the knowledge of the model parameters $\alpha$, $\beta$ and $\theta$, and estimate them using $Y$ and $\mathcal{G}$, i.e., the proposed algorithm is parameter-free.

In order to prove the exact recovery of groups after running Algorithm~\ref{algo:phase4_groups}, we need to show that the number of misclassified users in each cluster strictly decreases with each iteration of Algorithm~\ref{algo:phase4_groups}. More specifically, assuming that the previous phases are executed successfully, if we start with $\eta n$ misclassified users within one cluster, for some small $\eta > 0$, then one can show that we end up with $\frac{\eta}{2} n$ misclassified users with high probability as $n \rightarrow \infty$ after one iteration of refinement. Hence, running the local refinement for $T = \frac{\log (\eta n)}{\log 2}$ within the groups of each cluster would suffice to converge to the ground truth assignments.
The analysis of this phase follows the one in \cite[Theorem~2]{yoon2018joint} in which the problem of recovering $K$ communities of possibly different sizes is studied. By considering the case of three equal-sized communities, the guarantees of exact recovery of the groups within each cluster readily follows when $T = O(\log n)$. 

\begin{algorithm}[t] 
	\caption{Exact Recovery of Rating Vectors}
	\label{algo:phase3}
	\begin{algorithmic}[1]
		\Function{VecRcv\:}{$n, m, Z, \{\grpGhat{i}{x}{0}: i \in [3], x \in \{A, B\}\}$}
		\For{$c\in[m]$ and $x\in \{A,B\}$}
		\State\algorithmicfor\ {$i \in [3]$}\ \algorithmicdo\ $\deltaDiff{i}{x}{c} \gets \sum_{r\in \grpGhat{i}{x}{0}} Z_{r,c}$ 
		\State $j \gets \argmax_{i \in [3]} \deltaDiff{i}{x}{c}$
		\State $\vecVhatElmnt{j}{x}{c} \gets 0$
		\If {$\sum_{i\in[3]\setminus\{j\}} \deltaDiff{i}{x}{c} \geq 0$}
		\State\algorithmicfor\ {$i \in [3]\setminus\{j\}$}\ \algorithmicdo\ {$\vecVhatElmnt{i}{x}{c} \gets 0$}
		\Else
		\State\algorithmicfor\ {$i \in [3]\setminus\{j\}$}\ \algorithmicdo\ {$\vecVhatElmnt{i}{x}{c} \gets 1$}
		\EndIf
		\EndFor
		\State \Return $\{\vecVhat{i}{x}: i \in [3], x \in \{A, B\}\}$
		\EndFunction
	\end{algorithmic}
\end{algorithm}

\begin{algorithm}[t] 
	\caption{Local Iterative Refinement of Groups (Set $flag = 0$)}
	\label{algo:phase4_groups}
	\begin{algorithmic}[1]
		\Function{Refine\:}{$flag, n, m, T, Y, Z, \mathcal{G}, \{(\grpGhat{i}{x}{0}, \vecVhat{i}{x}): i \in [3], x \in \{ A,  B\}\}$}
		\State $\estPrmtr{\alpha} \gets \frac{1}{6 \binom{n/6}{2}} \left| \left\{(f,g) \in E: f,g \in \grpG{i}{x}, x \in \{A,B\}, i\in[3] \right\} \right|$ 
		\State $\estPrmtr{\beta} \gets \frac{6}{n^2} \left| \left\{(f,g) \in E: f \in \grpG{i}{x}, g \in \grpG{j}{x}, x \in \{A,B\}, i\in[3], j\in[3]\setminus i\right\} \right|$
		\State $\estPrmtr{\theta} \gets |\{(r,c) \in \Omega: \Yelmnt{r}{c} \neq 
		\vecVhatElmnt{i}{x}{c}, 
		r\in \grpGhat{i}{x}{0}\}|/ |\Omega|$
		\For{$t \in[T]$ and $x\in\{A,B\}$} 
		\State\algorithmicfor\ 
		{$i\in [3]$} \algorithmicdo\ {$\grpGhat{i }{x}{t} \gets \varnothing$}
		\For{$r \gets 1$ {\bfseries to} $n$}
		\State $j \gets \argmax_{i \in [3]}  |\{c\!:Y_{r,c}=\vecVhat{i}{x}(c)\}| \cdot \log \left(\frac{1\!-\!\estPrmtr{\theta}}{\estPrmtr{\theta}}\right) 
		+ e \left(\!\{r\}\!, \grpGhat{i}{x}{t-1}\!\right) \cdot\log\! \left(\frac{(1\!-\!\estPrmtr{\beta}) \estPrmtr{\alpha}}{(1\!-\!\estPrmtr{\alpha})
			\estPrmtr{\beta}}\right)$
		\State $\grpGhat{j}{x}{t} \gets \grpGhat{j}{x}{t} \cup \{r\}$
		\If{$flag == 1$} 
		\State $\{\vecVhat{i}{x}\!: i \!\in\! [3], x \!\in\! \{ A,  B\}\} \gets$ {\scshape VecRcv\:}($n, m, Z, \{\grpGhat{i}{x}{t}\!: i \!\in\! [3], x \!\in\! \{A,  B\}\}$)
		\EndIf
		\EndFor
		\EndFor
		\State \Return $\{\grpGhat{i}{x}{T}: i \in [3], x \in \{ A,  B\}\}, \{\vecVhat{i}{x}: i \in [3], x \in \{ A,  B\}\}$
		\EndFunction
	\end{algorithmic}
\end{algorithm}

\begin{remark}
	\label{rmrk:2algo}
	The iterative refinement in  Algorithm~\ref{algo:phase4_groups} can be applied only on the groups  (when $flag = 0$), or on the groups as well as the rating vectors (for $flag = 1$). Even though the former is sufficient for reliable estimation of the rating matrix, we show, through our simulation results in the following section, that the latter achieves a better performance for finite regimes of $n$ and $m$.
	$\hfill\blacksquare$
\end{remark}

\begin{remark}
	\label{rmrk:XOR}
	The problem is formulated under the finite-field model only for the purpose of making an initial step towards a more generalized and realistic algorithm. Fortunately, as many of the theory-inspiring works do, the theory process of characterizing the optimal sample complexity under this model could also shed insights into developing a universal algorithm that is applicable to a general problem setting rather than the specific problem setting considered for the theoretical analysis, as long as some slight algorithmic modifications are made. 
	To demonstrate the universality of the algorithm, we consider a practical scenario in which ratings are real-valued (for which linear dependency between rating vectors is well-accepted) and observation noise is Gaussian. In this setting, the \emph{detection} problem (under the current model) will be replaced by an \emph{estimation} problem. Consequently, we update Algorithm~\ref{algo:phase3} to incorporate an MLE of the rating vectors; and modify the local refinement criterion on Line 8 in Algorithm~\ref{algo:phase4_groups} to find the group that minimizes some properly-defined distance metric between the observed and estimated ratings such as Root Mean Squared Error (RMSE). In Section~\ref{sec:expResults}, we conduct experiments under the aforementioned setting, and show that our algorithm achieves superior performance over the state-of-the-art algorithms.
	$\hfill\blacksquare$
\end{remark}

\subsection{Computational Complexity}
One of the crucial aspects of the proposed algorithm is its computational efficiency. Phase~$1$ can be done in polynomial time in the number of vertices \cite{abbe2015exact,massoulie2014community}. Phase~$2$ can be done in $O(\left|E\right| \log n)$ using the power method \cite{boutsidis2015spectral}. Phase~$3$ requires a single pass over all entries of the observed matrix, which corresponds to $O(\left|\Omega\right|)$. Finally, in each iteration of Phase~$4$, the affiliation update of user~$r \in [n]$ requires reading the entries of the $r^{\text{th}}$ row of $Y$ and the edges connected to user~$r$, which amounts to $O(\left|\Omega\right| + \left|E\right|)$ for each of the $T$ iterations, assuming an appropriate data structure. Hence, the overall computational complexity reads $\mathsf{poly}(n) + O( \left|\Omega\right| \log n)$.

\begin{remark}
	The complexity bottleneck is in Phase~1 (exact clustering), as it relies upon \cite{abbe2015exact,massoulie2014community}, exhibiting $\mathsf{poly}(n)$ runtime. 
	This can be improved, without any performance degradation, by replacing the \emph{exact} clustering in Phase~1 with \emph{almost exact} clustering, yielding $O(\vert E \vert \log n)$ runtime \cite{boutsidis2015spectral}. 
	In return, Phase~4 should be modified so that the local iterative refinement is applied on cluster affiliation, as well as group affiliation and rating vectors. As a result, the improved overall runtime reads $O((\left|\Omega\right| + \left|E\right|)\log n)$. 
	$\hfill\blacksquare$
\end{remark}

\section{Experimental Results}
\label{sec:expResults}
We first conduct Monte Carlo experiments to corroborate Theorem~\ref{Thm:p_star}. Let $\alpha = \alphaTild \frac{\log n}{n}$, $\beta = \betaTild \frac{\log n}{n}$, and $\gamma = \gammaTild \frac{\log n}{n}$. We consider a setting where $\theta=0.1$, $\betaTild=10$, $\gammaTild=0.5$, $\intraTau=\interTau=0.5$. The synthetic data is generated as per the model in Section~\ref{sec:probForm}.
In Figs. \ref{fig:synth_alph40} and \ref{fig:synth_alph17}, we evaluate the performance of the proposed algorithm (with local iterative refinement of groups and rating vectors), and quantify the empirical success rate as a function of the normalized sample complexity, over $10^3$~randomly drawn realizations of rating vectors and hierarchical graphs. We vary $n$ and $m$, preserving the ratio $n/m=3$. 
Fig.~\ref{fig:synth_alph40} depicts the case of $\alphaTild = 40$ which corresponds to perfect clustering/grouping regime (Remark~\ref{rmrk:perfectCluster}). On the other hand, Fig.~\ref{fig:synth_alph17} depicts the case of $\alphaTild = 17$ which corresponds to grouping-limited regime (Remark~\ref{rmrk:groupLimit}). In both figures, we observe a phase transition\footnote{The transition is ideally a step function at $p = p^\star$ as $n$ and $m$ tend to infinity.} in the success rate at $p = p^\star$, and as we increase $n$ and $m$, the phase transition gets sharper. These figures corroborate Theorem~\ref{Thm:p_star} in different regimes when the graph side information is not scarce.
Fig.~\ref{fig:synth_algoComp} compares the performance of the proposed algorithm for $n=3000$ and $m=1000$ under two different strategies of local iterative refinement: (i) local refinement of groups only (set $flag=0$ in Algorithm~\ref{algo:phase4_groups}); and (ii) local refinement of both groups and rating vectors (set $flag=1$ in Algorithm~\ref{algo:phase4_groups}). It is clear that the second strategy outperforms the first in the finite regime of $n$ and $m$, which is consistent with Remark~\ref{rmrk:2algo}. Furthermore, the gap between the two versions shrinks as we gradually increase $\alphaTild$ (i.e., as the quality of the graph gradually improves).

\begin{figure*}[t]
	\centering
	\subfloat[]{\includegraphics[width=0.32\textwidth]{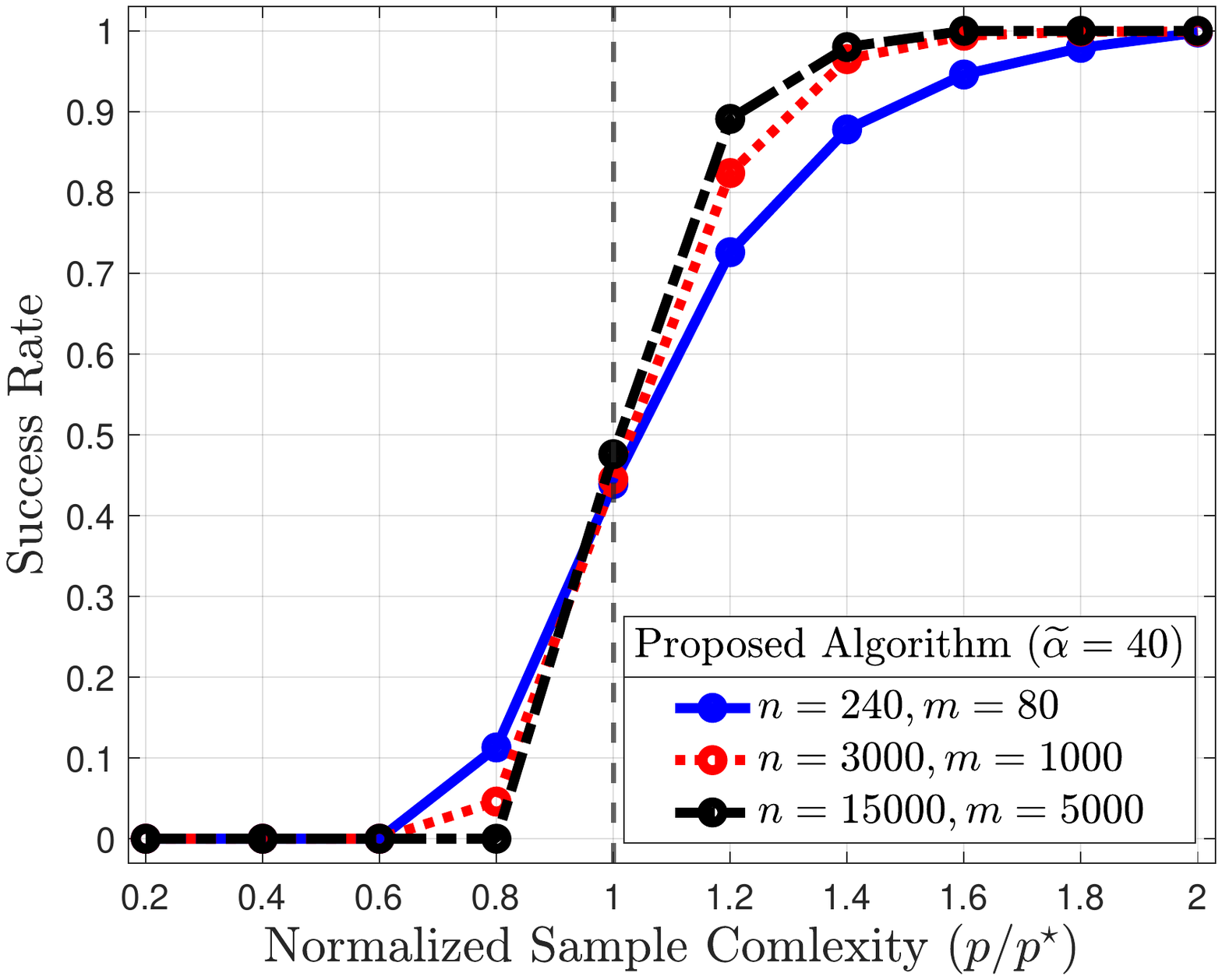}
		\label{fig:synth_alph40}}
	\hfill
	\subfloat[]{\includegraphics[width=0.32\textwidth]{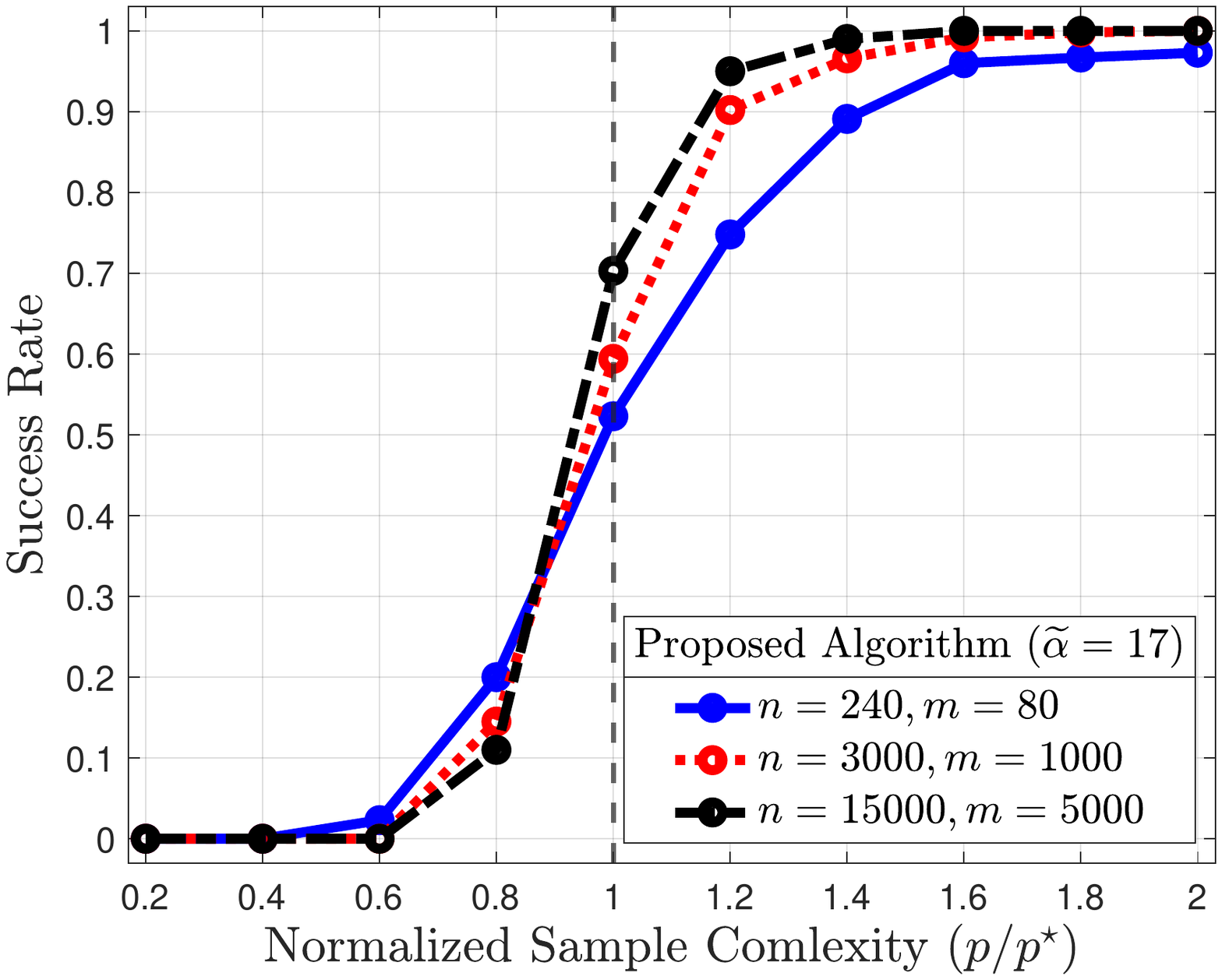}
		\label{fig:synth_alph17}}
	\hfill
	\subfloat[$n=3000$ and $m=1000$.]{\includegraphics[width=0.32\textwidth]{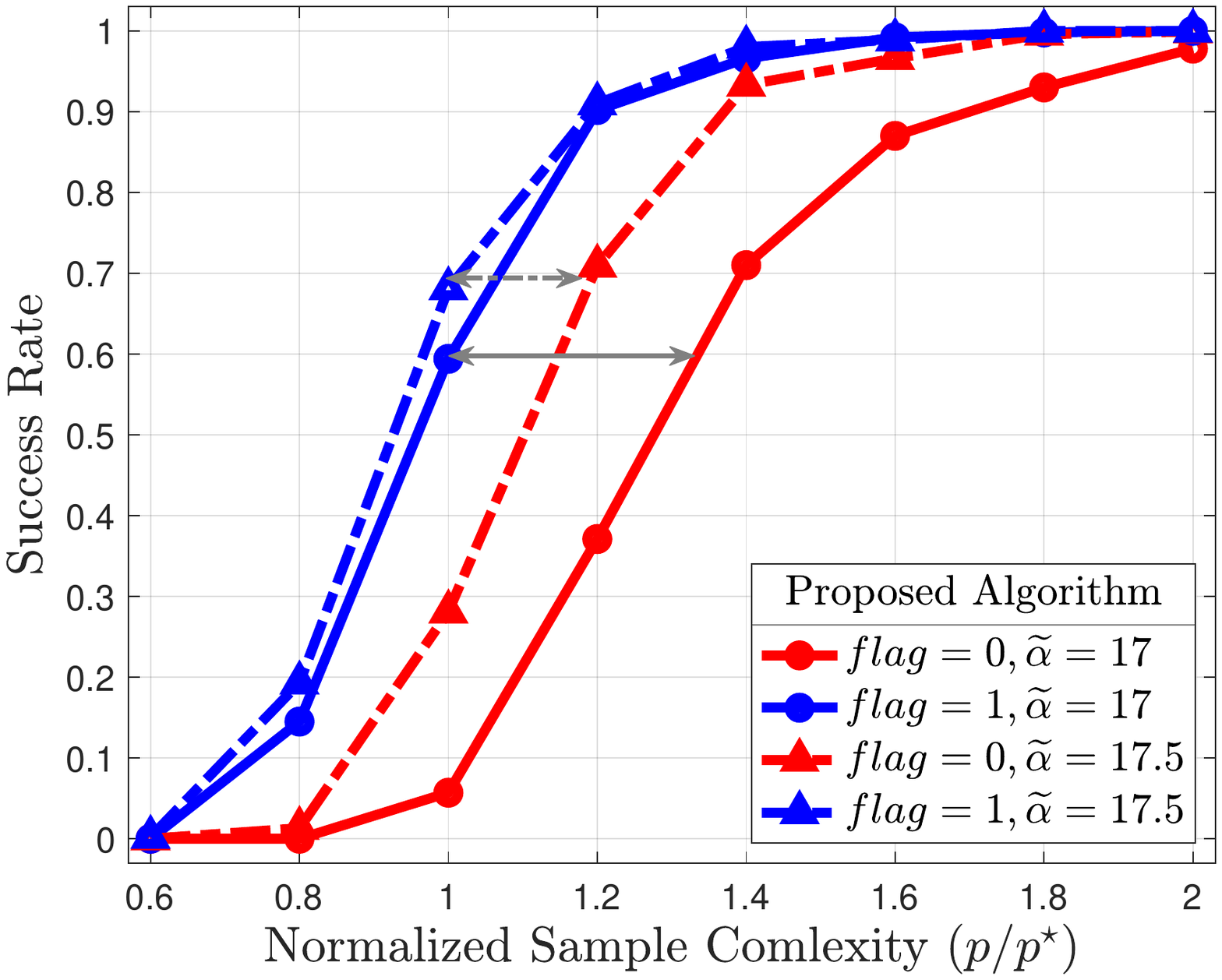}
		\label{fig:synth_algoComp}}
	\hfill
	\subfloat[$\sigma^2=0.5$.]{\includegraphics[width=0.4\textwidth]{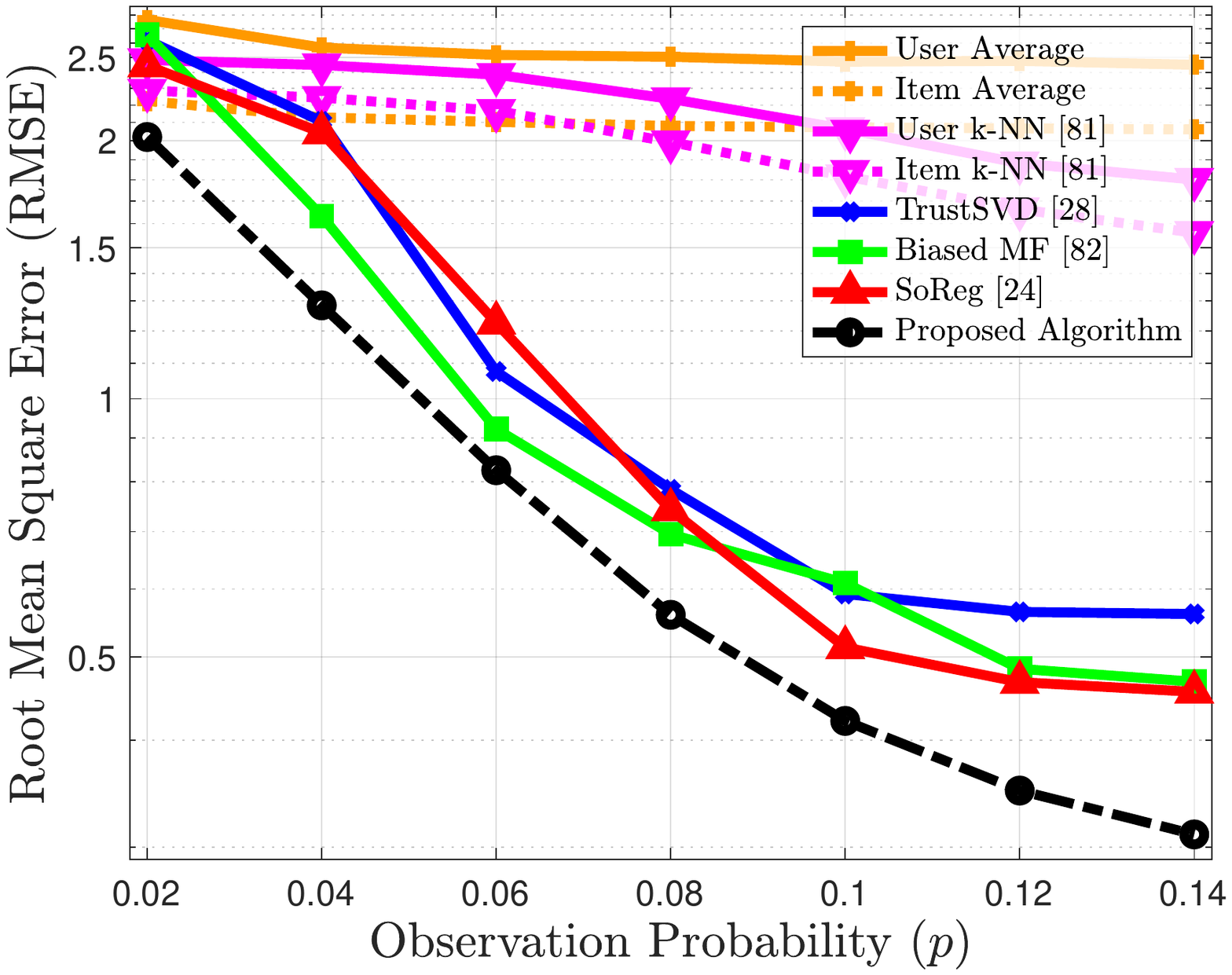}
		\label{fig:real_p_RMSE}}
	\hspace{5mm}
	\subfloat[$p=0.08$.]{\includegraphics[width=0.4\textwidth]{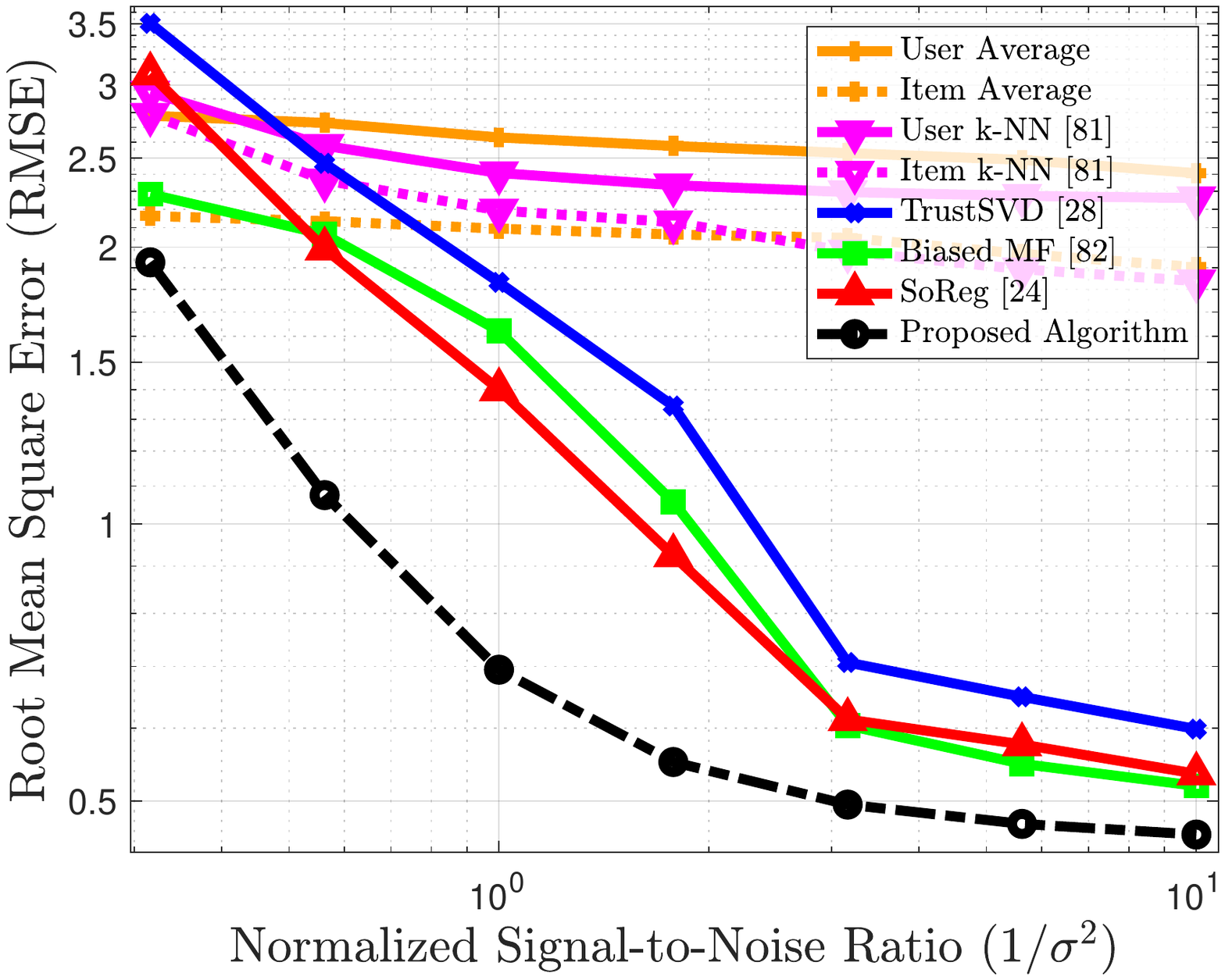}
		\label{fig:real_snr_RMSE}}
	\caption{
			(a), (b) The success rate of the proposed algorithm as a function of $p/p^\star$ under different values of $n$, $m$ and $\alphaTild{}$, where (a) corresponds to perfect clustering/grouping regime, and (b) corresponds to grouping-limited regime. (c) Performance of the proposed algorithm under two different local iterative refinement strategies.
			(d) Comparison of RMSE achieved by various recommendation algorithm on the Poliblog dataset \cite{adamic2005political} as a function of $p$.
			(e) Comparison of RMSE achieved by various recommendation algorithm on the Poliblog dataset as a function of $1/\sigma^2$.
	}
\end{figure*}

\begin{table}[t]
	\caption{Runtimes of recommendation algorithms for the experiment setting of Fig.~\ref{fig:real_p_RMSE} and $p=0.1$.}
	\label{tble:runTime}
	\centering
	\begin{adjustbox}{width=\textwidth}
		\begin{tabular}{ccccccccc}
			\toprule
			& User Average & Item Average & User k-NN & Item k-NN & TrustSVD  &Biased MF & SoReg & Proposed Algorithm \\
			\midrule
			Time (sec) & 0.021 & 0.025 & 0.299 & 0.311 & 0.482 & 0.266 & 0.328 & 0.055\\
			\bottomrule
		\end{tabular}
	\end{adjustbox}
	\vspace{-4mm}
\end{table}

Next, similar to~\cite{ahn2018binary, yoon2018joint,tan2019community,jo2020discrete}, the performance of the proposed algorithm is assessed on semi-real data (real graph but synthetic rating vectors). We consider a subgraph of the political blog network \cite{adamic2005political}, which is shown to exhibit a hierarchical structure \cite{peixoto2014hierarchical}. 
In particular, we consider a tall matrix setting of $n = 381$ and $m = 200$ in order to investigate the gain in sample complexity due to the graph side information. The selected subgraph consists of two clusters of political parties, each of which comprises three groups. The three groups of the first cluster consist of $98$, $34$ and $103$ users, while the three groups of the second cluster consist of $58$, $68$ and $20$ users\footnote{We refer to the supplementary material for a visualization of the selected subgraph of the political blog network using t-SNE algorithm.}. The corresponding rating vectors are generated such that the ratings are drawn from $[0,10]$ (i.e., real numbers), and the observations are corrupted by a Gaussian noise with mean zero and a given variance~$\sigma^2$. We use root mean square error (RMSE) as the evaluation metric, and assess the performance of the proposed algorithm against various recommendation algorithms, namely User Average, Item Average, User k-Nearest Neighbor (k-NN) \cite{pan2012knn}, Item k-NN \cite{pan2012knn}, TrustSVD \cite{guo2015trustsvd}, Biased Matrix Factorization (MF) \cite{koren2008factorization}, and Matrix Factorization with Social Regularization (SoReg) \cite{ma2011recommender}. Note that \cite{ahn2018binary,yoon2018joint} are designed to work for rating matrices whose elements are drawn from a finite field, and hence they cannot be run under the practical scenario considered in this setting.
In Fig.~\ref{fig:real_p_RMSE}, we compute RMSE as a function of $p$, for fixed $\sigma^2=0.5$. On the other hand, Fig.~\ref{fig:real_snr_RMSE} depicts RMSE as a function of the normalized signal-to-noise ratio $1/\sigma^2$, for fixed  $p=0.08$. It is evident that the proposed algorithm achieves superior performance over the state-of-the-art algorithms for a wide range of observation probabilities and Gaussian noise variances, demonstrating its viability and efficiency in practical scenarios.

Finally, Table~\ref{tble:runTime} demonstrates the computational efficiency of the proposed algorithm, and reports the runtimes of recommendation algorithms for the experiment setting of Fig.~\ref{fig:real_p_RMSE} and $p=0.1$. 
The runtimes are averaged over 20 trials. The proposed algorithm achieves a faster runtime than all other algorithms except for User Average and Item Average. However, as shown in Fig.~\ref{fig:real_p_RMSE}, the performance of these faster algorithms, in terms of RMSE, is inferior to the majority of other algorithms.

\section*{Broader Impact}
We emphasize two positive impacts of our work. First, it serves to enhance the performance of \emph{personalized} recommender systems (one of the most influential commercial applications) with the aid of social graph which is often available in a variety of applications. Second, it achieves \emph{fairness} among all users by providing high quality recommendations even to new users who have not rated any items before. One negative consequence of this work is w.r.t. the \emph{privacy} of users. User privacy may not be preserved in the process of exploiting \emph{indirect} information posed in social graphs, even though direct information, such as user profiles, is protected.

\begin{ack}
	The work of A. Elmahdy and S. Mohajer is supported in part by the National Science Foundation under Grants CCF-1617884 and CCF-1749981.
	The work of J. Ahn and C. Suh is supported by the National Research Foundation of Korea (NRF) grant funded by the Korea government (MSIP) (No.2018R1A1A1A05022889).
\end{ack}

\newpage
\bibliographystyle{IEEEtran}
\bibliography{myRef_ML}

\newpage
\section*{\hfil \hfil \LARGE Supplementary Material}
\setcounter{section}{1}
\addtocounter{section}{-1}
\section{List of Underlying Assumptions}
The proofs of Theorem~\ref{Thm:p_star} and Theorem~\ref{Thm:Alg_theory} rely on a number of assumptions on the model parameters $(n,m,p,\theta,\alpha,\beta, \gamma)$. We enumerate them before proceeding with the formal proofs in the following sections.
\begin{itemize}[leftmargin=8mm]
	\item $n$ and $m$ tend to $\infty$.
	\item $m = \omega(\log n)$ and $\log m = o(n)$. These assumptions rule out extremely tall or wide matrices, respectively, so that we can resort to large deviation theories in the proofs.
	\item $m = O(n)$. This is a sufficient condition for reliable estimation of $(\alpha, \beta, \theta)$ for the proposed computationally-efficient algorithm. If these parameters are known a priori, this assumption can be disregarded.
	\item $\theta = \Theta(1)$.
	\item $\alpha \geq \beta \geq \gamma$. This assumption reflects realistic scenarios in which users within the same group (or cluster) are more likely to be connected as per the social homophily theory~\cite{mcpherson2001birds}.
	\item $\alpha, \beta, \gamma = \Theta \left(\frac{\log n}{n}\right)$.
\end{itemize}

\section{Proof of Theorem~\ref{Thm:p_star}}
\subsection{Achievability proof}
\label{sec:achv}
Let $\psi_{\text{ML}}$ be the maximum likelihood estimator. Fix $\epsilon >0$. Consider the sufficient conditions claimed in Theorem~\ref{Thm:p_star}:
\begin{align}
	& p   \geq \frac{ (1+\epsilon) 3 \log m}{ (\sqrt{1-\theta} - \sqrt{\theta})^2 n }, \\
	& p \geq  \frac{  (1+\epsilon) \log n - \frac{1}{6} n I_g }{ (\sqrt{1-\theta} - \sqrt{\theta})^2 m \delta_g },  \\
	& p \geq  \frac{   (1+\epsilon) \log n - \frac{1}{6} n I_{c1} - \frac{1}{3} n I_{c2} }{ (\sqrt{1-\theta} - \sqrt{\theta})^2 m \delta_c}. 
\end{align}
For notational simplicity, let us define $I_r \coloneqq p (\sqrt{1-\theta} - \sqrt{\theta})^2$. Then, the above conditions can be rewritten as: 
\begin{align}
	&\frac{1}{3}nI_r \geq (1+\epsilon) \log m , \label{eq:suffCond_1} \\
	& m \intraTau I_r + \frac{1}{6}n\Ig \geq (1+\epsilon) \log n, \label{eq:suffCond_2}  \\
	& m \interTau I_r + \frac{1}{6}n\IcOne + \frac{1}{3}n\IcTwo \geq (1+\epsilon) \log n \label{eq:suffCond_3}. 
\end{align}
In what follows, we will show that the probability of error when applying $\psi_{\text{ML}}$ tends to zero if all of the above conditions are satisfied. 

Recall that each cluster consists of three groups and the rating vectors of the three groups respect some dependency relationship, reflected in $v_3^A = v_{1}^A \oplus v_2^A$ and $v_3^B = v_{1}^B \oplus v_2^B$. Here, $\vecV{i}{x}$ denotes the rating vector of the $i^{\text{th}}$ group in cluster $x$ where $i \in \{1,2,3\}$ and $x \in \{A, B\}$. This then motivates us to assume that without loss of generality, the ground-truth rating matrix, say $\gtMat\in \matSet$, reads:
\begin{align}
	\label{eq_M0}
	\gtMat 
	\!\coloneqq\!\!
	\begin{bmatrix}
		\begin{array}{c|c|c|c|c|c|c|c}
			\!\!\mathbf{1}_{\frac{n}{6}\times \tau_{000} m}\!\! & \!\!\mathbf{1}_{\frac{n}{6}\times \tau_{001} m}\!\! & \!\!\mathbf{1}_{\frac{n}{6}\times \tau_{010} m}\!\! & \!\!\mathbf{1}_{\frac{n}{6}\times \tau_{011} m}\!\! & \!\!\mathbf{1}_{\frac{n}{6}\times \tau_{100} m}\!\! & \!\!\mathbf{1}_{\frac{n}{6}\times \tau_{101} m}\!\! & \!\!\mathbf{1}_{\frac{n}{6}\times \tau_{110} m}\!\! & \!\!\mathbf{1}_{\frac{n}{6}\times \tau_{111} m}\!\!\\
			\hline
			\!\!\mathbf{0}_{\frac{n}{6}\times \tau_{000} m}\!\! & \!\!\mathbf{0}_{\frac{n}{6}\times \tau_{001} m}\!\! & \!\!\mathbf{0}_{\frac{n}{6}\times \tau_{010} m}\!\! & \!\!\mathbf{0}_{\frac{n}{6}\times \tau_{011} m}\!\! & \!\!\mathbf{1}_{\frac{n}{6}\times \tau_{100} m}\!\! & \!\!\mathbf{1}_{\frac{n}{6}\times \tau_{101} m}\!\! & \!\!\mathbf{1}_{\frac{n}{6}\times \tau_{110} m}\!\! & \!\!\mathbf{1}_{\frac{n}{6}\times \tau_{111} m}\!\!\\
			\hline
			\!\!\mathbf{1}_{\frac{n}{6}\times \tau_{000} m}\!\! & \!\!\mathbf{1}_{\frac{n}{6}\times \tau_{001} m}\!\! & \!\!\mathbf{1}_{\frac{n}{6}\times \tau_{010} m}\!\! & \!\!\mathbf{1}_{\frac{n}{6}\times \tau_{011} m}\!\! & \!\!\mathbf{0}_{\frac{n}{6}\times \tau_{100} m}\!\! & \!\!\mathbf{0}_{\frac{n}{6}\times \tau_{101} m}\!\! & \!\!\mathbf{0}_{\frac{n}{6}\times \tau_{110} m}\!\! & \!\!\mathbf{0}_{\frac{n}{6}\times \tau_{111} m}\!\!\\
			\hline
			\!\!\mathbf{0}_{\frac{n}{6}\times \tau_{000} m}\!\! & \!\!\mathbf{0}_{\frac{n}{6}\times \tau_{001} m}\!\! & \!\!\mathbf{1}_{\frac{n}{6}\times \tau_{010} m}\!\! & \!\!\mathbf{1}_{\frac{n}{6}\times \tau_{011} m}\!\! & \!\!\mathbf{0}_{\frac{n}{6}\times \tau_{100} m}\!\! & \!\!\mathbf{0}_{\frac{n}{6}\times \tau_{101} m}\!\! & \!\!\mathbf{1}_{\frac{n}{6}\times \tau_{110} m}\!\! & \!\!\mathbf{1}_{\frac{n}{6}\times \tau_{111} m}\!\!\\
			\hline
			\!\!\mathbf{0}_{\frac{n}{6}\times \tau_{000} m}\!\! & \!\!\mathbf{1}_{\frac{n}{6}\times \tau_{001} m}\!\! & \!\!\mathbf{0}_{\frac{n}{6}\times \tau_{010} m}\!\! & \!\!\mathbf{1}_{\frac{n}{6}\times \tau_{011} m}\!\! & \!\!\mathbf{0}_{\frac{n}{6}\times \tau_{100} m}\!\! & \!\!\mathbf{1}_{\frac{n}{6}\times \tau_{101} m}\!\! & \!\!\mathbf{0}_{\frac{n}{6}\times \tau_{110} m}\!\! & \!\!\mathbf{1}_{\frac{n}{6}\times \tau_{111} m}\!\!\\
			\hline
			\!\!\mathbf{0}_{\frac{n}{6}\times \tau_{000} m}\!\! & \!\!\mathbf{1}_{\frac{n}{6}\times \tau_{001} m}\!\! & \!\!\mathbf{1}_{\frac{n}{6}\times \tau_{010} m}\!\! & \!\!\mathbf{0}_{\frac{n}{6}\times \tau_{011} m}\!\! & \!\!\mathbf{0}_{\frac{n}{6}\times \tau_{100} m}\!\! & \!\!\mathbf{1}_{\frac{n}{6}\times \tau_{101} m}\!\! & \!\!\mathbf{1}_{\frac{n}{6}\times \tau_{110} m}\!\! & \!\!\mathbf{0}_{\frac{n}{6}\times \tau_{111} m}\!\!
		\end{array}
	\end{bmatrix}
\end{align}
where $0 < \tau_\ell < 1$ for $\ell \in \{0,1\}^3$, and $\sum_{\ell \in \{0,1\}^3} \tau_\ell = 1$. Here, we divide the columns of $\gtMat$ into eight sections $\Tau_\ell$ where 
\begin{align*}
	\Tau_{b_1b_2b_3} \!=\left\{c\in[m]: \textrm{column $c$ of } \gtMat 
	= \begin{bmatrix}
		\mathbf{1}_{\frac{n}{6}} \!&\! b_1\mathbf{1}_{\frac{n}{6}} \!&\! (1\oplus b_1)\mathbf{1}_{\frac{n}{6}} \!&\! b_2\mathbf{1}_{\frac{n}{6}} \!&\! b_3\mathbf{1}_{\frac{n}{6}} \!&\! (b_2\oplus b_3)\mathbf{1}_{\frac{n}{6}} 
	\end{bmatrix}^\intercal\right\},
\end{align*}
for $b_1,b_2,b_3 \in \{0,1\}^3$, and we have $\tau_\ell = |\Tau_\ell|/m$.  

For a user partitioning $\mathcal{Z}$, let $\userPairSet{\alpha}{\mathcal{Z}}$ denote the set of pairs of users within any group; $\userPairSet{\beta}{\mathcal{Z}}$ denote the set of pairs of users in different groups within any cluster; and $\userPairSet{\gamma}{\mathcal{Z}}$ denote the set of pairs of users in different clusters. Formally, we have
\begin{align}
	\begin{split}
		\userPairSet{\alpha}{\mathcal{Z}} 
		&=
		\left\{
		(a,b) : a \in Z(x,i), \: b \in Z(x,i), 
		\mbox{ for } x \in \{A,B\}, \: i \in [3]
		\right\},
		\\
		\userPairSet{\beta}{\mathcal{Z}}
		&=
		\left\{
		(a,b) : a \in Z(x,i), \: b \in Z(x,j), \:
		\mbox{ for } x \in \{A,B\},\: i, j \in [3],\: i\neq j
		\right\},
		\\
		\userPairSet{\gamma}{\mathcal{Z}} 
		&=
		\left\{
		(a,b) : a \in Z(x,i), \: b \in Z(y,j), \:
		\mbox{ for } x,y \in \{A,B\},\: x \neq y, \: i, j \in [3]
		\right\}.
	\end{split}
\end{align}
Recall that the user partitioning induced by any rating matrix in $\matSet$ should satisfy the property that all groups have equal size of $n/6$ users. This implies that the sizes of $\userPairSet{\alpha}{\mathcal{Z}}$, $\userPairSet{\beta}{\mathcal{Z}}$ and $\userPairSet{\gamma}{\mathcal{Z}}$ are constants and given by 
\begin{align}
	\vert \userPairSet{\alpha}{\mathcal{Z}} \vert = 6 \binom{n/6}{2}, 
	\qquad
	\vert \userPairSet{\beta}{\mathcal{Z}} \vert = 2 \left(n/6\right)^2,
	\qquad
	\vert \userPairSet{\gamma}{\mathcal{Z}} \vert = \left(n/2\right)^2,
\end{align}
for any user partitioning. Furthermore, for a graph $\mathcal{G}$ and a user partitioning $\mathcal{Z}$, define $\numEdge{\alpha}{\mathcal{G}}{\mathcal{Z}}$ as the number of edges within any group; $\numEdge{\beta}{\mathcal{G}}{\mathcal{Z}}$ as the number of edges across groups within any cluster; and $\numEdge{\gamma}{\mathcal{G}}{\mathcal{Z}}$ as the number of edges across clusters. More formally, we have
\begin{align}
	\numEdge{\mu}{\mathcal{G}}{\mathcal{Z}}
	&=
	\sum_{(a,b) \in \userPairSet{\mu}{\mathcal{Z}}} 
	\indicatorFn{(a,b) \in \mathcal{E}},
	\label{eq:e_mu}
\end{align}
for $\mu \in \{\alphaEdge,\betaEdge,\gammaEdge\}$. The following lemma gives a precise expression of $\mathsf{L}(X)$, where $\mathsf{L}(X)$ denotes the negative log-likelihood of a candidate matrix $X$. 

\begin{lemma}
	For a given fixed input pair $(Y,\mathcal{G})$ and any $X \in \matSet$, we have
	\begin{align}
		\mathsf{L}(X) 
		=
		\log \left( \frac{1-\theta}{\theta}\right) \numDiffElmnt{Y}{X} 
		+
		\sum\limits_{\mu\in\left\{\alphaEdge,\betaEdge,\gammaEdge\right\}}
		\left(
		\log \left(\frac{1-\mu}{\mu}\right) \numEdge{\mu}{\mathcal{G}}{\mathcal{Z}}
		-
		\log(1-\mu) \left\vert \userPairSet{\mu}{\mathcal{Z}} \right\vert 
		\right).
		\label{eq:neg_log_likelihood}
	\end{align}
	\label{lm:neg_log_likelihood}
\end{lemma}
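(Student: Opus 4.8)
The plan is to compute the joint likelihood $\Pr[(Y,\mathcal{G})\mid X]$ in closed form and then take its negative logarithm. The key structural observation is that, conditioned on a candidate ground truth $X$ — equivalently, on the user partitioning $\mathcal{Z}$ that $X$ induces into clusters and groups — the graph $\mathcal{G}$ and the partial observation matrix $Y$ are produced by two independent mechanisms: $\mathcal{G}$ is drawn from the HSBM with parameters $(\alpha,\beta,\gamma)$ on the partition $\mathcal{Z}$, while $Y$ is obtained from $X$ through the entrywise erase-and-flip channel with parameters $(p,\theta)$. Hence $\Pr[(Y,\mathcal{G})\mid X] = \Pr[\mathcal{G}\mid\mathcal{Z}]\cdot\Pr[Y\mid X]$, and $\mathsf{L}(X)$ splits into a graph term and a rating term that I would treat separately.

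For the graph factor, every unordered pair of distinct users falls into exactly one of the three classes $\userPairSet{\alpha}{\mathcal{Z}}$, $\userPairSet{\beta}{\mathcal{Z}}$, $\userPairSet{\gamma}{\mathcal{Z}}$ (same group; same cluster but different group; different clusters), and these classes partition $\binom{[n]}{2}$. Conditioned on $\mathcal{Z}$, the edge indicators are independent Bernoulli variables with parameter $\mu$ for a pair in $\userPairSet{\mu}{\mathcal{Z}}$, so
\begin{align*}
\Pr[\mathcal{G}\mid\mathcal{Z}] = \prod_{\mu\in\{\alphaEdge,\betaEdge,\gammaEdge\}} \mu^{\,\numEdge{\mu}{\mathcal{G}}{\mathcal{Z}}}\,(1-\mu)^{\,\left\vert\userPairSet{\mu}{\mathcal{Z}}\right\vert-\numEdge{\mu}{\mathcal{G}}{\mathcal{Z}}}.
\end{align*}
Taking $-\log$ and collecting terms gives $\sum_{\mu}\big(\log\frac{1-\mu}{\mu}\,\numEdge{\mu}{\mathcal{G}}{\mathcal{Z}} - \log(1-\mu)\,\left\vert\userPairSet{\mu}{\mathcal{Z}}\right\vert\big)$, which is precisely the graph part of \eqref{eq:neg_log_likelihood}.

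For the rating factor, I would partition the $nm$ entries $(r,c)$ according to whether $Y_{rc}=*$, $Y_{rc}=X_{rc}$, or $Y_{rc}\neq X_{rc}$ (the latter two restricted to $\Omega$); by independence across entries, $\Pr[Y\mid X] = (1-p)^{nm-\vert\Omega\vert}\,(p(1-\theta))^{\vert\Omega\vert-\numDiffElmnt{Y}{X}{}}\,(p\theta)^{\numDiffElmnt{Y}{X}{}}$. Taking $-\log$, the factors $(1-p)^{nm-\vert\Omega\vert}$ and $(p(1-\theta))^{\vert\Omega\vert}$ contribute a quantity that depends only on $(Y,p,\theta)$ and not on the candidate $X$; discarding this $X$-independent constant leaves $\log\frac{1-\theta}{\theta}\,\numDiffElmnt{Y}{X}{}$, the rating part of \eqref{eq:neg_log_likelihood}. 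Summing the two contributions yields the claimed identity; I would additionally remark that since every $X\in\matSet$ induces equal-sized groups, the quantities $\left\vert\userPairSet{\mu}{\mathcal{Z}}\right\vert$ (and hence the $-\log(1-\mu)\left\vert\userPairSet{\mu}{\mathcal{Z}}\right\vert$ terms) are in fact the same constants for all candidates, so $\mathsf{L}(\cdot)$ is well-defined as a function of $X$ up to the fixed additive constant already dropped.

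There is no genuine obstacle here — the lemma is a bookkeeping computation. The only two points that require a little care are (i) verifying that $\{\userPairSet{\alpha}{\mathcal{Z}},\userPairSet{\beta}{\mathcal{Z}},\userPairSet{\gamma}{\mathcal{Z}}\}$ is indeed a partition of all user pairs, so the graph likelihood factors cleanly over the three edge types; and (ii) being explicit about which additive terms are $X$-independent and therefore absorbed into the normalization of $\mathsf{L}(\cdot)$, so that the subsequent MLE comparison in Section~\ref{sec:achv} is carried out consistently.
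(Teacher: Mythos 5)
Your proof is correct and follows essentially the same route as the paper: factor $\mathbb{P}[(Y,\mathcal{G})\mid X]$ into the rating-channel term and the HSBM graph term, write each as a product of Bernoulli likelihoods over observed entries and user pairs, and take the negative logarithm (the paper likewise silently absorbs the $X$-independent contribution $-|\Omega|\log p-(nm-|\Omega|)\log(1-p)-|\Omega|\log(1-\theta)$ into the normalization, which you make explicit). Your added remark that $\left\vert \userPairSet{\mu}{\mathcal{Z}}\right\vert$ is the same constant for every $X\in\matSet$ is consistent with the paper's observation following the lemma and does not change the argument.
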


By symmetry, $\mathbb{P}\left[\psi_{\text{ML}}(Y, \mathcal{G}) \neq M \right]$ is the same for all $M$'s as long as the considered matrix respects the $\delta$-constraint, i.e., belongs to the class of ${\cal M}^{(\delta)}$ where $\delta := \{ \delta_c, \delta_g \}$. Hence,
\begin{align}
	P_e^{(\diff)}(\psi_{\text{ML}}) 
	: = \max_{M \in \matSet} \mathbb{P}\left[\psi_{\text{ML}}(Y, \mathcal{G}) \neq M\right]
	=
	\mathbb{P}\left[\psi_{\text{ML}} (Y, \mathcal{G}) \neq \gtMat \:\vert\: \mathbf{M} = \gtMat \right]. \label{eq:errorProb_wc_app}
\end{align}
By applying the union bound together with the definition of MLE, we then obtain 
\begin{align}
	P_e^{(\diff)}(\psi_{\text{ML}}) \leq
	\sum\limits_{X \neq \gtMat} \mathbb{P}\left[\mathsf{L}(\gtMat) \geq \mathsf{L}(X)\right]
	\label{ineq:union_bound}
\end{align}
It turns out that an interested error event $\{ \mathsf{L}(\gtMat) \geq \mathsf{L}(X) \}$ depends solely on two key parameters which dictate the relationship between $X$ and $M_0 \in {\cal M}^{(\delta)}$. Let us first introduce some  notations relevant to the two parameters. Let $\{\vecV{i}{x}: x\in\{A,B\}, i\in [3]\}$ be the rating vectors w.r.t. $\gtMat$. Let $\{\vecU{i}{x}: x\in\{A,B\}, i\in [3]\}$ be the counterparts w.r.t. $X$. The first key parameter, which we denote by $\kUsrs{i}{j}{x}{y}$, indicates the number of users in group $i$ of cluster $x$ whose rating vector $v_i^x$'s are swapped with the rating vectors $u_j^y$'s of users in group $j$ of cluster $y$. The second key parameter, which we denote by $\dElmntsGen{i}{j}{x}{y}$, is the hamming distance between $v_i^x$ and $u_j^x$: $\hamDist{\vecV{i}{x}(\ell)}{\vecU{j}{x}(\ell)}$. 

Based on these two parameters, the set of rating matrices $\matSet$ is partitioned into a number of classes of matrices $\mathcal{X}(T)$. Here, each matrix class $\mathcal{X}(T)$ is defined as the set of rating matrices that is characterized by a tuple~$T$ where
\begin{align}
	T
	=
	\left(
	\left\{\kUsrs{i}{j}{x}{y}\right\}_{x,y \in \{A,B\}, \: i,j \in [3]}, 
	\left\{\dElmntsGen{i}{j}{x}{y}\right\}_{x,y \in \{A,B\}, \: i,j \in [3]}
	\right).
	\label{eq:tuple_ratMat}
\end{align}
Define $\tupleSetDelta$ as the set of all non-all-zero tuples $T$. Using the introduced set ${\cal T}^{(\delta)}$ and the tuple $T$, we can then rewrite the RHS of \eqref{ineq:union_bound} as:
\begin{align}
	\sum\limits_{X \neq \gtMat} \mathbb{P}\left[\mathsf{L}(\gtMat) \geq \mathsf{L}(X)\right]
	= \sum\limits_{T \in \mathcal{T}^{(\delta)}} 
	\sum\limits_{X \in \mathcal{X}(T)}
	\mathbb{P}\left[\mathsf{L}(\gtMat) \geq \mathsf{L}(X)\right]. \label{eq:matrix_enumerate}
\end{align}

Lemma 2 (stated below) provides an upper bound on $\mathbb{P}\left[\mathsf{L}(\gtMat) \geq \mathsf{L}(X)\right]$ for $X \in {\cal X}(T)$ and $T \in {\cal T}^{(\delta)}$.

Before proceeding with the achievability proof, we define some notations, and then present the following two lemmas that provide an upper bound on the probability that $\mathsf{L}(\gtMat)$ is greater than or equal to $\mathsf{L}(X)$. These lemmas are crucial for the convergence analysis to follow. For a ground truth rating matrix $\gtMat = (\cV_0,\cZ_0)$; a candidate rating matrix $X = (\cV, \cZ)$; and a tuple $T \in \tupleSetDelta$, define the following disjoint sets:
\begin{itemize}[leftmargin=*]
	\item define $\diffEntriesSet = \diffEntriesSet(\gtMat, X)$ as the set of matrix entries where $X \neq \gtMat$. Formally, we have
	\begin{align}
		\diffEntriesSet = \left\{(r,t)\in [n] \times [m]: X(r,t) \neq \gtMat(r,t)\right\};
		\label{eq:N1_defn}
	\end{align}
	\item define $\misClassfSet{\betaEdge}{\alphaEdge} = \misClassfSet{\betaEdge}{\alphaEdge}(\cZ_0, \cZ)$ as the set of pairs of users where the two users of each pair belong to different groups of the same cluster in $\gtMat$ (and therefore they are connected with probability $\betaEdge$), but they are estimated to be in the same group in $X$ (and hence, given the estimator output, the belief for the existence of an edge between these two users is $\alphaEdge$). Formally, we have
	\begin{align}
		\misClassfSet{\betaEdge}{\alphaEdge} 
		&= 
		\left\{
		(a,b):
		a \in Z_0(x,i_1) \cap Z(y,j),\: b \in Z_0(x,i_2) \cap Z(y,j),
		\right. \nonumber \\
		&\phantom{=\{} \left. 
		\mbox{ for } x, y \in \{A,B\},
		i_1, i_2, j \in [3],\: i_1 \neq i_2
		\right\}.
		\label{eq:N2rev_defn}
	\end{align}	
	On the other hand, define $\misClassfSet{\alphaEdge}{\betaEdge} = \misClassfSet{\alphaEdge}{\betaEdge} (\cZ_0, \cZ)$ as
	\begin{align}
		\misClassfSet{\alphaEdge}{\betaEdge} &= \left\{
		(a,b):
		a \in Z_0(x,i) \cap Z(y,j_1),\: b \in Z_0(x,i) \cap Z(y,j_2),
		\right. \nonumber \\
		&\phantom{=\{} \left. 
		\mbox{ for } x, y \in \{A,B\},\:
		i, j_1, j_2 \in [3],\: j_1 \neq j_2
		\right\};
		\label{eq:N2_defn}
	\end{align}
	\item define $\misClassfSet{\gammaEdge}{\alphaEdge} = \misClassfSet{\gammaEdge}{\alphaEdge}(\cZ_0, \cZ)$ as the set of pairs of users where the two users of each pair belong to different clusters in $\gtMat$ (and therefore they are connected with probability $\gammaEdge$), but they are estimated to be in the same group in $X$ (and hence, given the estimator output, the belief for the existence of an edge between these two users is $\alphaEdge$). Formally, we have
	\begin{align}
		\misClassfSet{\gammaEdge}{\alphaEdge} &= \left\{
		(a,b) :
		a \in Z_0(x_1,i_1) \cap Z(y,j),\: b \in Z_0(x_2,i_2) \cap Z(y,j), 
		\right. \nonumber \\
		&\phantom{=\{} \left. 
		\mbox{ for } x_1, x_2, y \in \{A,B\},\: x_1 \neq x_2,\:
		i_1, i_2, j \in [3]
		\right\}.
		\label{eq:N3rev_defn}
	\end{align}
	On the other hand, define $\misClassfSet{\alphaEdge}{\gammaEdge} = \misClassfSet{\alphaEdge}{\gammaEdge}(\cZ_0, \cZ)$ as
	\begin{align}
		\misClassfSet{\alphaEdge}{\gammaEdge} &= \left\{
		(a,b) :
		a \in Z_0(x,i) \cap Z(y_1,j_1),\: b \in Z_0(x,i) \cap Z(y_2, j_2),
		\right. \nonumber \\
		&\phantom{=\{} \left. 
		\mbox{ for } x, y_1, y_2 \in \{A,B\},\: y_1 \neq y_2,\:
		i, j_1, j_2 \in [3]
		\right\};
		\label{eq:N3_defn}
	\end{align}
	\item define $\misClassfSet{\gammaEdge}{\betaEdge} = \misClassfSet{\gammaEdge}{\betaEdge}(\cZ_0, \cZ)$ as the set of pairs of users where the two users of each pair belong to different clusters in $\gtMat$ (and therefore they are connected with probability $\gammaEdge$), but they are estimated to be in different groups of the  same cluster in $X$ (and hence, given the estimator output, the belief for the existence of an edge between these two users is $\betaEdge$). Formally, we have
	\begin{align}
		\misClassfSet{\gammaEdge}{\betaEdge} &\!=\! \left\{
		(a,b) \!:
		a \!\in\! Z_0(x_1,i_1) \!\cap\! Z(y,j_1),\: b \!\in\! Z_0(x_2,i_2) \!\cap\! Z(y,j_2),
		\right. \nonumber \\
		&\phantom{=\{} \left. 
		\!\mbox{ for } x_1, x_2, y \!\in\! \{A,B\},\: x_1 \!\neq\! x_2, \:
		i_1, i_2, j_1, j_2 \!\in\! [3],\: j_1 \!\neq\! j_2
		\right\}\!.
		\label{eq:N4rev_defn}
	\end{align}
	On the other hand, define $\misClassfSet{\betaEdge}{\gammaEdge} = \misClassfSet{\betaEdge}{\gammaEdge}(\cZ_0, \cZ)$ as
	\begin{align}
		\misClassfSet{\betaEdge}{\gammaEdge} &\!=\! \left\{
		(a,b) \!:
		a \!\in\! Z_0(x,i_1) \!\cap\! Z(y_1,j_1),\: b \!\in\! Z_0(x, i_2) \!\cap\! Z(y_2, j_2), 
		\right. \nonumber \\
		&\phantom{=\{} \left. 
		\!\mbox{ for } x, y_1, y_2 \!\in\! \{A,B\},\: y_1 \!\neq\! y_2, \:
		i_1, i_2, j_1, j_2 \!\in\! [3],\: i_1 \!\neq\! i_2
		\right\}\!.
		\label{eq:N4_defn}
	\end{align}
\end{itemize}
Let $\mathsf{B}_i^{(\sigma)}$ denote the $i^{\text{th}}$~Bernoulli random variable with parameter $\sigma \in \{p, \theta, \alphaEdge, \betaEdge, \gammaEdge\}$. Define the following sets of independent Bernoulli random variables:
\begin{align}
	\left\{\mathsf{B}_i^{(p)}: i\in\diffEntriesSet\right\}, \:
	\left\{\mathsf{B}_i^{(\theta)}:i \in \diffEntriesSet \right\}, \:
	\left\{
	\mathsf{B}_i^{(\mu)}: 
	i\in \misClassfSet{\mu}{\nu},\:
	\mu, \nu \in \left\{\alphaEdge,\betaEdge,\gammaEdge \right\}, \:
	\mu \neq \nu
	\right\}.
	\label{eq:bern_sets}
\end{align} 
Now, define $\mathbf{B} = \mathbf{B} \left(\diffEntriesSet, \{\misClassfSet{\mu}{\nu} : \mu, \nu \in \{\alphaEdge,\betaEdge,\gammaEdge \}, \: \mu \neq \nu\}\right)$ as
\begin{align}    
	\mathbf{B}
	&\coloneqq
	\log\left(\frac{1-\theta}{\theta}\right) 
	\sum_{i \in \diffEntriesSet}
	\mathsf{B}_i^{(p)}
	\left(2\mathsf{B}_i^{(\theta)} - 1\right)
	\nonumber\\
	&\phantom{=}
	+ \left(\log\frac{(1-\betaEdge)\alphaEdge}{(1-\alphaEdge)\betaEdge}\right)
	\left(
	\sum\limits_{i \in \misClassfSet{\betaEdge}{\alphaEdge}} \mathsf{B}_i^{(\betaEdge)}
	- \sum\limits_{i \in \misClassfSet{\alphaEdge}{\betaEdge}} \mathsf{B}_i^{(\alphaEdge)} 
	\right)
	+
	\left(\log\frac{1-\alphaEdge}{1-\betaEdge}\right) 
	\left(
	\left\vert \misClassfSet{\betaEdge}{\alphaEdge}\right\vert
	- \left\vert \misClassfSet{\alphaEdge}{\betaEdge}\right\vert
	\right)
	\nonumber\\
	&\phantom{=}
	+ \left(\log\frac{(1-\gammaEdge)\alphaEdge}{(1-\alphaEdge)\gammaEdge}\right)
	\left(
	\sum\limits_{i \in \misClassfSet{\gammaEdge}{\alphaEdge}} \mathsf{B}_i^{(\gammaEdge)}
	- \sum\limits_{i \in \misClassfSet{\alphaEdge}{\gammaEdge}} \mathsf{B}_i^{(\alphaEdge)} 
	\right)
	+  
	\left(\log\frac{1-\alphaEdge}{1-\gammaEdge}\right) 
	\left(
	\left\vert \misClassfSet{\gammaEdge}{\alphaEdge}\right\vert
	- \left\vert \misClassfSet{\alphaEdge}{\gammaEdge}\right\vert
	\right)
	\nonumber\\
	&\phantom{=}
	+ \left(\log\frac{(1-\gammaEdge)\betaEdge}{(1-\betaEdge)\gammaEdge}\right)
	\left(
	\sum\limits_{i \in \misClassfSet{\gammaEdge}{\betaEdge}} \mathsf{B}_i^{(\gammaEdge)}
	- \sum\limits_{i \in \misClassfSet{\betaEdge}{\gammaEdge}} \mathsf{B}_i^{(\betaEdge)} 
	\right)
	+  
	\left(\log\frac{1-\betaEdge}{1-\gammaEdge}\right)
	\left(
	\left\vert \misClassfSet{\gammaEdge}{\betaEdge}\right\vert
	- \left\vert \misClassfSet{\betaEdge}{\gammaEdge}\right\vert
	\right).
	\label{eq:B_middle}
\end{align}

In the following lemma, we write each summand in \eqref{eq:matrix_enumerate} in terms of \eqref{eq:B_middle}.
\begin{lemma}
	\label{lemma:neg_lik_M0_XT}
	For any $X \in \mathcal{X}(T)$ and $T \in \tupleSetDelta$, we have
	\begin{align}
		\mathbb{P}\left[\mathsf{L}\left(\gtMat\right) \geq \mathsf{L}(X)\right]
		= 
		\mathbb{P} \left[\mathbf{B} \geq 0 \right].
		\label{eq:lemma3}
	\end{align}
	We refer to Appendix~\ref{proof:neg_lik_M0_XT} for the proof of Lemma~\ref{lemma:neg_lik_M0_XT}.
\end{lemma}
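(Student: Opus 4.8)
The plan is to compute the difference $\mathsf{L}(\gtMat) - \mathsf{L}(X)$ directly using the closed-form expression from Lemma~\ref{lm:neg_log_likelihood}, and then show that the resulting expression is a (deterministic constant) plus a sum of independent Bernoulli-weighted terms that matches $\mathbf{B}$ in distribution. The key reduction is that the difference $\mathsf{L}(\gtMat) - \mathsf{L}(X)$ depends only on the locations where $\gtMat$ and $X$ differ — both in terms of matrix entries (the set $\diffEntriesSet$) and in terms of how the induced partitions $\cZ_0$ and $\cZ$ disagree (captured by the sets $\misClassfSet{\mu}{\nu}$). Everything else cancels.

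First I would handle the \emph{rating-entry} term. From \eqref{eq:neg_log_likelihood}, the matrix-dependent part of $\mathsf{L}(X)$ is $\log\!\big(\tfrac{1-\theta}{\theta}\big)\numDiffElmnt{Y}{X}$, where $\numDiffElmnt{Y}{X}$ counts observed entries on which $Y$ disagrees with $X$. Subtracting the corresponding term for $\gtMat$, only columns/rows in $\diffEntriesSet$ survive. For each $(r,t)\in\diffEntriesSet$, the entry is observed with probability $p$ (the Bernoulli $\mathsf{B}_i^{(p)}$), and conditioned on observation, $Y(r,t)$ equals $\gtMat(r,t)$ or its flip; a short case analysis shows the contribution to $\mathsf{L}(\gtMat)-\mathsf{L}(X)$ is $\log\!\big(\tfrac{1-\theta}{\theta}\big)\,\mathsf{B}_i^{(p)}\big(2\mathsf{B}_i^{(\theta)}-1\big)$, exactly the first line of \eqref{eq:B_middle}. (Here $\mathsf{B}_i^{(\theta)}$ records whether the observation at $i$ was flipped relative to $\gtMat$.)

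Next I would handle the \emph{graph} term. Since $\vert\userPairSet{\mu}{\cZ}\vert$ is a fixed constant for every valid partition (as noted right before \eqref{eq:e_mu}), the terms $-\log(1-\mu)\vert\userPairSet{\mu}{\cZ}\vert$ are \emph{not} constant across $X$ only because a pair $(a,b)$ can migrate from $\userPairSet{\mu}{\cZ_0}$ to $\userPairSet{\nu}{\cZ}$ for $\mu\neq\nu$ — this is precisely what the six sets $\misClassfSet{\mu}{\nu}$ enumerate, and they partition the set of ``disagreeing'' pairs. For a pair in $\misClassfSet{\betaEdge}{\alphaEdge}$, the true connection probability is $\betaEdge$ (so the edge indicator is $\mathsf{B}_i^{(\betaEdge)}$), while the belief under $X$ treats it as an $\alphaEdge$-pair; collecting the $\log\tfrac{1-\mu}{\mu}\,e_\mu$ pieces and the $-\log(1-\mu)\vert\cP_\mu\vert$ pieces over all six migration types, and using $\log\tfrac{1-\alpha}{\alpha}-\log\tfrac{1-\beta}{\beta}=\log\tfrac{(1-\beta)\alpha}{(1-\alpha)\beta}$, reproduces lines two through four of \eqref{eq:B_middle}. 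Assembling the entry term and the graph term gives $\mathsf{L}(\gtMat)-\mathsf{L}(X)=\mathbf{B}$ as random variables with the claimed joint law, so $\mathbb{P}[\mathsf{L}(\gtMat)\geq\mathsf{L}(X)]=\mathbb{P}[\mathbf{B}\geq 0]$.

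\textbf{The main obstacle} I anticipate is the bookkeeping in the graph term: one must verify that the six sets $\misClassfSet{\mu}{\nu}$ together account for \emph{every} pair on which the partition-based beliefs of $\gtMat$ and $X$ differ (no pair double-counted, none missed), and that migrations in the ``reverse'' direction ($\misClassfSet{\alpha}{\beta}$ etc.) enter with the correct opposite sign — this is why they appear with minus signs inside the parentheses in \eqref{eq:B_middle}. The independence of all the Bernoulli variables in \eqref{eq:bern_sets} is immediate since they index disjoint sets of matrix entries and disjoint sets of vertex pairs, and the graph edges and matrix observations are generated independently. The rest is the routine algebra of matching coefficients, which I would not spell out in detail.
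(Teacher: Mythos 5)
Your proposal is correct and follows essentially the same route as the paper's proof: apply Lemma~\ref{lm:neg_log_likelihood}, write $\mathsf{L}(\gtMat)-\mathsf{L}(X)$ as the rating-entry term over $\diffEntriesSet$ plus the graph terms tracked pairwise over the six migration sets $\misClassfSet{\mu}{\nu}$, and match coefficients to recover $\mathbf{B}$ exactly, with independence of the Bernoulli variables coming from the disjointness of the index sets. The only quibble is a wording slip: since every valid partition in $\matSet$ has equal group sizes, $\left\vert \userPairSet{\mu}{\mathcal{Z}} \right\vert$ is in fact constant across $X$; the deterministic pieces of $\mathbf{B}$ arise (as in the paper, and as you then effectively do) from rewriting $\userPairSet{\mu}{\mathcal{Z}_0}=\bigcup_{\nu}\misClassfSet{\mu}{\nu}$ and $\userPairSet{\mu}{\mathcal{Z}}=\bigcup_{\nu}\misClassfSet{\nu}{\mu}$ and regrouping by migration type, so the identity you reach is the paper's.
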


The following lemma provides an upper bound of the RHS of \eqref{eq:lemma3}. 
\begin{lemma}
	\label{lemma:upper_bound_B}
	For any $\{\misClassfSet{\mu}{\nu} : \mu, \nu \in \{\alphaEdge,\betaEdge,\gammaEdge \}, \: \mu \neq \nu\}$, we have
	\begin{align}
		&\mathbb{P} \left[ \mathbf{B}
		\geq 0 
		\right]
		\leq 
		\exp\left(-\left(1+o(1)\right) \left(\lvert\diffEntriesSet\rvert \Ir 
		+ 
		\Pleftrightarrow{\alphaEdge}{\betaEdge} \: \Ig
		+ \Pleftrightarrow{\alphaEdge}{\gammaEdge} \:\IcOne
		+ \Pleftrightarrow{\betaEdge}{\gammaEdge} \: \IcTwo 
		\right)\right),
		\label{eq:lemma1_2}
	\end{align}
	where
	\begin{align}
		\Pleftrightarrow{\alphaEdge}{\betaEdge} = 
		\frac{\left\vert \misClassfSet{\betaEdge}{\alphaEdge} \right\vert + \left\vert \misClassfSet{\alphaEdge}{\betaEdge} \right\vert}{2},
		\qquad
		\Pleftrightarrow{\alphaEdge}{\gammaEdge} = 
		\frac{\left\vert \misClassfSet{\gammaEdge}{\alphaEdge} \right\vert + \left\vert \misClassfSet{\alphaEdge}{\gammaEdge} \right\vert}{2},
		\qquad
		\Pleftrightarrow{\betaEdge}{\gammaEdge} = 
		\frac{\left\vert \misClassfSet{\gammaEdge}{\betaEdge} \right\vert + \left\vert \misClassfSet{\betaEdge}{\gammaEdge} \right\vert}{2}.
		\label{eq:Pleftrightarrow_defn}
	\end{align}
	We refer to Appendix~\ref{proof:upper_bound_B} for the proof of Lemma~\ref{lemma:upper_bound_B}.
\end{lemma}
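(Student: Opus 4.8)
The plan is to prove \eqref{eq:lemma1_2} by a Bhattacharyya-type Chernoff bound on the random variable $\mathbf{B}$ of \eqref{eq:B_middle}, after observing that $\mathbf{B}$ is a sum of \emph{mutually independent} terms: one term per entry $i \in \diffEntriesSet$ (a function only of the independent pair $\mathsf{B}_i^{(p)}, \mathsf{B}_i^{(\theta)}$) and one term per user pair lying in one of the six misclassification sets $\misClassfSet{\mu}{\nu}$ (a function only of a single edge Bernoulli whose parameter is the \emph{true} edge probability $\mu$). The structural fact that makes this work is that the six sets are pairwise disjoint: every pair $(a,b)$ has a well-defined type (same group, or different groups of the same cluster, or different clusters) under $\gtMat$ and a well-defined type under $X$, so it belongs to at most one $\misClassfSet{\mu}{\nu}$ (namely when these two types differ), and hence no edge variable is reused. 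First I would write $\mathbb{P}[\mathbf{B} \geq 0] = \mathbb{P}\left[e^{\mathbf{B}/2} \geq 1\right] \leq \mathbb{E}\left[e^{\mathbf{B}/2}\right]$ by Markov's inequality, and then use independence to factor the right-hand side into a product of the moment generating functions of the individual terms evaluated at $\lambda = \tfrac{1}{2}$.

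Next I would evaluate each factor in closed form. For $i \in \diffEntriesSet$ the summand equals $+\log\frac{1-\theta}{\theta}$ with probability $p\theta$, equals $-\log\frac{1-\theta}{\theta}$ with probability $p(1-\theta)$, and vanishes with probability $1-p$, so its MGF at $\lambda = \tfrac{1}{2}$ is $1 - p + 2p\sqrt{\theta(1-\theta)} = 1 - p\left(\sqrt{1-\theta} - \sqrt{\theta}\right)^2 = 1 - \Ir \leq e^{-\Ir}$. For a pair in $\misClassfSet{\betaEdge}{\alphaEdge}$, grouping the random term of coefficient $\log\frac{(1-\betaEdge)\alphaEdge}{(1-\alphaEdge)\betaEdge}$ with its allotted share $\log\frac{1-\alphaEdge}{1-\betaEdge}$ of the deterministic term shows that the summand equals $\log\frac{\alphaEdge}{\betaEdge}$ with probability $\betaEdge$ and $\log\frac{1-\alphaEdge}{1-\betaEdge}$ otherwise; its MGF at $\lambda = \tfrac{1}{2}$ is therefore $\sqrt{(1-\alphaEdge)(1-\betaEdge)} + \sqrt{\alphaEdge\betaEdge}$. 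The identical bookkeeping gives the \emph{same} value for a pair in $\misClassfSet{\alphaEdge}{\betaEdge}$, and the same argument handles the $\alphaEdge\gammaEdge$ and $\betaEdge\gammaEdge$ families, yielding the factors $\sqrt{(1-\alphaEdge)(1-\gammaEdge)} + \sqrt{\alphaEdge\gammaEdge}$ and $\sqrt{(1-\betaEdge)(1-\gammaEdge)} + \sqrt{\betaEdge\gammaEdge}$. Using the elementary identity $\sqrt{(1-\mu)(1-\nu)} + \sqrt{\mu\nu} = 1 - \tfrac{1}{2}\left(\sqrt{1-\mu} - \sqrt{1-\nu}\right)^2 - \tfrac{1}{2}\left(\sqrt{\mu} - \sqrt{\nu}\right)^2$ and dropping the first, nonnegative term, each such factor is at most $1 - \tfrac{1}{2}\left(\sqrt{\mu} - \sqrt{\nu}\right)^2 \leq \exp\left(-\tfrac{1}{2}\left(\sqrt{\mu} - \sqrt{\nu}\right)^2\right)$; since $\alpha, \beta, \gamma = \Theta(\log n / n) \to 0$, the discarded term is $O((\log n / n)^2)$, of lower order than $(\sqrt{\mu} - \sqrt{\nu})^2$, so it can either be kept inside a $(1+o(1))$ as in the statement or simply thrown away.

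Finally I would multiply: the matrix-entry factors contribute $e^{-|\diffEntriesSet|\Ir}$; the $|\misClassfSet{\betaEdge}{\alphaEdge}| + |\misClassfSet{\alphaEdge}{\betaEdge}|$ pairs of ``$\alphaEdge\!\leftrightarrow\!\betaEdge$'' type contribute $\exp\!\left(-\tfrac{1}{2}(\sqrt{\alphaEdge} - \sqrt{\betaEdge})^2\big(|\misClassfSet{\betaEdge}{\alphaEdge}| + |\misClassfSet{\alphaEdge}{\betaEdge}|\big)\right) = \exp\!\left(-(1+o(1))\,\Pleftrightarrow{\alphaEdge}{\betaEdge}\,\Ig\right)$ by \eqref{eq:Pleftrightarrow_defn}, and likewise the $\alphaEdge\gammaEdge$ and $\betaEdge\gammaEdge$ families give $\exp(-(1+o(1))\,\Pleftrightarrow{\alphaEdge}{\gammaEdge}\,\IcOne)$ and $\exp(-(1+o(1))\,\Pleftrightarrow{\betaEdge}{\gammaEdge}\,\IcTwo)$; the product is exactly the right-hand side of \eqref{eq:lemma1_2}. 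I expect the main obstacles to be the three points flagged above: (i) justifying the disjointness/independence so that the MGF genuinely factorizes; (ii) the algebra of recombining each pair's random and deterministic pieces so that the cross-terms cancel and leave the clean two-valued summand $\{\log\frac{\mu}{\nu},\,\log\frac{1-\nu}{1-\mu}\}$ (together with the matching identity $1 - p + 2p\sqrt{\theta(1-\theta)} = 1 - \Ir$ for the rating entries); and (iii) confirming, via $\alpha,\beta,\gamma = \Theta(\log n / n)$, that $\sqrt{(1-\mu)(1-\nu)} + \sqrt{\mu\nu} = \exp\!\left(-\tfrac{1}{2}(1+o(1))(\sqrt{\mu} - \sqrt{\nu})^2\right)$ with a single $o(1)$ valid uniformly over all $\Theta(n^2)$ pairs of each type.
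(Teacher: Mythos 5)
Your proposal is correct and takes essentially the same route as the paper: a Chernoff/Bhattacharyya bound at $t=\tfrac12$, factorization of the moment generating function via the disjointness of the six sets and the independence of the rating- and edge-Bernoullis, and the same per-factor values $1-\Ir$ and $\sqrt{\mu\nu}+\sqrt{(1-\mu)(1-\nu)}$. The only (harmless) differences are bookkeeping: you distribute the deterministic term per element so every misclassified pair contributes a single-Bernoulli factor, whereas the paper pairs elements of $\misClassfSet{\mu}{\nu}$ with those of $\misClassfSet{\nu}{\mu}$ (difference-of-Bernoullis factors) and treats the leftover elements separately, and you replace the paper's Taylor-expansion step by the exact inequality $\sqrt{\mu\nu}+\sqrt{(1-\mu)(1-\nu)}\leq 1-\tfrac12(\sqrt{\mu}-\sqrt{\nu})^2\leq \exp\bigl(-\tfrac12(\sqrt{\mu}-\sqrt{\nu})^2\bigr)$, which yields the same (in fact slightly cleaner) bound.
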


By Lemma~\ref{lemma:neg_lik_M0_XT} and Lemma~\ref{lemma:upper_bound_B}, the RHS of~\eqref{eq:matrix_enumerate} is upper bounded by
\begin{align}
	\label{ineq:upper_bound_M0_XT}
	P_e^{(\diff)}(\psi_{\text{ML}}) 
	&\leq
	\sum\limits_{T \in \tupleSetDelta} 
	\sum\limits_{X \in \mathcal{X}(T)}
	\mathbb{P}\left[\mathsf{L}(\gtMat) \geq \mathsf{L}(X)\right].
	\nonumber\\
	&\leq  	
	\sum\limits_{T \in \tupleSetDelta} 
	\sum\limits_{X \in \mathcal{X}(T)}
	\exp\left(-\left(1+o(1)\right) \left(\lvert\diffEntriesSet\rvert \Ir 
	+ 
	\Pleftrightarrow{\alphaEdge}{\betaEdge} \: \Ig
	+ \Pleftrightarrow{\alphaEdge}{\gammaEdge} \:\IcOne
	+ \Pleftrightarrow{\betaEdge}{\gammaEdge} \: \IcTwo 
	\right)\right).
\end{align}

Next, we analyze the performance of the ML decoder by comparing the ground truth user partitioning with that of the decoder. For a non-negative constant $\tau \in (0,\: (\epsilon\log m - (2+\epsilon)\log 4) / (2 (1 + \epsilon)\log m))$, where $\epsilon > \max\{(2 \log 2) / \log n, \: (4 \log 2) / \log (4m), \: (2\log 4)/\log(m/4)\}$, define $\sigma(x,i)$ as the set of pairs of cluster and group in $\cZ$ whose number of overlapped users with $\cZ_0(x,i)$ exceeds a $(1-\tau)$ fraction of the group size. Formally, we have
\begin{align}
	\sigma(x,i)
	=
	\left\{(y,j)\in\{A,B\}\times[3]: 
	\left| Z_0(x,i) \cap Z(y,j)\right|
	\geq 
	(1-\tau) \frac{n}{6} \right\}.
	\label{eq:sigma(x,i)}
\end{align}
Note that $\tau < 0.5$, which implies that $|\sigma(x,i)| \leq 1$ since the size of any group is $n/6$ users. For $|\sigma(x,i)| = 1$, let $\sigma(x,i) = \{(\sigma(x), \sigma(i|x))\}$. Accordingly, partition the set $\tupleSetDelta$ into two subsets $\TsmallErr$ and $\TlargeErr$ that are defined as follows:
\begin{align}
	\TsmallErr
	&= 
	\left\{ T \in \mathcal{T}^{(\delta)} : 
	\forall (x,i) \in \{A,B\}\times [3] \text{ such that }
	\left\vert \sigma(x,i) \right\vert = 1, \:
	\dElmntsGen{i}{\:\sigma(i|x)}{x}{\:\sigma(x)} \leq \tau m \min\{\interTau, \intraTau\}
	\right\},
	\label{eq:TsmallErr}\\
	\TlargeErr
	&= 
	\left\{ T \in \mathcal{T}^{(\delta)} : 
	\exists (x,i) \in \{A,B\}\times [3] \text{ such that }
	\left(\left\vert \sigma(x,i) \right\vert = 0\right)
	\right\}
	\nonumber\\
	&\phantom{=}\cup
	\left\{
	T \in \mathcal{T}^{(\delta)} : 
	\forall (x,i) \in \{A,B\} \times [3] 
	\text{ such that }
	\left\vert \sigma(x,i) \right\vert = 1,\: 
	\right. \nonumber \\
	&\phantom{=\cup\{} \left.
	\exists (x,i) \in \{A,B\} \times [3] 
	\text{ such that }
	\dElmntsGen{i}{\:\sigma(i|x)}{x}{\:\sigma(x)} >  \tau m \min\{\interTau, \intraTau\}
	\right\}.
	\label{eq:TlargeErr}
\end{align}
Intuitively, when $T \in \TsmallErr$, the class of matrices $\mathcal{X}(T)$ corresponds to the typical (i.e., small) error set. On the other hand, when $T \in \TlargeErr$, the class of matrices $\mathcal{X}(T)$ corresponds to the atypical (i.e., large) error set that has negligible probability mass. Consequently, the RHS of \eqref{ineq:upper_bound_M0_XT} is upper bounded by
\begin{align}
	P_e^{(\diff)}(\psi_{\text{ML}})
	&\leq
	\sum\limits_{T \in  \TsmallErr} 
	\sum\limits_{X \in \mathcal{X}(T)}
	\exp\left(-\left(1+o(1)\right) \left(\lvert\diffEntriesSet\rvert \Ir 
	+ 
	\Pleftrightarrow{\alphaEdge}{\betaEdge} \: \Ig
	+ \Pleftrightarrow{\alphaEdge}{\gammaEdge} \:\IcOne
	+ \Pleftrightarrow{\betaEdge}{\gammaEdge} \: \IcTwo 
	\right)\right)
	\nonumber\\
	&\phantom{\leq} 
	+ 
	\sum\limits_{T \in \TlargeErr} 
	\sum\limits_{X \in \mathcal{X}(T)}
	\exp\left(-\left(1+o(1)\right) \left(\lvert\diffEntriesSet\rvert \Ir 
	+ 
	\Pleftrightarrow{\alphaEdge}{\betaEdge} \: \Ig
	+ \Pleftrightarrow{\alphaEdge}{\gammaEdge} \:\IcOne
	+ \Pleftrightarrow{\betaEdge}{\gammaEdge} \: \IcTwo 
	\right)\right).
	\label{partial_sum}
\end{align}
The following lemmas provide an upper bound on each term in \eqref{partial_sum}, and evaluating the limits as $n$ and $m$ tend to infinity.

\begin{lemma}
	\label{lemma:T1_related} 
	For any $\{\misClassfSet{\mu}{\nu} : \mu, \nu \in \{\alphaEdge,\betaEdge,\gammaEdge \}, \: \mu \neq \nu\}$, we have\footnote{As $n$ tends to infinity, $m$ also tends to infinity since $m = \omega(\log n)$.}
	\begin{align}
		& \lim_{n\rightarrow \infty }
		\sum\limits_{T \in \TsmallErr} 
		\sum\limits_{X \in \mathcal{X}(T)}
		\exp\left(-\left(1+o(1)\right) \left(\lvert\diffEntriesSet\rvert \Ir 
		+ 
		\Pleftrightarrow{\alphaEdge}{\betaEdge} \: \Ig
		+ \Pleftrightarrow{\alphaEdge}{\gammaEdge} \:\IcOne
		+ \Pleftrightarrow{\betaEdge}{\gammaEdge} \: \IcTwo 
		\right)\right)
		= 0.
		\label{eq:lemma_T1_1} 
	\end{align}
	We refer to Appendix~\ref{proof:T1_related} for the proof of Lemma~\ref{lemma:T1_related}. 
\end{lemma}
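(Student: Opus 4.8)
\emph{Proof strategy.} Having reduced the typical-error contribution to the double sum in \eqref{eq:lemma_T1_1} via Lemmas~\ref{lemma:neg_lik_M0_XT} and \ref{lemma:upper_bound_B}, I would proceed by a three-step counting argument: (i) upper bound $\lvert\mathcal{X}(T)\rvert$, the number of candidate matrices in the class indexed by $T$; (ii) lower bound the exponent $\lvert\diffEntriesSet\rvert\,\Ir + \Pleftrightarrow{\alpha}{\beta}\,\Ig + \Pleftrightarrow{\alpha}{\gamma}\,\IcOne + \Pleftrightarrow{\beta}{\gamma}\,\IcTwo$ uniformly over $\TsmallErr$ in terms of a few ``atomic'' quantities; and (iii) use the sufficient conditions \eqref{eq:suffCond_1}--\eqref{eq:suffCond_3} to show the exponent strictly dominates $\log\lvert\mathcal{X}(T)\rvert$, so that the double sum collapses into convergent geometric series that vanish as $n\to\infty$.

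\emph{Step (i): enumeration.} Let $R_c$ be the number of users whose cluster label under $X$ differs from $\gtMat$, $R_g$ the number of users in their correct cluster but a wrong group, and $D$ the number of (column, cluster) pairs $(t,x)$ at which the restriction of $X$'s $t$-th column to cluster $x$ differs from that of $\gtMat$. A matrix in $\mathcal{X}(T)$ is pinned down by its user partition and its four free rating vectors $(v_1^A,v_2^A,v_1^B,v_2^B)$, since $v_3^x = v_1^x\oplus v_2^x$. Bounding partitions by a multinomial coefficient consistent with $\{k_{ij}^{xy}\}$, and the rating vectors by choosing, for each affected $(t,x)$, one of the three alternative length-$2$ patterns, I obtain
\begin{align}
	\log\lvert\mathcal{X}(T)\rvert \;\le\; (1+o(1))\big(R_c\log n + R_g\log n + D\log m\big).
\end{align}

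\emph{Step (ii): lower bounding the exponent on $\TsmallErr$.} The defining property of $\TsmallErr$ --- each ground-truth group has a unique group of $X$ overlapping it in a $(1-\tau)$-fraction, with associated rating-vector Hamming distance at most $\tau m\min\{\interTau,\intraTau\}$ --- lets me attribute the exponent to individual misclassified users and affected columns. A group-misclassified user forms $(1-O(\tau))\tfrac{n}{6}$ pairs of type $\misClassfSet{\beta}{\alpha}$ with the correctly placed members of its new group and a further $(1-O(\tau))\tfrac{n}{6}$ of type $\misClassfSet{\alpha}{\beta}$ at the group it vacated, so it contributes $(1-O(\tau))\tfrac{n}{6}$ to $\Pleftrightarrow{\alpha}{\beta}$; moreover its row changes from $\vecV{i}{x}$ to a vector within $\tau m\intraTau$ of some $\vecV{j}{x}$, hence at distance $\ge(1-\tau)m\intraTau$, adding that many entries to $\lvert\diffEntriesSet\rvert$ --- so by \eqref{eq:suffCond_2} its net exponent contribution is $\ge(1-O(\tau))(1+\epsilon)\log n$. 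A cluster-misclassified user moved into a group of the opposite cluster analogously contributes $(1-O(\tau))\tfrac{n}{6}$ to $\Pleftrightarrow{\alpha}{\gamma}$ and $(1-O(\tau))\tfrac{n}{3}$ to $\Pleftrightarrow{\beta}{\gamma}$ (pairing with the correctly placed users of, respectively, its new/vacated group and the other two groups on each side), plus $\ge(1-\tau)m\interTau$ new differing entries, so by \eqref{eq:suffCond_3} its net contribution is again $\ge(1-O(\tau))(1+\epsilon)\log n$. Finally, each affected $(t,x)$ forces at least two of the three groups of cluster $x$ to differ from $\gtMat$ at column $t$ --- this is exactly where the $[3,2]$-parity structure enters, since two distinct patterns $(b_1,b_2,b_1\!\oplus\! b_2)$ agree in at most one coordinate --- so it adds $\ge\tfrac{n}{3}$ to $\lvert\diffEntriesSet\rvert$, whose weight $\tfrac{n}{3}\,\Ir\ge(1+\epsilon)\log m$ by \eqref{eq:suffCond_1}.

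\emph{Step (iii): summation; the main obstacle.} The admissible range of $\tau$ fixed just before \eqref{eq:sigma(x,i)} is calibrated precisely so that $(1-O(\tau))(1+\epsilon)\ge 1+\epsilon'$ for some constant $\epsilon'>0$; hence each summand in \eqref{eq:lemma_T1_1} is at most $\exp\!\big(-\epsilon'(R_c\log n + R_g\log n + D\log m)\big)$. Since both the exponent and the enumeration factor over the $R_c$ cluster-moves, the $R_g$ group-moves, and the $D$ affected columns (with only a constant number of alternatives for each), the double sum is bounded by a product of convergent geometric series of the forms $\sum_{r\ge 0}(C\,n^{-\epsilon'})^{r}$ and $\sum_{d\ge 0}(C\,m^{-\epsilon'})^{d}$, each equal to $1+o(1)$; discarding the excluded all-zero tuple leaves a total that is $o(1)$, which is \eqref{eq:lemma_T1_1}. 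I expect Steps (i)--(ii), carried out rigorously, to be the crux: one must verify that users misclassified at \emph{both} the cluster and group level, overlapping relabelings, and the slack between a $\sigma$-matched group/cluster and its shifted rating vector each cost only an $O(\tau)$ multiplicative factor rather than erasing the $(1+\epsilon)$ margin --- which is exactly what the definition of $\TsmallErr$ and the constraint on $\tau$ were engineered to guarantee; the complementary, negligible-mass atypical tuples $\TlargeErr$ are then handled separately.
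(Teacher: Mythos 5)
Your proposal is correct and follows essentially the same route as the paper's proof: bound $\lvert\mathcal{X}(T)\rvert$ by $\exp\bigl((\#\text{misclassified users})\log n + (\#\text{error columns})\cdot O(\log m)\bigr)$, lower bound $\lvert\diffEntriesSet\rvert$, $\Pleftrightarrow{\alphaEdge}{\betaEdge}$, $\Pleftrightarrow{\alphaEdge}{\gammaEdge}$, $\Pleftrightarrow{\betaEdge}{\gammaEdge}$ via the $(1-\tau)\frac{n}{6}$ overlap guaranteed on $\TsmallErr$, invoke \eqref{eq:suffCond_1}--\eqref{eq:suffCond_3} so that the $\tau$-calibration leaves a margin $\epsilon'>0$ per misclassified user and per error column, and sum the resulting geometric series in $n^{-\epsilon'}$ and $m^{-\epsilon'}$. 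The only (immaterial) difference is bookkeeping: you index rating errors by the number $D$ of error columns and use the minimum distance $2$ of the $(3,2)$ parity structure to charge each such column $\geq \frac{n}{3}$ entries of $\diffEntriesSet$, whereas the paper indexes by $\sum_{x,i}\dElmntsGen{i}{\sigma(i|x)}{x}{\sigma(x)}$ and uses the same MDS bound on the enumeration side ($\totNumErrCol{x}\leq\frac{1}{2}\sum_i \dElmntsGen{i}{\sigma(i|x)}{x}{\sigma(x)}$), which is an equivalent reparametrization.
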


\begin{lemma}
	\label{lemma:eta_omega(nm)}
	For any $\{\misClassfSet{\mu}{\nu} : \mu, \nu \in \{\alphaEdge,\betaEdge,\gammaEdge \}, \: \mu \neq \nu\}$, we have
	\begin{align}
		& \lim_{n\rightarrow \infty }
		\sum\limits_{T \in  \TlargeErr} 
		\sum\limits_{X \in \mathcal{X}(T)}
		\exp\left(-\left(1+o(1)\right) \left(\lvert\diffEntriesSet\rvert \Ir 
		+ 
		\Pleftrightarrow{\alphaEdge}{\betaEdge} \: \Ig
		+ \Pleftrightarrow{\alphaEdge}{\gammaEdge} \:\IcOne
		+ \Pleftrightarrow{\betaEdge}{\gammaEdge} \: \IcTwo 
		\right)\right)
		= 0.
		\label{ineq:eta_omega(nm)}
	\end{align}
	We refer to Appendix~\ref{proof:eta_omega(nm)} for the proof of Lemma~\ref{lemma:eta_omega(nm)}.
\end{lemma}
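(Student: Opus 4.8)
\textbf{Proof plan for Lemma~\ref{lemma:eta_omega(nm)}.} This is the standard ``atypical error event carries vanishing mass'' argument, but carried out uniformly over the finitely many integer parameters of the tuple $T$. Writing
\[
E(T) := \lvert\diffEntriesSet\rvert\,\Ir + \Pleftrightarrow{\alphaEdge}{\betaEdge}\,\Ig + \Pleftrightarrow{\alphaEdge}{\gammaEdge}\,\IcOne + \Pleftrightarrow{\betaEdge}{\gammaEdge}\,\IcTwo
\]
for the exponent appearing in Lemma~\ref{lemma:upper_bound_B}, I first bound $\lvert\mathcal{X}(T)\rvert$: a candidate $X\in\mathcal{X}(T)$ is determined by (i) which users are relabeled, the counts being pinned down by $\{\kUsrs{i}{j}{x}{y}\}$, costing at most $\prod_{x,y,i,j}\binom{n/6}{\kUsrs{i}{j}{x}{y}}$; and (ii) the four free candidate rating vectors (the remaining two being XOR combinations), each a length-$m$ binary string at the Hamming distances prescribed by $\{\dElmntsGen{i}{j}{x}{y}\}$ and obeying the $\diff$-constraint, costing at most $2^{O(m)}$ in total. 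Hence $\log\lvert\mathcal{X}(T)\rvert \le \bigl(\sum_{x,y,i,j}\kUsrs{i}{j}{x}{y}\bigr)\log(en) + O(m)$. Since only $\mathrm{poly}(n,m)$ distinct tuples exist, it suffices to show $\log\lvert\mathcal{X}(T)\rvert - (1+o(1))E(T)\to-\infty$ uniformly over $T\in\TlargeErr$, after which the second sum in \eqref{partial_sum} vanishes.

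The crux is a lower bound on $E(T)$ that is robust to the \emph{a priori unknown} graph quality $(\Ig,\IcOne,\IcTwo)$. The mechanism is that every ``unit'' of error enlarges both $\lvert\mathcal{X}(T)\rvert$ and $E(T)$, but enlarges $E(T)$ by a larger factor $(1+\epsilon)$, thanks to the sufficient conditions \eqref{eq:suffCond_1}--\eqref{eq:suffCond_3}. Relabeling a user out of its home group contributes $\Theta(n)$ pairs to $\misClassfSet{\alphaEdge}{\betaEdge}\cup\misClassfSet{\betaEdge}{\alphaEdge}$ (if the move stays within a cluster) or to the $\gammaEdge$-type sets (if it crosses clusters), while enlarging $\log\lvert\mathcal{X}(T)\rvert$ by at most $\log(en)=(1+o(1))\log n$; and a group whose rating vector is at Hamming distance $d$ from its true value contributes $\Theta(n/6)\cdot d$ entries to $\diffEntriesSet$, while the rating-vector choices contribute only $O(m)$ to $\log\lvert\mathcal{X}(T)\rvert$ in total. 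The key point is that, for any admissible error pattern, a suitable nonnegative combination of \eqref{eq:suffCond_1}--\eqref{eq:suffCond_3} --- used \emph{simultaneously}, exactly as the $\max$ in \eqref{eq:pstar} dictates --- lower-bounds $E(T)$ by an integer multiple of $(1+\epsilon)\log n$ (or of $(1+\epsilon)\log m$), the integer being at least the relevant error count: when some $I_{\cdot}$ is small, the second or third condition forces $\Ir$ up so the $\lvert\diffEntriesSet\rvert\,\Ir$ term compensates, and when some $I_{\cdot}$ is large, the corresponding $\Pleftrightarrow{\cdot}{\cdot}\,I_{\cdot}$ term is directly large. One thereby obtains $E(T)\ge (1+\epsilon)(c_1 t\log n + c_2\Delta\log m) - o(t\log n + \Delta\log m)$ for positive constants $c_1,c_2$, where $t$ is the number of relabeled users and $\Delta$ is the total rating-vector displacement budget.

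It remains to invoke the definition of $\TlargeErr$. If some $(x,i)$ has $\lvert\sigma(x,i)\rvert=0$, then no candidate group captures a $(1-\tau)$-fraction of $Z_0(x,i)$, so (by a pigeonhole count) at least two candidate groups each receive $\Omega(n)$ users of $Z_0(x,i)$; since the $\diff$-constraint forbids those two candidate groups from both having rating vectors within $m\min\{\interTau,\intraTau\}/2$ of $\vecV{i}{x}$ (triangle inequality), we get simultaneously $t=\Omega(n)$ and $\Delta=\Omega(m)$, hence $E(T)-\log\lvert\mathcal{X}(T)\rvert = \Omega(\epsilon\, n\log n)\to\infty$. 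If instead $\lvert\sigma(x,i)\rvert=1$ for all $(x,i)$ but $\dElmntsGen{i}{\:\sigma(i|x)}{x}{\:\sigma(x)} > \tau m\min\{\interTau,\intraTau\}$ for some $(x,i)$, then the $\ge (1-\tau)n/6$ users of the dominant match receive a rating vector $\Omega(m)$-far from $\vecV{i}{x}$, so $\lvert\diffEntriesSet\rvert=\Omega(nm)$; since $\Ir\ge 3(1+\epsilon)\log m/n$ by \eqref{eq:suffCond_1}, the term $\lvert\diffEntriesSet\rvert\,\Ir=\Omega((1+\epsilon)m\log m)$ already dominates the $O(m)$ contribution of the rating vectors, and the relabeled users here (being $O(\tau n)$ and organized by an essentially permutation-like $\sigma$) are absorbed by the same tradeoff argument. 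Summing the bound of Lemma~\ref{lemma:upper_bound_B} over the $\mathrm{poly}(n,m)$ tuples of $\TlargeErr$ then yields \eqref{ineq:eta_omega(nm)}.

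The main obstacle will be making the combinatorics of the $\lvert\sigma(x,i)\rvert=0$ case watertight and \emph{uniform}: one must enumerate the admissible ways a scattered group can be split among candidate groups, verify in each case the simultaneous lower bounds on $\lvert\diffEntriesSet\rvert$ and on the $\misClassfSet{\mu}{\nu}$ sets, and --- choosing the right nonnegative coefficients with which to combine \eqref{eq:suffCond_1}--\eqref{eq:suffCond_3} --- check that $E(T)$ outpaces $\log\lvert\mathcal{X}(T)\rvert$ for \emph{every} value of $(\Ig,\IcOne,\IcTwo)$ consistent with those conditions, all while keeping the $(1+o(1))$ factor inherited from Lemma~\ref{lemma:upper_bound_B} uniform over $\TlargeErr$.
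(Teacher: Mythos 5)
Your overall architecture coincides with the paper's: bound $\lvert\mathcal{X}(T)\rvert$ by a user-assignment entropy of order (number of misplaced users)$\times\log n$ plus an $O(m)$ rating-vector entropy, split $\TlargeErr$ according to whether some true group has no dominant match ($\lvert\sigma(x,i)\rvert=0$) or all have one but some $\dElmntsGen{i}{\sigma(i|x)}{x}{\sigma(x)}$ exceeds $\tau m\min\{\intraTau,\interTau\}$, use pigeonhole plus the minimum-distance/triangle-inequality argument to force $\lvert\diffEntriesSet\rvert=\Omega(nm)$ and misclassified-pair counts of order $n^2$ in the scattered case, and finish by combining \eqref{eq:suffCond_1}--\eqref{eq:suffCond_3} with nonnegative weights and summing over polynomially many tuples; the paper's Subregimes 1-2 and 1-3 are exactly your pigeonhole step made explicit.

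The genuine gap is in your second case, which you close with ``the relabeled users \dots are absorbed by the same tradeoff argument.'' Your per-unit tradeoff assumes each misplaced user feeds $E(T)$ through its own rating discrepancy, but a user moved from $Z_0(x,i)$ into a candidate group $(y,j)\neq(\sigma(x),\sigma(i|x))$ can have \emph{zero} rating discrepancy, since nothing prevents the candidate vector $u_j^y$ from coinciding with $v_i^x$. When the graph terms are weak, so that \eqref{eq:suffCond_2}--\eqref{eq:suffCond_3} must be paid essentially by $m\Ir$, the required $(1+\epsilon)\log n$ per misplaced user must then be routed through the one guaranteed budget $\kUsrs{i_0}{\sigma(i_0|x_0)}{x_0}{\sigma(x_0)}\,\dElmntsGen{i_0}{\sigma(i_0|x_0)}{x_0}{\sigma(x_0)}\geq(1-\tau)\frac{n}{6}\,\tau m\min\{\intraTau,\interTau\}$; with up to $\tau n$ misplaced users this budget cannot simultaneously supply the $\Theta(m\log m)$ needed against the $2^{O(m)}$ rating entropy and an $m\Ir$-sized share per user (compare $(1-\tau)\tau\min\{\intraTau,\interTau\}/6$ with $\tau$). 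The paper closes this with an additional dichotomy that your plan omits: either some off-diagonal $\kUsrs{i}{j}{x}{y}\geq\mu n$, in which case one falls back to the scattered-group (Subregime 1-1) analysis where $\lvert\diffEntriesSet\rvert=\Omega(nm)$ together with $\Omega(n^2)$ misclassified pairs beats even the crude $6^n 2^{6m}$ count; or every off-diagonal $\kUsrs{i}{j}{x}{y}<\mu n$ with $\mu=\Theta(\tau\min\{\intraTau,\interTau\})$, which caps the total number of misplaced users well below $\tau n$ so that half of the budget yields $(1-\tau)m$ per misplaced user and the other half still gives $c_4\,m\log m$ (cf.\ \eqref{eq:regime3_Pd_1}--\eqref{eq:regime3_Pd_2} and \eqref{eq:largeErrConvg_R1_2}). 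Without this reduction, or an equivalent charging scheme, your asserted uniform bound $E(T)\geq(1+\epsilon)\left(c_1 t\log n+c_2\Delta\log m\right)-o(t\log n+\Delta\log m)$ fails on part of $\TlargeTwo$, and the second sum in \eqref{partial_sum} is not controlled.
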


By Lemma \ref{lemma:T1_related} and Lemma \ref{lemma:eta_omega(nm)}, the limit of the worst-case probability of error $P_e^{(\diff)}(\psi_{\text{ML}})$ in \eqref{partial_sum} as $n$ and $m$ tend to infinity is evaluated as
\begin{align}
	&\lim_{n\rightarrow \infty } P_e^{(\diff)}(\psi_{\text{ML}})
	\nonumber\\
	&\leq 
	\lim_{n\rightarrow \infty } \left(
	\sum\limits_{T \in  \TsmallErr} 
	\sum\limits_{X \in \mathcal{X}(T)}
	\exp\left(-\left(1+o(1)\right) \left(\lvert\diffEntriesSet\rvert \Ir 
	+ 
	\Pleftrightarrow{\alphaEdge}{\betaEdge} \: \Ig
	+ \Pleftrightarrow{\alphaEdge}{\gammaEdge} \:\IcOne
	+ \Pleftrightarrow{\betaEdge}{\gammaEdge} \: \IcTwo 
	\right)\right)
	\right.
	\nonumber\\
	&\phantom{=\lim_{n\rightarrow \infty } \left(\right.}
	\left.
	+ \sum\limits_{T \in \TlargeErr} 
	\sum\limits_{X \in \mathcal{X}(T)}
	\exp\left(-\left(1+o(1)\right) \left(\lvert\diffEntriesSet\rvert \Ir 
	+ 
	\Pleftrightarrow{\alphaEdge}{\betaEdge} \: \Ig
	+ \Pleftrightarrow{\alphaEdge}{\gammaEdge} \:\IcOne
	+ \Pleftrightarrow{\betaEdge}{\gammaEdge} \: \IcTwo 
	\right)\right)
	\right)
	\nonumber\\
	&= 0,
\end{align}
which implies that $\lim_{n\rightarrow \infty } P_e^{(\diff)}(\psi_{\text{ML}}) = 0$. This concludes the achievability proof of Theorem~\ref{Thm:p_star}.
\hfill $\blacksquare$

\subsection{Converse Proof}
Define $I_r \coloneqq p (\sqrt{1-\theta} - \sqrt{\theta})^2$. The goal of the converse proof is to show that $P_e^{(\tau)} (\psi) \nrightarrow 0$ as $n \rightarrow \infty$ for any set of feasible rating matrices $\matSet$ and estimator $\psi$, if at least one of the following conditions is satisfied:
\begin{eqnarray}
	&\frac{1}{3} n I_r 
	\leq
	(1-\epsilon) \log m,  &\textbf{(Perfect clustering/grouping regime)}
	\label{eq:necCond_1}\\
	&\intraTau m I_r + \frac{1}{6} n \Ig
	\leq
	(1-\epsilon) \log n,  &\textbf{(Grouping-limited regime)}
	\label{eq:necCond_2}\\  
	&\interTau m I_r + \frac{1}{6} n \IcOne + \frac{1}{3} n \IcTwo \leq  (1-\epsilon) \log n,
	&\textbf{(Clustering-limited regime)}
	\label{eq:necCond_3}
\end{eqnarray}

We first seek a lower bound on the infimum of the worst-case probability of error over all estimators. Let $\mathbf{M}$ be a random variable that denotes the hidden rating matrix (to be estimated) and is uniformly drawn from $\matSet$. Denote the \emph{success} event of estimation of rating matrix by $S$, which is given by 
\begin{align}
	S \coloneqq
	\bigcap_{\substack{X \in \matSet \\ X \neq \gtMat }}
	\left(\mathsf{L}(X) > \mathsf{L}(\gtMat)\right).
	\label{eq:defS}
\end{align}
From the definition of worst-case probability of error in \eqref{eq:errorProb_wc_app}, we obtain 
\begin{align}
	\inf_{\psi} P_e^{(\tau)} (\psi) 
	&=
	\inf_{\psi} \max_{M \in \matSet} \mathbb{P}\left[\psi(Y^{\Omega}, G) \neq M\right]
	\nonumber\\
	&\geq
	\inf_{\psi} \max_{M \in \matSet} \mathbb{P}\left[\psi(Y^{\Omega}, G) \neq M, \: \mathbf{M} = M\right]
	\nonumber\\
	&=
	\inf_{\psi} \max_{M \in \matSet} \mathbb{P}\left[\psi(Y^{\Omega}, G) \neq M \:\vert\: \mathbf{M} = M\right]
	\label{eq:lowerB_Pe_M_cond}
	\\
	&=
	\inf_{\psi} \max_{M \in \matSet} \sum_{X \neq M} \mathbb{P}\left[\psi(Y^{\Omega}, G) = X \:\vert\: \mathbf{M} = M\right]
	\nonumber\\
	&\geq
	\inf_{\psi} \max_{M \in \matSet} \sum_{\substack{X \in \matSet \\ X \neq M}} \mathbb{P}\left[\psi(Y^{\Omega}, G) = X \:\vert\: \mathbf{M} = M\right]
	\nonumber\\
	&=
	\max_{M \in \matSet} \sum_{\substack{X \in \matSet \\ X \neq M}} \mathbb{P}\left[\estML(Y^{\Omega}, G) = X \:\vert\: \mathbf{M} = M\right]
	\label{eq:lowerB_Pe_noInf}
	\\
	&\geq
	\sum_{\substack{X \in \matSet \\ X \neq \gtMat}} \mathbb{P}\left[\estML(Y^{\Omega}, G) = X \:\vert\: \mathbf{M} = \gtMat\right]
	\label{eq:lowerB_Pe_M_noMax}
	\\
	&=
	\sum_{\substack{X \in \matSet \\ X \neq \gtMat}}
	\mathbb{P}\left[\mathsf{L}(X) \leq \mathsf{L}(\gtMat)\right]
	\label{eq:lowerB_Pe_L}
	\\
	&\geq
	\mathbb{P}\left[\bigcup_{\substack{X \in \matSet \\ X \neq \gtMat }}
	\left(\mathsf{L}(X) \leq \mathsf{L}(\gtMat)\right)\right]
	\label{eq:lowerB_Pe_union}
	\\
	&=
	\mathbb{P}\left[S^c\right]
	\label{eq:lowerB_Pe}
\end{align} 
where \eqref{eq:lowerB_Pe_M_cond} follows because $\mathbf{M}$ is uniformly distributed; \eqref{eq:lowerB_Pe_noInf} follows due to the fact that the maximum likelihood estimator is optimal under uniform prior; \eqref{eq:lowerB_Pe_M_noMax} follows since $\gtMat \in \matSet$; \eqref{eq:lowerB_Pe_L} follows by the definition of negative log-likelihood in \eqref{eq:neg_log_lik}; \eqref{eq:lowerB_Pe_union} follows from union bound; and finally \eqref{eq:lowerB_Pe} follows from \eqref{eq:defS}. Therefore, in order to show that $\lim_{n \rightarrow \infty} \inf_{\psi} P_e^{(\tau)} (\psi) \neq 0$, it suffices to show that $\lim_{n \rightarrow \infty} \mathbb{P} \left[S\right] = 0$.

Next, we show that $\lim_{n \rightarrow \infty} \mathbb{P} \left[S\right] = 0$ under each of the three conditions stated in \eqref{eq:necCond_1}, \eqref{eq:necCond_2}, \eqref{eq:necCond_3}, respectively. Before delving into the convergence proof, we present the following key lemma that is essential for developing the convergence analysis. In this lemma, we use  $B^{(\mu)}$ to refer to a Bernoulli random variable with  (fixed or asymptotic) parameter $\mu\in[0,1]$, that is,  $\mathbb{P}[B^{(\mu)}=1] = 1- \mathbb{P}[B^{(\mu)}=0] = \mu$. 
\begin{lemma} 
	Assume that $\alpha, \beta, \gamma, p = \Theta\left(\frac{\log n}{n}\right)$ and $\theta\in[0,1]$ is a constant. For positive integers $n_1, n_2, n_3, n_4$ satisfying $\max \left\{p n_1, \sqrt{\alpha \beta} n_2, \sqrt{\alpha \gamma} n_3, \sqrt{\beta \gamma} n_4 \right\} = \omega(1)$, consider the sets of independent Bernoulli random variables $\{B_i^{(p)}: i\in[n_1]\}$, $\{B_i^{(\theta)}: i\in[n_1]\}$, $\{B_i^{(\alpha)}: i\in[n_1+1:n_3] \}$, $\{B_i^{(\beta)}: i\in[n_1+1:n_2] \cup [n_1+n_2+n_3+1:n_1+n_2+n_3+n_4\}$, and $\{B_i^{(\gamma)}: i\in[n_1+n_2+1:n_1+n_2+n_3+n_4]\}$. Define 
	\begin{align*}    
		B(n_1,n_2,n_3,n_4) &\coloneqq
		\sum_{i=1}^{n_1} \log\left(\frac{1-\theta}{\theta}\right) B_i^{(p)} \left(2 B_i^{(\theta)} \!\!-\!\! 1\right)
		+  \sum_{j=n_1+1}^{n_1+n_2} \log\left(\frac{(1-\beta)\alpha}{(1-\alpha)\beta}\right)  \left( B_j^{(\beta)} \!\!-\!\! B_j^{(\alpha)} \right)\nonumber\\
		&\phantom{\coloneqq} 
		+  \sum_{k=n_1+n_2+1}^{n_1+n_2+n_3} \log\left(\frac{(1-\gamma)\alpha}{(1-\alpha)\gamma}\right)  \left( B_k^{(\gamma)} \!\!-\!\! B_k^{(\alpha)} \right)\nonumber\\
		&\phantom{\coloneqq} 
		+  \sum_{\ell=n_1+n_2+n_3+1}^{n_1+n_2+n_3+n_4} \log\left(\frac{(1-\gamma)\beta}{(1-\beta)\gamma}\right)  \left( B_\ell^{(\gamma)} \!\!-\!\! B_\ell^{(\beta)} \right).
	\end{align*}
	
	Then, the probability that $B(n_1,n_2,n_3,n_4)$ being non-negative can be lower bounded by 
	\begin{align}
		\mathbb{P}\left[ B(n_1,n_2,n_3,n_4) \geq 0
		\right] 
		\geq 
		\frac{1}{2} \exp \bigl( - (1\!+\!o(1)) \left(n_1 I_r \!+\! n_2 \Ig \!+\! n_3 \IcOne \!+\! n_4 \IcTwo\right) \bigr).
		\label{eq:lowerB_prob}
	\end{align}
	\label{lm:lowerB_prob}
\end{lemma}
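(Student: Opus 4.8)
The plan is to prove the bound by an exponential change of measure with the self-dual tilt parameter $1/2$, which turns each summand into a symmetric random variable, followed by an anti-concentration estimate. Write $B=\sum_i X_i$ as the sum of the four independent groups of summands appearing in its definition, and let $\mathbb{Q}$ be the product measure with $\frac{d\mathbb{Q}}{d\mathbb{P}}=\prod_i e^{X_i/2}/\E[e^{X_i/2}]$. A short computation shows that $\E[e^{X_i/2}]$ has a clean closed form: a matrix-entry summand contributes $1-\Ir$, while a $(\mu\!\leftrightarrow\!\nu)$ edge summand (with $(\mu,\nu)\in\{(\alphaEdge,\betaEdge),(\alphaEdge,\gammaEdge),(\betaEdge,\gammaEdge)\}$) contributes $\big(\sqrt{\mu\nu}+\sqrt{(1-\mu)(1-\nu)}\big)^2$, the squared Bhattacharyya coefficient between $\mathrm{Bern}(\mu)$ and $\mathrm{Bern}(\nu)$. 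Moreover, in each case the tilted masses at $+|X_i|$ and $-|X_i|$ are both proportional to the geometric mean of the two original nonzero masses, so each $X_i$ — and hence $B$ — is symmetric about $0$ under $\mathbb{Q}$.

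\textbf{Change of measure and symmetrization.} Since $\frac{d\mathbb{P}}{d\mathbb{Q}}=\big(\prod_i\E[e^{X_i/2}]\big)e^{-B/2}$, we have $\mathbb{P}[B\ge0]=\big(\prod_i\E[e^{X_i/2}]\big)\,\E_{\mathbb{Q}}\!\big[e^{-B/2}\mathbbm{1}[B\ge0]\big]$. Using $B\stackrel{d}{=}-B$ under $\mathbb{Q}$ one obtains the identity $\E_{\mathbb{Q}}\!\big[e^{-B/2}\mathbbm{1}[B\ge0]\big]=\tfrac12\big(\E_{\mathbb{Q}}[e^{-|B|/2}]+\mathbb{Q}[B=0]\big)\ge\tfrac12\E_{\mathbb{Q}}[e^{-|B|/2}]$, so it suffices to (i) evaluate the MGF product and (ii) lower bound $\E_{\mathbb{Q}}[e^{-|B|/2}]$. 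For (i), Taylor expansion gives $\log(1-\Ir)=-(1+o(1))\Ir$ (as $\Ir=\Theta(p)=o(1)$) and $-2\log\big(\sqrt{\mu\nu}+\sqrt{(1-\mu)(1-\nu)}\big)=(1+o(1))(\sqrt{\mu}-\sqrt{\nu})^2$, the $o(1)$ being uniform because the remaining Hellinger term $(\sqrt{1-\mu}-\sqrt{1-\nu})^2=O\big((\mu-\nu)^2\big)$ is $o\big((\sqrt{\mu}-\sqrt{\nu})^2\big)$ whenever $\mu,\nu\to0$. Summing over the four groups, $\prod_i\E[e^{X_i/2}]=\exp\!\big(-(1+o(1))\,\Delta\big)$ where $\Delta:=n_1\Ir+n_2\Ig+n_3\IcOne+n_4\IcTwo$. (This matches the Chernoff upper bound of Lemma~\ref{lemma:upper_bound_B}, since $1/2$ is precisely the minimizer of $t\mapsto\E[e^{tB}]$.)

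\textbf{Anti-concentration.} Let $\sigma^2:=\mathrm{Var}_{\mathbb{Q}}[B]$. By Chebyshev together with the symmetry of $B$, $\mathbb{Q}[|B|\le2\sigma+2]\ge3/4$, hence $\E_{\mathbb{Q}}[e^{-|B|/2}]\ge\tfrac34 e^{-(\sigma+1)}$. The crucial estimate is $\sigma^2=O(\Delta)$: a matrix-entry summand has tilted variance $\Theta(p)=\Theta(\Ir)$, and a $(\mu\!\leftrightarrow\!\nu)$ edge summand has tilted variance $(1+o(1))\,2c^2\sqrt{\mu\nu}$ with $c=\log\frac{(1-\nu)\mu}{(1-\mu)\nu}$, and $c^2\sqrt{\mu\nu}=O\big((\sqrt{\mu}-\sqrt{\nu})^2\big)$ because $t\mapsto\frac{t(\log t)^2}{(t-1)^2}$ (with $t=\mu/\nu$) is a bounded function on $[1,\infty)$. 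Summing, $\sigma=O(\sqrt{\Delta})$, so $e^{-(\sigma+1)}=\exp(-o(\Delta))$ as soon as $\Delta\to\infty$, which follows from the hypothesis $\max\{pn_1,\sqrt{\alphaEdge\betaEdge}\,n_2,\sqrt{\alphaEdge\gammaEdge}\,n_3,\sqrt{\betaEdge\gammaEdge}\,n_4\}=\omega(1)$ together with $\alphaEdge,\betaEdge,\gammaEdge,p=\Theta(\log n/n)$; the complementary boundary case ($\Delta$ bounded) is dispatched directly from $\mathbb{Q}[B\ge0]\ge\tfrac12$ and dominated convergence. Combining everything, $\mathbb{P}[B\ge0]\ge\tfrac12\cdot\tfrac34 e^{-(\sigma+1)}\exp(-(1+o(1))\Delta)=\exp(-(1+o(1))\Delta)\ge\tfrac12\exp(-(1+o(1))\Delta)$ for large $n$.

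\textbf{Main obstacle.} The delicate part is the variance bound $\sigma^2=O(\Delta)$: the anti-concentration factor $e^{-\sigma}$ can be absorbed into the $(1+o(1))$ in the exponent only because $\sigma^2$ is controlled by the \emph{same} linear combination $\Delta$ that governs the main exponent, which hinges on the small-parameter expansions above and on the uniform boundedness of $c^2\sqrt{\mu\nu}/(\sqrt{\mu}-\sqrt{\nu})^2$. A lesser nuisance is handling parameter values that make an individual group degenerate (e.g.\ $\theta=0$, where any observed disagreeing entry is decisive, or $\alphaEdge=\betaEdge$, where the corresponding summands vanish identically); these affect only lower-order terms and are treated by elementary direct arguments.
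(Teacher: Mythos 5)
Your proposal is correct and takes essentially the same route as the paper's proof: an exponential change of measure with tilt $1/2$ (the paper's reweighted variables $\widehat{X},\widehat{Y}$), evaluation of the moment-generating-function product to produce the factor $\exp\bigl(-(1+o(1))(n_1\Ir+n_2\Ig+n_3\IcOne+n_4\IcTwo)\bigr)$, and a Chebyshev-based anti-concentration bound for the zero-mean, symmetric tilted sum. The only differences are presentational: the paper lower-bounds the tilted probability of a window $[0,\xi_n)$ with $\xi_n$ chosen abstractly between the standard-deviation and exponent scales, whereas you take the explicit window of width $2\sigma+2$ supported by the (slightly sharper) per-term variance estimate $c^2\sqrt{\mu\nu}=O\bigl((\sqrt{\mu}-\sqrt{\nu})^2\bigr)$, which in fact makes the required separation of scales automatic whenever the exponent tends to infinity.
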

We refer to Appendix~\ref{app:lowerB_prob_proof} for the proof of Lemma~\ref{lm:lowerB_prob}.

\paragraph{Failure Proof for the Perfect Clustering/Grouping Regime.}
\label{sec:necCond_1}
Let $\Tau_\ell$ be  a section of columns of $\gtMat$ with $\tau_\ell = |\Tau_\ell|/m = \Theta(1)$, and assume $\ell = b_1b_2b_3\in \{0,1\}^3$. For $c\in \Tau_\ell$, define $\M{c}$ be a rating matrix, which is identical to $\gtMat$, except its $c^{\text{th}}$ column which is given by 
\begin{align*}
	\begin{bmatrix}
		\mathbf{0}_{\frac{n}{6}} &  b_1 \mathbf{1}_{\frac{n}{6}} & b_1\mathbf{1}_{\frac{n}{6}}
		& b_2\mathbf{1}_{\frac{n}{6}} & b_3\mathbf{1}_{\frac{n}{6}} & (b_2\oplus b_3)\mathbf{1}_{\frac{n}{6}}
	\end{bmatrix}.
\end{align*} 
We focus on the family of rating matrices $\{\M{c}: c\in \Tau_\ell\}$. It is easy to verify that the type of all such matrices is given by
\begin{align}
	T &=
	\left(
	\left\{\kUsrs{i}{j}{x}{y} = 0 \right\}_{i,j \in [3], \: x,y \in \{A,B\}}, 
	\left\{\dElmnts{i}{\ell}{A} = 1\right\}_{i\in \{1,3\}, \ell \in \{0,1\}^3}, 
	\left\{\dElmnts{2}{\ell}{A} = 0\right\}_{\ell \in \{0,1\}^3}, \right. \nonumber\\  
	&\phantom{=}\phantom{((}
	\left. 
	\left\{\dElmnts{i}{\ell}{B} = 0\right\}_{i\in [3], \ell \in \{0,1\}^3}
	\right).
	\label{eq:case1:type}
\end{align}
Using the definition of the negative log-likelihood in \eqref{eq:neg_log_lik} for $\M{c}$ with $c\in \Tau_\ell$, we obtain
\begin{align}
	\mathbb{P} \left[ \mathsf{L}(\M{c}) > \mathsf{L}(\gtMat) \right]
	&=
	1 - \mathbb{P} \left[ \mathsf{L}(\M{c}) \leq \mathsf{L}(\gtMat)\right]
	\nonumber\\
	&=
	1 - \mathbb{P} \left[\log\left(\frac{1-\theta}{\theta}\right) \sum_{i=1}^{\numDiffElmnt{M_c}{\gtMat}} B_i^{(p)} \left(2 B_i^{(\theta)} - 1\right) \geq 0\right]
	\nonumber\\
	&=
	1 - \mathbb{P} \left[\log\left(\frac{1-\theta}{\theta}\right) \sum_{i=1}^{\frac{n}{3}} B_i^{(p)} \left(2 B_i^{(\theta)} - 1\right) \geq 0\right]
	\label{eq:1stTermX_prob_cond1_eta}
	\\
	&\leq
	1- \frac{1}{4} \exp \left(- (1+o(1))\frac{n}{3} I_r\right)
	\label{eq:1stTermX_prob_cond1_lm}
	\\
	&\leq
	\exp \left(- \frac{1}{4} \exp \left(- (1+o(1)) \frac{n}{3} I_r\right)\right),
	\label{eq:1stTermX_prob_cond1}
\end{align}
where \eqref{eq:1stTermX_prob_cond1_eta} follows from the evaluation of $\numDiffElmnt{\M{c}}{\gtMat}$ for the type of $\M{c}$ given in \eqref{eq:case1:type},  and \eqref{eq:1stTermX_prob_cond1_lm}~is an immediate consequence of Lemma~\ref{lm:lowerB_prob} by setting $n_1 = \frac{n}{3}$, $n_2 = n_3 = n_4 = 0$.

Next, we can upper bound the success probability of an ML estimator as 
\begin{align}
	\mathbb{P}[S] \leq \mathbb{P} \left[\bigcap_{c\in \Tau_\ell}
	\left(\mathsf{L}(\M{c}) > \mathsf{L}(\gtMat)\right)\right]
	& =
	\prod_{c\in \Tau_\ell} \mathbb{P} \left[ \mathsf{L}(\M{c}) > \mathsf{L}(\gtMat) \right]
	\label{eq:1stTerm_cond1_indp}
	\\
	& \leq 
	\exp \left(- \frac{1}{4}  \exp \left(- (1+o(1)) \frac{n}{3} I_r\right)\right)^{\tau_\ell m}
	\label{eq:1stTerm_cond1_ident}
	\\
	& = 
	\exp \left(- \frac{1}{4} \tau_{\ell} \exp \left(- (1+o(1)) \frac{n}{3} I_r + \log m\right)\right)
	\nonumber\\
	& \leq 
	\exp \left(- \frac{1}{4} \tau_{\ell}
	\exp \Bigl( -\bigl( (1+o(1)) (1-\epsilon) - 1 \bigr) \log m \Bigr)
	\right)
	\label{eq:1stTerm_cond1_cond}\\
	& \leq 
	\exp \left(- \frac{1}{4} \tau_{\ell}
	\exp \Bigl( \bigl( \epsilon - o(1) (1-\epsilon)  \bigr) \log m \Bigr)
	\right),
\end{align}
where \eqref{eq:1stTerm_cond1_indp} follows from the fact that the events $\{\mathsf{L}(M_c) > \mathsf{L}(\gtMat)\}$ are mutually independent for all $c \in \Tau_\ell$, since each event corresponds to a different column within the block of columns $\Tau_{\ell}$; \eqref{eq:1stTerm_cond1_ident}~follows from \eqref{eq:1stTermX_prob_cond1}; and finally, 
\eqref{eq:1stTerm_cond1_cond} follows from \eqref{eq:necCond_1}. Therefore, we get 
\begin{align}
	\lim_{n,m \rightarrow \infty} \mathbb{P} \left[S\right] 
	&\leq
	\lim_{n,m \rightarrow \infty}  \exp \left(- \frac{1}{4} \tau_{\ell}
	\exp \Bigl( \bigl( \epsilon - o(1) (1-\epsilon)  \bigr) \log m \Bigr)
	\right) = 0,
	\label{eq:lim_Ps_cond1}
\end{align}
which shows that if \eqref{eq:necCond_1} holds, then the recovery fails with high probability. 

\paragraph{Failure Proof for the Grouping-Limited Regime.}
\label{sec:necCond_2}
Without loss of generality, assume $\intraTau m = \hamDist{\vecV{1}{A}}{\vecV{2}{A}}$, i.e., the rating vectors of groups $G_1^A$ and $G_2^A$ that have the minimum inter-group Hamming distance.  In the following, we will introduce a class of rating matrices, which are obtained by switching two users between groups  $G_1^A$ and $G_2^A$, and prove that if \eqref{eq:necCond_2} holds, then with high probability the ML estimator will fail by selecting one of the rating matrices from this class, instead of $\gtMat$. 

First, we present the following lemma that guarantees the existence of two  subsets of users with certain properties. The proof of the lemma is presented in Appendix~\ref{app:randGraph}. 
\begin{lemma} 
	Let $\alpha, \beta = \Theta\left(\frac{\log n}{n}\right)$. Consider  groups $G_1^A$ and $G_2^A$.  As $n\rightarrow \infty$, with probability approaching $1$, there exists two subgroups $\tG1A\subset G_1^A$ and $\tG2A\subset G_2^A$ with size $|\tG1A|\geq \frac{n}{\log^3 n}$ and  $|\tG2A|\geq \frac{n}{\log^3 n}$ such that there is no edge between the nodes in $\tG1A \cup \tG2A$, that is, 
	\begin{align*}
		E \cap \left( (\tG1A \cup \tG2A) \times (\tG1A \cup \tG2A)\right) = \varnothing.
	\end{align*}
	\label{lm:randGraph}
\end{lemma}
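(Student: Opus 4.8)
The plan is to extract the two edge-free subgroups directly from the HSBM random graph by a delete-the-bad-vertices (alteration) argument, using an \emph{asymmetric} construction that keeps \emph{both} subgroups large rather than just their union. Fix a large absolute constant $C$ (to be pinned down at the end) and set $s := \lceil n/(C\log n)\rceil$. Since $|\grpG{1}{A}| = |\grpG{2}{A}| = n/6 \geq s$ for $n$ large, fix arbitrary subsets $W_1 \subseteq \grpG{1}{A}$ and $W_2 \subseteq \grpG{2}{A}$ with $|W_1| = |W_2| = s$, and write $W := W_1 \cup W_2$.

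First I would control the number of edges induced on $W$. Since $\alpha \geq \beta$, each of the $\binom{2s}{2}$ pairs of distinct vertices in $W$ is joined by an edge with probability at most $\alpha = \Theta(\log n/n)$, and under the HSBM these edge indicators are mutually independent. Hence, writing $e(W)$ for the number of edges of $\mathcal{G}$ with both endpoints in $W$, we get $\mathbb{E}[e(W)] \leq \binom{2s}{2}\alpha \leq 2 s^2 \alpha = O(n/\log n)$; and since $e(W)$ dominates the number of edges inside $W_1$ alone, also $\mathbb{E}[e(W)] = \Omega(s^2 \log n/n) = \Omega(n/\log n) \to \infty$. A Chernoff bound for sums of independent Bernoullis then gives $e(W) \leq 2\,\mathbb{E}[e(W)] =: \mu_0 = O(n/\log n)$ with probability at least $1 - \exp(-\Omega(n/\log n)) \to 1$.

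On this high-probability event I would set $\tG1A := \{u \in W_1 : u \text{ has no neighbor in } W\}$ and $\tG2A := \{v \in W_2 : v \text{ has no neighbor in } W_2\}$. The set $\tG1A \cup \tG2A \subseteq W$ is edge-free: every vertex of $\tG1A$ has, by definition, no neighbor anywhere in $W \supseteq \tG1A\cup\tG2A$, which rules out all edges incident to $\tG1A$; and every vertex of $\tG2A$ has no neighbor in $W_2 \supseteq \tG2A$, which rules out all edges inside $\tG2A$. For the cardinalities, each vertex deleted from $W_1$ (resp.\ from $W_2$) in forming $\tG1A$ (resp.\ $\tG2A$) is a non-isolated vertex of the graph induced on $W$, and such a graph has at most $2 e(W)$ non-isolated vertices, so $|\tG1A| \geq s - 2\mu_0$ and $|\tG2A| \geq s - 2\mu_0$. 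Using $\alpha \leq C_0 \log n/n$ for an absolute constant $C_0$, one has $2\mu_0 \leq 8 C_0 s^2 \log n/n = (8C_0/C)\,s$, so taking $C := 16 C_0$ gives $|\tG1A|, |\tG2A| \geq s/2 = \Omega(n/\log n)$, which exceeds $n/\log^3 n$ for all large $n$. This would complete the proof.

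The one genuinely delicate point — the main obstacle — is maintaining the \emph{balance} between the two subgroups: a symmetric alteration would only certify an edge-free set of total size $\Omega(n/\log n)$, which could conceivably lie almost entirely in one group. Forcing $\tG1A$ to avoid all of $W$ while $\tG2A$ only has to avoid $W_2$ is exactly what makes the per-group bound survive. The remaining freedom is merely to locate $s$ in the window $n/\log^3 n \ll s \ll n/\log n$ (equivalently, to fix $C$) so that the expected edge count $2 s^2 \alpha$ is a vanishing fraction of $s$ while $s$ still comfortably beats the target $n/\log^3 n$.
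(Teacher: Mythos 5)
Your proposal is correct and follows essentially the same route as the paper: an alteration argument that starts from moderately sized subsets of $G_1^A$ and $G_2^A$, uses a first-moment bound on the induced edges (valid since $\alpha,\beta=\Theta(\log n/n)$ and edges are independent), and prunes vertices touching those edges to obtain two large edge-free subgroups. The differences are cosmetic — you start from sets of size $n/(C\log n)$ and apply Chernoff to the edge count where the paper starts from sets of size $2n/\log^3 n$ and applies Markov to the number of deleted vertices — and your asymmetric pruning rule is not actually needed for balance, since deleting every endpoint of an edge of the subgraph induced on $W$ removes at most $2e(W)\ll s$ vertices in total, so both sides remain large under the symmetric rule as well.
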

For given sub-groups $\tG1A$ and $\tG2A$, we define the set of rating matrices 
\begin{align*}
	\{\M{f,g}: f\in \tG1A, g\in\tG2A \}
\end{align*}
where $\M{f,g}$ is identical to $\gtMat$, except its $f^\text{th}$ and $g^\text{th}$ rows, which are swapped. Note that for every $\M{f,g}$ in this class, we have $\Lambda(\M{f,g},\gtMat)=2\delta_g m$. Moreover, the groups induced by  $\M{f,g}$ are $\hat{G}_{1}^A= G_1^A \cup \{g\} \setminus\{f\}$ and $\hat{G}_{2}^A= G_2^A \cup \{f\} \setminus\{g\}$, while the other four groups are identical to those of matrix $\gtMat$. Therefore, for each $\M{f,g}$ we have 
\vspace{6mm}
\begin{align*}
	&\mathsf{L}(\gtMat) - \mathsf{L}(\M{f,g}) 
	\nonumber\\
	&= \log\left(\frac{1-\theta}{\theta}\right)\sum_{i=1}^{2\delta_g m}  B_i^{(p)} \left(2 B_i^{(\theta)} - 1\right)\nonumber\\
	&\phantom{=}+ \log\left(\frac{(1-\beta)\alpha}{(1-\alpha)\beta}\right)
	\left[\sum_{h\in G_1^A\setminus\{f\}} \left(B_{(g,h)}^{(\beta)} - B_{(f,h)}^{(\alpha)}\right) + \sum_{h\in G_2^A \setminus\{g\}} \left( B_{(f,h)}^{(\beta)} - B_{(g,h)}^{(\alpha)}\right)
	\right]\nonumber\\
	&= \log\left(\frac{1-\theta}{\theta}\right)\sum_{i=1}^{2\delta_g m}  B_i^{(p)} \left(2 B_i^{(\theta)} - 1\right) 
	+ \log\left(\frac{(1-\beta)\alpha}{(1-\alpha)\beta}\right) \sum_{j=1}^{2(\frac{n}{6}-1)}
	\left( B_{j}^{(\beta)} - B_{j}^{(\alpha)}\right)\nonumber\\
	&= B\left(2\delta_g m, 2(\frac{n}{6}-1), 0, 0\right)
\end{align*}
Then, using Lemma~\ref{lm:lowerB_prob}, we can write
\begin{align}
	\mathbb{P} \left[
	\mathsf{L}(\M{f,g}) > \mathsf{L}(\gtMat)\right]
	&=
	1 - \mathbb{P}\left[ B(2\delta_g m, 2(\frac{n}{6}-1), 0, 0) \geq 0\right] 
	\nonumber
	\\
	&\leq 
	1 - \frac{1}{4} \exp \left( - (1 + o(1)) \left(2 \intraTau m I_r + 2\left(\frac{n}{6}-1\right) \Ig \right) \right)\nonumber
	\\
	&\leq
	\exp
	\left(
	- \frac{1}{4} \exp \left( - (1+o(1)) \left(2\intraTau m I_r + 2\left(\frac{n}{6}-1\right) \Ig \right) \right)
	\right).
	\label{eq:2nd-L-diff}
\end{align}

Finally, we can bound the success probability of an ML estimator as 
\begin{align}
	\mathbb{P}[S] &\leq \mathbb{P} \left[\bigcap_{f\in \tG1A, g\in \tG2A}
	\left(\mathsf{L}(\M{f,g}) > \mathsf{L}(\gtMat)\right)\right]
	=
	\prod_{f\in \tG1A, g\in \tG2A} \mathbb{P} \left[ \mathsf{L}(\M{f,g}) > \mathsf{L}(\gtMat) \right]
	\label{eq:2nd-indep}
	\\
	& \leq 
	\left(\exp
	\left(
	- \frac{1}{4} \exp \left( - (1+o(1)) \left(2\intraTau m I_r + 2\left(\frac{n}{6}-1\right) \Ig \right) \right)
	\right) \right)^{\left|\tG1A\right|\cdot \left|\tG2A\right|}
	\label{eq:2nd-eval}
	\\
	& = 
	\exp
	\left( -\frac{n^2}{4\log^6 (n)}
	\exp \left( - (1+o(1)) \left(2\intraTau m I_r + 2\left(\frac{n}{6}-1\right) \Ig \right) \right)
	\right) 
	\label{eq:2nd-SetSize}\\
	& \leq 
	\exp
	\left( -\frac{n^2}{4 \log^6 (n)}
	\exp \left( - 2(1+o(1)) (1-\epsilon) \log n \right) \right)
	\label{eq:2d-BadCond}\\
	& \leq 
	\exp \left(-\frac{n^{2(\epsilon - o(1)(1-\epsilon))}}{4 \log^6 (n)}
	\right),
\end{align}
where \eqref{eq:2nd-indep} holds since events $\{\mathsf{L}(\M{f,g}) > \mathsf{L}(\gtMat)\}$ are independent due to the fact that there is no edge between nodes in $\tG1A \cup \tG2A$; \eqref{eq:2nd-eval} follows from \eqref{eq:2nd-L-diff}; we used $|\tG1A| =|\tG2A|=\frac{n}{\log^3 n}$ in \eqref{eq:2nd-SetSize}; and  \eqref{eq:2d-BadCond} follows from the condition in \eqref{eq:necCond_2}. Finally, we obtain 
\begin{align*}
	\lim_{n\rightarrow \infty } \mathbb{P}[S] \leq 
	\lim_{n\rightarrow \infty } 
	\exp \left( -\frac{n^{2(\epsilon - o(1)(1-\epsilon))}}{4 \log^6 (n)}
	\right) =0,
\end{align*}
which implies that the ML estimator will fail in finding $\gtMat$ with high probability. 

\paragraph{Failure Proof for the Clustering-Limited Regime.}
\label{sec:necCond_3}
The proof of this case follows the same structure as that of the grouping-limited regime. Without loss of generality, assume $\vecV{1}{A}$ and $\vecV{2}{B}$ be rating vectors whose minimum hamming distance is $\interTau m$, i.e., $ \hamDist{\vecV{1}{A}}{ \vecV{2}{B}} = \interTau m$. Note that the corresponding groups defined by such rating vectors, $\grpG{1}{A}$ and $\grpG{2}{B}$, belong to different clusters. Similar to Lemma~\ref{lm:randGraph}, we pick subsets $\tG1A\subset G_1^A$ and $\tG2B \subset G_2^B$ with $|\tG1A| = |\tG2B| = \frac{n}{\log^3 n}$. Note that the subgraph induced by $\tG1A \cup \tG2B$ is edge-free. Then, we consider the set of all rating matrices 
\begin{align*}
	\{\M{f,g}: f\in \tG1A, g\in \tG2B\},
\end{align*} 
where 
\begin{align*}
	\M{f,g}(r,:) = \left\{\begin{array}{cc}
		\gtMat(g,:) & \textrm{if $r=f$},\\
		\gtMat(f,:) & \textrm{if $r=g$},\\
		\gtMat(r,:) & \textrm{otherwise}.
	\end{array}
	\right.
\end{align*}
Then, for $\M{f,g}$, we have 
\begin{align*}
	&\mathsf{L}(\gtMat) - \mathsf{L}(\M{f,g}) 
	\nonumber\\
	&= \log\left(\frac{1-\theta}{\theta}\right)\sum_{i=1}^{\Lambda(\M{f,g}, \gtMat)}  B_i^{(p)} \left(2 B_i^{(\theta)} - 1\right)\nonumber\\
	&\phantom{=}+ \log\left(\frac{(1-\gamma)\alpha}{(1-\alpha)\gamma}\right)
	\left[\sum_{h\in G_1^A\setminus\{f\}} \left(B_{(g,h)}^{(\gamma)} - B_{(f,h)}^{(\alpha)}\right) + \sum_{h\in G_2^B \setminus\{g\}} \left( B_{(f,h)}^{(\gamma)} - B_{(g,h)}^{(\alpha)}\right)
	\right]\nonumber\\
	&\phantom{=}+ \log\left(\frac{(1-\gamma)\beta}{(1-\beta)\gamma}\right)
	\left[\sum_{h\in G_2^A\cup G_3^A} \left(B_{(g,h)}^{(\gamma)} - B_{(f,h)}^{(\beta)}\right) + \sum_{h\in G_1^B \cup G_3^B} \left( B_{(f,h)}^{(\gamma)} - B_{(g,h)}^{(\beta)}\right)
	\right]\nonumber\\
	&= B\left(2\delta_c m, 0, 2(\frac{n}{6}-1), \frac{2n}{3}\right).
\end{align*}
Applying Lemma~\ref{lm:lowerB_prob}, we get
\begin{align}
	\mathbb{P} \left[
	\mathsf{L}(\M{f,g}) > \mathsf{L}(\gtMat)\right]
	&=
	1 - \mathbb{P}\left[ B(2\delta_g m, 2(\frac{n}{6}-1), 0, 0) \geq 0\right] 
	\nonumber
	\\
	&\leq
	\exp
	\left(
	- \frac{1}{4} \exp \left( - (1+o(1)) \left(2\interTau m I_r + 2\left(\frac{n}{6}-1\right) \IcOne + 2\frac{n}{3} \IcTwo \right) \right)
	\right)\!.
	\label{eq:3rd-L-diff}
\end{align}
Therefore, the success probability of the ML estimator can be bounded as 
\begin{align}
	\mathbb{P}[S] &\leq \prod_{f\in \tG1A, g\in \tG2B} \mathbb{P} \left[
	\mathsf{L}(\M{f,g}) > \mathsf{L}(\gtMat)\right]\label{eq:3rd-indep}\\
	&\leq   \left(\exp
	\left(
	- \frac{1}{4} \exp \left( - (1+o(1)) \left(2\interTau m I_r + 2\left(\frac{n}{6}-1\right) \IcOne + 2\frac{n}{3} \IcTwo \right) \right)
	\right)\right)^{\left|\tG1A\right|\cdot \left|\tG2A\right|}
	\label{eq:3rd-eval}
	\\
	& \leq 
	\exp
	\left( -\frac{n^2}{4 \log^6 (n)}
	\exp \left( - 2(1+o(1)) (1-\epsilon) \log n \right) \right)
	\label{eq:3rd-BadCond}\\
	& \leq 
	\exp \left(-\frac{n^{2(\epsilon - o(1)(1-\epsilon))}}{4 \log^6 n}
	\right),
\end{align}
where \eqref{eq:3rd-indep} is a consequence of independence of the events $\{\mathsf{L}(\M{f,g}) > \mathsf{L}(\gtMat)\}$; \eqref{eq:3rd-eval} follows from \eqref{eq:3rd-L-diff}; and in \eqref{eq:3rd-BadCond} we have used the condition \eqref{eq:necCond_3}. This immediately implies 
\begin{align*}
	\lim_{n\rightarrow \infty } \mathbb{P}[S] =0,
\end{align*} 
which leads to the failure of the ML estimator. 

Since $\lim_{n \rightarrow \infty} \mathbb{P} \left[S\right] = 0$ is proved under each of the three conditions stated in \eqref{eq:necCond_1}, \eqref{eq:necCond_2}, and \eqref{eq:necCond_3}, the converse proof of Theorem~\ref{Thm:p_star} is concluded.
$\hfill \square$

\section{Proof of Theorem~\ref{Thm:Alg_theory}}
We propose a computationally feasible matrix completion algorithm that achieves the optimal sample complexity characterized by Theorem~\ref{Thm:p_star}. It consists of four phases described as below. 

\textbf{Phase~1 (Exact Recovery of Clusters):}
We use the community detection algorithm in \cite{abbe2015exact} on $\mathcal{G}$ to \emph{exactly}\footnote{Exact recovery requires the number of wrongly clustered users vanishes as the number of users tends to infinity. The formal mathematical definition is given in \cite[Definition~4]{abbe2017community}.} recover the two clusters $A$ and $B$. As proved in \cite{abbe2015exact}, the decomposition of the graph into two clusters is correct with high probability when $\IcTwo > \frac{2 \log n}{n}$. This completes Phase~$1$.

\textbf{Phase~2 (Almost Exact Recovery of Groups):}
The goal of Phase~$2$ is to decompose the set of users in cluster $A$ (or cluster $B$) into three groups, represented by $\grpG{1}{A}$, $\grpG{2}{A}$, $\grpG{3}{A}$ (or $\grpG{1}{B}$, $\grpG{2}{B}$, $\grpG{3}{B}$). It is worth noting that grouping at this stage is \emph{almost exact}\footnote{Almost exact recovery means that groups are recovered with a vanishing \emph{fraction} of misclassified users. The mathematical definition is given in \cite[Definition~4]{abbe2017community}.}, and will be further refined in the next phases.
To this end, we run a spectral clustering algorithm \cite{gao2017achieving} on $A$ and $B$ separately. Let $\grpGhat{i}{x}{0}$ denote the initial estimate of the $i^{\text{th}}$ group of cluster $x$ that is recovered by Phase~$2$, for $i\in[3]$ and $x\in\{A,B\}$. It is shown that the groups within each cluster are recovered with a vanishing fraction of errors if $\Ig = \omega(1/n)$. It is worth mentioning that there are other clustering algorithms \cite{shi2000normalized,ng2002spectral,abbe2015community,chin2015stochastic,lei2015consistency,javanmard2016phase,krzakala2013spectral,mossel2016density} that can be employed for this phase. Examples include: spectral clustering \cite{shi2000normalized,ng2002spectral,abbe2015community,chin2015stochastic,lei2015consistency}, semidefinite programming (SDP) \cite{javanmard2016phase}, non-backtracking matrix spectrum \cite{krzakala2013spectral}, and belief propagation \cite{mossel2016density}. This completes Phase~$2$.

\textbf{Phase~3 (Exact Recovery of Rating Vectors)}:
We propose a novel algorithm that optimally recovers the rating vectors of the groups within each cluster. The algorithm is based on maximum likelihood~(ML) decoding of users' ratings based on the partial and noisy observations. For this model, the ML decoding boils down to a counting rule: for each item, find the group with the maximum gap between the number of observed zeros and ones, and set the rating entry of this group to $0$. The other two rating vectors are either both $0$ or both $1$ for this item, which will be determined based on the majority of the union of their observed entries. It turns out that the vector recovery is exact with probability $1\!-\!o(1)$. We first present the proposed algorithm. Then, the theoretical guarantee of the algorithm is provided. 

Define $\vecVhat{i}{x}$ as the estimated rating vector of $\vecV{i}{x}$, i.e., the output of Algorithm~\ref{algo:phase3}. Let the $c^{\text{th}}$ element of the rating vector $\vecV{i}{x}$ (or $\vecVhat{i}{x}$) be denoted by $\vecVElmnt{i}{x}{c}$ (or $\vecVhatElmnt{i}{x}{c}$) for $i \in [3]$, $x \in \{ A, B\}$ and $c \in [m]$. Let $Y_{r,c}$ be the $(r,c)$-entry of matrix $Y$, and $Z_{r,c}$ be its mapping to $\{+1,0,-1\}$ for $r \in [n]$ and $c \in [m]$.

\begin{remark}
	Algorithm~\ref{algo:phase3} is one of the technical distinctions, relative to the prior works~\cite{ahn2018binary,yoon2018joint} which employ the simple majority voting rule under non-hierarchical SBMs. Also our technical novelty in analysis, reflected in~\eqref{eq:poe_phase3_lineaD} (see below), exploits the hierarchical structure to prove the theoretical guarantee. $\hfill \blacksquare$
\end{remark}
Let us now prove the exact recovery of the rating vectors of the groups within cluster $A$. The proof w.r.t. cluster $B$ follows by symmetry. Without loss of generality, assume that $\vecVElmnt{1}{A}{c} = 0$ for $c \in [m/2]$, and $\vecVElmnt{1}{A}{c} = 1$ for $c \in m \setminus [m/2]$. In what follows, we will prove that $\vecV{1}{A}$ can be exactly recovered, i.e., $\mathbb{P}\left[\vecVhat{1}{A} = \vecV{1}{A}\right] = 1 - o(1)$. Similar proofs can be constructed for $\vecV{2}{A}$ and $\vecV{3}{A}$. 
The probability of error in recovering $\vecV{1}{A}$ is expressed as
\begin{align}
	&\mathbb{P}\left[\vecVhat{1}{A} \neq \vecV{1}{A}\right]
	\nonumber\\
	& = \mathbb{P}\left[\left( \bigcup_{c \in [m/2]} \{ \vecVhatElmnt{1}{A}{c} = 1 \} \right)
	\cup 
	\left(\bigcup_{c \in m \setminus [m/2]} \{ \vecVhatElmnt{1}{A}{c} = 0 \} \right)
	\right]
	\nonumber\\
	& \leq \left(\sum_{c \in [m/2]} \mathbb{P}\left[\vecVhatElmnt{1}{A}{c} = 1\right]\right)
	+ \left(\sum_{c \in m \setminus [m/2]} \mathbb{P}\left[\vecVhatElmnt{1}{A}{c} = 0\right]\right)
	\label{eq:poe_phase3_unionB}\\
	& =
	\left(\sum_{c \in [m/2]} \mathbb{P} \left[\bigcap_{i \in [2]} \left\{ \vecVhatElmnt{i}{A}{c} = 1\right \} \cap \left \{ \vecVhatElmnt{3}{A}{c} = 0 \right \} \right] 
	+ \mathbb{P}\left[\bigcap_{i \in \{1,3\}} \left \{ \vecVhatElmnt{1}{A}{c} = 1 \right \} \cap \left \{ \vecVhatElmnt{2}{A}{c} = 0 \right \} \right]
	\right)
	\nonumber\\
	& \phantom{=} + \left(\sum_{c \in m \setminus [m/2]} \mathbb{P}\!\left[ \bigcap_{i \in [3]} \left \{ \vecVhatElmnt{i}{A}{c} = 0 \right \} \right] 
	+ \mathbb{P}\left[\left \{ \vecVhatElmnt{1}{A}{c} = 0 \right \} \cap \bigcap_{i \in \{2,3\}} \left \{ \vecVhatElmnt{i}{A}{c} = 1 \right \} \right]
	\right)
	\label{eq:poe_phase3_lineaD}\\
	& \leq \left(\sum_{c \in [m/2]} \mathbb{P}\left[ \deltaDiff{1}{A}{c} + \deltaDiff{2}{A}{c} \:\leq\: 0\right]
	+ \sum_{c \in [m/2]} \mathbb{P}\left[\deltaDiff{1}{A}{c} + \deltaDiff{3}{A}{c} \:\leq\: 0\right]\right)
	\nonumber\\
	& \phantom{=} + \left(\sum_{c \in m \setminus [m/2]} \mathbb{P}\left[ \deltaDiff{2}{A}{c} + \deltaDiff{3}{A}{c} \:\geq\: 0\right]
	+ \sum_{c \in m \setminus [m/2]} \mathbb{P}\left[\deltaDiff{2}{A}{c} + \deltaDiff{3}{A}{c} \:\geq\: 0\right]\right)
	\label{eq:poe_phase3_delta}\\
	& = \left(\sum_{c \in [m/2]} \underbrace{\mathbb{P}\left[\sum_{r_1 \in\grpGhat{1}{A}{0}} \!\!\!\!\!\!\Zelmnt{r_1}{c} + \sum_{r_2 \in\grpGhat{2}{A}{0}} \!\!\!\!\!\!\Zelmnt{r_2}{c} \:\leq\: 0\right]}_{\mathsf{Term_1}}
	+ \sum_{c \in [m/2]} \underbrace{\mathbb{P}\left[\sum_{r_1 \in\grpGhat{1}{A}{0}} \!\!\!\!\!\!\Zelmnt{r_1}{c} + \sum_{r_3 \in\grpGhat{3}{A}{0}} \!\!\!\!\!\!\Zelmnt{r_3}{c} \:\leq\: 0\right]}_{\mathsf{Term_2}}
	\right)
	\nonumber\\
	& \phantom{=} + \left(
	\sum_{c \in m \setminus [m/2]} \!\!\!\!\underbrace{\mathbb{P}\left[\sum_{r_2 \in\grpGhat{2}{A}{0}} \!\!\!\!\!\!\Zelmnt{r_2}{c} + \!\!\!\!\!\sum_{r_3 \in\grpGhat{3}{A}{0}} \!\!\!\!\!\!\Zelmnt{r_3}{c} \:\geq\: 0\right]}_{\mathsf{Term_3}} 
	+ \!\!\!\sum_{c \in m \setminus [m/2]} \!\!\!\! \underbrace{\mathbb{P}\left[\sum_{r_2 \in\grpGhat{2}{A}{0}} \!\!\!\!\!\!\Zelmnt{r_2}{c} + \!\!\!\!\!\sum_{r_3 \in\grpGhat{3}{A}{0}} \!\!\!\!\!\!\Zelmnt{r_3}{c} \:\geq\: 0\right]}_{\mathsf{Term_4}} 
	\right)
	\label{eq:poe_phase3_z}
\end{align}
where \eqref{eq:poe_phase3_unionB} follows from the union bound; \eqref{eq:poe_phase3_lineaD} follows from $\vecV{1}{A} \oplus \vecV{2}{A} = \vecV{3}{A}$; \eqref{eq:poe_phase3_delta} follows from the ML decoding outlined in Algorithm~\ref{algo:phase3}; and \eqref{eq:poe_phase3_z} follows from the definition of $\deltaDiff{i}{x}{c}$ on Line~3 in Algorithm~\ref{algo:phase3}.

Next we show that each of the four terms in \eqref{eq:poe_phase3_z} is $o(m^{-1})$. We prove that for $\mathsf{Term_1}$ and $\mathsf{Term_3}$, and similar proofs can be carried out for $\mathsf{Term_2}$ and $\mathsf{Term_4}$. Define $\misClassUsr{i} \coloneqq \grpGhat{i}{A}{0} \setminus \grpG{i}{A}$ and $\misClassFrac{i} \coloneqq \left|\misClassUsr{i}\right| / n$. From the theoretical guarantees (i.e., exact clustering and almost-exact grouping) in Phases 1 and 2, we have $\lim_{n \rightarrow \infty} \misClassFrac{i} = 0, \: \forall i \in [3]$ with high probability. Define $n_{i1} \coloneqq \left(\frac{1}{6}-\misClassFrac{i}\right)n$ and $n_{i2} \coloneqq \misClassFrac{i}n$ for $i \in [3]$. Let $\{ B_i^{(p)} \} \substack{\text{i.i.d.} \\ \sim} \:\: \mathsf{Bern}(p)$, and $\{B_i^{(\theta)} \} \substack{\text{i.i.d.} \\ \sim} \:\: \mathsf{Bern}(\theta)$. Hence, for $c \in [m/2]$, $\mathsf{Term_1}$ can be upper bounded by
\begin{align}
	& \mathbb{P}\left[\sum_{r_1 \in\grpGhat{1}{A}{0}} \Zelmnt{r_1}{c} + \sum_{r_2 \in\grpGhat{2}{A}{0}} \Zelmnt{r_2}{c} \:\leq\: 0\right]
	\nonumber\\
	& = 
	\mathbb{P}\left[
	\sum_{i \in \grpGhat{1}{A}{0} \setminus \misClassUsr{1}} \Zelmnt{i}{c} 
	+ \sum_{j \in \misClassUsr{1}} \Zelmnt{j}{c} 
	+ \sum_{k \in \grpGhat{2}{A}{0} \setminus \misClassUsr{2}} \Zelmnt{k}{c}
	+ \sum_{\ell \in \misClassUsr{2}} \Zelmnt{\ell}{c}
	\:\leq\: 0 \right]
	\nonumber\\
	& \leq
	\mathbb{P}\left[
	\sum_{i \in \grpGhat{1}{A}{0} \setminus \misClassUsr{1}} \Zelmnt{i}{c}
	- \sum_{j \in \misClassUsr{1}} \left| \Zelmnt{j}{c} \right| 
	+ \sum_{k \in \grpGhat{2}{A}{0} \setminus \misClassUsr{2}} \Zelmnt{k}{c}
	- \sum_{\ell \in \misClassUsr{2}} \left| \Zelmnt{\ell}{c} \right|
	\:\leq\: 0 \right]
	\nonumber\\
	& = 
	\mathbb{P}\left[
	- \sum_{i=1}^{n_{11}} B_i^{(p)} \left(2 B_i^{(\theta)} - 1\right)
	- \sum_{j=n_{11}+1}^{n_{11}+n_{12}} B_j^{(p)}
	\right.
	\nonumber\\
	& \left. \phantom{=}
	- \sum_{k=n_{11}+n_{12}+1}^{n_{11}+n_{12}+n_{21}} B_k^{(p)} \left(2 B_k^{(\theta)} - 1\right)
	- \sum_{\ell=n_{11}+n_{12}+n_{21}+1}^{n_{11}+n_{12}+n_{21}+n_{22}} B_\ell^{(p)}
	\:\leq\: 0 \right]
	\label{eq:poe12_bern}\\
	& =
	\mathbb{P}\left[
	\sum_{i=1}^{n_{11}+n_{21}} B_i^{(p)} \left(2 B_i^{(\theta)} - 1\right) \geq 
	- \sum_{j=n_{11}+n_{21}+1}^{n_{11}+n_{21}+n_{12}+n_{22}} B_j^{(p)}
	\right]
	\label{eq:poe12_bern2}
\end{align}
where \eqref{eq:poe12_bern} follows since $\vecVElmnt{1}{A}{c} = 0$ for $c \in [m/2]$,
\begin{align*}
	\Yelmnt{j}{c} =\left\{
	\begin{array}{ll}
		0 & \textrm{w.p. } \: p (1-\theta);\\
		1 & \textrm{w.p. } \: p \theta,
	\end{array}
	\right.
\end{align*}
and $\Zelmnt{j}{c} = -(2\Yelmnt{j}{c}-1)$.

The following lemma introduces a large deviation result employed in \cite{ahn2018binary} to further bound~\eqref{eq:poe12_bern2}.
\begin{lemma}
	\label{lm:largeDev}
	Let $0 < \epsilon < 1$, and $0 < p < 1/2$. Suppose $X \sim \mathsf{Binom}(\epsilon n, p)$. Then, 
	\begin{align}
		\mathbb{P} \left[ X \geq \frac{\kappa n p}{\log (1/\epsilon)} \right]
		\leq 2 \exp \left(- \frac{\kappa n p}{2}\right), \;\; \text{ for any } \kappa \geq 2e.
	\end{align}
\end{lemma}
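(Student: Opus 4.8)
The plan is to derive the bound from the standard multiplicative (Poisson-type) Chernoff inequality for a binomial tail, followed by an elementary one-variable estimate that isolates the exponent $\tfrac{\kappa np}{2}$. First I would recall that for $X\sim\mathsf{Binom}(N,p)$ and any threshold $a>Np$,
\[
\mathbb{P}[X\ge a]\le \exp\!\bigl(a-Np-a\log\tfrac{a}{Np}\bigr),
\]
which follows by optimizing $e^{-sa}(1-p+pe^s)^N\le e^{-sa+Np(e^s-1)}$ over $s>0$ at $e^s=a/(Np)$. I apply this with $N=\epsilon n$ (replacing $\epsilon n$ by $\lfloor\epsilon n\rfloor$ if integrality is a concern only sharpens the bound) and $a=\tfrac{\kappa np}{\log(1/\epsilon)}$, so that $\tfrac{a}{Np}=\tfrac{\kappa}{\epsilon\log(1/\epsilon)}$. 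The hypothesis $a>Np$ is valid because $\epsilon\log(1/\epsilon)\le 1/e<2e\le\kappa$.

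Next I would substitute and simplify. Writing $u:=\log(1/\epsilon)>0$ and using the identity $a\,u=\kappa np$, the exponent becomes $a\bigl(1-\log\kappa+\log u\bigr)-\kappa np-\epsilon np$. Since $-\epsilon np\le 0$, it suffices to prove $a\bigl(1-\log\kappa+\log u\bigr)\le \tfrac{\kappa np}{2}$, which then yields $\mathbb{P}[X\ge a]\le e^{-\kappa np/2}\le 2e^{-\kappa np/2}$. If $1-\log\kappa+\log u\le 0$ this is immediate. Otherwise, dividing through by $a=\kappa np/u$ reduces the claim to $2\log u-u\le 2\log\kappa-2$; the left-hand side is maximized over $u>0$ at $u=2$ with value $2\log 2-2$, and $\kappa\ge 2e>2$ gives $2\log\kappa-2\ge 2\log 2-2$, closing the argument.

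The only mildly delicate part is the elementary optimization that absorbs the $\log\log(1/\epsilon)$ correction term inside the exponent; everything else is the textbook Chernoff computation. Note also that the constant $2$ in the statement is slack, since the sharper bound $e^{-\kappa np/2}$ in fact holds under the same hypotheses, so no additional care is needed to produce the stated prefactor.
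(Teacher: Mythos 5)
Your proof is correct, and it is more self-contained than what the paper does: the paper does not prove this lemma at all, but simply cites \cite[Lemma~7]{ahn2018binary}, so your Chernoff-based derivation effectively reconstructs the argument that the paper outsources. The steps check out: the optimized Chernoff bound $\mathbb{P}[X\ge a]\le \exp\left(a-Np-a\log\frac{a}{Np}\right)$ is valid once $a>Np$, and your verification $\epsilon\log(1/\epsilon)\le 1/e<\kappa$ secures that hypothesis; with $u=\log(1/\epsilon)$ and $au=\kappa np$ the exponent indeed collapses to $a\left(1-\log\kappa+\log u\right)-\kappa np-\epsilon np$, and the reduction to $2\log u-u\le 2\log\kappa-2$, settled by maximizing $2\log u-u$ at $u=2$ and using $\kappa\ge 2$, is airtight. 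Your approach buys two small extras over the citation route: it makes the lemma independent of the external reference, and it shows the stated prefactor $2$ is slack since the sharper bound $\exp\left(-\kappa np/2\right)$ already holds (indeed you do not even use $p<1/2$). The only cosmetic remark is that the handling of non-integer $\epsilon n$ via $\lfloor\epsilon n\rfloor$ is the right disclaimer, matching the implicit convention in the paper's statement.
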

\begin{proof}
	The proof is given by \cite[Lemma~7]{ahn2018binary}.
\end{proof}

Let $\kappa$ be sufficiently large such that $\kappa > 4e$. Thus, the RHS of \eqref{eq:poe12_bern2} can be upper bounded by
\begin{align}
	& \mathbb{P}\left[\sum_{r_1 \in\grpGhat{1}{A}{0}} \Zelmnt{r_1}{c} + \sum_{r_2 \in\grpGhat{2}{A}{0}} \Zelmnt{r_2}{c} \:\leq\: 0\right]
	\nonumber\\
	& \leq
	\mathbb{P}\left[
	\sum_{i=1}^{n_{11}+n_{21}} B_i^{(p)} \left(2 B_i^{(\theta)} - 1\right) \geq 
	- \sum_{j=n_{11}+n_{21}+1}^{n_{11}+n_{21}+n_{12}+n_{22}} B_j^{(p)}
	\right]
	\nonumber\\
	& \leq
	\mathbb{P}\left[
	\sum_{i=1}^{n_{11}+n_{21}}  B_i^{(p)} \left(2 B_i^{(\theta)} - 1\right) 
	\geq - \frac{\kappa np}{\log \frac{1}{\misClassFrac{1}+\misClassFrac{2}}} 
	\right]
	+ \mathbb{P}\left[
	- \sum_{j=n_{11}+n_{21}+1}^{n_{11}+n_{21}+n_{12}+n_{22}} B_j^{(p)}
	\leq - \frac{\kappa np}{\log \frac{1}{\misClassFrac{1}+\misClassFrac{2}}} 
	\right]
	\nonumber\\
	& \leq
	\mathbb{P}\left[
	\sum_{i=1}^{n_{11}+n_{21}}  B_i^{(p)} \left(2 B_i^{(\theta)} - 1\right) 
	\geq - \frac{\kappa n p}{\log \frac{1}{\misClassFrac{1}+\misClassFrac{2}}} 
	\right]
	+ 2 \exp \left(- \frac{\kappa n p}{2}\right)
	\label{eq:lrgDev_lm1}\\
	& \leq
	\mathbb{P}\left[
	\log \left(\frac{1-\theta}{\theta}\right)\!\!\sum_{i=1}^{n_{11}+n_{21}}  B_i^{(p)} \left(2 B_i^{(\theta)} - 1\right) 
	\geq - \log \left(\frac{1-\theta}{\theta}\right) \frac{cnp}{\log \frac{1}{\misClassFrac{1}+\misClassFrac{2}}} 
	\right]
	+ o(m^{-1})
	\label{eq:lrgDev_approx}\\
	& \leq 
	\exp \left(\frac{1}{2} \log \left(\frac{1-\theta}{\theta}\right) \frac{cnp}{\log \frac{1}{\misClassFrac{1}+\misClassFrac{2}}} 
	- (1+o(1)) \left(\frac{1}{3}-\left(\misClassFrac{1}+\misClassFrac{2}\right)\right) n I_r
	\right)
	+ o(m^{-1})
	\label{eq:lrgDev_boundAchv}\\
	& \approx
	\exp \left(
	- (1+o(1)) \left(\frac{1}{3}-\left(\misClassFrac{1}+\misClassFrac{2}\right)\right) n I_r
	\right)
	+ o(m^{-1})
	\label{eq:lrgDev_approxEta}\\
	& \leq
	\exp \left(
	- (1+o(1)) \left(1+\frac{\epsilon}{4}\right) \log m
	\right) 
	+ o(m^{-1}) 
	\label{eq:lrgDev_achvCond}\\
	& = o(m^{-1})
	\label{eq:lrgDev_last12}
\end{align}
where \eqref{eq:lrgDev_lm1} follows from Lemma~\ref{lm:largeDev}; \eqref{eq:lrgDev_approx} follows since $n p = \Omega(\log m)$; \eqref{eq:lrgDev_boundAchv} readily follows from Lemma 2; \eqref{eq:lrgDev_approxEta} follows as the first term in the exponent is insignificant compared to the other term since $n p = \Theta (nI_r)$ and $\lim_{\misClassFrac{1},\misClassFrac{2} \rightarrow 0^+} \frac{1}{\log \frac{1}{\misClassFrac{1}+\misClassFrac{2}}} = 0$; and \eqref{eq:lrgDev_achvCond}~follows since $\frac{1}{3}nI_r \geq (1+\epsilon) \log m$ guarantees that $\left(\frac{1}{3}\!-\!\left(\misClassFrac{1}\!+\!\misClassFrac{2}\right)\right) n I_r \geq \left(1\!+\!\frac{\epsilon}{4}\right) \log m$ as long as $\left(\misClassFrac{1}\!+\!\misClassFrac{2}\right)$ is sufficiently small compared to $\epsilon$.

Similarly, for $c \in m \setminus [m/2]$, $\mathsf{Term_3}$ can be upper bounded by
\begin{align}
	& \mathbb{P}\left[\sum_{r_2 \in\grpGhat{2}{A}{0}} \Zelmnt{r_2}{c} + \sum_{r_3 \in\grpGhat{3}{A}{0}} \Zelmnt{r_3}{c} \:\geq\: 0\right]
	\nonumber\\
	& = 
	\mathbb{P}\left[
	\sum_{i \in \grpGhat{2}{A}{0} \setminus \misClassUsr{2}} \Zelmnt{i}{c} 
	+ \sum_{j \in \misClassUsr{2}} \Zelmnt{j}{c} 
	+ \sum_{k \in \grpGhat{3}{A}{0} \setminus \misClassUsr{3}} \Zelmnt{k}{c}
	+ \sum_{\ell \in \misClassUsr{3}} \Zelmnt{\ell}{c}
	\:\geq\: 0 \right]
	\nonumber\\
	& \leq
	\mathbb{P}\left[
	\sum_{i \in \grpGhat{2}{A}{0} \setminus \misClassUsr{2}} \Zelmnt{i}{c}
	+ \sum_{j \in \misClassUsr{2}} \left| \Zelmnt{j}{c} \right| 
	+ \sum_{k \in \grpGhat{3}{A}{0} \setminus \misClassUsr{3}} \Zelmnt{k}{c}
	+ \sum_{\ell \in \misClassUsr{3}} \left| \Zelmnt{\ell}{c} \right|
	\:\geq\: 0 \right]
	\nonumber\\
	& = 
	\mathbb{P}\left[
	\sum_{i=1}^{n_{21}} B_i^{(p)} \left(2 B_i^{(\theta)} - 1\right)
	+ \sum_{j=n_{21}+1}^{n_{21}+n_{22}} B_j^{(p)}
	\right.
	\nonumber\\
	& \left. \phantom{=}
	+ \sum_{k=n_{21}+n_{22}+1}^{n_{21}+n_{22}+n_{31}} B_k^{(p)} \left(2 B_k^{(\theta)} - 1\right)
	+ \sum_{\ell=n_{21}+n_{22}+n_{31}+1}^{n_{21}+n_{22}+n_{31}+n_{32}} B_\ell^{(p)}
	\:\geq\: 0 \right]
	\label{eq:poe23_bern}\\
	& =
	\mathbb{P}\left[
	\sum_{i=1}^{n_{21}+n_{31}} B_i^{(p)} \left(2 B_i^{(\theta)} - 1\right) \geq 
	- \sum_{j=n_{21}+n_{31}+1}^{n_{21}+n_{31}+n_{22}+n_{32}} B_j^{(p)}
	\right]
	\label{eq:poe23_bern2}
\end{align}
where \eqref{eq:poe23_bern} follows since $\vecVElmnt{1}{A}{c} = 1$ for $c \in m \setminus [m/2]$, 
\begin{align*}
	\Yelmnt{j}{c} =\left\{
	\begin{array}{ll}
		0 & \textrm{w.p. } \: p \theta;\\
		1 & \textrm{w.p. } \: p (1-\theta),
	\end{array}
	\right.
\end{align*}
and $\Zelmnt{j}{c} = -(2\Yelmnt{j}{c}-1)$. Applying similar bounding techniques used for \eqref{eq:poe12_bern}, one can show that
\begin{align}
	& \mathbb{P}\left[\sum_{r_2 \in\grpGhat{2}{A}{0}} \Zelmnt{r_2}{c} + \sum_{r_3 \in\grpGhat{3}{A}{0}} \Zelmnt{r_3}{c} \:\geq\: 0\right]
	\leq o(m^{-1}). 
	\label{eq:lrgDev_last23}
\end{align}

Finally, by \eqref{eq:lrgDev_last12} and \eqref{eq:lrgDev_last23}, the probability of error in recovering $\vecV{1}{A}$ is upper bounded by
\begin{align}
	& \mathbb{P}\left[\vecVhat{1}{A} \neq \vecV{1}{A}\right]
	\nonumber\\
	& \leq \left(\sum_{c \in [m/2]} o(m^{-1})
	+ \sum_{c \in [m/2]} o(m^{-1})
	\right)
	+ \left(
	\sum_{c \in m \setminus [m/2]} o(m^{-1})
	+ \sum_{c \in m \setminus [m/2]} o(m^{-1})
	\right)
	\nonumber\\
	& = o(1).
\end{align}
This completes the proof of exact recovery of rating vectors.

\textbf{Phase~4 (Exact Recovery of Groups):}
The goal in this last step is to \emph{refine} the groups which are \emph{almost recovered} in Phase~$2$, thereby obtaining an \emph{exact} grouping. To this end, we propose an iterative algorithm that locally refines the estimates on the user grouping within each cluster for $T$ iterations. More specifically, at each iteration, the affiliation of each user is updated to the group that yields the maximum point-wise likelihood w.r.t. the considered user. The exact computation of the point-wise likelihood requires the knowledge of the model parameters $(\alpha, \beta, \theta)$.  But we do not rely on such knowledge, instead estimate them using the given ratings and graph $(Y, {\cal G})$. Hence, we use an \emph{approximated} point-wise log-likelihood which can readily be computed as:
\begin{align}
	\label{eq:pointwiseL}    
	|\{c\!:Y_{r,c}=\vecVhat{i}{x}(c)\}| \cdot \log \left(\frac{1\!-\!\estPrmtr{\theta}}{\estPrmtr{\theta}}\right) 
	+ e\left(\{r\}, \grpGhat{i}{x}{t-1}\right) \cdot\log\! \left(\frac{(1\!-\!\estPrmtr{\beta}) \estPrmtr{\alpha}}{(1\!-\!\estPrmtr{\alpha})
		\estPrmtr{\beta}}\right)
\end{align}
where $(\estPrmtr{\alpha}, \estPrmtr{\beta}, \estPrmtr{\theta})$ denote the maximum likelihood estimates of $(\alpha, \beta, \theta)$. Here $|\{c\!:Y_{r,c}=\vecVhat{i}{x}(c)\}|$ indicates the number of observed rating matrix entries of the user that coincide with the corresponding entries of the rating vector of that group; and $e\left(\{r\}, \grpGhat{i}{x}{t-1}\right)$ denotes the number of edges between the user and the set of users which belong to that group. The pseudocode is described in Algorithm~\ref{algo:phase4_groups}. 

In order to prove that Algorithm~\ref{algo:phase4_groups} ensures the exact recovery of groups, we intend to show that the number of misclassified users in each cluster strictly decreases with each iteration. To this end, we rely on a technique that was employed in many relevant papers~\cite{chen2016community, ahn2018binary,yoon2018joint}. The technique aims to prove that the misclassification error rate is reduced by a factor of 2 with each iteration. More specifically, assuming that the previous phases are executed successfully, if we start with $\eta n$ misclassified users within one cluster, for some small $\eta > 0$, then it intends to show that we end up with $\frac{\eta}{2} n$ misclassified users with high probability as $n \rightarrow \infty$ after one iteration of refinement. Hence, with this technique, running the local refinement for $T = \frac{\log (\eta n)}{\log 2}$ within the groups of each cluster would suffice to converge to the ground truth assignments. The proof of such error rate reduction follows the one in~\cite[Theorem~2]{yoon2018joint} in which the problem of recovering $K$ communities of possibly different sizes is studied. By considering the case of three equal-sized communities, the guarantees of exact recovery of the groups within each cluster readily follows when $T = O(\log n)$. 

This completes the proof of Phase~$4$, and concludes the proof of Theorem~\ref{Thm:Alg_theory}.
$\hfill \square$

\newpage
\section{Supplementary Experimental Results}
Similar to~\cite{ahn2018binary, yoon2018joint,tan2019community,jo2020discrete}, the performance of the proposed algorithm is assessed on semi-real data (real graph but synthetic rating vectors). We consider a subgraph of the political blog network \cite{adamic2005political}, which is shown to exhibit a hierarchical structure \cite{peixoto2014hierarchical}. In particular, we consider a tall matrix setting of $n = 381$ and $m = 200$ in order to investigate the gain in sample complexity due to the graph side information. The selected subgraph consists of two clusters of political parties, each of which comprises three groups. The three groups of the first cluster consist of $98$, $34$ and $103$ users, while the three groups of the second cluster consist of $58$, $68$ and $20$ users.

In order to visualize the underlying hierarchical structure of the considered subgraph of the political blog network, we apply a dimensionality reduction algorithm, called t-Distributed Stochastic Neighbor Embedding (t-SNE) \cite{vanDerMaaten2008tsne} to visualize high-dimensional data in a low-dimensional space. Fig.~\ref{fig:poliblog_tsne} shows two clusters that are colored in \textcolor[rgb]{1,0,0}{red} and \textcolor[rgb]{0,0,1}{blue}. Each cluster comprises three groups, represented by circle, triangle and square. 

\begin{figure*}[h]
	\centering
	\includegraphics[width=0.55\textwidth]{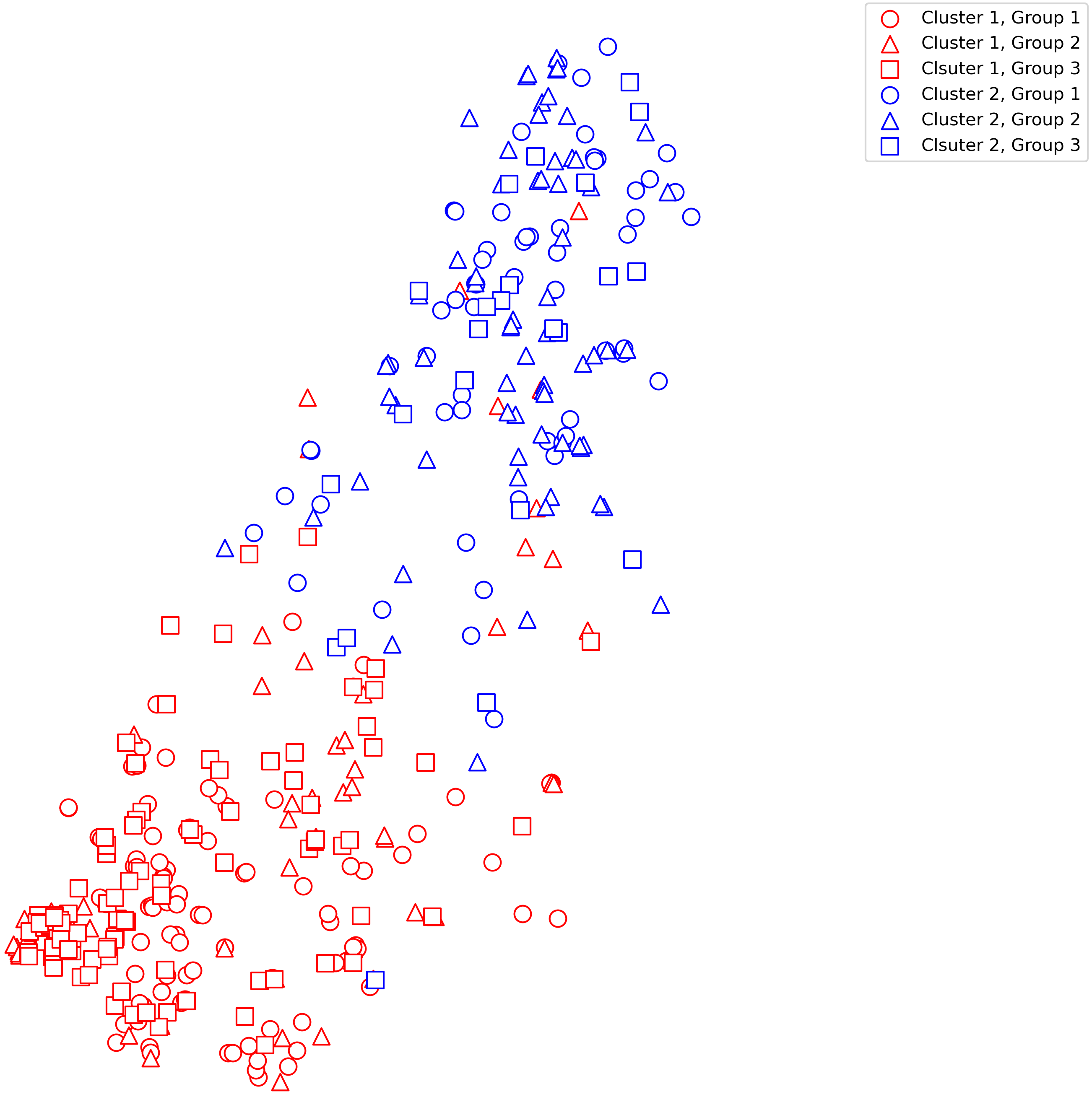}
	\caption{Visualization of a subgraph of the political blog network~\cite{adamic2005political} using t-SNE algorithm \cite{vanDerMaaten2008tsne}.}
	\label{fig:poliblog_tsne}
\end{figure*}

\newpage
\appendix
\section{Proofs of Lemmas for Achievability Proof of Theorem~\ref{Thm:p_star}}
\subsection{Proof of Lemma~\ref{lm:neg_log_likelihood}}
\label{app:neg_log_likelihood}
The negative log-likelihood of a candidate rating matrix $X=(\cV,\mathcal{Z})$, for $X \in \matSet$, given a fixed input pair $(Y,\mathcal{G})$ can be written as
\begin{align}
	\mathsf{L}(X) 
	&= 
	- \log \mathbb{P}\left[(Y,\mathcal{G}) \given \mathbf{X} = X \right]
	\nonumber\\
	&=
	- \log \left(\mathbb{P}\left[Y \given \mathbf{X} = X \right]  
	\:\mathbb{P} \left[\mathcal{G} \given \mathbf{X} = X \right]\right)
	\nonumber\\
	&=
	- \log \mathbb{P}\left[Y \given \mathbf{X} = X \right] 
	- \log \mathbb{P}\left[\mathcal{G} \given \mathbf{X} = X \right],
\end{align}
where
\begin{align}
	\mathbb{P}\left[Y \given \mathbf{X} = X \right] 
	&= 
	p^{|\Omega|}\:
	(1-p)^{nm-|\Omega|}
	\left(\theta\right)^{\numDiffElmnt{Y}{X}}
	(1-\theta)^{|\Omega| - \numDiffElmnt{Y}{X}}, 
	\\
	\mathbb{P}\left[\mathcal{G} \given \mathbf{X} = X \right]
	&=
	\alphaEdge^{\numEdge{\alpha}{\mathcal{G}}{Z}} 
	(1-\alphaEdge)^{ \vert \userPairSet{\alpha}{\mathcal{Z}} \vert - \numEdge{\alpha}{\mathcal{G}}{Z}} \:
	\betaEdge^{\numEdge{\beta}{\mathcal{G}}{Z}} 
	(1-\betaEdge)^{\vert \userPairSet{\beta}{\mathcal{Z}} \vert - \numEdge{\beta}{\mathcal{G}}{Z}} \:
	\nonumber \\
	&\phantom{=\{} 
	\gammaEdge^{\numEdge{\gamma}{\mathcal{G}}{Z}} 
	(1-\gammaEdge)^{ \vert \userPairSet{\gamma}{\mathcal{Z}} \vert - \numEdge{\gamma}{\mathcal{G}}{Z}}.
\end{align}
Consequently, $\mathsf{L}(X)$ is given by
\begin{align}
	\mathsf{L}(X) 
	=  
	\log \left( \frac{1-\theta}{\theta}\right) \numDiffElmnt{Y}{X} 
	+
	\sum\limits_{\mu\in\left\{\alphaEdge,\betaEdge,\gammaEdge\right\}}
	\left(
	\log \left(\frac{1-\mu}{\mu}\right) \numEdge{\mu}{\mathcal{G}}{\mathcal{Z}}
	-
	\log(1-\mu) \left\vert \userPairSet{\mu}{\mathcal{Z}} \right\vert 
	\right).
	\label{eq:neg_log_lik}
\end{align}
This completes the proof of Lemma~\ref{lm:neg_log_likelihood}.
\hfill $\blacksquare$

\subsection{Proof of Lemma~\ref{lemma:neg_lik_M0_XT}}
\label{proof:neg_lik_M0_XT}
By Lemma~\ref{lm:neg_log_likelihood}, the LHS of \eqref{eq:lemma3} can be written as
\begin{align}
	\mathsf{L}\left(\gtMat\right) - \mathsf{L}(X)
	&=
	\log\left(\frac{1-\theta}{\theta}\right) 
	\underbrace{\left(\numDiffElmnt{Y}{\gtMat} - \numDiffElmnt{Y}{X}\right)}_{\mathsf{Term_1}}
	\nonumber \\
	&\phantom{=}
	+ 
	\underbrace{
		\sum\limits_{\mu\in\left\{\alphaEdge,\betaEdge,\gammaEdge\right\}}
		\left(
		\log \left(\frac{1-\mu}{\mu}\right)
		\left(
		\numEdge{\mu}{\mathcal{G}}{\partitionSet_0}
		- \numEdge{\mu}{\mathcal{G}}{\partitionSet}
		\right)
		\right)}_{\mathsf{Term_2}}
	\nonumber\\ 
	&\phantom{=}
	+ 
	\underbrace{
		\sum\limits_{\mu \in \left\{\alphaEdge,\betaEdge,\gammaEdge\right\}}
		\left(
		\log(1-\mu) 
		\left(
		\left\vert \userPairSet{\mu}{\partitionSet} \right\vert
		-
		\left\vert  \userPairSet{\mu}{\partitionSet_0} \right\vert
		\right)
		\right)}_{\mathsf{Term_3}}.
	\label{eq:LX_LM0}
\end{align}
In what follows, we evaluate each of the three terms in \eqref{eq:LX_LM0}.

Recall that $\numDiffElmnt{A}{B}$ denotes the number of different entries between matrices $A_{n \times m}$ and $B_{n \times m}$. Therefore, $\mathsf{Term_1}$ can be expanded as
\begin{align}
	\mathsf{Term_1}
	&= 
	\numDiffElmnt{Y}{\gtMat} - \numDiffElmnt{Y}{X}
	\nonumber\\
	&= 
	\sum_{(r,t)\in \Omega} 
	\left(\indicatorFn{Y(r,t) \neq \gtMat(r,t)}\right)
	- \sum_{(r,t)\in \Omega} 
	\left(\indicatorFn{Y(r,t) \neq X(r,t)}\right)
	\nonumber\\ 
	&=
	\sum_{(r,t)\in \Omega} 
	\left(nm - \indicatorFn{Y(r,t) = \gtMat(r,t)} \right)
	- \left(nm - \indicatorFn{Y(r,t) = X(r,t)} \right)
	\nonumber\\
	&= 
	\sum_{\substack{(r,t) \in \Omega : \\ X(r,t) \neq \gtMat(r,t)}} 
	\indicatorFn{Y(r,t) = X(r,t)} 
	- \indicatorFn{Y(r,t) = \gtMat(r,t)}
	\label{eq:phiX_0}\\ 
	&= \sum_{i \in \lvert\left\{(r,t)\in [n] \times [m] \::\: X(r,t) \neq \gtMat(r,t)\right\}\rvert} 
	\mathsf{B}^{(p)}_i \mathsf{B}_i^{\left(\theta\right)} 
	- \mathsf{B}^{(p)}_i\left(1-\mathsf{B}_i^{(\theta)}\right)
	\label{eq:phiX_00}\\ 
	&= \sum_{i \in \diffEntriesSet}
	\mathsf{B}_i^{(p)}
	\left(2\mathsf{B}_i^{(\theta)} - 1\right), 
	\label{eq:phiX_3}
\end{align}
where \eqref{eq:phiX_0} follows since $\indicatorFn{Y(i,j) = X(i,j)} = \indicatorFn{Y(i,j) = \gtMat(i,j)}$ if $X(i,j) = \gtMat(i,j)$; the first term of each summand in \eqref{eq:phiX_00} follows since the probability that the observed rating matrix entry is $X(i,j)$, which is not equal to $\gtMat(i,j)$, is $p(\theta/)$, while the second term of each summand in \eqref{eq:phiX_00} follows since the probability that the observed rating matrix entry is $\gtMat(i,j)$ is $p(1-\theta)$, for every $(i,j)\in [n] \times [m]$; and finally \eqref{eq:phiX_3} follows from \eqref{eq:N1_defn}.

Next, we evaluate $\mathsf{Term_2}$. We first evaluate the quantity $\numEdge{\alpha}{\mathcal{G}}{\partitionSet_0} - \numEdge{\alpha}{\mathcal{G}}{Z}$ as
\begin{align}
	&\numEdge{\alpha}{\mathcal{G}}{\partitionSet_0} - \numEdge{\alpha}{\mathcal{G}}{Z} 
	\nonumber\\
	&=
	\left\vert
	\left\{
	(a,b) \in \mathcal{E}: 
	a \in Z_0(x,i) \cap Z(y,j_1),\: b \in Z_0(x,i) \cap Z(y,j_2),
	\right.\right. \nonumber \\
	&\phantom{=\{} \left. \left.
	\mbox{ for } x, y \in \{A,B\},\: i, j_1, j_2 \in [3],\: j_1 \neq j_2
	\right\}
	\right\vert
	\nonumber\\
	&\phantom{=} +
	\left\vert
	\left\{
	(a,b) \in \mathcal{E}: 
	a \in Z_0(x,i) \cap Z(y_1,j_1),\: b \in Z_0(x,i) \cap Z(y_2, j_2),
	\right.\right. \nonumber \\
	&\phantom{=\{\vert} \left. \left.
	\mbox{ for } x, y \in \{A,B\},\: y_1 \neq y_2, \:
	i, j_1, j_2 \in [3]
	\right\}
	\right\vert
	\nonumber\\
	&\phantom{=}
	-\left\vert
	\left\{
	(a,b) \in \mathcal{E}: 
	a \in Z_0(x,i_1) \cap Z(y,j),\: b \in Z_0(x,i_2) \cap Z(y,j),
	\right.\right. \nonumber \\
	&\phantom{=\{\vert} \left. \left.
	\mbox{ for } x, y \in \{A,B\},\: i_1, i_2, j \in [3],\: i_1 \neq i_2
	\right\}
	\right\vert
	\nonumber\\
	&\phantom{=} - 
	\left\vert
	\left\{
	(a,b) \in \mathcal{E}: 
	a \in Z_0(x_1,i_1) \cap Z(y,j),\: b \in Z_0(x_2,i_2) \cap Z(y,j),
	\right.\right. \nonumber \\
	&\phantom{=\{\vert} \left. \left.
	\mbox{ for } x_1, x_2, y \in \{A,B\},\: x_1 \neq x_2, \:
	i_1, i_2, j, \in [3]
	\right\}
	\right\vert
	\label{eq:diff_edge0}\\
	&=
	\sum\limits_{i=1}^{\misClassfSet{\alphaEdge}{\betaEdge}} \mathsf{B}_i^{(\alphaEdge)}
	+\sum\limits_{i=1}^{\misClassfSet{\alphaEdge}{\gammaEdge}} \mathsf{B}_i^{(\alphaEdge)}
	-\sum\limits_{i=1}^{\misClassfSet{\betaEdge}{\alphaEdge}} \mathsf{B}_i^{(\betaEdge)} 
	- \sum\limits_{i=1}^{\misClassfSet{\gammaEdge}{\alphaEdge}} \mathsf{B}_i^{(\gammaEdge)},
	\label{eq:diff_edge1}
\end{align}
where \eqref{eq:diff_edge0} holds since the edges that remain after the subtraction are: (i) edges that exist in the same group in $\partitionSet_0$, but are estimated to be in different groups within the same cluster in $\partitionSet$; (ii) edges that exist in the same group in $\partitionSet_0$, but are estimated to be in different clusters in $\partitionSet$; (iii) edges that exist in different groups within the same cluster in $\partitionSet_0$, but are estimated to be in the same group in $\partitionSet$; (iv) edges that exist in different clusters in $\partitionSet_0$, but are estimated to be in the same group in $\partitionSet$; and finally \eqref{eq:diff_edge1} follows from \eqref{eq:N2_defn}--\eqref{eq:N3rev_defn}. In a similar way, one can evaluate the following quantities:
\begin{align}
	\numEdge{\betaEdge}{\mathcal{G}}{\partitionSet_0} - \numEdge{\betaEdge}{\mathcal{G}}{\partitionSet}
	&=
	\sum\limits_{i=1}^{\misClassfSet{\betaEdge}{\alphaEdge}} \mathsf{B}_i^{(\betaEdge)}
	+\sum\limits_{i=1}^{\misClassfSet{\betaEdge}{\gammaEdge}} \mathsf{B}_i^{(\betaEdge)}
	-\sum\limits_{i=1}^{\misClassfSet{\alphaEdge}{\betaEdge}} \mathsf{B}_i^{(\alphaEdge)} 
	- \sum\limits_{i=1}^{\misClassfSet{\gammaEdge}{\betaEdge}} \mathsf{B}_i^{(\gammaEdge)},
	\label{eq:diff_edge2}\\
	\numEdge{\gammaEdge}{\mathcal{G}}{\partitionSet_0} - \numEdge{\gammaEdge}{\mathcal{G}}{\partitionSet}
	&=
	\sum\limits_{i=1}^{\misClassfSet{\gammaEdge}{\alphaEdge}} \mathsf{B}_i^{(\gammaEdge)}
	+\sum\limits_{i=1}^{\misClassfSet{\gammaEdge}{\betaEdge}} \mathsf{B}_i^{(\gammaEdge)}
	-\sum\limits_{i=1}^{\misClassfSet{\alphaEdge}{\gammaEdge}} \mathsf{B}_i^{(\alphaEdge)} 
	- \sum\limits_{i=1}^{\misClassfSet{\betaEdge}{\gammaEdge}} \mathsf{B}_i^{(\betaEdge)}.
	\label{eq:diff_edge3}
\end{align}
Consequently, $\mathsf{Term_2}$ can be written as
\begin{align}
	\mathsf{Term_2} 
	&=
	\sum\limits_{\mu\in{\alphaEdge,\betaEdge,\gammaEdge}}
	\left(
	\log \left(\frac{1-\mu}{\mu}\right)
	\left(
	\numEdge{\mu}{\mathcal{G}}{\partitionSet_0}
	- \numEdge{\mu}{\mathcal{G}}{\partitionSet}
	\right)
	\right)
	\nonumber\\
	&=
	\log \left(\frac{1-\alphaEdge}{\alphaEdge}\right)
	\left(
	\sum_{i \in \misClassfSet{\alphaEdge}{\betaEdge}} 
	\mathsf{B}_i^{(\alphaEdge)}
	+ \sum_{i \in \misClassfSet{\alphaEdge}{\gammaEdge}} 
	\mathsf{B}_i^{(\alphaEdge)}
	- \sum_{i \in \misClassfSet{\betaEdge}{\alphaEdge}} 
	\mathsf{B}_i^{(\betaEdge)}
	- \sum_{i \in \misClassfSet{\gammaEdge}{\alphaEdge}} 
	\mathsf{B}_i^{(\gammaEdge)}
	\right)
	\nonumber\\
	&\phantom{=}
	+ \log \left(\frac{1-\betaEdge}{\betaEdge}\right)
	\left(
	\sum_{i \in \misClassfSet{\betaEdge}{\alphaEdge}} 
	\mathsf{B}_i^{(\betaEdge)}
	+ \sum_{i \in \misClassfSet{\betaEdge}{\gammaEdge}} 
	\mathsf{B}_i^{(\betaEdge)}
	- \sum_{i \in \misClassfSet{\alphaEdge}{\betaEdge}} 
	\mathsf{B}_i^{(\alphaEdge)}
	- \sum_{i \in \misClassfSet{\gammaEdge}{\betaEdge}} 
	\mathsf{B}_i^{(\gammaEdge)}
	\right)
	\nonumber\\
	&\phantom{=}
	+ \log \left(\frac{1-\gammaEdge}{\gammaEdge}\right)
	\left(
	\sum_{i \in \misClassfSet{\gammaEdge}{\alphaEdge}} 
	\mathsf{B}_i^{(\gammaEdge)}
	+ \sum_{i \in \misClassfSet{\gammaEdge}{\betaEdge}} 
	\mathsf{B}_i^{(\gammaEdge)}
	- \sum_{i \in \misClassfSet{\alphaEdge}{\gammaEdge}} 
	\mathsf{B}_i^{(\alphaEdge)}
	- \sum_{i \in \misClassfSet{\betaEdge}{\gammaEdge}} 
	\mathsf{B}_i^{(\betaEdge)}
	\right)
	\label{eq:term2_L(x)_0}\\
	&=
	\left(\log\frac{(1-\betaEdge)\alphaEdge}{(1-\alphaEdge)\betaEdge}\right)
	\left(
	\sum\limits_{i \in \misClassfSet{\betaEdge}{\alphaEdge}} \mathsf{B}_i^{(\betaEdge)}
	- \sum\limits_{i \in \misClassfSet{\alphaEdge}{\betaEdge}} \mathsf{B}_i^{(\alphaEdge)} 
	\right)
	\nonumber\\
	&\phantom{=}
	+
	\left(\log\frac{(1-\gammaEdge)\alphaEdge}{(1-\alphaEdge)\gammaEdge}\right)
	\left(
	\sum\limits_{i \in \misClassfSet{\gammaEdge}{\alphaEdge}} \mathsf{B}_i^{(\gammaEdge)}
	- \sum\limits_{i \in \misClassfSet{\alphaEdge}{\gammaEdge}} \mathsf{B}_i^{(\alphaEdge)} 
	\right)
	\nonumber\\
	&\phantom{=}
	+
	\left(\log\frac{(1-\gammaEdge)\betaEdge}{(1-\betaEdge)\gammaEdge}\right)
	\left(
	\sum\limits_{i \in \misClassfSet{\gammaEdge}{\betaEdge}} \mathsf{B}_i^{(\gammaEdge)}
	- \sum\limits_{i \in \misClassfSet{\betaEdge}{\gammaEdge}} \mathsf{B}_i^{(\betaEdge)} 
	\right),
	\label{eq:term2_L(x)_1}
\end{align}
where \eqref{eq:term2_L(x)_0} follows from \eqref{eq:diff_edge1}--\eqref{eq:diff_edge3}.

Finally, $\mathsf{Term_3}$ is evaluated as follows:
\begin{align}
	\mathsf{Term_3} 
	&=
	\sum\limits_{\mu \in \left\{\alphaEdge,\betaEdge,\gammaEdge\right\}}
	\left(
	\log(1-\mu) 
	\left(
	\left\vert \userPairSet{\mu}{\partitionSet} \right\vert
	-
	\left\vert  \userPairSet{\mu}{\partitionSet_0} \right\vert
	\right)
	\right)
	\nonumber\\
	&=
	\sum\limits_{\mu \in \left\{\alphaEdge,\betaEdge,\gammaEdge\right\}}
	\left(
	\log(1-\mu) 
	\left\vert 
	\bigcup_{\nu \in \left\{\alphaEdge, \betaEdge, \gammaEdge\right\}} \misClassfSet{\nu}{\mu}
	\right\vert
	\right)
	-
	\sum\limits_{\mu \in \left\{\alphaEdge,\betaEdge,\gammaEdge\right\}}
	\left(
	\log(1-\mu)
	\left\vert  
	\bigcup_{\nu \in \left\{\alphaEdge, \betaEdge, \gammaEdge\right\}} \misClassfSet{\mu}{\nu}
	\right\vert
	\right)
	\label{eq:term3_L(x)_0}\\
	&=
	\sum\limits_{\mu \in \left\{\alphaEdge,\betaEdge,\gammaEdge\right\}}
	\left(
	\log(1-\mu) 
	\left( 
	\left\vert\misClassfSet{\alphaEdge}{\mu}\right\vert + \left\vert\misClassfSet{\betaEdge}{\mu}\right\vert + \left\vert\misClassfSet{\gammaEdge}{\mu}\right\vert
	\right)
	\right)
	\nonumber \\
	&\phantom{=}
	-
	\sum\limits_{\mu \in \left\{\alphaEdge,\betaEdge,\gammaEdge\right\}}
	\left(
	\log(1-\mu)
	\left( 
	\left\vert\misClassfSet{\mu}{\alphaEdge}\right\vert + \left\vert\misClassfSet{\mu}{\betaEdge}\right\vert + \left\vert\misClassfSet{\mu}{\gammaEdge}\right\vert
	\right)
	\right)
	\label{eq:term3_L(x)_1}\\
	&=
	\log(1-\alphaEdge) 
	\left( 
	\left\vert\misClassfSet{\betaEdge}{\alphaEdge}\right\vert + \left\vert\misClassfSet{\gammaEdge}{\alphaEdge}\right\vert
	\right)
	-
	\log(1-\alphaEdge)
	\left( 
	\left\vert\misClassfSet{\alphaEdge}{\betaEdge}\right\vert + \left\vert\misClassfSet{\alphaEdge}{\gammaEdge}\right\vert
	\right)
	\nonumber \\
	&\phantom{=}
	+ \log(1-\betaEdge) 
	\left( 
	\left\vert\misClassfSet{\alphaEdge}{\betaEdge}\right\vert +  \left\vert\misClassfSet{\gammaEdge}{\betaEdge}\right\vert
	\right)
	-
	\log(1-\betaEdge)
	\left( 
	\left\vert\misClassfSet{\betaEdge}{\alphaEdge}\right\vert +  \left\vert\misClassfSet{\betaEdge}{\gammaEdge}\right\vert
	\right)
	\nonumber \\
	&\phantom{=}
	+ \log(1-\gammaEdge) 
	\left( 
	\left\vert\misClassfSet{\alphaEdge}{\gammaEdge}\right\vert + \left\vert\misClassfSet{\betaEdge}{\gammaEdge}\right\vert 
	\right)
	-
	\log(1-\gammaEdge)
	\left( 
	\left\vert\misClassfSet{\gammaEdge}{\alphaEdge}\right\vert + \left\vert\misClassfSet{\gammaEdge}{\betaEdge}\right\vert 
	\right)
	\nonumber\\
	&=
	\left(\log\frac{1-\alphaEdge}{1-\betaEdge}\right) 
	\left(
	\left\vert \misClassfSet{\betaEdge}{\alphaEdge}\right\vert
	- \left\vert \misClassfSet{\alphaEdge}{\betaEdge}\right\vert
	\right)
	+  
	\left(\log\frac{1-\alphaEdge}{1-\gammaEdge}\right) 
	\left(
	\left\vert \misClassfSet{\gammaEdge}{\alphaEdge}\right\vert
	- \left\vert \misClassfSet{\alphaEdge}{\gammaEdge}\right\vert
	\right)
	\nonumber \\
	&\phantom{=}
	+  
	\left(\log\frac{1-\betaEdge}{1-\gammaEdge}\right)
	\left(
	\left\vert \misClassfSet{\gammaEdge}{\betaEdge}\right\vert
	- \left\vert \misClassfSet{\betaEdge}{\gammaEdge}\right\vert
	\right),
	\label{eq:term3_L(x)}
\end{align}
where \eqref{eq:term3_L(x)_0} holds since
\begin{align}
	\userPairSet{\mu}{\partitionSet_0} 
	= \bigcup_{\nu \in \left\{\alphaEdge, \betaEdge, \gammaEdge\right\}} \misClassfSet{\mu}{\nu},
	\qquad
	\userPairSet{\mu}{\partitionSet} 
	= \bigcup_{\nu \in \left\{\alphaEdge, \betaEdge, \gammaEdge\right\}} \misClassfSet{\nu}{\mu};
\end{align}
and \eqref{eq:term3_L(x)_1} follows since the sets $\{\misClassfSet{\mu}{\nu} : \mu, \nu \in \{\alphaEdge,\betaEdge,\gammaEdge \}, \: \mu \neq \nu\}$ are disjoint.

By \eqref{eq:phiX_3}, \eqref{eq:term2_L(x)_1} and \eqref{eq:term3_L(x)}, the LHS of \eqref{eq:LX_LM0} is given by
\begin{align}
	\mathsf{L}\left(\gtMat\right) - \mathsf{L}(X)
	&=
	\log\left(\frac{1-\theta}{\theta}\right) 
	\sum_{i \in \diffEntriesSet}
	\mathsf{B}_i^{(p)}
	\left(2\mathsf{B}_i^{(\theta)} - 1\right)
	\nonumber\\
	&\phantom{=}
	+ \left(\log\frac{(1-\betaEdge)\alphaEdge}{(1-\alphaEdge)\betaEdge}\right)
	\left(
	\sum\limits_{i \in \misClassfSet{\betaEdge}{\alphaEdge}} \mathsf{B}_i^{(\betaEdge)}
	- \sum\limits_{i \in \misClassfSet{\alphaEdge}{\betaEdge}} \mathsf{B}_i^{(\alphaEdge)} 
	\right)
	\nonumber \\
	&\phantom{=}
	+
	\left(\log\frac{1-\alphaEdge}{1-\betaEdge}\right) 
	\left(
	\left\vert \misClassfSet{\betaEdge}{\alphaEdge}\right\vert
	- \left\vert \misClassfSet{\alphaEdge}{\betaEdge}\right\vert
	\right)
	\nonumber\\
	&\phantom{=}
	+ \left(\log\frac{(1-\gammaEdge)\alphaEdge}{(1-\alphaEdge)\gammaEdge}\right)
	\left(
	\sum\limits_{i \in \misClassfSet{\gammaEdge}{\alphaEdge}} \mathsf{B}_i^{(\gammaEdge)}
	- \sum\limits_{i \in \misClassfSet{\alphaEdge}{\gammaEdge}} \mathsf{B}_i^{(\alphaEdge)} 
	\right)
	\nonumber \\
	&\phantom{=}
	+  
	\left(\log\frac{1-\alphaEdge}{1-\gammaEdge}\right) 
	\left(
	\left\vert \misClassfSet{\gammaEdge}{\alphaEdge}\right\vert
	- \left\vert \misClassfSet{\alphaEdge}{\gammaEdge}\right\vert
	\right)
	\nonumber\\
	&\phantom{=}
	+ \left(\log\frac{(1-\gammaEdge)\betaEdge}{(1-\betaEdge)\gammaEdge}\right)
	\left(
	\sum\limits_{i \in \misClassfSet{\gammaEdge}{\betaEdge}} \mathsf{B}_i^{(\gammaEdge)}
	- \sum\limits_{i \in \misClassfSet{\betaEdge}{\gammaEdge}} \mathsf{B}_i^{(\betaEdge)} 
	\right)
	\nonumber \\
	&\phantom{=}
	+  
	\left(\log\frac{1-\betaEdge}{1-\gammaEdge}\right)
	\left(
	\left\vert \misClassfSet{\gammaEdge}{\betaEdge}\right\vert
	- \left\vert \misClassfSet{\betaEdge}{\gammaEdge}\right\vert
	\right),
\end{align}
which immediately proves \eqref{eq:lemma3}. This completes the proof of Lemma~\ref{lemma:neg_lik_M0_XT}.
\hfill $\blacksquare$

\subsection{Proof of Lemma~\ref{lemma:upper_bound_B}}
\label{proof:upper_bound_B}
We first define three random variables (indexed by $i$) before proceeding with the proof. Recall from Section~\ref{sec:achv} that $\mathsf{B}_i^{(\sigma)}$ denotes the $i^{\text{th}}$~Bernoulli random variable with parameter $\sigma \in \{p, \theta, \alphaEdge, \betaEdge, \gammaEdge\}$, that is $\mathbb{P}[\mathsf{B}_i^{(\sigma)}=1] = 1- \mathbb{P}[\mathsf{B}_i^{(\sigma)}=0] = \sigma$. For $p=\Theta\left(\frac{\log n}{n}\right)$ and a constant $\theta \in[0,1]$, we define the first random variable $\mathbf{U}_i = \mathbf{U}_i(p,\theta)$ as
\begin{align}
	\mathbf{U}_i(p,\theta) 
	&= \log \left(\frac{1-\theta}{\theta}\right) \mathsf{B}_i^{(p)} \left( 2\mathsf{B}_i^{(\theta)}-1\right)
	\nonumber\\
	&= \left\{
	\begin{array}{ll}
		-\log \left(\frac{1-\theta}{\theta}\right) & \textrm{w.p. } \: p(1-\theta),\\
		0 & \textrm{w.p. } \: (1-p),\\
		\log \left(\frac{1-\theta}{\theta}\right)  & \textrm{w.p. } \: p\theta.
	\end{array}
	\right.
	\label{eq:app_x(p,t,q)}
\end{align}
The moment generating function $M_{\mathbf{U}_i(p,\theta)}\left(t\right)$ of $\mathbf{U}_i(p,\theta,q)$ at $t=1/2$ is evaluated as
\begin{align}
	M_{\mathbf{U}_i(p,\theta)}\left(\frac{1}{2}\right)
	&= 
	\mathbbm{E}\left[\exp\left(\frac{1}{2} \mathbf{U}_i(p,\theta) \right) \right] 
	\nonumber\\
	&=
	\left[
	p(1\!-\!\theta)  \exp\left(\!-\frac{1}{2}\log\left(\!\frac{1\!-\!\theta}{\theta}\right)\!\!\!\right)\!
	\right]
	+ 
	1 \!-\! p
	\nonumber \\
	&\phantom{=}
	+ 
	\left[
	p\theta \exp\left(\frac{1}{2}\log\left(\! \frac{1\!-\!\theta}{\theta}\right)\!\!\!\right)\!
	\right]
	\nonumber \\
	&=
	p\sqrt{\theta(1-\theta)} 
	+ 1-p + p\sqrt{\theta(1-\theta)}   \nonumber \\
	&=
	1-p\left(1-\theta
	-2\sqrt{\theta(1-\theta)}
	+\theta\right)   \nonumber \\
	&=
	1-p\left(\sqrt{1-\theta}-\sqrt{\theta} \right)^2, 
\end{align}
and hence we have 
\begin{align}
	-\log M_{\mathbf{U}_i(p,\theta)}\left(\frac{1}{2}\right) 
	&= - \log \left(
	1-p\left(\sqrt{1-\theta}-\sqrt{\theta} \right)^2
	\right)
	\nonumber\\
	&= p\left(\sqrt{1-\theta}-\sqrt{\theta} \right)^2
	+ O\left(p^2\right)
	\label{eq:taylor_Mx}\\
	&= (1+o(1)) \left(\sqrt{1-\theta} - \sqrt{\theta}\right)^2 p,
	\nonumber\\
	&= (1+o(1)) \Ir,
	\label{eq:log-M-X}
\end{align}
where \eqref{eq:taylor_Mx} follows from Taylor series of $\log (1-x)$, for $x = p\left(\sqrt{1-\theta}-\sqrt{\theta} \right)^2$, which converges since $p=\Theta\left(\frac{\log n}{n}\right)$. Next, for $\mu,\nu = \Theta\left(\frac{\log n}{n}\right)$, define the second random variable $\mathbf{V}_i = \mathbf{V}_i(\mu,\nu)$ as 
\begin{align}
	\mathbf{V}_i(\mu,\nu) 
	&=
	\left(\log\frac{(1-\mu)\nu}{(1-\nu)\mu}\right) \left(\mathsf{B}_i^{(\mu)} - \mathsf{B}_i^{(\nu)}\right)
	=
	\left\{
	\begin{array}{ll}
		-\log\frac{(1-\mu)\nu}{(1-\nu)\mu} & \textrm{w.p. }\: (1-\mu)\nu,\\
		0 & \textrm{w.p. }\: (1-\mu)(1-\nu) + \mu\nu,\\
		\log\frac{(1-\mu)\nu}{(1-\nu)\mu} & \textrm{w.p. }\: \mu(1-\nu).
	\end{array}
	\right.
	\label{eq:app_z(m,n)}
\end{align}
The moment generating function $M_{\mathbf{V}_i(\mu,\nu)}\left(t\right)$ of $\mathbf{V}_i(\mu,\nu)$ at $t=1/2$ is evaluated as
\begin{align}
	M_{\mathbf{V}_i(\mu,\nu)}\left(\frac{1}{2}\right)
	&= 
	\mathbb{E}\left[\exp\left(\frac{1}{2}\mathbf{V}_i(\mu,\nu)\right)\right] 
	\nonumber\\ 
	&= 
	\left[
	(1-\mu)\nu \exp\left(-\frac{1}{2}\log\frac{(1-\mu)\nu}{(1-\nu)\mu} \right) 
	\right]
	+ \left[ 
	(1-\mu)(1-\nu) + \mu\nu 
	\right]
	\nonumber \\
	&\phantom{=}
	+ 
	\left[
	\mu(1-\nu)\exp\left(\frac{1}{2}\log\frac{(1-\mu)\nu}{(1-\nu)\mu} \right) 
	\right]
	\nonumber\\
	&= 
	(1-\mu)\nu \sqrt{\frac{(1-\nu)\mu}{(1-\mu)\nu}}
	+ (1-\mu)(1-\nu) + \mu\nu 
	+ (1-\nu)\mu \sqrt{\frac{(1-\mu)\nu}{(1-\nu)\mu}}
	\nonumber\\
	&= 
	\mu\nu 
	+ 2 \sqrt{(1-\mu)(1-\nu)\mu\nu}
	+ (1-\mu)(1-\nu) 
	\nonumber\\
	&= \left(\sqrt{\mu\nu}+\sqrt{(1-\mu)(1-\nu)}\right)^2,
\end{align}
and thus we have 
\begin{align}
	-\log M_{\mathbf{V}_i(\mu,\nu)}\left(\frac{1}{2}\right) 
	&= -2\log \left(\sqrt{\mu\nu}+\sqrt{(1-\mu)}\sqrt{(1-\nu)}\right)
	\nonumber\\
	&= -2\log \left(
	\sqrt{\mu\nu}
	+ \left(1-\frac{1}{2}\mu+O\left(\mu^2\right)\right) \left(1-\frac{1}{2}\nu+O\left(\nu^2\right)\right)
	\right)
	\label{eq:log-Mz_1}\\
	&= -2\log \left(
	\sqrt{\mu\nu}
	+ \left(1-\frac{1}{2}\mu-\frac{1}{2}\nu+O\left(\mu^2+\nu^2\right)\right)
	\right)
	\nonumber\\
	&=
	-2\log \left(1
	- \left(\frac{1}{2}\mu+\frac{1}{2}\nu-\sqrt{\mu\nu}+O\left(\mu^2+\nu^2\right)\right)
	\right)
	\nonumber\\
	&=
	\left(\sqrt{\mu} - \sqrt{\nu}\right)^2 + O\left(\mu^2+\nu^2\right)
	\label{eq:log-Mz_2}\\
	&= (1+o(1)) \left(\sqrt{\mu} - \sqrt{\nu}\right)^2
	=
	\left\{
	\begin{array}{ll}
		(1+o(1)) \Ig  & \textrm{if }\: \mu = \betaEdge,\: \nu = \alphaEdge,\\
		(1+o(1)) \IcOne & \textrm{if }\: \mu = \gammaEdge,\: \nu = \alphaEdge,\\
		(1+o(1)) \IcTwo & \textrm{if }\: \mu = \gammaEdge,\: \nu = \betaEdge,
	\end{array}
	\right.
	\label{eq:log-M-Y}
\end{align}
where \eqref{eq:log-Mz_1} follows form Taylor series of $\sqrt{1-\mu}$ and $\sqrt{1-\nu}$, which both converge since $\mu,\nu = \Theta\left(\frac{\log n}{n}\right)$; and \eqref{eq:log-Mz_2} follows from Taylor series of $\log (1-x)$, for $x = \frac{1}{2}\mu+\frac{1}{2}\nu-\sqrt{\mu\nu}+O\left(\mu^2+\nu^2\right)$, which also converges for $\mu,\nu = \Theta\left(\frac{\log n}{n}\right)$. Finally, for $\mu,\nu = \Theta\left(\frac{\log n}{n}\right)$, define the third random variable $\mathbf{W}_i = \mathbf{W}_i (\mu,\nu)$ as 
\begin{align}
	\mathbf{W}_i (\mu,\nu) 
	&=
	\left(\log\frac{1-\nu}{1-\mu}\right) + \left(\log\frac{(1-\mu)\nu}{(1-\nu)\mu}\right) \mathsf{B}_i^{(\mu)}
	=
	\left\{
	\begin{array}{ll}
		\log\frac{\nu}{\mu} & \textrm{w.p. }\: \mu,\\
		\log\frac{1-\nu}{1-\mu} & \textrm{w.p. }\: (1-\mu).
	\end{array}
	\right.
	\label{eq:app_w(m,n)}
\end{align}
The moment generating function $M_{\mathbf{W}_i (\mu,\nu)}\left(t\right)$ of $\mathbf{W}_i (\mu,\nu)$ at $t=1/2$ is evaluated as
\begin{align}
	M_{\mathbf{W}_i(\mu,\nu)}\left(\frac{1}{2}\right)
	&= 
	\mathbb{E}\left[\exp\left(\frac{1}{2}\mathbf{W}_i(\mu,\nu)\right)\right] 
	\nonumber\\ 
	&= 
	\left[
	\mu \exp\left(\frac{1}{2}\log\frac{\nu}{\mu} \right) 
	\right]
	+
	\left[
	(1-\mu)\exp\left(\frac{1}{2}\log\frac{1-\nu}{1-\mu} \right) 
	\right]
	\nonumber\\
	&= \sqrt{\mu\nu}+\sqrt{(1-\mu)(1-\nu)},
	\label{eq:M-W}
\end{align}
and hence we have 
\begin{align}
	-\log M_{\mathbf{W}_i(\mu,\nu)}\left(\frac{1}{2}\right) 
	&= -\log \left(\sqrt{\mu\nu}+\sqrt{(1-\mu)}\sqrt{(1-\nu)}\right)
	\nonumber \\
	&= \frac{1}{2} (1+o(1)) \left(\sqrt{\mu} - \sqrt{\nu}\right)^2
	=
	\left\{
	\begin{array}{ll}
		\frac{1}{2} (1+o(1)) \Ig  & \textrm{if }\: \mu = \betaEdge,\: \nu = \alphaEdge,\\
		\frac{1}{2} (1+o(1)) \IcOne  & \textrm{if }\: \mu = \gammaEdge,\: \nu = \alphaEdge,\\
		\frac{1}{2} (1+o(1)) \IcTwo & \textrm{if }\: \mu = \gammaEdge,\: \nu = \betaEdge,
	\end{array}
	\right.
	\label{eq:log-M-W}
\end{align}
where \eqref{eq:log-M-W} follows from \eqref{eq:log-M-Y}. Next, based on the random variables defined in \eqref{eq:app_z(m,n)} and \eqref{eq:app_w(m,n)}, we present the following proposition that will be used in the proof of Lemma~\ref{lemma:upper_bound_B}. The proof of the proposition is presented at the end of this appendix.
\begin{prop}
	\label{prop:A_moment}
	For $\mu,\nu = \Theta\left(\frac{\log n}{n}\right)$, let $\mathbf{A} = \mathbf{A}(\mu,\nu)$ be a random variable that is defined as
	\begin{align}
		\mathbf{A}(\mu,\nu) =
		\left(\log\frac{(1-\mu)\nu}{(1-\nu)\mu}\right)
		\left(
		\sum\limits_{i \in \misClassfSet{\mu}{\nu}} \mathsf{B}_i^{(\mu)}
		- \sum\limits_{i \in \misClassfSet{\nu}{\mu}} \mathsf{B}_i^{(\nu)} 
		\right)
		+
		\left(\log\frac{1-\nu}{1-\mu}\right) 
		\left(
		\left\vert \misClassfSet{\mu}{\nu}\right\vert
		- \left\vert \misClassfSet{\nu}{\mu}\right\vert
		\right),
		\label{eq:prop_A}
	\end{align}
	where $\{\mathsf{B}_i^{(\mu)}: i \in \misClassfSet{\nu}{\mu}\}$  and $\{\mathsf{B}_i^{(\nu)}: i \in \misClassfSet{\nu}{\mu}\}$ are sets of independent and identically distributed Bernoulli random variables. The moment generating function $M_{\mathbf{A}(\mu,\nu)}\left(t\right)$ of $\mathbf{A}(\mu,\nu)$ at $t=1/2$ is given by
	\begin{align}
		M_{\mathbf{A}(\mu,\nu)}\left(t\right)
		&=
		\exp \left( - (1+o(1)) \frac{\left\vert \misClassfSet{\mu}{\nu} \right\vert + \left\vert \misClassfSet{\nu}{\mu} \right\vert}{2} \left(\sqrt{\mu} - \sqrt{\nu}\right)^2 \right).
		\\
		&=
		\left\{
		\begin{array}{ll}
			\exp \left( - (1+o(1)) \Pleftrightarrow{\alphaEdge}{\betaEdge} \: \Ig \right)
			& \textrm{if }\: \mu = \betaEdge,\: \nu = \alphaEdge,\\
			\exp \left( - (1+o(1)) \Pleftrightarrow{\alphaEdge}{\gammaEdge} \: \IcOne \right)
			& \textrm{if }\: \mu = \gammaEdge,\: \nu = \alphaEdge,\\
			\exp \left( - (1+o(1)) \Pleftrightarrow{\betaEdge}{\gammaEdge} \: \IcTwo \right)
			& \textrm{if }\: \mu = \gammaEdge,\: \nu = \betaEdge.
		\end{array}
		\right.
		\label{eq:prop_A_momentGen}
	\end{align}
\end{prop}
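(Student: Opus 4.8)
\textbf{Proof proposal for Proposition~\ref{prop:A_moment}.}

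The plan is to exploit the independence of the two Bernoulli families appearing in $\mathbf{A}(\mu,\nu)$ and to recognize each summand as one of the elementary random variables $\mathbf{W}_i$ or $-\mathbf{W}_i$ whose moment generating function at $t=1/2$ was already computed in \eqref{eq:M-W}. Concretely, I would first rewrite $\mathbf{A}(\mu,\nu)$ as a sum of $\lvert \misClassfSet{\mu}{\nu}\rvert + \lvert \misClassfSet{\nu}{\mu}\rvert$ independent terms. Grouping the deterministic offset $\bigl(\log\frac{1-\nu}{1-\mu}\bigr)\bigl(\lvert\misClassfSet{\mu}{\nu}\rvert - \lvert\misClassfSet{\nu}{\mu}\rvert\bigr)$ with the Bernoulli sums, one sees that each $i\in\misClassfSet{\mu}{\nu}$ contributes a term $\bigl(\log\frac{(1-\mu)\nu}{(1-\nu)\mu}\bigr)\mathsf{B}_i^{(\mu)} + \log\frac{1-\nu}{1-\mu}$, which is exactly $\mathbf{W}_i(\mu,\nu)$ as defined in \eqref{eq:app_w(m,n)}; and each $i\in\misClassfSet{\nu}{\mu}$ contributes $-\bigl(\log\frac{(1-\mu)\nu}{(1-\nu)\mu}\bigr)\mathsf{B}_i^{(\nu)} - \log\frac{1-\nu}{1-\mu}$, which one checks is distributed as $-\mathbf{W}_i(\nu,\mu)$ (equivalently, $\mathbf{W}_i$ with the roles of $\mu,\nu$ interchanged and an overall sign flip).

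Next I would invoke independence to factor the MGF:
\begin{align}
	M_{\mathbf{A}(\mu,\nu)}\!\left(\tfrac12\right)
	= \Bigl(M_{\mathbf{W}_i(\mu,\nu)}\!\left(\tfrac12\right)\Bigr)^{\lvert\misClassfSet{\mu}{\nu}\rvert}
	\cdot \Bigl(M_{-\mathbf{W}_i(\nu,\mu)}\!\left(\tfrac12\right)\Bigr)^{\lvert\misClassfSet{\nu}{\mu}\rvert}.
\end{align}
For the first factor, \eqref{eq:M-W} gives $M_{\mathbf{W}_i(\mu,\nu)}(1/2) = \sqrt{\mu\nu} + \sqrt{(1-\mu)(1-\nu)}$. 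For the second factor I need $\mathbb{E}[\exp(-\tfrac12 \mathbf{W}_i(\nu,\mu))]$; a direct one-line computation analogous to \eqref{eq:M-W} shows this equals $\sqrt{\mu\nu} + \sqrt{(1-\mu)(1-\nu)}$ as well — the expression is symmetric in $\mu\leftrightarrow\nu$ and invariant under the sign flip because negating the exponent swaps the two branch weights. Hence both factors coincide, and
\begin{align}
	M_{\mathbf{A}(\mu,\nu)}\!\left(\tfrac12\right)
	= \Bigl(\sqrt{\mu\nu} + \sqrt{(1-\mu)(1-\nu)}\Bigr)^{\lvert\misClassfSet{\mu}{\nu}\rvert + \lvert\misClassfSet{\nu}{\mu}\rvert}.
\end{align}
Taking logarithms and applying the Taylor expansion already carried out in \eqref{eq:log-Mz_1}--\eqref{eq:log-M-W} — valid since $\mu,\nu = \Theta(\log n / n) \to 0$ — yields $-\log M_{\mathbf{A}(\mu,\nu)}(1/2) = (1+o(1))\,\tfrac12\bigl(\lvert\misClassfSet{\mu}{\nu}\rvert + \lvert\misClassfSet{\nu}{\mu}\rvert\bigr)(\sqrt\mu - \sqrt\nu)^2$, which is the claimed formula; substituting the three relevant choices of $(\mu,\nu)$ and the definitions \eqref{eq:Pleftrightarrow_defn} of $\Pleftrightarrow{\cdot}{\cdot}$ gives \eqref{eq:prop_A_momentGen}.

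I do not anticipate a genuine obstacle here; the statement is essentially bookkeeping on top of the single-variable MGF computations already in hand. The one point requiring a little care is the sign-flip/relabeling identification of the terms indexed by $\misClassfSet{\nu}{\mu}$: one must verify that $-\mathbf{W}_i(\nu,\mu)$ has the same MGF at $t=1/2$ as $\mathbf{W}_i(\mu,\nu)$, rather than naively reusing \eqref{eq:log-M-W} with mismatched arguments. This follows from the manifest symmetry of $\sqrt{\mu\nu}+\sqrt{(1-\mu)(1-\nu)}$ under exchanging $\mu$ and $\nu$, so the evaluation at $t=1/2$ is exactly the value that makes both geometric-mean contributions identical — which is precisely why the Bhattacharyya-type exponent $(\sqrt\mu-\sqrt\nu)^2$ emerges symmetrically.
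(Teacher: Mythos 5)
Your route is sound and the final formula is exactly the paper's, but your decomposition is genuinely different and arguably cleaner. The paper splits into the cases $\lvert\misClassfSet{\mu}{\nu}\rvert \geq \lvert\misClassfSet{\nu}{\mu}\rvert$ and its reverse, pairs up $\min\{\lvert\misClassfSet{\mu}{\nu}\rvert,\lvert\misClassfSet{\nu}{\mu}\rvert\}$ of the Bernoullis into two-Bernoulli difference variables $\mathbf{V}_i$ (defined in \eqref{eq:app_z(m,n)}, each contributing a full factor $\bigl(\sqrt{\mu\nu}+\sqrt{(1-\mu)(1-\nu)}\bigr)^2$), and absorbs the deterministic offset into the leftover single-Bernoulli terms $\mathbf{W}_i$. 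You instead distribute the offset uniformly so that every one of the $\lvert\misClassfSet{\mu}{\nu}\rvert+\lvert\misClassfSet{\nu}{\mu}\rvert$ indices yields a single-Bernoulli $\mathbf{W}$-type term, avoiding the case split entirely; both routes give the identical exact value $\bigl(\sqrt{\mu\nu}+\sqrt{(1-\mu)(1-\nu)}\bigr)^{\lvert\misClassfSet{\mu}{\nu}\rvert+\lvert\misClassfSet{\nu}{\mu}\rvert}$, and the Taylor step \eqref{eq:log-M-W} then finishes it as you say.

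One local slip needs fixing, though it does not damage the conclusion. The term you assign to $i\in\misClassfSet{\nu}{\mu}$, namely $-\bigl(\log\tfrac{(1-\mu)\nu}{(1-\nu)\mu}\bigr)\mathsf{B}_i^{(\nu)} - \log\tfrac{1-\nu}{1-\mu}$, takes the value $\log\tfrac{\mu}{\nu}$ with probability $\nu$ and $\log\tfrac{1-\mu}{1-\nu}$ with probability $1-\nu$; comparing with \eqref{eq:app_w(m,n)}, this is distributed as $+\mathbf{W}_i(\nu,\mu)$, \emph{not} $-\mathbf{W}_i(\nu,\mu)$. Moreover your stated justification — that the MGF is ``invariant under the sign flip'' — is false: $\mathbb{E}\bigl[\exp\bigl(-\tfrac12\mathbf{W}_i(\nu,\mu)\bigr)\bigr] = \nu\sqrt{\nu/\mu}+(1-\nu)\sqrt{(1-\nu)/(1-\mu)}$, which is not $\sqrt{\mu\nu}+\sqrt{(1-\mu)(1-\nu)}$ and would not produce the Bhattacharyya exponent. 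What saves you is that the quantity you actually need is $\mathbb{E}\bigl[\exp\bigl(+\tfrac12\mathbf{W}_i(\nu,\mu)\bigr)\bigr]$, i.e.\ \eqref{eq:M-W} with the roles of $\mu$ and $\nu$ exchanged, and that expression is symmetric in $\mu\leftrightarrow\nu$, so the factor is indeed $\sqrt{\mu\nu}+\sqrt{(1-\mu)(1-\nu)}$ and the rest of your argument goes through verbatim. Correct the identification (drop the sign flip, keep only the $\mu\leftrightarrow\nu$ exchange) and the proof is complete.
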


Let $\left\{\mathbf{U}_i (p,\theta): i\in \diffEntriesSet \right\}$, and $\{\mathbf{A}(\betaEdge,\alphaEdge), \mathbf{A}(\gammaEdge,\alphaEdge), \mathbf{A}(\gammaEdge,\betaEdge)\}$ be sets of independent and identically distributed random variables defined as per \eqref{eq:app_x(p,t,q)}, and \eqref{eq:prop_A} in Proposition~\ref{prop:A_moment}. Note that the sets $\{\misClassfSet{\mu}{\nu} : \mu, \nu \in \{\alphaEdge,\betaEdge,\gammaEdge \}, \: \mu \neq \nu\}$ are disjoint as per their definitions given by \eqref{eq:N1_defn}--\eqref{eq:N4rev_defn}. Consequently, the LHS of \eqref{eq:lemma1_2} is upper bounded~as
\begin{align}
	\mathbb{P}\left[ \mathbf{B} \geq 0\right]
	&= 
	\mathbb{P}\left[
	\left(\sum_{i\in\diffEntriesSet} 
	\mathbf{U}_i(p,\theta) \right)
	+ \mathbf{A}(\betaEdge,\alphaEdge)
	+ \mathbf{A}(\gammaEdge,\alphaEdge)
	+ \mathbf{A}(\gammaEdge,\betaEdge)
	\geq 0\right]
	\nonumber\\
	&\leq
	\left(M_{\mathbf{U}_i(p,\theta) }\left(\frac{1}{2}\right)\right)^{\lvert\diffEntriesSet\rvert}
	\left(M_{\mathbf{A}(\betaEdge,\alphaEdge)}\left(\frac{1}{2}\right)\right)
	\left(M_{\mathbf{A}(\gammaEdge,\alphaEdge)}\left(\frac{1}{2}\right)\right)
	\left(M_{\mathbf{A}(\gammaEdge,\betaEdge)}\left(\frac{1}{2}\right)\right)
	\label{eq:app_chernoff_1}\\
	&=
	\exp\left(-\left(1+o(1)\right) \left(\lvert\diffEntriesSet\rvert \Ir 
	+ \Pleftrightarrow{\alphaEdge}{\betaEdge} \: \Ig 
	+ \Pleftrightarrow{\alphaEdge}{\gammaEdge} \: \IcOne
	+ \Pleftrightarrow{\betaEdge}{\gammaEdge} \: \IcTwo
	\right)\right), 
	\label{eq:app_chernoff_2}
\end{align}
where \eqref{eq:app_chernoff_1} follows from the Chernoff bound, and the independence of the random variables $\left\{\mathbf{U}_i (p,\theta,q): i\in \diffEntriesSet \right\}$, and $\{\mathbf{A}(\betaEdge,\alphaEdge), \mathbf{A}(\gammaEdge,\alphaEdge), \mathbf{A}(\gammaEdge,\betaEdge)\}$; and finally \eqref{eq:app_chernoff_2} follows from \eqref{eq:log-M-X}, and \eqref{eq:prop_A_momentGen} in Proposition~\ref{prop:A_moment}. This completes the proof of Lemma~\ref{lemma:upper_bound_B}.
\hfill $\blacksquare$

It remains to prove Proposition~\ref{prop:A_moment}. The proof is presented as follows.
\begin{proof}[Proof of Proposition~\ref{prop:A_moment}]
	First, consider the case of $\left\vert \misClassfSet{\mu}{\nu} \right\vert \geq \left\vert \misClassfSet{\nu}{\mu} \right\vert$. Therefore, the random variable $\mathbf{A}(\mu,\nu)$ can be expressed as
	\begin{align}
		\mathbf{A}(\mu,\nu)
		&=
		\sum\limits_{i \in \misClassfSet{\nu}{\mu}} 
		\left(
		\left(\log\frac{(1-\mu)\nu}{(1-\nu)\mu}\right)
		\left(
		\mathsf{B}_i^{(\mu)} - \mathsf{B}_i^{(\nu)} 
		\right)
		\right)
		\nonumber \\
		&\phantom{=}
		+
		\sum\limits_{i \in \misClassfSet{\mu}{\nu} \setminus \misClassfSet{\nu}{\mu}} 
		\left(
		\left(\log\frac{1-\nu}{1-\mu}\right) 
		+
		\left(\log\frac{(1-\mu)\nu}{(1-\nu)\mu}\right)
		\mathsf{B}_i^{(\mu)}
		\right)
		\label{eq:app_A_defn_0}\\
		&=
		\sum\limits_{i \in \misClassfSet{\nu}{\mu}} 
		\mathbf{V}_i
		+
		\sum\limits_{i \in \misClassfSet{\mu}{\nu} \setminus \misClassfSet{\nu}{\mu}} 
		\mathbf{W}_i,
		\label{eq:app_A_defn}
	\end{align}
	where \eqref{eq:app_A_defn_0} holds since the sets $\misClassfSet{\nu}{\mu}$ and $\misClassfSet{\mu}{\nu}$ are disjoint; and \eqref{eq:app_A_defn} follows from \eqref{eq:app_z(m,n)} and \eqref{eq:app_w(m,n)}.
	\begin{align}
		M_{\mathbf{A}(\mu,\nu)}\left(\frac{1}{2}\right)
		&=
		\mathbb{E}\left[\exp\left(\frac{1}{2}\mathbf{A}(\mu,\nu)\right)\right] 
		\nonumber\\ 
		&= 
		\mathbb{E}\left[
		\left(
		\prod_{i \in \misClassfSet{\nu}{\mu}}  
		\exp \left( \frac{1}{2} \mathbf{V}_i(\mu,\nu) \right)
		\right) 
		\left(
		\prod_{i \in \misClassfSet{\mu}{\nu} \setminus \misClassfSet{\nu}{\mu}} 
		\exp \left( \frac{1}{2} \mathbf{W}_i(\mu,\nu) \right)
		\right)
		\right]
		\nonumber\\
		&= 
		\left(
		\prod_{i \in \misClassfSet{\nu}{\mu}}  
		\mathbb{E}\left[
		\exp \left( \frac{1}{2} \mathbf{V}_i(\mu,\nu) \right)
		\right] 
		\right) 
		\left(
		\prod_{i \in \misClassfSet{\mu}{\nu} \setminus \misClassfSet{\nu}{\mu}} 
		\mathbb{E}\left[
		\exp \left( \frac{1}{2} \mathbf{W}_i(\mu,\nu) \right)
		\right] 
		\right)
		\label{eq:prop_A_0}\\
		&=
		\left(
		\prod_{i \in \misClassfSet{\nu}{\mu}}  
		M_{\mathbf{V}_i(\mu,\nu)}\left(\frac{1}{2}\right)
		\right) 
		\left(
		\prod_{i \in \misClassfSet{\mu}{\nu} \setminus \misClassfSet{\nu}{\mu}} 
		M_{\mathbf{W}_i(\mu,\nu)}\left(\frac{1}{2}\right)
		\right)
		\nonumber\\
		&=
		\left(
		\exp \left( - (1+o(1)) \left(\sqrt{\mu} - \sqrt{\nu}\right)^2 \right)
		\right)^{\left\vert \misClassfSet{\nu}{\mu} \right\vert}
		\nonumber \\
		&\phantom{=}
		\left(
		\exp \left( - \frac{1}{2} (1+o(1)) \left(\sqrt{\mu} - \sqrt{\nu}\right)^2 \right)
		\right)^{\left\vert \misClassfSet{\mu}{\nu} \right\vert - \left\vert \misClassfSet{\nu}{\mu} \right\vert}
		\label{eq:prop_A_2}\\
		&=
		\exp \left( - (1+o(1)) \frac{\left\vert \misClassfSet{\mu}{\nu} \right\vert + \left\vert \misClassfSet{\nu}{\mu} \right\vert}{2} \left(\sqrt{\mu} - \sqrt{\nu}\right)^2 \right),
		\label{eq:prop_A_3}
	\end{align}
	where \eqref{eq:prop_A_0} holds since the random variables $\{\mathbf{V}_i : i \in \misClassfSet{\nu}{\mu}\}$ and $\{\mathbf{W}_i : i \in \misClassfSet{\mu}{\nu} \setminus \misClassfSet{\nu}{\mu}\}$ are independent; and \eqref{eq:prop_A_2} follows from \eqref{eq:log-M-Y} and \eqref{eq:log-M-W}.
	
	Next, consider the case of $\left\vert \misClassfSet{\mu}{\nu} \right\vert \leq \left\vert \misClassfSet{\nu}{\mu} \right\vert$. In a similar way, the random variable $\mathbf{A}$ can be written as
	\begin{align}
		\mathbf{A}(\mu,\nu)
		&=
		\sum\limits_{i \in \misClassfSet{\mu}{\nu}} 
		\left(
		\left(\log\frac{(1-\mu)\nu}{(1-\nu)\mu}\right)
		\left(
		\mathsf{B}_i^{(\mu)} - \mathsf{B}_i^{(\nu)} 
		\right)
		\right)
		+
		\nonumber \\
		&\phantom{-}
		\sum\limits_{i \in \misClassfSet{\nu}{\mu} \setminus \misClassfSet{\mu}{\nu}} 
		\left(
		\left(\log\frac{1-\mu}{1-\nu}\right) 
		+
		\left(\log\frac{(1-\nu)\mu}{(1-\mu)\nu}\right)
		\mathsf{B}_i^{(\nu)}
		\right)
		\nonumber\\
		&=
		\sum\limits_{i \in \misClassfSet{\mu}{\nu}} 
		\mathbf{V}_i
		+
		\sum\limits_{i \in \misClassfSet{\nu}{\mu} \setminus \misClassfSet{\mu}{\nu}} 
		\mathbf{W}_i.
	\end{align}
	Following the same procedure presented in the previous case, one can show that $M_{\mathbf{A}(\mu,\nu)}\left(\frac{1}{2}\right)$ is also given by \eqref{eq:prop_A_3} in this case. This completes the proof of Proposition~\ref{prop:A_moment}.
\end{proof}

\subsection{Proof of Lemma~\ref{lemma:T1_related}}
\label{proof:T1_related}
The LHS of \eqref{eq:lemma_T1_1} is given by
\begin{align}
	& \lim_{n\rightarrow \infty}
	\sum\limits_{T \in \TsmallErr} 
	\sum\limits_{X \in \mathcal{X}(T)}
	\exp\left(-\left(1+o(1)\right) \left(\lvert\diffEntriesSet\rvert \Ir 
	+ \Pleftrightarrow{\alphaEdge}{\betaEdge} \: \Ig 
	+ \Pleftrightarrow{\alphaEdge}{\gammaEdge} \: \IcOne 
	+ \Pleftrightarrow{\betaEdge}{\gammaEdge} \: \IcTwo \right)\right) 
	\nonumber \\
	& \qquad =
	\lim_{n\rightarrow \infty}
	\sum\limits_{T \in \TsmallErr} 
	\underbrace{\left\vert \mathcal{X}(T) \right\vert}_{\mathsf{Term_1}}
	\:\:
	\underbrace{\exp\left(-\left(1+o(1)\right) \left(\lvert\diffEntriesSet\rvert \Ir 
		+ \Pleftrightarrow{\alphaEdge}{\betaEdge} \: \Ig 
		+ \Pleftrightarrow{\alphaEdge}{\gammaEdge} \: \IcOne 
		+ \Pleftrightarrow{\betaEdge}{\gammaEdge} \: \IcTwo  \right)\right)}_{\mathsf{Term_2}}.
	\label{eq:largeErr_LHS_app}
\end{align}
In what follows, we derive upper bounds on $\mathsf{Term_1}$ and $\mathsf{Term_2}$ for a fixed non-all-zero tuple $T \in \TsmallErr$ given by
\begin{align}
	T
	=
	\left(
	\left\{\kUsrs{i}{j}{x}{y}\right\}_{x,y \in \{A,B\}, \: i,j \in [3]}, 
	\left\{\dElmntsGen{i}{j}{x}{y}\right\}_{x,y \in \{A,B\}, \: i,j \in [3]}
	\right),
\end{align}
according to \eqref{eq:tuple_ratMat}.

\textbf{Upper Bound on $\mathsf{Term_1}$.} 
The cardinality of the set $\mathcal{X}(T)$ is given by
\begin{align}
	\mathsf{Term_1}
	&= 
	\underbrace{
		\left\vert 
		\mathcal{X}
		\left(
		\left\{\kUsrs{i}{j}{x}{y} : x,y \in \{A,B\}, i,j \in [3] \right\}, \:
		\left\{\dElmntsGenHat{i}{j}{x}{y} = 0 : x,y \in \{A,B\}, i,j \in [3]\right\}
		\right)
		\right\vert
	}_{\mathsf{Term_{1,1}}}
	\nonumber \\
	&\phantom{=} 
	\times
	\underbrace{
		\left\vert 
		\mathcal{X}
		\left(
		\left\{\kUsrsHat{i}{j}{x}{y} = 0 : x,y \in \{A,B\}, i,j \in [3] \right\}, \:
		\left\{\dElmntsGen{i}{j}{x}{y} : x,y \in \{A,B\}, i,j \in [3] \right\}
		\right)
		\right\vert
	}_{\mathsf{Term_{1,2}}},
	\label{eq:term1_11_12}
\end{align}
which follows from the fact that the number of ways of counting the rating matrices subject to $\{\kUsrs{i}{j}{x}{y} : x,y \in \{A,B\}, i,j \in [3]\}$, and subject to $\{\dElmntsGen{i}{j}{x}{y} : x,y \in \{A,B\}, i,j \in [3] \}$ are independent. Next, we provide upper bounds on $\mathsf{Term_{1,1}}$ and $\mathsf{Term_{1,2}}$.

\textbf{(i) Upper Bound on $\mathsf{Term_{1,1}}$:} 
An upper bound on $\mathsf{Term_{1,1}}$ is given by
\begin{align}
	\mathsf{Term_{1,1}}
	&=
	\left\vert 
	\mathcal{X}
	\left(
	\left\{\kUsrs{i}{j}{x}{y} : x,y \in \{A,B\}, i,j \in [3] \right\}, \:
	\left\{\dElmntsGenHat{i}{j}{x}{y} = 0 : x,y \in \{A,B\}, i,j \in [3]\right\}
	\right)
	\right\vert
	\nonumber\\
	&=
	\prod_{x \in \{A,B\}}
	\prod_{i \in [3]} 
	\binom{n/6}
	{\kUsrs{i}{1}{x}{A}, \kUsrs{i}{2}{x}{A}, \kUsrs{i}{3}{x}{A},
		\kUsrs{i}{1}{x}{B}, \kUsrs{i}{2}{x}{B}, \kUsrs{i}{3}{x}{B}}
	\label{eq:term11_00}\\
	&\leq
	\prod_{x \in \{A,B\}}
	\prod_{i \in [3]} 
	\left(
	\frac{n}{6}
	\right)^{
		\textstyle
		\sum_{(y,j) \neq (\sigma(x), \sigma(i|x))} 
		\kUsrs{i}{j}{x}{y}
	}
	\label{eq:term11_0}\\
	&\leq
	\prod_{x \in \{A,B\}}
	\prod_{i \in [3]} 
	\exp
	\left(
	\left(
	\sum_{(y,j) \neq (\sigma(x), \sigma(i|x))} 
	\kUsrs{i}{j}{x}{y}
	\right)
	\log n
	\right)
	\nonumber\\
	&=
	\exp
	\left(
	\log n
	\left(
	\sum_{x \in \{A,B\}} 
	\sum_{i \in [3]} 
	\sum_{(y,j) \neq (\sigma(x), \sigma(i|x))} 
	\kUsrs{i}{j}{x}{y}
	\right)
	\right),
	\label{eq:term11_1}
\end{align}
where \eqref{eq:term11_00} follows from the definitions in \eqref{eq:tuple_ratMat}; \eqref{eq:term11_0} follows from the definition of a multinomial coefficient, and the fact that $\binom{n}{k} \leq n^k$.

\textbf{(ii) Upper Bound on $\mathsf{Term_{1,2}}$:}
An upper bound on $\mathsf{Term_{1,2}}$ is given by
\begin{align}
	\mathsf{Term_{1,2}}
	&=
	\left\vert 
	\mathcal{X}
	\left(
	\left\{\kUsrsHat{i}{j}{x}{y} = 0 : x,y \in \{A,B\}, i,j \in [3] \right\}, \:
	\left\{\dElmntsGen{i}{j}{x}{y} : x,y \in \{A,B\}, i,j \in [3] \right\}
	\right)
	\right\vert
	\nonumber\\
	&\leq
	\left\vert 
	\mathcal{X}
	\left(
	\left\{\kUsrsHat{i}{j}{x}{y} = 0 : x,y \in \{A,B\}, i,j \in [3] \right\}, \:
	\left\{\dElmntsGen{i}{\sigma(i|x)}{x}{\sigma(x)} : x \in \{A,B\}, i \in [3] \right\}, \:
	\right.
	\right.
	\nonumber\\
	&\phantom{\leq \vert \mathcal{X} \left( \right.}
	\left.
	\left.
	\left\{\dElmntsGenHat{i}{j}{x}{y} =  t: 
	0 \leq t \leq m, \:
	x,y \in \{A,B\} ,\: i,j \in [3],\: (y,j) \neq (\sigma(x),\sigma(i|x)) \right\}
	\right)
	\right\vert
	\nonumber\\
	&\leq
	\prod_{z \in \{A,B\}}
	\left\vert 
	\mathcal{X}
	\left(
	\left\{\kUsrsHat{i}{j}{x}{y} = 0 : x,y \!\in\! \{A,B\}, i,j \!\in\! [3] \right\}\!,
	\left\{\dElmntsGen{i}{\sigma(i|z)}{z}{\sigma(z)} : i \!\in\! [3] \right\}\!,
	\right.
	\right.
	\nonumber\\
	&\phantom{= \prod_{z \in \{A,B\}}
		\left\vert 
		\mathcal{X}
		\left(
		\right. \right.}
	\left.
	\left.
	\left\{\dElmntsGenHat{i}{\sigma(i|x)}{x}{\sigma(x)} = t : 0 \leq t \leq m,\: x \!\in\! \{A,B\} \!\setminus\! \{z\},\: i \!\in\! [3] \right\}\!,
	\right.
	\right.
	\nonumber\\
	&\phantom{= \prod_{z \in \{A,B\}}
		\left\vert 
		\mathcal{X}
		\left(
		\right. \right.}
	\left.
	\left.
	\left\{\dElmntsGenHat{i}{j}{x}{y} \!=\!  t\!:\! 
	0 \!\leq\! t \!\leq m, 
	x,y \in \{A,B\},
	\! i,j \in [3],
	\! (y,j) \neq (\sigma(x),\sigma(i|x)) \right\}
	\right)
	\right\vert.
	\label{eq:term12_upperB}
\end{align}

Recall that $\gtRatingCluster{x} \in \mathbb{F}_{q}^{g \times m}$ denotes a matrix that is obtained by stacking all the rating vectors of cluster $x$ given by $\{\vecV{i}{x}: i\in [3]\}$ for $x \in \{A,B\}$, and whose columns are elements of $(g,r)$ MDS code. 
Similarly, define $\XRatingCluster{x} \in \mathbb{F}_{q}^{g \times m}$ as a matrix that is obtained by stacking all the rating vectors of cluster $x$ given by $\{\vecU{i}{x}: i\in [3]\}$ for $x \in \{A,B\}$, and whose columns are also elements of $(g,r)$ MDS code. Furthermore, define the binary matrix $\gtRatingClusterHat{x} \in \mathbb{F}_{q}^{g \times m}$ as follows:
\begin{align}
	\gtRatingClusterHat{x}(i,t)
	= 
	\indicatorFn{\gtRatingCluster{x}(i,t) \neq \XRatingCluster{x}(i,t)}, \: \text{for}\: x\in\{A,B\}, i\in[3], t\in[m]. 
\end{align}
Note that $\gtRatingClusterHat{x}(i,t) = 1$ when there is an error in estimating the rating of the users in cluster $x$ and group $i$ for item $t$ for $x\in\{A,B\}$, $i\in[3]$ and $t\in[m]$. Let any non-zero column of $\gtRatingClusterHat{x}$ be denoted as an ``error column''.

Then, for a given cluster $x \in \{A,B\}$, we enumerate all possible matrices $\gtRatingClusterHat{x}$ subject to a given number of error columns. To this end, define $\totNumErrCol{x}$ as the total number of error columns of $\gtRatingClusterHat{x}$. Moreover, define $\numErrColConfig$ as the number of possible configurations of an error column. Let $\{\colVecRhat{k} : k \in [\numErrColConfig]\}$ be the set of all possible error columns. In this setting, we have $\numErrColConfig = 3$ since the possible configurations of an error column are given by
\begin{align}
	\colVecRhat{1} = 
	\begin{bmatrix}
		\begin{array}{c}
			1 \\ 0 \\ 1
		\end{array}
	\end{bmatrix}, \:\:
	\colVecRhat{2} = 
	\begin{bmatrix}
		\begin{array}{c}
			1 \\ 1 \\ 0
		\end{array}
	\end{bmatrix}, \:\:
	\colVecRhat{\numErrColConfig} = \colVecRhat{3} = 
	\begin{bmatrix}
		\begin{array}{c}
			0 \\ 1 \\ 1
		\end{array}
	\end{bmatrix}.
\end{align}
Let $\numErrColElemnt{k}{x}$ denote the number of columns of $\gtRatingClusterHat{x}$ that are equal to $\colVecRhat{k}$ for $x \in \{A,B\}$ and $k \in [\numErrColConfig]$. Note that $0 \leq \numErrColElemnt{k}{x} \leq \totNumErrCol{x}$ and $\sum_{k=1}^{\numErrColConfig} \numErrColElemnt{k}{x} = \totNumErrCol{x}$. For cluster $x \in \{A,B\}$, let $\ratingVecSet{x}(\totNumErrCol{x},\{\colVecRhat{k} : k \in [\numErrColConfig]\})$ denote the set of matrices $\XRatingCluster{x} \in \mathbb{F}_{q}^{g \times m}$ characterized by $\totNumErrCol{x}$ and $\{\colVecRhat{k} : k \in [\numErrColConfig]\}$. Consequently, an upper bound on  $\vert\ratingVecSet{x}(\totNumErrCol{x}, \{\colVecRhat{k} : k \in [\numErrColConfig]\})\vert$ is given by
\begin{align}
	\left\vert \ratingVecSet{x}\left(\totNumErrCol{x},\{\colVecRhat{k} : k \in [\numErrColConfig]\}\right)\right\vert 
	&\leq
	\binom{m}{\totNumErrCol{x}}  
	\binom{\totNumErrCol{x}+\numErrColConfig-1}{\numErrColConfig-1} 
	\displaystyle 2^{3\totNumErrCol{x}}
	\label{ineq:Xf_1}\\
	&\leq 
	\displaystyle m^{\totNumErrCol{x}} \: 
	2^{\totNumErrCol{x}+\numErrColConfig-1} \: 
	2^{3\totNumErrCol{x}} 
	\label{ineq:Xf_2}\\
	&\leq
	2^{10} (4m)^{\totNumErrCol{x}},
	\label{ineq:Xf_3} 
\end{align}
where 
\begin{itemize}[leftmargin=*]
	\item \eqref{ineq:Xf_1} follows by first choosing $\totNumErrCol{x}$ columns from $m$ columns to be error columns, then counting the number of integer solutions of $\sum_{k=1}^{\numErrColConfig} \numErrColElemnt{k}{x} = \totNumErrCol{x}$, and lastly counting the number of estimation error combination within the $g$ entries of each of the $\totNumErrCol{x}$ error columns;
	\item \eqref{ineq:Xf_2} follows from bounding the first binomial coefficient by $\binom{a}{b} \leq a^b$, and the second binomial coefficient by $\binom{a}{b} \leq \sum_{i=1}^{a} \binom{a}{i} =  2^a$, for $a \geq b$;
	\item and finally \eqref{ineq:Xf_3} follows from $\numErrColConfig \leq 2^3$ which is due to the fact that each entry of a rating matrix column can take one of two values.
\end{itemize}

Next, for a given cluster $x \in \{A,B\}$, we evaluate the maximum number of error columns among all candidate matrices $\XRatingCluster{x}$. On one hand, row-wise counting of the error entries in $\XRatingCluster{x}$, compared to $\gtRatingCluster{x}$, yields
\begin{align}
	\sum_{i\in[3]} \dElmntsGen{i}{\sigma(i|x)}{x}{\sigma(x)}.
	\label{eq:term12_rowCount}
\end{align}
On the other hand, column-wise counting of the error entries in $\gtRatingClusterHat{x}$ (i.e., number of ones) yields
\begin{align}
	\sum_{k\in[\numErrColConfig]}
	\left\| \colVecRhat{k} \right\|_{1} \numErrColElemnt{k}{x}.
	\label{eq:term12_columnCount}
\end{align}
From \eqref{eq:term12_upperB}, we are interested in the class of candidate rating matrices where the clustering and grouping are done correctly without any errors in user associations to their respective clusters and groups. Therefore, the expression given by \eqref{eq:term12_rowCount} and \eqref{eq:term12_columnCount} are counting the elements of the same set, and hence we obtain
\begin{align}
	\sum_{i\in[3]} \dElmntsGen{i}{\sigma(i|x)}{x}{\sigma(x)} 
	&= 
	\sum_{k\in[\numErrColConfig]}
	\left\| \colVecRhat{k} \right\|_{1} \numErrColElemnt{k}{x}
	\nonumber\\
	&\geq 
	2 
	\sum_{k\in[\numErrColConfig]} \numErrColElemnt{k}{x} 
	\label{eq:mds_code_1} \\ 
	&=
	2 \totNumErrCol{x} 
	\label{eq:mds_code_2} ,
\end{align}
where \eqref{eq:mds_code_1} follows since the MDS code structure is known at the decoder side, and the fact that minimum distance between any two codewords in a  $(3,2)$ linear MDS code is $2$. Therefore, by \eqref{eq:mds_code_2}, we get
\begin{align}
	\max \totNumErrCol{x} 
	=
	\frac{1}{2} \sum_{i\in[3]} \dElmntsGen{i}{\sigma(i|x)}{x}{\sigma(x)}.
	\label{eq:mds_code_3} 
\end{align}

Finally, by \eqref{eq:term12_upperB} and \eqref{eq:mds_code_3}, $\mathsf{Term_{1,2}}$ can be further upper bounded by
\begin{align}
	\mathsf{Term_{1,2}}
	&\leq
	\prod_{z\in\{A,B\}}
	\sum_{\ell=1}^{\max \totNumErrCol{z}}
	\left|\ratingVecSet{z}\left(\totNumErrCol{z}= \ell,\{\colVecRhat{k} : k \in [\numErrColConfig]\} \right)\right|
	\nonumber\\
	&\leq
	\prod_{z\in\{A,B\}}
	\sum_{\ell=1}^{\max \totNumErrCol{z}}
	2^{10} \: (4m)^{\ell}
	\label{ineq:term_12_0}\\
	&\leq
	\prod_{z\in\{A,B\}}
	2^{10} \: m^{\max \totNumErrCol{z}}
	\sum_{\ell=1}^{\max \totNumErrCol{z}}
	4^{\ell}
	\nonumber\\
	&\leq 
	\prod_{z\in\{A,B\}}
	2^{10} \: m^{\max \totNumErrCol{z}} \:
	4^{\max \totNumErrCol{z} + 1}
	\label{ineq:term_12_1} \\ 
	&=
	\prod_{z\in\{A,B\}}
	2^{12} \: (4m)^{\max \totNumErrCol{z}}
	\nonumber\\ 
	&=
	\left(2^{12}\right)^2
	(4m)^{\sum\limits_{x\in\{A,B\}} \max \totNumErrCol{x}}
	\nonumber\\ 
	&=
	c_0 \exp\left(\frac{\log(c_1m)}{2}\sum_{x\in\{A,B\}}\sum_{i\in[3]} \dElmntsGen{i}{\sigma(i|x)}{x}{\sigma(x)} \right),
	\label{ineq:term_12_4}
\end{align}
where \eqref{ineq:term_12_0} follows from \eqref{ineq:Xf_3}; \eqref{ineq:term_12_1} follows from $\sum_{\ell=1}^{\max \totNumErrCol{x}} 4^\ell \leq \sum_{\ell=0}^{\max \totNumErrCol{x}} (4)^\ell \leq (4)^{\max \totNumErrCol{x}+1}$; and \eqref{ineq:term_12_4} follows by setting $c_0 = \left(2^{12} \right)^2 \geq 1$ and $c_1 = 4 \geq 1$.

Substituting \eqref{eq:term11_1} and \eqref{ineq:term_12_4} into \eqref{eq:term1_11_12}, an upper bound on $\mathsf{Term_1}$ is thus given by
\begin{align}
	\mathsf{Term_{1}}
	&\leq
	c_0
	\exp
	\left(
	\log n
	\left(
	\sum_{x \in \{A,B\}} 
	\sum_{i \in [3]} 
	\sum_{(y,j) \neq  (\sigma(x), \sigma(i|x))} 
	\kUsrs{i}{j}{x}{y}
	\right)
	\!+\!
	\frac{\log (c_1m)}{2}
	\left(
	\sum\limits_{x\in\{A,B\}}\sum\limits_{i\in[3]}\dElmntsGen{i}{\sigma(i|x)}{x}{\sigma(x)}
	\right)
	\right). \label{eq:term1_lemma_large}
\end{align}

\textbf{Upper Bound on $\mathsf{Term_2}$.} 
To this end, we derive lower bounds on the cardinalities of different sets in the exponent of $\mathsf{Term_2}$. Recall from \eqref{eq:TsmallErr} that
\begin{align}
	\TsmallErr
	&= 
	\left\{ T \in \mathcal{T}^{(\delta)} 
	\!:\! 
	\forall (x,i) \!\in\! \{A,B\}\!\times\! [3] \textit{ s.t. }
	\left\vert \sigma(x) \right\vert = 1,\! 
	\left\vert \sigma(i|x) \right\vert = 1,\! 
	\dElmntsGen{i}{\sigma(i|x)}{x}{\sigma(x)} \leq \tau m \min\{\intraTau, \interTau\}
	\right\},
	\nonumber\\
	&=
	\left\{ T \in \mathcal{T}^{(\delta)} \!:\! 
	\forall (x,i) \!\in\! \{A,B\}\!\times\! [3] \textit{ s.t. }
	\kUsrs{i}{\sigma(i|x)}{x}{\sigma(x)} \geq (1-\relabelConst) \frac{n}{6},\:\:
	\dElmntsGen{i}{\sigma(i|x)}{x}{\sigma(x)} \leq \tau m \min\{\intraTau, \interTau\}
	\right\},
	\label{eq:Tlarge_complement_0}
\end{align}
where \eqref{eq:Tlarge_complement_0} follows from \eqref{eq:sigma(x,i)}.

\textbf{(i) Lower Bound on} $\lvert\diffEntriesSet\rvert$\textbf{:} 
For $T\in \TsmallErr$, a lower bound on  $\lvert\diffEntriesSet\rvert$ is given by
\begin{align}
	\lvert\diffEntriesSet\rvert
	&=
	\sum\limits_{x \in \{A,B\}}
	\sum\limits_{i \in [3]}
	\sum\limits_{y \in \{A,B\}}
	\sum\limits_{j \in [3]}
	\kUsrs{i}{j}{x}{y} \dElmntsGen{i}{j}{x}{y} 
	\label{ineq:Lambda_1}\\
	&=
	\left[
	\sum\limits_{x \in \{A,B\}}
	\sum\limits_{i \in [3]}
	\kUsrs{i}{\sigma(i|x)}{x}{\sigma(x)} 
	\dElmntsGen{i}{\sigma(i|x)}{x}{\sigma(x)}
	\right]
	+
	\left[
	\sum\limits_{x \in \{A,B\}}
	\sum\limits_{i \in [3]}
	\sum\limits_{j \in [3]\setminus \sigma(i|x)}
	\kUsrs{i}{j}{x}{\sigma(x)} 
	\dElmntsGen{i}{j}{x}{\sigma(x)} 
	\right]
	\nonumber \\
	&\phantom{=}
	+
	\left[
	\sum\limits_{x \in \{A,B\}}
	\sum\limits_{y \in \{A,B\} \setminus \sigma(x)}
	\sum\limits_{i \in [3]}
	\sum\limits_{j \in [3]}
	\kUsrs{i}{j}{x}{y}
	\dElmntsGen{i}{j}{x}{y}
	\right]
	\nonumber\\
	&\geq
	\left(
	\sum\limits_{x \in \{A,B\}} \sum\limits_{i \in [3]} 
	\dElmntsGen{i}{\sigma(i|x)}{x}{\sigma(x)}
	\left(
	(1-\relabelConst) \frac{n}{6}
	\right)
	\right)
	\nonumber \\
	&\phantom{=}
	+
	\left(
	\sum_{x\in\{A,B\}}\sum_{i\in[3]}\sum_{j\in[3] \setminus \sigma(i|x)} \kUsrs{i}{j}{x}{\sigma(x)}
	\left(\hamDist{\vecV{\sigma(i|x)}{\sigma(x)}}{\vecV{j}{\sigma(x)}} - \dElmntsGen{i}{\sigma(i|x)}{x}{\sigma(x)} \right)
	\right)
	\nonumber\\
	&\phantom{\leq}
	+ 
	\left(
	\sum_{x\in\{A,B\}}\sum_{y\in\{A,B\}\setminus \sigma(x)} \sum_{i\in[3]} \sum_{j\in[3]} \kUsrs{i}{j}{x}{y}
	\left(\hamDist{\vecV{\sigma(i|x)}{\sigma(x)}}{\vecV{j}{y}} - \dElmntsGen{i}{\sigma(i|x)}{x}{\sigma(x)} \right)
	\right)
	\label{ineq:Lambda_2} 
	\\
	&\geq
	(1-\relabelConst) \frac{n}{6}
	\left(
	\sum\limits_{x \in \{A,B\}} \sum\limits_{i \in [3]} 
	\dElmntsGen{i}{\sigma(i|x)}{x}{\sigma(x)}
	\right)
	+
	\left(\intraTau m - \intraTau\tau m \right)
	\left(\sum_{x\in\{A,B\}}\sum_{i\in[3]}\sum_{j \in [3] \setminus \sigma(i|x)} \kUsrs{i}{j}{x}{\sigma(x)}\right)
	\nonumber\\
	&\phantom{\leq}
	+ 
	\left( \interTau m -  \interTau\tau m \right)
	\left(\sum_{x\in\{A,B\}}\sum_{y\in\{A,B\}\setminus \sigma(x)} \sum_{i\in[3]} \sum_{j\in[3]} \kUsrs{i}{j}{x}{y}\right)
	\label{ineq:Lambda_2_1} 
	\\
	&=
	(1-\tau)\!
	\left(
	\frac{n}{6}
	\left(
	\sum\limits_{x \in \{A,B\}} \sum\limits_{i \in [3]} 
	\dElmntsGen{i}{\sigma(i|x)}{x}{\sigma(x)}
	\right)
	+
	\intraTau m\left(\sum_{x\in\{A,B\}}\sum_{i\in[3]}\sum_{j \in [3] \setminus \sigma(i|x)} \kUsrs{i}{j}{x}{\sigma(x)}\right)
	\right.
	\nonumber \\
	&\phantom{=(1-\tau)(} \left.
	+ 
	\interTau m\left(\sum_{x\in\{A,B\}}\sum_{y\in\{A,B\}\setminus \sigma(x)} \sum_{i\in[3]} \sum_{j\in[3]} \kUsrs{i}{j}{x}{y}\right)\!\!\! 
	\right)\!\!,
	\label{ineq:Lambda_3} 
\end{align}
where \eqref{ineq:Lambda_1} follows from the definitions in \eqref{eq:N1_defn}; \eqref{ineq:Lambda_2} follows from \eqref{eq:Tlarge_complement_0} and the triangle inequality; and \eqref{ineq:Lambda_2_1} follows from \eqref{eq:Tlarge_complement_0} and the fact that the minimum hamming distance between any two different rating vectors in $\cV$ is $\min\{\intraTau,\interTau\}m$.

\textbf{(ii) Lower Bound on $\lvert \misClassfSet{\betaEdge}{\alphaEdge} \rvert$ and $\lvert \misClassfSet{\alphaEdge}{\betaEdge} \rvert$:} 
For $T\in \TsmallErr$, a lower bound on $\lvert \misClassfSet{\alphaEdge}{\betaEdge} \rvert$ is given by
\begin{align}
	\lvert\misClassfSet{\betaEdge}{\alphaEdge}\rvert
	&=
	\sum\limits_{x \in \{A,B\}}
	\sum\limits_{y \in \{A,B\}}
	\sum\limits_{i \in [3]}
	\sum\limits_{j \in [3] \setminus i}
	\sum\limits_{k \in [3]}
	\kUsrs{i}{k}{x}{y} \kUsrs{j}{k}{x}{y}
	\label{eq:Nbetaalpha}\\
	&\geq
	\sum\limits_{x \in \{A,B\}}
	\sum\limits_{i \in [3]}
	\sum\limits_{j \in [3] \setminus i}
	\sum\limits_{k \in [3]}
	\kUsrs{i}{k}{x}{\sigma(x)} \kUsrs{j}{k}{x}{\sigma(x)}
	\nonumber\\
	&\geq
	\sum\limits_{x \in \{A,B\}}
	\sum\limits_{i \in [3]}
	\sum\limits_{j \in [3] \setminus i}
	\sum\limits_{k \in [3] \setminus \sigma(i|x)}
	\kUsrs{i}{k}{x}{\sigma(x)} \kUsrs{j}{k}{x}{\sigma(x)}
	\nonumber\\
	&=
	\sum\limits_{x \in \{A,B\}}
	\sum\limits_{i \in [3]}
	\sum\limits_{k \in [3] \setminus \sigma(i|x)}
	\kUsrs{i}{k}{x}{\sigma(x)}
	\left(
	\sum\limits_{j \in [3] \setminus i}
	\kUsrs{j}{k}{x}{\sigma(x)}
	\right)
	\nonumber\\
	&\geq
	\sum\limits_{x \in \{A,B\}}
	\sum\limits_{i \in [3]}
	\sum\limits_{k \in [3] \setminus \sigma(i|x)}
	\kUsrs{i}{k}{x}{\sigma(x)}
	\left(
	\left(1 -\tau\right) \frac{n}{6}
	\right)
	\label{ineq:Nbetaalpha_1}\\
	&=
	\left(1 -\tau\right) \frac{n}{6}
	\sum\limits_{x \in \{A,B\}}
	\sum\limits_{i \in [3]}
	\sum\limits_{j \in [3] \setminus \sigma(i|x)}
	\kUsrs{i}{j}{x}{\sigma(x)},
	\label{ineq:Nbetaalpha_2}
\end{align}
where \eqref{eq:Nbetaalpha} follows from the definitions in \eqref{eq:N2rev_defn}; and \eqref{ineq:Nbetaalpha_1} follows from \eqref{eq:Tlarge_complement_0}. Similarly, for $T\in \TsmallErr$, a lower bound on $\lvert \misClassfSet{\alphaEdge}{\betaEdge} \rvert$ is given by
\begin{align}
	\lvert\misClassfSet{\alphaEdge}{\betaEdge}\rvert
	&=
	\sum_{x\in\{A,B\}}
	\sum_{y\in\{A,B\}}
	\sum_{i\in[3]} 
	\sum_{j\in[3]} 
	\sum_{k\in[3] \setminus j}
	\kUsrs{i}{j}{x}{y} \kUsrs{i}{k}{x}{y}
	\label{eq:Nalphabeta}\\
	&\geq
	\sum_{x\in\{A,B\}}
	\sum_{i\in[3]} 
	\sum_{j\in[3]} 
	\sum_{k\in[3] \setminus j}
	\kUsrs{i}{j}{x}{\sigma(x)} \kUsrs{i}{k}{x}{\sigma(x)}
	\nonumber\\
	&\geq
	\sum_{x\in\{A,B\}}
	\sum_{i\in[3]} 
	\sum_{k\in[3] \setminus \sigma(i|x)}
	\kUsrs{i}{\sigma(i|x)}{x}{\sigma(x)}
	\kUsrs{i}{k}{x}{\sigma(x)}
	\nonumber\\
	&\geq
	\sum_{x\in\{A,B\}}
	\sum_{i\in[3]} 
	\sum_{k\in[3] \setminus \sigma(i|x)}
	\left(1 -\tau\right) \frac{n}{6}
	\kUsrs{i}{k}{x}{\sigma(x)}
	\nonumber\\
	&=
	\left(1 -\tau\right) \frac{n}{6}
	\sum\limits_{x\in\{A,B\}} 
	\sum_{i\in[3]} 
	\sum_{j\in[3] \setminus \sigma(i|x)} 
	\kUsrs{i}{j}{x}{\sigma(x)},
	\label{ineq:Nalphabeta_3}
\end{align}
where \eqref{eq:Nalphabeta} follows from the definitions in \eqref{eq:N2_defn}. Therefore, by \eqref{ineq:Nbetaalpha_2} and \eqref{ineq:Nalphabeta_3}, we obtain
\begin{align}
	\frac{
		\lvert\misClassfSet{\betaEdge}{\alphaEdge}\rvert
		+ \lvert\misClassfSet{\alphaEdge}{\betaEdge}\rvert}
	{2}
	\geq 
	\left(1 -\tau\right) \frac{n}{6}
	\sum\limits_{x\in\{A,B\}} 
	\sum_{i\in[3]} 
	\sum_{j\in[3] \setminus \sigma(i|x)} 
	\kUsrs{i}{j}{x}{\sigma(x)}.
	\label{ineq:Ng}
\end{align}

\textbf{(iii) Lower Bound on $\lvert \misClassfSet{\gammaEdge}{\alphaEdge} \rvert$ and $\lvert \misClassfSet{\alphaEdge}{\gammaEdge} \rvert$:} 
For $T\in \TsmallErr$, a lower bound on $\lvert \misClassfSet{\gammaEdge}{\alphaEdge} \rvert$ is given by
\begin{align}
	\lvert\misClassfSet{\gammaEdge}{\alphaEdge}\rvert
	&=
	\sum_{x \in \{A,B\}}
	\sum_{y \in \{A,B\}}
	\sum_{z \in \{A,B\} \setminus x}
	\sum_{i \in [3]}
	\sum_{j \in [3]}
	\sum_{k \in [3]}
	\kUsrs{i}{j}{x}{y}  \kUsrs{k}{j}{z}{y} 
	\label{eq:Ngammaalpha}\\
	&\geq
	\sum_{x \in \{A,B\}}
	\sum_{y \in \{A,B\} \setminus \sigma(x)}
	\sum_{z \in \{A,B\} \setminus x}
	\sum_{i \in [3]}
	\sum_{j \in [3]}
	\sum_{k \in [3]}
	\kUsrs{i}{j}{x}{y}  \kUsrs{k}{j}{z}{y} 
	\nonumber\\
	&=
	\sum_{x \in \{A,B\}}
	\sum_{y \in \{A,B\} \setminus \sigma(x)}
	\sum_{i \in [3]}
	\sum_{j \in [3]}
	\kUsrs{i}{j}{x}{y}
	\left(
	\sum_{z \in \{A,B\} \setminus x}
	\sum_{k \in [3]}
	\kUsrs{k}{j}{z}{y} 
	\right)
	\nonumber\\
	&\geq
	\sum_{x \in \{A,B\}}
	\sum_{y \in \{A,B\} \setminus \sigma(x)}
	\sum_{i \in [3]}
	\sum_{j \in [3]}
	\kUsrs{i}{j}{x}{y}
	\left(
	\left(1 -\tau\right) \frac{n}{6} 
	\right)
	\label{ineq:Ngammaalpha_2}\\
	&=
	\left(1 -\tau\right) \frac{n}{6} 
	\sum_{x \in \{A,B\}}
	\sum_{y \in \{A,B\} \setminus \sigma(x)}
	\sum_{i \in [3]}
	\sum_{j \in [3]}
	\kUsrs{i}{j}{x}{y}, 
	\label{ineq:Ngammaalpha_4}
\end{align}
where \eqref{eq:Ngammaalpha} follows from the definitions in \eqref{eq:N3rev_defn}; and \eqref{ineq:Ngammaalpha_2} follows from \eqref{eq:Tlarge_complement_0}. Similarly, for $T\in \TsmallErr$, a lower bound on $\lvert \misClassfSet{\alphaEdge}{\gammaEdge} \rvert$ is given by
\begin{align}
	\lvert\misClassfSet{\alphaEdge}{\gammaEdge}\rvert
	&=
	\sum_{x \in \{A,B\}}
	\sum_{y \in \{A,B\}}
	\sum_{z \in \{A,B\} \setminus y}
	\sum_{i \in [3]}
	\sum_{j \in [3]}
	\sum_{k \in [3]}
	\kUsrs{i}{j}{x}{y}  \kUsrs{i}{k}{x}{z} 
	\label{eq:Nalphagamma}\\
	&\geq
	\sum_{x \in \{A,B\}}
	\sum_{z \in \{A,B\} \setminus \sigma(x)}
	\sum_{i \in [3]}
	\sum_{k \in [3]}
	\sum_{j \in [3]}
	\kUsrs{i}{j}{x}{\sigma(x)}  \kUsrs{i}{k}{x}{z} 
	\nonumber\\
	&\geq
	\sum_{x \in \{A,B\}}
	\sum_{z \in \{A,B\} \setminus \sigma(x)}
	\sum_{i \in [3]}
	\sum_{k \in [3]}
	\kUsrs{i}{\sigma(i|x)}{x}{\sigma(x)}  \kUsrs{i}{k}{x}{z}  
	\nonumber\\
	&=
	\sum_{x \in \{A,B\}}
	\sum_{z \in \{A,B\} \setminus \sigma(x)}
	\sum_{i \in [3]}
	\sum_{k \in [3]}
	\left(1 -\tau\right) \frac{n}{6} 
	\kUsrs{i}{k}{x}{z}  
	\nonumber\\
	&=
	\left(1 -\tau\right) \frac{n}{6} 
	\sum_{x \in \{A,B\}}
	\sum_{y \in \{A,B\} \setminus \sigma(x)}
	\sum_{i \in [3]}
	\sum_{j \in [3]}
	\kUsrs{i}{j}{x}{y}, 
	\label{ineq:Nalphagamma_2}
\end{align}
where \eqref{eq:Nalphagamma} follows from the definitions in \eqref{eq:N3_defn}. Hence, by \eqref{ineq:Ngammaalpha_4} and \eqref{ineq:Nalphagamma_2}, we obtain
\begin{align}
	\frac{
		\lvert\misClassfSet{\gammaEdge}{\alphaEdge}\rvert
		+
		\lvert\misClassfSet{\alphaEdge}{\gammaEdge}\rvert}{2}
	\geq 
	\left(1 -\tau\right) \frac{n}{6}
	\sum_{x \in \{A,B\}}
	\sum_{y \in \{A,B\} \setminus x}
	\sum_{i \in [3]}
	\sum_{j \in [3]}
	\kUsrs{i}{j}{x}{y}.
	\label{ineq:Nc1}
\end{align}

\textbf{(iv) Lower Bound on $\lvert \misClassfSet{\gammaEdge}{\betaEdge} \rvert$ and $\lvert \misClassfSet{\betaEdge}{\gammaEdge} \rvert$:} 
For $T\in \TsmallErr$, a lower bound on $\lvert \misClassfSet{\gammaEdge}{\betaEdge} \rvert$ is given by
\begin{align}
	\lvert\misClassfSet{\gammaEdge}{\betaEdge}\rvert
	&=
	\sum_{x \in \{A,B\}}
	\sum_{y \in \{A,B\}}
	\sum_{z \in \{A,B\} \setminus x}
	\sum_{i \in [3]}
	\sum_{k \in [3]}
	\sum_{j \in [3]}
	\sum_{\ell \in [3] \setminus j}
	\kUsrs{i}{j}{x}{y} 
	\kUsrs{k}{\ell}{z}{y} 
	\label{eq:Ngammabeta}\\
	&\geq
	\sum_{x \in \{A,B\}}
	\sum_{y \in \{A,B\} \setminus \sigma(x)}
	\sum_{z \in \{A,B\} \setminus x}
	\sum_{i \in [3]}
	\sum_{k \in [3]}
	\sum_{j \in [3]}
	\sum_{\ell \in [3] \setminus j}
	\kUsrs{i}{j}{x}{y} 
	\kUsrs{k}{\ell}{z}{y} 
	\nonumber\\
	&=
	\sum_{x \in \{A,B\}}
	\sum_{y \in \{A,B\} \setminus \sigma(x)}
	\sum_{i \in [3]}
	\sum_{j \in [3]}
	\kUsrs{i}{j}{x}{y} 
	\left(
	\sum_{\ell \in [3] \setminus j}
	\sum_{z \in \{A,B\} \setminus x}
	\sum_{k \in [3]}
	\kUsrs{k}{\ell}{z}{y}
	\right)
	\nonumber\\
	&\geq
	\sum_{x \in \{A,B\}}
	\sum_{y \in \{A,B\} \setminus \sigma(x)}
	\sum_{i \in [3]}
	\sum_{j \in [3]}
	\kUsrs{i}{j}{x}{y} 
	\left(
	\sum_{\ell \in [3] \setminus j}
	\left(1 -\tau\right) \frac{n}{6}
	\right)
	\label{ineq:Ngammabeta_2}\\
	&=
	\left(1 -\tau\right) \frac{n}{3}
	\sum_{x \in \{A,B\}}
	\sum_{y \in \{A,B\} \setminus x}
	\sum_{i \in [3]}
	\sum_{j \in [3]}
	\kUsrs{i}{j}{x}{y}, 
	\label{ineq:Ngammabeta_3}
\end{align}
where \eqref{eq:Ngammabeta} follows from the definitions in \eqref{eq:N4rev_defn}; and \eqref{ineq:Ngammabeta_2} follows from \eqref{eq:Tlarge_complement_0}. Similarly, for $T\in \TsmallErr$, a lower bound on $\lvert \misClassfSet{\betaEdge}{\gammaEdge} \rvert$ is given by
\begin{align}
	\lvert\misClassfSet{\betaEdge}{\gammaEdge}\rvert
	&=
	\sum_{x \in \{A,B\}}
	\sum_{y \in \{A,B\}}
	\sum_{z \in \{A,B\} \setminus y}
	\sum_{i \in [3]}
	\sum_{k \in [3] \setminus i}
	\sum_{j \in [3]}
	\sum_{\ell \in [3]}
	\kUsrs{i}{j}{x}{y} 
	\kUsrs{k}{\ell}{x}{z} 
	\label{eq:Nbetagamma}\\
	&\geq
	\sum_{x \in \{A,B\}}
	\sum_{z \in \{A,B\} \setminus \sigma(x)}
	\sum_{i \in [3]}
	\sum_{k \in [3] \setminus i}
	\sum_{j \in [3]}
	\sum_{\ell \in [3]}
	\kUsrs{i}{j}{x}{\sigma(x)} 
	\kUsrs{k}{\ell}{x}{z} 
	\nonumber\\
	&=
	\sum_{x \in \{A,B\}}
	\sum_{z \in \{A,B\} \setminus \sigma(x)}
	\sum_{k \in [3]}
	\sum_{i \in [3] \setminus k}
	\kUsrs{i}{\sigma(x|i)}{x}{\sigma(x)} 
	\sum_{\ell \in [3]}
	\kUsrs{k}{\ell}{x}{z} 
	\nonumber\\
	&\geq
	\sum_{x \in \{A,B\}}
	\sum_{z \in \{A,B\} \setminus \sigma(x)}
	\sum_{k \in [3]}
	\sum_{i \in [3] \setminus k}
	\left(1 -\tau\right) \frac{n}{6}
	\sum_{\ell \in [3]}
	\kUsrs{k}{\ell}{x}{z} 
	\nonumber\\    
	&=
	\sum_{x \in \{A,B\}}
	\sum_{z \in \{A,B\} \setminus \sigma(x)}
	\sum_{k \in [3]}
	\left(1 -\tau\right) \frac{n}{3}
	\sum_{\ell \in [3]}
	\kUsrs{k}{\ell}{x}{z} 
	\nonumber\\
	&=
	\left(1 -\tau\right) \frac{n}{3}
	\sum_{x \in \{A,B\}}
	\sum_{y \in \{A,B\} \setminus \sigma(x)}
	\sum_{i \in [3]}
	\sum_{j \in [3]}
	\kUsrs{i}{j}{x}{y}, 
	\label{ineq:Nbetagamma_3}
\end{align}
where \eqref{eq:Nbetagamma} follows from the definitions in \eqref{eq:N4_defn}. Therefore, by \eqref{ineq:Ngammabeta_3} and \eqref{ineq:Nbetagamma_3}, we obtain
\begin{align}
	\frac{
		\lvert\misClassfSet{\gammaEdge}{\betaEdge}\rvert
		+
		\lvert\misClassfSet{\betaEdge}{\gammaEdge}\rvert}{2}
	\geq 
	\left(1 -\tau\right) \frac{n}{3}
	\sum_{x \in \{A,B\}}
	\sum_{y \in \{A,B\} \setminus x}
	\sum_{i \in [3]}
	\sum_{j \in [3]}
	\kUsrs{i}{j}{x}{y}. 
	\label{ineq:Nc2}
\end{align}

By \eqref{ineq:Lambda_3}, \eqref{ineq:Ng}, \eqref{ineq:Nc1} and \eqref{ineq:Nc2}, an upper bound on $\mathsf{Term_2}$ is given by
\begin{align}
	\mathsf{Term_2}
	&=
	\exp\left(-\left(1+o(1)\right)  \left(\lvert\diffEntriesSet\rvert \Ir 
	+ \Pleftrightarrow{\alphaEdge}{\betaEdge} \: \Ig 
	+ \Pleftrightarrow{\alphaEdge}{\gammaEdge} \: \IcOne 
	+ \Pleftrightarrow{\betaEdge}{\gammaEdge} \: \IcTwo 
	\right)\right)
	\nonumber\\
	&\leq
	\exp
	\left(
	- (1-\tau)
	\left(
	\frac{n \Ir}{6}
	\left(
	\sum\limits_{x \in \{A,B\}} \sum\limits_{i \in [3]} 
	\dElmntsGen{i}{\sigma(i|x)}{x}{\sigma(x)}
	\right)
	\right.\right. \nonumber \\
	&\phantom{\leq\exp
		\left(- (1-\tau)\left(\right.\right.}\left.\left.
	+
	\left(
	\intraTau m \Ir + \frac{n \Ig}{6}
	\right)
	\left(\sum_{x\in\{A,B\}}\sum_{i\in[3]}\sum_{j \in [3] \setminus \sigma(i|x)} \kUsrs{i}{j}{x}{\sigma(x)}\right)
	\right.
	\right.
	\nonumber\\
	&\phantom{\leq\exp
		\left(- (1-\tau)\left(\right.\right.} \:\:\:
	\left.
	\left.
	+ 
	\:
	\left(
	\interTau m \Ir + \frac{n \IcOne}{6} + \frac{n \IcTwo}{3}
	\right)
	\left(\sum_{x \in \{A,B\}}
	\sum_{y \in \{A,B\} \setminus \sigma(x)}
	\sum_{i \in [3]}
	\sum_{j \in [3]}
	\kUsrs{i}{j}{x}{y}\right) 
	\right)
	\right)
	\nonumber\\
	&\leq
	\exp
	\left(
	- (1-\tau) (1+\epsilon)
	\left(
	\frac{\log m}{2}
	\left(
	\sum\limits_{x \in \{A,B\}} \sum\limits_{i \in [3]} 
	\dElmntsGen{i}{\sigma(i|x)}{x}{\sigma(x)}
	\right)
	\right.
	\right.
	\nonumber\\
	&\phantom{\leq\exp
		\left(- (1-\tau)(1+\epsilon)\left(\right.\right.} \:\:\:
	\left.
	\left.
	+
	\log n
	\left(\sum_{x\in\{A,B\}}\sum_{i\in[3]}\sum_{j \in [3] \setminus \sigma(i|x)} \kUsrs{i}{j}{x}{\sigma(x)}\right)
	\right.
	\right.
	\nonumber\\
	&\phantom{\leq\exp
		\left(- (1-\tau)(1+\epsilon)\left(\right.\right.} \:\:\:
	\left.
	\left.
	+ 
	\:
	\log n
	\left(\sum_{x \in \{A,B\}}
	\sum_{y \in \{A,B\} \setminus \sigma(x)}
	\sum_{i \in [3]}
	\sum_{j \in [3]}
	\kUsrs{i}{j}{x}{y}\right) 
	\right)
	\right)
	\label{ineq:upp_exp_2}\\
	&\leq
	\exp
	\left(
	- \left( 1+\frac{\epsilon}{2} \right)
	\left(
	\frac{\log (c_1 m)}{2}
	\left(
	\sum\limits_{x \in \{A,B\}} \sum\limits_{i \in [3]} 
	\dElmntsGen{i}{\sigma(i|x)}{x}{\sigma(x)}
	\right)
	\right.
	\right.
	\nonumber\\
	&\phantom{\leq\exp
		\left(- (1-\tau)(1+\epsilon)\left(\right.\right.} \:\:\:
	\left.
	\left.
	+
	\log n 
	\left(\sum_{x\in\{A,B\}}\sum_{i\in[3]}\sum_{j \in [3] \setminus \sigma(i|x)} \kUsrs{i}{j}{x}{\sigma(x)}\right)
	\right.
	\right.
	\nonumber\\
	&\phantom{\leq\exp
		\left(- (1-\tau)(1+\epsilon)\left(\right.\right.} \:\:\:
	\left.
	\left.
	+ 
	\:
	\log n
	\left(\sum_{x \in \{A,B\}}
	\sum_{y \in \{A,B\} \setminus \sigma(x)}
	\sum_{i \in [3]}
	\sum_{j \in [3]}
	\kUsrs{i}{j}{x}{y}\right) 
	\right)
	\right)
	\label{ineq:upp_exp_3}\\
	&=
	\exp
	\left(
	- \left( 1+\frac{\epsilon}{2} \right)
	\left(
	\frac{\log (c_1 m)}{2}
	\left(
	\sum\limits_{x \in \{A,B\}} \sum\limits_{i \in [3]} 
	\dElmntsGen{i}{\sigma(i|x)}{x}{\sigma(x)}
	\right)
	\right.
	\right.
	\nonumber\\
	&\phantom{\leq\exp
		\left(- (1+\frac{\epsilon}{2})\left(\right.\right.} \:\:\:
	\left.
	\left.
	+ 
	\log n
	\left(
	\sum_{x \in \{A,B\}} 
	\sum_{i \in [3]} 
	\sum_{(y,j) \neq (\sigma(x),\sigma(i|x))} 
	\kUsrs{i}{j}{x}{y}\right) 
	\right)
	\right),
	\label{ineq:upp_exp_4}
\end{align}
where \eqref{ineq:upp_exp_2} follows from the sufficient conditions in \eqref{eq:suffCond_1}, \eqref{eq:suffCond_2} and \eqref{eq:suffCond_3}; and \eqref{ineq:upp_exp_3} holds since $\tau \leq  (\epsilon\log m - (2+\epsilon)\log(2q)) / (2 (1 + \epsilon)\log m))$ implies that $(1-\tau)(1+\epsilon)\log m \geq (1+\epsilon/2)\log(c_1m)$ and $(1-\tau)(1+\epsilon) \geq (1+(\epsilon/2))$.

Finally, by \eqref{eq:term1_lemma_large} and \eqref{ineq:upp_exp_4}, the function in the RHS of \eqref{eq:largeErr_LHS_app} is upper bounded by
\begin{align}
	&\sum\limits_{T \in \TsmallErr} 
	\left\vert \mathcal{X}(T) \right\vert
	\:\:
	\exp\left(-\left(1+o(1)\right) \left(\lvert\diffEntriesSet\rvert \Ir 
	+ \Pleftrightarrow{\alphaEdge}{\betaEdge} \: \Ig 
	+ \Pleftrightarrow{\alphaEdge}{\gammaEdge} \: \IcOne 
	+ \Pleftrightarrow{\betaEdge}{\gammaEdge} \: \IcTwo 
	\right)\right).
	\nonumber \\
	&\leq
	\sum\limits_{T \in \TsmallErr} 
	c_0
	\exp
	\left(
	- \frac{\epsilon}{2}
	\left(
	\frac{\log (c_1m)}{2}
	\left(
	\sum\limits_{x \in \{A,B\}} \sum\limits_{i \in [3]} 
	\dElmntsGen{i}{\sigma(i|x)}{x}{\sigma(x)}
	\right)
	\right.\right. \nonumber \\
	&\phantom{\leq
		\sum\limits_{T \in \TsmallErr}c_o\exp(}
	\left.\left.
	+ 
	\log n
	\left(
	\sum_{x \in \{A,B\}} 
	\sum_{i \in [3]} 
	\sum_{(y,j) \neq (\sigma(x),\sigma(i|x))} 
	\kUsrs{i}{j}{x}{y}\right) 
	\right)
	\right).
	\nonumber\\
	&\quad=
	\sum_{\ell_1 = 0}^{6\tau  \min\{\intraTau, \interTau\} m}\:
	\sum_{\ell_2 = 0}^{\relabelConst n}
	\left| \left\{
	\sum\limits_{x \in \{A,B\}} \sum\limits_{i \in [3]} 
	\dElmntsGen{i}{\sigma(i|x)}{x}{\sigma(x)} = \ell_1, \:
	\sum_{x \in \{A,B\}} 
	\sum_{i \in [3]} 
	\sum_{(y,j) \neq (\sigma(x),\sigma(i|x))} 
	\kUsrs{i}{j}{x}{y} = \ell_2
	\right\} \right|
	\nonumber\\
	&\quad\phantom{=
		\sum_{\ell_1 = 1}^{6\tau \min\{\intraTau, \interTau\} m}
		\sum_{\ell_2 = 1}^{\relabelConst n}}
	\times
	\exp
	\left(
	- \frac{\epsilon \log (c_1m)}{4}
	\ell_1
	- \frac{\epsilon \log n}{2}
	\ell_2
	\right)
	\label{eq:largeErr_errComvg_0}\\
	&\quad=
	\sum_{\ell_1 = 1}^{6\tau \min\{\intraTau, \interTau\} m}
	\left| \left\{
	\sum\limits_{x \in \{A,B\}} \sum\limits_{i \in [3]} 
	\dElmntsGen{i}{\sigma(i|x)}{x}{\sigma(x)} = \ell_1
	\right\} \right|
	\left| \left\{
	\sum_{x \in \{A,B\}} 
	\sum_{i \in [3]} 
	\sum_{(y,j) \neq (\sigma(x),\sigma(i|x))} 
	\kUsrs{i}{j}{x}{y} \!=\!0
	\right\} \right|
	\nonumber \\
	&\phantom{=\sum_{\ell_1 = 1}^{6\tau \min\{\intraTau, \interTau\} m}}
	\times
	\exp
	\left(
	- \frac{\epsilon \log (c_1m)}{4}
	\ell_1
	\right)
	\nonumber\\
	&\quad\phantom{=}\:
	+ \sum_{\ell_2 = 1}^{\relabelConst n}
	\left| \left\{
	\sum\limits_{x \in \{A,B\}} \sum\limits_{i \in [3]} 
	\dElmntsGen{i}{\sigma(i|x)}{x}{\sigma(x)} = 0
	\right\} \right|
	\left| \left\{
	\sum_{x \in \{A,B\}} 
	\sum_{i \in [3]} 
	\sum_{(y,j) \neq (\sigma(x),\sigma(i|x))} 
	\kUsrs{i}{j}{x}{y} = \ell_2
	\right\} \right|
	\nonumber \\
	&\phantom{=\quad\sum_{\ell_2 = 1}^{\relabelConst n}}
	\times
	\exp
	\left(
	- \frac{\epsilon \log n}{2}
	\ell_2
	\right)
	\nonumber\\
	&\quad\phantom{=}\:
	+ \sum_{\ell_1 = 1}^{6\tau \min\{\intraTau, \interTau\} m}\:
	\sum_{\ell_2 = 1}^{\relabelConst n}
	\left| \left\{
	\sum\limits_{x \in \{A,B\}} \sum\limits_{i \in [3]} 
	\dElmntsGen{i}{\sigma(i|x)}{x}{\sigma(x)} = \ell_1
	\right\} \right|
	\nonumber \\
	&\phantom{\quad\phantom{=}\:
		+ \sum_{\ell_1 = 1}^{6\tau \min\{\intraTau, \interTau\} m} \sum_{\ell_2 = 1}^{\relabelConst n}}
	\times
	\left| \left\{
	\sum_{x \in \{A,B\}} 
	\sum_{i \in [3]} 
	\sum_{(y,j) \neq (\sigma(x),\sigma(i|x))} 
	\kUsrs{i}{j}{x}{y} = \ell_2
	\right\} \right|
	\nonumber\\
	&\quad\phantom{\leq
		\sum_{\ell_1 = 1}^{6\tau \min\{\intraTau, \interTau\} m}
		\sum_{\ell_2 = 1}^{\relabelConst n}}
	\times
	\exp
	\left(
	- \frac{\epsilon \log (c_1m)}{4}
	\ell_1
	- \frac{\epsilon \log n}{2}
	\ell_2
	\right)
	\label{eq:largeErr_errComvg_1}\\
	&\quad=
	\sum_{\ell_1 = 1}^{6\tau \min\{\intraTau, \interTau\} m}
	\binom{\ell_1 + 6}{6}
	\exp
	\left(
	- \frac{\epsilon \log (c_1m)}{4}
	\ell_1
	\right)
	+
	\sum_{\ell_2 = 1}^{\relabelConst n}
	\binom{\ell_2 + 6}{6}
	\exp
	\left(
	- \frac{\epsilon \log n}{2}
	\ell_2
	\right)
	\nonumber\\
	&\quad\phantom{=}\:
	+ \sum_{\ell_1 = 1}^{6\tau \min\{\intraTau, \interTau\} m}\:
	\sum_{\ell_2 = 1}^{\relabelConst n}
	\binom{\ell_1 + 5}{5}
	\binom{\ell_2 + 5}{5}
	\exp
	\left(
	- \frac{\epsilon \log (c_1m)}{4}
	\ell_1
	- \frac{\epsilon \log n}{2}
	\ell_2
	\right)
	\label{eq:largeErr_errComvg_2}\\
	&\quad\leq
	\sum_{\ell_1 = 1}^{6\tau \min\{\intraTau, \interTau\} m}
	2^{\left(\displaystyle{\ell_1 + 6}\right)}\:\:
	(c_1m)^{\left(\displaystyle - \frac{\epsilon}{4}
		\ell_1\right)}
	\:\:+\:\:
	\sum_{\ell_2 = 1}^{\relabelConst n}
	2^{\left(\displaystyle \ell_2 + 6\right)}\:\:
	n^{\left( \displaystyle -  \frac{\epsilon}{2} \ell_2\right)}
	\nonumber\\
	&\quad\phantom{\leq}\:
	+ \sum_{\ell_1 = 1}^{6\tau \min\{\intraTau, \interTau\} m}
	2^{\left(\displaystyle{\ell_1 + 6}\right)}\:\:
	(c_1m)^{\left(\displaystyle - \frac{\epsilon}{4}
		\ell_1\right)}
	\left(
	\sum_{\ell_2 = 1}^{\relabelConst n}
	2^{\left(\displaystyle \ell_2 + 6\right)}\:\:
	n^{\left( \displaystyle -  \frac{\epsilon}{2} \ell_2\right)}
	\right)
	\label{eq:largeErr_errComvg_3}\\
	&\quad\leq
	2^{\displaystyle 6}
	\sum_{\ell_1 = 1}^{\infty}
	\left(2\:(c_1m)^{\left(\displaystyle - \frac{\epsilon}{4}\right)}\right)^{\displaystyle \ell_1}
	\:\:+\:\:
	2^{\displaystyle 6}\:
	\sum_{\ell_2 = 1}^{\infty}
	\left(2 n^{\left( \displaystyle -  \frac{\epsilon}{2}\right)}\right)^{\displaystyle \ell_2}
	\nonumber\\
	&\quad\phantom{\leq}\:
	+ 2^{\displaystyle 12}\:
	\sum_{\ell_1 = 1}^{\infty}
	\left(2 \:(c_1 m)^{\left(\displaystyle - \frac{\epsilon}{4}\right)}\right)^{\displaystyle \ell_1}
	\left[
	\sum_{\ell_2 = 1}^{\infty}
	\left(2 n^{\left( \displaystyle -  \frac{\epsilon}{2}\right)}\right)^{\displaystyle \ell_2}
	\right]
	\nonumber\\
	&\quad=
	2^{\displaystyle 6}\:
	\frac{2 \:(c_1 m)^{\left(\displaystyle - \frac{\epsilon}{4}\right)}}{1 - 2 \:(c_1m)^{\left(\displaystyle - \frac{\epsilon}{4}\right)}}
	\:+\:
	2^{\displaystyle 6}\:
	\frac{2 n^{\left( \displaystyle -  \frac{\epsilon}{2}\right)}}{1 - 2 n^{\left( \displaystyle -  \frac{\epsilon}{2}\right)}}
	\:+\: 
	2^{\displaystyle 12}\:
	\frac{2\:(c_1m)^{\left(\displaystyle - \frac{\epsilon}{4}\right)}}{1 - 2 \:(c_1m)^{\left(\displaystyle - \frac{\epsilon}{4}\right)}}
	\frac{2 n^{\left( \displaystyle -  \frac{\epsilon}{2}\right)}}{1 - 2 n^{\left( \displaystyle - \frac{\epsilon}{2}\right)}},
	\label{eq:largeErr_errComvg_4}
\end{align}
where
\begin{itemize}[leftmargin=*]
	\item \eqref{eq:largeErr_errComvg_0} readily follows from \eqref{eq:Tlarge_complement_0};
	\item in \eqref{eq:largeErr_errComvg_1}, we break the summation into three summations, and use the fact that the enumeration of the first element of the set is independent of the enumeration of the second element;
	\item in \eqref{eq:largeErr_errComvg_2}, we use the fact that the number of integer solutions of $\sum_{i=1}^{n} x_i = s$ is equal to $\binom{s+n-1}{n-1}$;
	\item in \eqref{eq:largeErr_errComvg_3}, we bound each binomial coefficient by $\binom{a}{b} \leq \sum_{i=0}^a \binom{a}{i} \leq 2^a$, for $a \geq b$;
	\item and finally in \eqref{eq:largeErr_errComvg_4}, we evaluate the infinite geometric series, where $\epsilon > \max\{(2 \log 2) / \log n, \: (4 \log 2) / \log m\}$.
\end{itemize}
Therefore, by \eqref{eq:largeErr_errComvg_4}, the RHS of \eqref{eq:largeErr_LHS_app} is given by
\begin{align}
	\lim_{n\rightarrow \infty}
	\sum\limits_{T \in \TsmallErr} 
	\sum\limits_{X \in \mathcal{X}(T)}
	\exp\left(-\left(1+o(1)\right) \left(\lvert\diffEntriesSet\rvert \Ir 
	+ \Pleftrightarrow{\alphaEdge}{\betaEdge} \: \Ig 
	+ \Pleftrightarrow{\alphaEdge}{\gammaEdge} \: \IcOne 
	+ \Pleftrightarrow{\betaEdge}{\gammaEdge} \: \IcTwo  \right)\right) = 0.
\end{align}
Note that as $n$ tends to infinity, the condition on $\epsilon$ becomes 
\begin{align}
	\epsilon > \lim_{n\rightarrow \infty}
	\max\left\{
	\frac{2 \log 2}{\log n}, \: 
	\frac{4 \log 2}{\log(c_1m)},
	\frac{2\log c_1}{\log(m/c_1)}
	\right\} = 0.
\end{align}
This completes the proof of Lemma~\ref{lemma:T1_related}.
\hfill $\blacksquare$

\subsection{Proof of Lemma~\ref{lemma:eta_omega(nm)}}
\label{proof:eta_omega(nm)}
The LHS of \eqref{ineq:eta_omega(nm)} is upper bounded by
\begin{align}
	& \lim_{n\rightarrow \infty}
	\sum\limits_{T \in \TlargeErr} 
	\sum\limits_{X \in \mathcal{X}(T)}
	\exp\left(-\left(1+o(1)\right) \left(\lvert\diffEntriesSet\rvert \Ir 
	+ \Pleftrightarrow{\alphaEdge}{\betaEdge} \: \Ig 
	+ \Pleftrightarrow{\alphaEdge}{\gammaEdge} \: \IcOne 
	+ \Pleftrightarrow{\betaEdge}{\gammaEdge} \: \IcTwo \right)\right)
	\nonumber \\
	&\qquad \leq
	\lim_{n\rightarrow \infty}
	\sum\limits_{T \in \TlargeErr} 
	\left\vert \mathcal{X}(T) \right\vert
	\exp\left(-\left(\lvert\diffEntriesSet\rvert \Ir 
	+ \Pleftrightarrow{\alphaEdge}{\betaEdge} \: \Ig 
	+ \Pleftrightarrow{\alphaEdge}{\gammaEdge} \: \IcOne 
	+ \Pleftrightarrow{\betaEdge}{\gammaEdge} \: \IcTwo  \right)\right).
	\label{ineq:largeErr}
\end{align}
We first partition the set $\TlargeErr$ into two subsets (regimes), denoted by $\TlargeOne$ and $\TlargeTwo$. They are defined as follows:
\begin{align}
	\TlargeOne
	&=  
	\left\{ T \in \TlargeErr : 
	\exists (x,i) \in \{A,B\} \times [3],\:
	\text{ such that }
	\left(\left\vert \sigma(x,i) \right\vert = 0 \right)
	\right\}, \label{eq:Tlarge_1}
	\\
	\TlargeTwo
	&=  
	\left\{ T \in \TlargeErr : 
	\forall (x,i) \in \{A,B\} \!\times\! [3]
	\text{ such that }
	\left\vert \sigma(x,i) \right\vert = 1,
	\right. \nonumber \\
	&\phantom{=\{}\left.
	\text{ and }
	\exists (x,i) \in \{A,B\} \times [3]
	\text{ such that }
	\dElmntsGen{i}{\sigma(i|x)}{x}{\sigma(x)} > \tau m \min\{\interTau, \intraTau\}
	\right\}.
	\label{eq:Tlarge_2}
\end{align}
Therefore, the RHS of \eqref{ineq:largeErr} is upper bounded by
\begin{align}
	&\lim_{n\rightarrow \infty}
	\sum\limits_{T \in \TlargeErr} 
	\sum\limits_{X \in \mathcal{X}(T)}
	\exp\left(-\left(1+o(1)\right) \left(\lvert\diffEntriesSet\rvert \Ir 
	+ \Pleftrightarrow{\alphaEdge}{\betaEdge} \: \Ig 
	+ \Pleftrightarrow{\alphaEdge}{\gammaEdge} \: \IcOne 
	+ \Pleftrightarrow{\betaEdge}{\gammaEdge} \: \IcTwo  \right)\right)
	\nonumber \\
	&\qquad \leq
	\lim_{n\rightarrow \infty}
	\left[
	\sum\limits_{T \in \TlargeOne} 
	\left\vert \mathcal{X}(T) \right\vert
	\exp\left(-\left(\lvert\diffEntriesSet\rvert \Ir 
	+ \Pleftrightarrow{\alphaEdge}{\betaEdge} \: \Ig 
	+ \Pleftrightarrow{\alphaEdge}{\gammaEdge} \: \IcOne 
	+ \Pleftrightarrow{\betaEdge}{\gammaEdge} \: \IcTwo  \right)\right)
	\right.
	\nonumber\\
	&\qquad \phantom{\leq}
	\left.
	+ 
	\sum\limits_{T \in \TlargeTwo} 
	\left\vert \mathcal{X}(T) \right\vert
	\exp\left(-\left(\lvert\diffEntriesSet\rvert \Ir 
	+ \Pleftrightarrow{\alphaEdge}{\betaEdge} \: \Ig 
	+ \Pleftrightarrow{\alphaEdge}{\gammaEdge} \: \IcOne 
	+ \Pleftrightarrow{\betaEdge}{\gammaEdge} \: \IcTwo  \right)\right)
	\right].
	\label{ineq:largeErr_1}
\end{align}
In what follows, we derive upper bounds on each summation term in \eqref{ineq:largeErr_1}.

\subsubsection{Large Grouping Error Regime (Regime 1):}
This regime corresponds to $\TlargeOne$ characterized by \eqref{eq:Tlarge_1}. Suppose that there exist a cluster $x_0 \in \{A,B\}$ and a group $i_0 \in [3]$ such that $\left\vert \sigma(x_0,i_0) \right\vert = 0$. By \eqref{eq:sigma(x,i)}, this implies that
\begin{align}
	\kUsrs{i_0}{j}{x_0}{y}
	=
	\left\vert Z_0(x_0,i_0) \cap Z(y,j) \right\vert
	\leq 
	(1-\tau) \frac{n}{6},
	\:\:\forall (y,j) \in \{A,B\} \times [3].
	\label{eq:regime2_1}
\end{align}
We further partition the set $\TlargeOne$ into three subregimes $\TlargeOneOne$, $\TlargeOneTwo$, and $\TlargeOneThree$ that are defined as follows:
\begin{align}
	\TlargeOneOne
	&=
	\left\{
	T \in \TlargeOne:
	\exists \mu > 0,\:
	\exists (y_1, j_1) \in \{A,B\} \times [3],\:
	\exists (y_2, j_2) \in \{A,B\} \times [3]
	\right. \nonumber \\
	&\phantom{=\{}\left.
	\text{ such that }
	\kUsrs{i_0}{j_1}{x_0}{y_1} \geq \mu n,\:
	\kUsrs{i_0}{j_2}{x_0}{y_2} \geq \mu n
	\right\}, \label{eq:TlargeOneOne} 
	\\
	\TlargeOneTwo
	&=
	\left\{
	T \in \TlargeOne:
	\exists \mu > 0,\:
	\exists (y_1, j_1) \in \{A,B\} \times [3]
	\text{ such that }
	\kUsrs{i_0}{j_1}{x_0}{y_1} \geq \mu n
	\right\}, 
	\label{eq:TlargeOneTwo}\\
	\TlargeOneThree
	&=
	\left\{
	T \in \TlargeOne:
	\forall \mu > 0,\:
	\forall (y, j) \in \{A,B\} \times [3]
	\text{ such that }
	\kUsrs{i_0}{j}{x_0}{y} < \mu n
	\right\}
	. \label{eq:TlargeOneThree}
\end{align}

\textbf{(i) Subregime 1-1:}
This subregime corresponds to $\TlargeOneOne$ characterized by \eqref{eq:TlargeOneOne}. Suppose that there exist a constant $\mu > 0$, and two distinct pairs $(y_1,j_1), (y_2,j_2) \in \{A,B\}\times[3]$ such that 
\begin{align}
	\kUsrs{i_0}{j_1}{x_0}{y_1}
	\geq 
	\mu n,
	\text{ and }
	\kUsrs{i_0}{j_2}{x_0}{y_2}
	\geq
	\mu n.
	\label{prop:R2_largeErr}
\end{align}
There are $n$ users, each of which belong to one of the $6$ groups. Moreover, each user rates $m$ items, where each item rating can be one of $2^{6}$ possible ratings across all users. Therefore, a loose upper bound on the number of matrices that belong to matrix class $\mathcal{X}(T)$ is given by
\begin{align}
	\left\vert \mathcal{X}(T) \right\vert 
	\leq 
	6^n \left(2^{6}\right)^{m},
	\quad \forall T \in \tupleSetDelta.
	\label{ineq:cardinality}
\end{align}
Next, a lower bound on $\lvert\diffEntriesSet\rvert$ is given by
\begin{align}
	\lvert\diffEntriesSet\rvert
	&=
	\sum\limits_{x \in \{A,B\}}
	\sum\limits_{i \in [3]}
	\sum\limits_{y \in \{A,B\}}
	\sum\limits_{j \in [3]}
	\kUsrs{i}{j}{x}{y} \dElmntsGen{i}{j}{x}{y}
	\label{eq:regime2_Pd_0}\\
	&\geq
	\kUsrs{i_0}{j_1}{x_0}{y_1} \dElmntsGen{i_0}{j_1}{x_0}{y_1}
	+
	\kUsrs{i_0}{j_2}{x_0}{y_2} \dElmntsGen{i_0}{j_2}{x_0}{y_2}
	\nonumber\\
	&> 
	\mu n
	\left(
	\dElmntsGen{i_0}{j_1}{x_0}{y_1}
	+ \dElmntsGen{i_0}{j_2}{x_0}{y_2}
	\right)
	\label{eq:regime2_Pd_1}\\
	&\geq
	\mu n\:
	\hamDist{\vecV{j_1}{y_1}}{\vecV{j_2}{y_2}} 
	\label{eq:regime2_Pd_2}\\
	&\geq
	\mu\min\left\{\intraTau,\interTau \right\}
	nm,
	\label{eq:regime2_Pd_3}
\end{align}
where \eqref{eq:regime2_Pd_0} follows from the definitions of $\diffEntriesSet$, $\kUsrs{i}{j}{x}{y}$ and $\dElmntsGen{i}{j}{x}{y}$ in \eqref{eq:N1_defn}; \eqref{eq:regime2_Pd_1} follows from \eqref{prop:R2_largeErr}; \eqref{eq:regime2_Pd_2} follows from the triangle inequality; and \eqref{eq:regime2_Pd_3} holds since the minimum hamming distance between any two different rating vectors in $\cV$ is $\min\{\intraTau,\interTau\}m$. Furthermore, if $y_1 = y_2$, then $\lvert\misClassfSet{\alphaEdge}{\betaEdge} \rvert$ is lower bounded by
\begin{align}
	\lvert\misClassfSet{\alphaEdge}{\betaEdge} \rvert
	&=
	\sum_{x\in\{A,B\}}
	\sum_{y\in\{A,B\}}
	\sum_{i\in[3]}
	\sum_{k\in[3]}
	\sum_{j\in[3]\setminus k}
	\kUsrs{i}{j}{x}{y}\kUsrs{i}{k}{x}{y} 
	\label{eq:regime2_Palphabeta_0}\\
	&\geq
	\kUsrs{i_0}{j_1}{x_0}{y_1}\kUsrs{i_0}{j_2}{x_0}{y_2}
	\nonumber \\
	&\geq 
	(\mu n)^2,
	\label{eq:regime2_Palphabeta}
\end{align}
where \eqref{eq:regime2_Palphabeta_0} follows from the definitions in \eqref{eq:N2_defn}. On the other hand, if $y_1\neq y_2$, then $\lvert\misClassfSet{\alphaEdge}{\gammaEdge}\rvert$ is lower bounded by
\begin{align}
	\lvert\misClassfSet{\alphaEdge}{\gammaEdge}\rvert
	&=
	\sum_{x \in \{A,B\}}
	\sum_{z \in \{A,B\}}
	\sum_{y \in \{A,B\} \setminus z}
	\sum_{i \in [3]}
	\sum_{k \in [3]}
	\sum_{j \in [3]}
	\kUsrs{i}{j}{x}{y}  \kUsrs{i}{k}{x}{z} 
	\label{eq:regime2_Palphagamma_0}\\
	&\geq
	\kUsrs{i_0}{j_1}{x_0}{y_1}\kUsrs{i_0}{j_2}{x_0}{y_2}
	\nonumber \\
	&\geq 
	(\mu n)^2,
	\label{eq:regime2_Palphagamma}
\end{align}
where \eqref{eq:regime2_Palphagamma_0} follows from the definitions in \eqref{eq:N3_defn}. Finally, the first summation term in the RHS of \eqref{ineq:largeErr} is upper bounded by
\begin{align}
	&\sum\limits_{T \in \TlargeOneOne} 
	\left\vert \mathcal{X}(T) \right\vert
	\exp\left(-\left(\lvert\diffEntriesSet\rvert \Ir 
	+ \Pleftrightarrow{\alphaEdge}{\betaEdge} \: \Ig 
	+ \Pleftrightarrow{\alphaEdge}{\gammaEdge} \: \IcOne 
	+ \Pleftrightarrow{\betaEdge}{\gammaEdge} \: \IcTwo \right)\right)
	\nonumber\\
	&\leq
	\sum\limits_{T \in \TlargeOneOne} 
	\left\vert \mathcal{X}(T) \right\vert
	\exp\left(
	-\left(\lvert\diffEntriesSet\rvert \Ir 
	+  \frac{\Ig}{2}\lvert \misClassfSet{\alphaEdge}{\betaEdge} \rvert
	+  \frac{\IcOne}{2} \lvert \misClassfSet{\alphaEdge}{\gammaEdge} \rvert
	\right)\right)
	\nonumber\\
	&=
	\exp\left(
	-\left(c_2 nm \frac{\log m}{n}
	+  c_3 \frac{\log n}{n} n^2
	\right)\right)
	\sum\limits_{T \in \TlargeOneOne} 
	\left\vert \mathcal{X}(T) \right\vert
	\label{eq:largeErrConvg_R2_0}\\
	&\leq 
	\exp\left(
	-\left(c_2 m \log m + c_3 n \log n
	\right)\right)
	\exp\left(n \log 6\right)
	\exp\left(m 6 \log 2\right)
	\label{eq:largeErrConvg_R2_1}\\
	&= 
	\exp\left(
	-\left(m \left(c_2 \log m - 6 \log 2\right) 
	+ n \left(c_3 \log n - \log 6\right)
	\right)\right),
	\label{eq:largeErrConvg_R2_2}
\end{align}
where $c_2$ and $c_3$ in \eqref{eq:largeErrConvg_R2_0} are some positive constants; \eqref{eq:largeErrConvg_R2_0} follows from \eqref{eq:suffCond_1}, \eqref{eq:regime2_Pd_3}, \eqref{eq:regime2_Palphabeta} and \eqref{eq:regime2_Palphagamma}; and \eqref{eq:largeErrConvg_R2_1} follows from \eqref{ineq:cardinality}. 

\textbf{(ii) Subregime 1-2:}
This subregime corresponds to $\TlargeOneTwo$ characterized by \eqref{eq:TlargeOneTwo}. Suppose that there exist only one pair $(y_1,j_1) \in \{A,B\} \times [3]$, and a constant $\mu > 0$ such that 
\begin{align}
	\kUsrs{i_0}{j_1}{x_0}{y_1} \geq \mu n.
\end{align}
This implies that
\begin{align}
	\kUsrs{i_0}{j}{x_0}{y} < \frac{\tau}{30} n, 
	\:\:\text{ for }
	(y,j) \neq (y_0, j_0). 
	\label{ineq:regime1_2_1}
\end{align}
Therefore, by \eqref{ineq:regime1_2_1}, we have
\begin{align}
	\kUsrs{i_0}{j_0}{x_0}{y_0} 
	&=
	\frac{n}{6} - \sum_{(y,j) \neq (y_0, j_0)} \kUsrs{i_0}{j}{x_0}{y}
	\nonumber \\
	&> 
	\frac{n}{6} - 5 \frac{\tau}{5} \frac{n}{6}
	\nonumber\\
	&=
	(1-\tau)\frac{n}{6}. \label{ineq:regime1_2_2}
\end{align}
However, this is in contradiction with \eqref{eq:regime2_1}. Hence, we conclude that subregime $\TlargeOneTwo$ is impossible to exist.

\textbf{(iii) Subregime 1-3:}
This subregime corresponds to $\TlargeOneThree$ characterized by \eqref{eq:TlargeOneThree}. Due to the fact that
\begin{align}
	\sum_{y\in\{A,B\}}\sum_{j\in[3]}\kUsrs{i_0}{j}{x_0}{y} = \frac{n}{6}, 
\end{align}
there should be at least one pair $(y_1, j_1)$ such that 
\begin{align}
	\kUsrs{i_0}{j_1}{x_0}{y_1} \geq \mu n,
\end{align}
for some $\mu > 0$. However, this is in contradiction with \eqref{eq:TlargeOneThree}. Thus, we conclude that subregime $\TlargeOneThree$ is impossible to exist.

As a result, we conclude that
\begin{align}
	&\sum\limits_{T \in \TlargeOne} 
	\left\vert \mathcal{X}(T) \right\vert
	\exp\left(-\left(\lvert\diffEntriesSet\rvert \Ir 
	+ \Pleftrightarrow{\alphaEdge}{\betaEdge} \: \Ig 
	+ \Pleftrightarrow{\alphaEdge}{\gammaEdge} \: \IcOne 
	+ \Pleftrightarrow{\betaEdge}{\gammaEdge} \: \IcTwo  \right)\right)
	\nonumber\\
	&\qquad \leq
	\exp\left(
	-\left(m \left(c_2 \log m - 6 \log 2\right) 
	+ n \left(c_3 \log n - \log 6\right)
	\right)\right).
	\label{eq:R1_upperBound}
\end{align}

\subsubsection{Large Rating Estimation Error Regime (Regime 2):}
This regime corresponds to $\TlargeTwo$ characterized by \eqref{eq:Tlarge_2}. Suppose the following conditions hold:
\begin{itemize}[leftmargin=*]
	\item For every $(x,i) \in \{A,B\}\times [3]$, there exists a pair $(y,j) \in \{A,B\}\times[3]$ such that $\left\vert \sigma(x,i) \right\vert = 1$. That is, by \eqref{eq:sigma(x,i)}, we have
	\begin{align}
		\exists (y,j) = (\sigma(x), \sigma(i|x)): 
		\kUsrs{i}{j}{x}{y} = 
		\left| Z_0(x,i) \cap  Z(y,j)\right| 
		\geq (1-\relabelConst) \frac{n}{6}, 
		\forall (x,i) \!\in\! \{A,B\}\!\times\! [3].
		\label{eq:regime3_large_1}
	\end{align}
	\item There exists $(x_0, i_0) \in \{A,B\}\times[3]$ such that $\left\vert \sigma(x_0,i_0) \right\vert = 1$, and
	\begin{align}
		\dElmntsGen{i_0}{j_0}{x_0}{y_0}
		= \dElmntsGen{i_0}{\sigma(i_0 | x_0)}{x_0}{\sigma(x_0)} > \tau m \min\{\interTau, \intraTau\}.
		\label{eq:regime3_large_2}
	\end{align}
\end{itemize}

We first provide an upper bound on $\left\vert \mathcal{X}(T) \right\vert$. By \eqref{eq:term1_11_12}, \eqref{eq:term11_1} and \eqref{ineq:cardinality}, an upper bound on $\left\vert \mathcal{X}(T) \right\vert$ is given by
\begin{align}
	\left\vert \mathcal{X}(T) \right\vert
	&\leq
	\left(2^{6}\right)^{m}
	\exp
	\left(
	\left(
	\sum_{x \in \{A,B\}} 
	\sum_{i \in [3]} 
	\sum_{(y,j) \neq (\sigma(x), \sigma(i|x))} 
	\kUsrs{i}{j}{x}{y}
	\right)
	\log n
	\right).
	\label{eq:XT_regime2}
\end{align}

Next, we provide a lower bound on $\lvert\diffEntriesSet\rvert$. Based on \eqref{eq:regime3_large_1}, if there exists at least one other pair $(\widehat{y},\widehat{j}) \neq (\sigma(x), \sigma(i|x))$ for some $(x,i) \in \{A,B\} \times [3]$ such that
\begin{align}
	\kUsrs{i}{\widehat{j}}{x}{\widehat{y}} 
	= \left| Z_0(x,i) \cap  Z(\widehat{y},\widehat{j})\right| 
	\geq \mu n,
\end{align}
for some constant $\mu > 0$, then the analysis of this case boils down to Subregime 1-1. Therefore, we assume that for every $(x,i) \in \{A,B\}\times [3]$, we have
\begin{align}
	\kUsrs{i}{j}{x}{y} < \mu n,
	\quad \forall (y,j) \neq (\sigma(x), \sigma(i|x)),
	\: \forall \mu > 0.
	\label{eq:regime3_large_3}
\end{align} 
Consequently, a lower bound on $\lvert\diffEntriesSet\rvert$ is given by
\begin{align}
	\lvert\diffEntriesSet\rvert
	&=
	\sum\limits_{x \in \{A,B\}}
	\sum\limits_{i \in [3]}
	\sum\limits_{y \in \{A,B\}}
	\sum\limits_{j \in [3]}
	\kUsrs{i}{j}{x}{y} \dElmntsGen{i}{j}{x}{y}
	\nonumber\\
	&\geq
	\kUsrs{i_0}{j_0}{x_0}{y_0} \dElmntsGen{i_0}{j_0}{x_0}{y_0}
	\nonumber\\
	&>
	\left((1-\relabelConst) \frac{n}{6}\right)
	\left(\tau m \min\{\interTau, \intraTau\} \right)
	\label{eq:regime3_Pd_0}\\
	&=
	\left( \frac{(1-\relabelConst) \tau \min\{\interTau, \intraTau\}}{12} \right)
	m n
	+
	\left( \frac{(1-\relabelConst) \tau \min\{\interTau, \intraTau\}}{12} \right)
	m n
	\nonumber \\
	&\geq
	\left( \frac{(1-\relabelConst) \tau \min\{\interTau, \intraTau\}}{12} \right)
	m n
	+
	((1-\relabelConst)m)
	\left(
	\sum_{x \in \{A,B\}} 
	\sum_{i \in [3]} 
	\sum_{(y,j) \neq \sigma(x,i)} 
	\kUsrs{i}{j}{x}{y}
	\right)
	\label{eq:regime3_Pd_1}\\
	&=
	\left( \frac{(1-\relabelConst) \tau \min\{\interTau, \intraTau\}}{12} \right)
	m n
	\nonumber \\
	&\phantom{=}
	+
	((1-\relabelConst)m)
	\left(
	\sum\limits_{x \in \{A,B\}}
	\sum\limits_{i \in [3]}
	\sum\limits_{j \in [3]\setminus \sigma(i|x)}
	\kUsrs{i}{j}{x}{\sigma(x)} 
	+
	\sum\limits_{x \in \{A,B\}}
	\sum\limits_{y \in \{A,B\} \setminus \sigma(x)}
	\sum\limits_{i \in [3]}
	\sum\limits_{j \in [3]}
	\kUsrs{i}{j}{x}{y}
	\right)
	\nonumber\\
	&\geq
	c_4 m n
	+
	((1-\relabelConst)m)
	\left[
	\intraTau
	\left(
	\sum\limits_{x \in \{A,B\}}
	\sum\limits_{i \in [3]}
	\sum\limits_{j \in [3]\setminus \sigma(i|x)}
	\kUsrs{i}{j}{x}{\sigma(x)} 
	\right)
	\right.\nonumber \\
	&\phantom{\geq
		c_4 m n
		+
		((1-\relabelConst)m)
		[}\left.
	+
	\interTau
	\left(
	\sum\limits_{x \in \{A,B\}}
	\sum\limits_{y \in \{A,B\} \setminus \sigma(x)}
	\sum\limits_{i \in [3]}
	\sum\limits_{j \in [3]}
	\kUsrs{i}{j}{x}{y}
	\right)
	\right],
	\label{eq:regime3_Pd_2}
\end{align}
where \eqref{eq:regime3_Pd_0} follows from \eqref{eq:regime3_large_1} and \eqref{eq:regime3_large_2}; \eqref{eq:regime3_Pd_1} follows from \eqref{eq:regime3_large_3} for $\mu = (\tau \min\{\intraTau, \interTau\}) / (10\cdot6^2)$; and \eqref{eq:regime3_Pd_2} follows by setting $c_4 = ((1-\relabelConst) \tau \min\{\interTau, \intraTau\})/12$ where $0 \leq c_4 < 1$.

On the other hand, recall from \eqref{ineq:Ng}, \eqref{ineq:Nc1} and \eqref{ineq:Nc2} the following  lower bounds: 
\begin{align}
	\begin{split}
		\frac{
			\lvert\misClassfSet{\betaEdge}{\alphaEdge}\rvert
			+ \lvert\misClassfSet{\alphaEdge}{\betaEdge}\rvert}
		{2}
		&\geq
		\left(1 -\tau\right) \frac{n}{6}
		\sum\limits_{x\in\{A,B\}} 
		\sum_{i\in[3]} 
		\sum_{j\in[3] \setminus \sigma(i|x)} 
		\kUsrs{i}{j}{x}{\sigma(x)},
		\\
		\frac{
			\lvert\misClassfSet{\gammaEdge}{\alphaEdge}\rvert
			+
			\lvert\misClassfSet{\alphaEdge}{\gammaEdge}\rvert}{2}
		&\geq
		\left(1 -\tau\right) \frac{n}{6}
		\sum_{x \in \{A,B\}}
		\sum_{y \in \{A,B\} \setminus \sigma(x)}
		\sum_{i \in [3]}
		\sum_{j \in [3]}
		\kUsrs{i}{j}{x}{y},
		\\
		\frac{
			\lvert\misClassfSet{\gammaEdge}{\betaEdge}\rvert
			+
			\lvert\misClassfSet{\betaEdge}{\gammaEdge}\rvert}{2}
		&\geq
		\left(1 -\tau\right) \frac{n}{3}
		\sum_{x \in \{A,B\}}
		\sum_{y \in \{A,B\} \setminus \sigma(x)}
		\sum_{i \in [3]}
		\sum_{j \in [3]}
		\kUsrs{i}{j}{x}{y}.
		\label{eq:regime3_abg_2}
	\end{split}
\end{align}

Finally, the second summation term in the RHS of \eqref{ineq:largeErr} is upper bounded by
\begin{align}
	&\sum\limits_{T \in \TlargeTwo} 
	\left\vert \mathcal{X}(T) \right\vert
	\exp\left(-\left(\lvert\diffEntriesSet\rvert \Ir 
	+ \Pleftrightarrow{\alphaEdge}{\betaEdge} \: \Ig 
	+ \Pleftrightarrow{\alphaEdge}{\gammaEdge} \: \IcOne 
	+ \Pleftrightarrow{\betaEdge}{\gammaEdge} \: \IcTwo  \right)\right)
	\nonumber\\
	&\leq
	\sum\limits_{T \in \TlargeTwo} 
	\left\vert \mathcal{X}(T) \right\vert
	\exp\left(
	- c_4 m n \frac{\log m}{n}
	\right)
	\nonumber \\
	&\phantom{\leq
		\sum\limits_{T \in \TlargeTwo}}
	\exp \left(
	- (1-\relabelConst)
	\left(
	\left(
	\intraTau m \Ir + \frac{n\Ig}{6} 
	\right)
	\sum\limits_{x \in \{A,B\}}
	\sum\limits_{i \in [3]}
	\sum\limits_{j \in [3]\setminus \sigma(i|x)}
	\kUsrs{i}{j}{x}{\sigma(x)} 
	\right.
	\right.
	\nonumber\\
	&\phantom{\leq \sum\limits_{T \in \TlargeTwo}\exp}
	\left.
	\left.
	+ \left(
	\interTau m \Ir + \frac{n \IcOne}{6} + \frac{ (g-1) n \IcTwo}{6} 
	\right)
	\sum_{x\in\{A,B\}}\sum_{y\in\{A,B\}\setminus x} \sum_{i\in[3]} \sum_{j\in[3]} \kUsrs{i}{j}{x}{y} 
	\right)
	\right)
	\label{eq:largeErrConvg_R1_2}\\
	&\leq 
	\sum\limits_{T \in \TlargeTwo} 
	\exp
	\left(
	\log n
	\left(
	\sum_{x \in \{A,B\}} 
	\sum_{i \in [3]} 
	\sum_{(y,j) \neq \sigma(x,i)} 
	\kUsrs{i}{j}{x}{y}
	\right)
	\right)
	\times
	\left(2^{6}\right)^{m}
	\exp\left(
	- c_4 m \log m
	\right)
	\nonumber\\
	&\phantom{\leq}
	\times 
	\exp \left(
	- (1-\relabelConst) (1+\epsilon) \log n
	\left(
	\sum_{x \in \{A,B\}} 
	\sum_{i \in [3]} 
	\sum_{(y,j) \neq \sigma(x,i)} 
	\kUsrs{i}{j}{x}{y}
	\right)
	\right)
	\label{eq:largeErrConvg_R1_3}\\
	&\leq 
	\sum\limits_{T \in \TlargeTwo} 
	\left(2^{6}\right)^{m}
	\exp\left(
	- c_4 m \log m
	\right)
	\exp
	\left(
	\log n
	\left(
	\sum_{x \in \{A,B\}} 
	\sum_{i \in [3]} 
	\sum_{(y,j) \neq \sigma(x,i)} 
	\kUsrs{i}{j}{x}{y}
	\right)
	\right)
	\nonumber\\
	&\phantom{\leq}
	\times 
	\exp \left(
	- \left(1+\frac{\epsilon}{2}\right) \log n
	\left(
	\sum_{x \in \{A,B\}} 
	\sum_{i \in [3]} 
	\sum_{(y,j) \neq \sigma(x,i)} 
	\kUsrs{i}{j}{x}{y}
	\right)
	\right)
	\label{eq:largeErrConvg_R1_3_1}\\
	&=
	\exp\left(
	-m \left(c_4 \log m - 6 \log 2\right) 
	\right)
	\sum\limits_{T \in \TlargeTwo} 
	\exp \left(
	- \frac{\epsilon}{2} \log n
	\left(
	\sum_{x \in \{A,B\}} 
	\sum_{i \in [3]} 
	\sum_{(y,j) \neq \sigma(x,i)} 
	\kUsrs{i}{j}{x}{y}
	\right)
	\right)
	\nonumber\\
	&=
	\exp\left(
	-m \left(c_4 \log m - 6 \log 2\right) 
	\right)
	\sum_{\ell = 0}^{\relabelConst n}
	\left| \left\{
	\sum_{x \in \{A,B\}} 
	\sum_{i \in [3]} 
	\sum_{(y,j) \neq (\sigma(x),\sigma(i|x))} 
	\kUsrs{i}{j}{x}{y} = \ell
	\right\} \right|
	\exp
	\left(
	- \frac{\epsilon \log n}{2}
	\ell
	\right)
	\label{eq:largeErrConvg_R1_5}\\
	&=
	\exp\left(
	-m \left(c_4 \log m - 6 \log 2\right) 
	\right)
	\left[
	\left| \left\{
	\sum_{x \in \{A,B\}} 
	\sum_{i \in [3]} 
	\sum_{(y,j) \neq (\sigma(x),\sigma(i|x))} 
	\kUsrs{i}{j}{x}{y} = 0
	\right\} \right|
	\right.
	\nonumber\\
	&\phantom{=}
	\left.
	+ 
	\sum_{\ell = 1}^{\relabelConst n}
	\left| \left\{
	\sum_{x \in \{A,B\}} 
	\sum_{i \in [3]} 
	\sum_{(y,j) \neq (\sigma(x),\sigma(i|x))} 
	\kUsrs{i}{j}{x}{y} = \ell
	\right\} \right|
	\exp
	\left(
	- \frac{\epsilon \log n}{2}
	\ell
	\right)
	\right]
	\label{eq:largeErrConvg_R1_5_1}\\
	&=
	\exp\left(
	-m \left(c_4 \log m - 6 \log 2\right) 
	\right)
	\left[
	1
	+
	\sum_{\ell = 1}^{\relabelConst n}
	\binom{\ell + 6}{6}
	\exp
	\left(
	- \frac{\epsilon \log n}{2}
	\ell
	\right)
	\right]
	\label{eq:largeErrConvg_R1_6}\\
	&\leq
	\exp\left(
	-m \left(c_4 \log m - 6 \log 2\right) 
	\right)
	\left[
	1
	+
	\sum_{\ell_2 = 1}^{\relabelConst n}
	2^{\left(\displaystyle \ell_2 + 6\right)}\:\:
	n^{\left( \displaystyle -  \frac{\epsilon}{2} \ell_2\right)}
	\right]
	\label{eq:largeErrConvg_R1_7}\\
	&\leq
	\exp\left(
	-m \left(c_4 \log m - 6 \log 2\right) 
	\right)
	\left[
	1
	+
	2^{\displaystyle 6}
	\sum_{\ell_2 = 1}^{\infty}
	\left(2 n^{\left( \displaystyle -  \frac{\epsilon}{2}\right)}\right)^{\displaystyle \ell_2}
	\right]
	\nonumber\\
	&=
	\exp\left(
	-m \left(c_4 \log m - 6 \log 2\right) 
	\right)
	\left[
	1
	+
	2^{\displaystyle 6}
	\frac{2 n^{\left( \displaystyle -  \frac{\epsilon}{2}\right)}}{1 - 2 n^{\left( \displaystyle -  \frac{\epsilon}{2}\right)}}
	\right],
	\label{eq:largeErrConvg_R1_8}
\end{align}
where 
\begin{itemize}[leftmargin=*]
	\item \eqref{eq:largeErrConvg_R1_2} follows from \eqref{eq:regime3_Pd_2} and \eqref{eq:regime3_abg_2};
	\item \eqref{eq:largeErrConvg_R1_3} follows from the sufficient conditions in \eqref{eq:suffCond_1}, \eqref{eq:suffCond_2} and \eqref{eq:suffCond_3};
	\item \eqref{eq:largeErrConvg_R1_3_1} follows from $\tau \leq  (\epsilon\log m - (2+\epsilon)\log 4) / (2 (1 + \epsilon)\log m))$ which implies that $(1-\tau)(1+\epsilon) \geq (1+(\epsilon/2))$;
	\item \eqref{eq:largeErrConvg_R1_5} readily follows from \eqref{eq:regime3_large_1};
	\item in \eqref{eq:largeErrConvg_R1_5_1}, we break the summation into two summations, and use the fact that the enumeration of the first element of the set is independent of the enumeration of the second element;
	\item in \eqref{eq:largeErrConvg_R1_6}, we use the fact that the number of integer solutions of $\sum_{i=1}^{n} x_i = s$ is equal to $\binom{s+n-1}{n-1}$;
	\item in \eqref{eq:largeErrConvg_R1_7}, we bound each binomial coefficient by $\binom{a}{b} \leq \sum_{i=0}^a \binom{a}{i} \leq 2^a$, for $a \geq b$;
	\item and finally in \eqref{eq:largeErrConvg_R1_8}, we evaluate the infinite geometric series, where $\epsilon > (2 \log 2) / \log n$.
\end{itemize}

By \eqref{eq:R1_upperBound} and \eqref{eq:largeErrConvg_R1_8}, the RHS of \eqref{ineq:largeErr_1} is upper bounded by
\begin{align}
	&
	\lim_{n\rightarrow \infty}
	\sum\limits_{T \in \TlargeErr} 
	\sum\limits_{X \in \mathcal{X}(T)}
	\exp\left(-\left(1+o(1)\right) \left(\lvert\diffEntriesSet\rvert \Ir 
	+ \Pleftrightarrow{\alphaEdge}{\betaEdge} \: \Ig 
	+ \Pleftrightarrow{\alphaEdge}{\gammaEdge} \: \IcOne 
	+ \Pleftrightarrow{\betaEdge}{\gammaEdge} \: \IcTwo  \right)\right)
	\nonumber \\
	&\qquad \leq
	\lim_{n\rightarrow \infty}
	\exp\left(
	-\left(m \left(c_2 \log m - 6 \log 2\right) 
	+ n \left(c_3 \log n - \log 6\right)
	\right)\right)
	\nonumber\\
	&\qquad \phantom{\leq}
	+ 
	\lim_{n\rightarrow \infty}
	\exp\left(
	-m \left(c_4 \log m - 6 \log 2\right) 
	\right)
	\left[
	1
	+
	2^{\displaystyle 6}
	\frac{2 n^{\left( \displaystyle -  \frac{\epsilon}{2}\right)}}{1 - 2 n^{\left( \displaystyle -  \frac{\epsilon}{2}\right)}}
	\right]
	\nonumber\\
	&\qquad =
	0.
\end{align}
Note that as $n$ tends to infinity, the condition on $\epsilon$ becomes 
\begin{align}
	\epsilon > \lim_{n\rightarrow \infty}
	\max\left\{
	\frac{2 \log 2}{\log n}, \: 
	\frac{4 \log 2}{\log(c_1m)},
	\frac{2\log c_1}{\log(m/c_1)}
	\right\} = 0.
\end{align}
This completes the proof of Lemma~\ref{lemma:eta_omega(nm)}.
\hfill $\blacksquare$

\section{Proofs of Lemmas for Converse Proof of Theorem~\ref{Thm:p_star}}
\subsection{Proof of Lemma~\ref{lm:lowerB_prob}}
\label{app:lowerB_prob_proof}
We will follow a similar proof technique to that of Lemma~5.2 in \cite{zhang2016minimax}. Recall that we denote by $B^{(\mu)}$ a Bernoulli random variable with parameter $\mu$, that is, $\mathbb{P}[B^{(\mu)}=1] = 1- \mathbb{P}[B^{(\mu)}=0] = \mu$. 

For $p=\Theta\left(\frac{\log n}{n}\right)$ and a constant $\theta \in[0,1]$, we can define $X(p,\theta) = \log\left(\frac{1-\theta}{\theta}\right) B^{(p)} (2B^{(\theta)}-1)$, with $c'=\log\left(\frac{1-\theta}{\theta}\right)$, that is, 
\begin{align*}
	X(p,\theta) = \left\{
	\begin{array}{ll}
		-\log\left(\frac{1-\theta}{\theta}\right) & \textrm{w.p. } p(1-\theta),\\
		0 & \textrm{w.p. } 1-p,\\
		\log\left(\frac{1-\theta}{\theta}\right)  & \textrm{w.p. }p\theta.
	\end{array}
	\right.
\end{align*}
Then, we can evaluate the moment generating function of $X(p,\theta)$ at $t=1/2$ as
\begin{align}
	M_{X(p,\theta)}\left(\frac{1}{2}\right) 
	&= \mathbb{E}\left[\exp(X/2)\right]
	\nonumber\\
	&= 
	p(1-\theta)  \exp\left(-\frac{1}{2}\log\left(\frac{1-\theta}{\theta}\right)\right) +  (1-p) + p\theta  \exp\left(\frac{1}{2}\log\left(\frac{1-\theta}{\theta}\right)\right)\nonumber\\
	&=
	p (1-\theta) \sqrt{\frac{\theta}{1-\theta}} +  (1-p) + p\theta \sqrt{\frac{1-\theta}{\theta}}  \nonumber\\
	&= 2p \sqrt{\theta(1-\theta)} + 1-p,
	\label{eq:M-X}
\end{align}
which implies 
\begin{align}
	-\log M_{X(p,\theta)}\left(\frac{1}{2}\right) = (1+o(1))  (\sqrt{1-\theta} - \sqrt{\theta})^2 p.
\end{align}
We also define $\widehat{X}=\widehat{X}(p,\theta)$ as a new random variable with the same range as  $X(p,\theta)$, and probability mass function given by
\begin{align*}
	f_{\widehat{X}} (x) = \frac{\exp(\frac{x}{2}) f_X(x)}{M_X(\frac{1}{2})}.
\end{align*}
More precisely, we have
\begin{align*}
	\widehat{X}(p,\theta) = \left\{
	\begin{array}{ll}
		-\log\left(\frac{1-\theta}{\theta}\right) & \textrm{w.p. } \frac{p\sqrt{\theta(1-\theta)}}{M_X(\frac{1}{2})},\\
		0 & \textrm{w.p. } \frac{1-p}{M_X(\frac{1}{2})},\\
		\log\left(\frac{1-\theta}{\theta}\right)  & \textrm{w.p. }\frac{p\sqrt{\theta(1-\theta)}}{M_X(\frac{1}{2})}.
	\end{array}
	\right.
\end{align*}
Then it is straightforward to see that 
\begin{align}
	\mathbb{E}[\widehat{X}(p,\theta)] 
	&=0
	\label{eq:E-hat-X}\\
	\mathsf{Var}[\widehat{X}(p,\theta)]
	&= 
	\frac{2p\sqrt{\nu(1-\theta)}}{2p \sqrt{\theta (1-\theta)} + 1-p} \left(\log\frac{1-\theta}{\theta}\right)^2 = O(p).\label{eq:Var-hat-X}
\end{align}
Next, for $\mu,\nu = \Theta\left(\frac{\log n}{n}\right) [0,1]$, define $Y(\mu,\nu)=c (B^{(\mu)} - B^{(\nu)})$, where $c=\log\left(\frac{(1-\mu)\nu}{(1-\nu)\mu}\right)$.  More precisely, we have  
\begin{align*}
	Y(\mu,\nu) =\left\{
	\begin{array}{ll}
		-\log\left(\frac{(1-\mu)\nu}{(1-\nu)\mu}\right) & \textrm{w.p. } (1-\mu)\nu,\\
		0 & \textrm{w.p. } (1-\mu)(1-\nu) + \mu\nu,\\
		\log\left(\frac{(1-\mu)\nu}{(1-\nu)\mu}\right) & \textrm{w.p. } \mu(1-\nu).
	\end{array}
	\right.
\end{align*}
The moment generating function of $Y(\mu,\nu)$ at $t=1/2$ is given by
\begin{align}
	M_{Y(\mu,\nu)}\left(\frac{1}{2}\right) &= \mathbb{E}[\exp(Y/2)] 
	\nonumber\\
	&= (1-\mu)\nu \exp(-c/2) + \mu(1-\nu)\exp(c/2) + (1-\mu)(1-\nu) + \mu\nu\nonumber\\
	&= (1-\mu)\nu \sqrt{\frac{(1-\nu)\mu}{(1-\mu)\nu}}+ (1-\nu)\mu \sqrt{\frac{(1-\mu)\nu}{(1-\nu)\mu}}
	+ (1-\mu)(1-\nu) + \mu\nu\nonumber\\
	&= 2 \sqrt{(1-\mu)(1-\nu)\mu\nu}
	+ (1-\mu)(1-\nu) + \mu\nu\nonumber\\
	&= \left(\sqrt{\mu\nu}+\sqrt{(1-\mu)(1-\nu)}\right)^2,
	\label{eq:M-Y}
\end{align}
which implies 
\begin{align}
	-\log M_{Y(\mu,\nu)}\left(\frac{1}{2}\right) = (1+o(1)) \left(\sqrt{\nu}-\sqrt{\mu}\right)^2.
\end{align}
Define a random variable $\widehat{Y} = \widehat{Y}(\mu,\nu)$ with $f_{\widehat{Y}} (y) = \frac{\exp(\frac{y}{2}) f_Y(y)}{M_Y(\frac{1}{2})}$. Then, for $\widehat{Y}(\mu,\nu)$, we have
\begin{align}
	\mathbb{E}[\widehat{Y}(\mu,\nu)] &= \frac{1}{M_Y(\frac{1}{2})} \left[-(1-\mu)\nu  \exp\left(-\frac{c}{2}\right) \cdot c
	+ 
	\mu (1-\nu)  \exp\left(\frac{c}{2}\right) \cdot c \right]\nonumber\\
	&= \frac{1}{M_Y(\frac{1}{2})} \left[
	-(1-\mu)\nu \sqrt{\frac{(1-\nu)\mu}{(1-\mu)\nu}} \cdot c
	+ (1-\nu)\mu \sqrt{\frac{(1-\mu)\nu}{(1-\nu)\mu}} \cdot c
	\right]\nonumber\\
	&= \frac{1}{M_Y(\frac{1}{2})} \left[
	-\sqrt{(1-\mu)(1-\nu)\mu\nu} \cdot c
	+\sqrt{(1-\mu)(1-\nu)\mu\nu} \cdot c
	\right] = 0,
	\label{eq:E-hat-Y}
\end{align}
and 
\begin{align}
	\mathsf{Var}[\widehat{Y}(\mu,\nu)] 
	&= \frac{\sqrt{(1-\mu)(1-\nu)\mu\nu}}{\left(\sqrt{\mu\nu}+\sqrt{(1-\mu)(1-\nu)}\right)^2} \left(\log\frac{(1-\mu)\nu}{(1-\nu)\mu}\right)^2 = O\left(\sqrt{\mu\nu} \right),
	\label{eq:Var-hat-Y}
\end{align}
where $\mu, \nu = \Theta\left(\frac{\log n}{n}\right)$.

Now, we can rewrite the random variable of interest in the lemma as 
\begin{align}
	B &\coloneqq
	\sum_{i=1}^{n_1} \log\left(\frac{1-\theta}{\theta}\right) B_i^{(p)} \left(2 B_i^{(\theta)} \!\!-\!\! 1\right)
	+  \sum_{j=n_1+1}^{n_1+n_2} \log\left(\frac{(1-\beta)\alpha}{(1-\alpha)\beta}\right)  \left( B_j^{(\beta)} \!\!-\!\! B_j^{(\alpha)} \right)\nonumber\\
	&\phantom{\coloneqq} 
	+  \!\!\!\sum_{k=n_1+n_2+1}^{n_1+n_2+n_3} \log\left(\frac{(1-\gamma)\alpha}{(1-\alpha)\gamma}\right)  \left( B_k^{(\gamma)} \!\!-\!\! B_k^{(\alpha)} \right)
	+  \!\!\!\sum_{\ell=n_1+n_2+n_3+1}^{n_1+n_2+n_3+n_4} \log\left(\frac{(1-\gamma)\beta}{(1-\beta)\gamma}\right)  \left( B_\ell^{(\gamma)} \!\!-\!\! B_\ell^{(\beta)} \right)\!,
	\nonumber\\
	&=
	\sum_{i=1}^{n_1}  {X}_i(p,\theta)
	+  \sum_{j=n_1+1}^{n_1+n_2}  {Y}_j(\beta,\alpha) 
	+  \sum_{k=n_1+n_2+1}^{n_1+n_2+n_3}  {Y}_k(\gamma,\alpha)
	+  \sum_{\ell=n_1+n_2+n_3+1}^{n_1+n_2+n_3+n_4}  {Y}_\ell(\gamma,\beta).
	\label{eq:B-in-YX}
\end{align}
Therefore, we can write
\begin{align}
	&\mathbb{P} \left[B \geq 0 \right] 
	\nonumber\\
	&= \mathbb{P}\left[\sum_{i=1}^{n_1}  {X}_i(p,\theta)
	+  \sum_{j=n_1+1}^{n_1+n_2}  {Y}_j(\beta,\alpha) 
	+  \sum_{k=n_1+n_2+1}^{n_1+n_2+n_3}  {Y}_k(\gamma,\alpha)
	+  \sum_{\ell=n_1+n_2+n_3+1}^{n_1+n_2+n_3+n_4}  {Y}_\ell(\gamma,\beta) \geq 0\right]\nonumber\\
	&\geq \mathbb{P}\left[0\leq \sum_{i=1}^{n_1}  {X}_i(p,\theta)
	+  \sum_{j=n_1+1}^{n_1+n_2}  {Y}_j(\beta,\alpha) 
	+  \sum_{k=n_1+n_2+1}^{n_1+n_2+n_3}  {Y}_k(\gamma,\alpha)
	+  \sum_{\ell=n_1+n_2+n_3+1}^{n_1+n_2+n_3+n_4}  {Y}_\ell(\gamma,\beta) < \xi\right]\nonumber\\
	&\stackrel{(a)}{=}
	\sum_{\mathcal{R}(\xi)}\left[
	\prod_{i=1}^{n_1} f_{X(p,\theta)} (x_i)
	\!\!\prod_{j=n_1+1}^{n_1+n_2} f_{Y(\beta,\alpha)} (y_j)
	\!\!\!\prod_{k=n_1+n_2+1}^{n_1+n_2+n_3} f_{Y(\gamma,\alpha)} (y_k)
	\!\!\!\!\prod_{\ell=n_1+n_2+n_3+1}^{n_1+n_2+n_3+n_4} f_{Y(\gamma,\beta)} (y_\ell)
	\right]
	\nonumber\\ 
	&\stackrel{(b)}{\geq}
	\frac{
		\left(M_{X(p,\theta)}\left(\frac{1}{2}\right)\right)^{n_1}
		\left(M_{Y(\beta,\alpha)}\left(\frac{1}{2}\right)\right)^{n_2}
		\left(M_{Y(\gamma,\alpha)}\left(\frac{1}{2}\right)\right)^{n_3}
		\left(M_{Y(\gamma,\beta)}\left(\frac{1}{2}\right)\right)^{n_4}
	}{\exp\left(\frac{1}{2} \xi\right)} 
	\nonumber\\
	&\phantom{\geq}
	\times
	\sum_{\mathcal{R}(\xi)}
	\left[\prod_{i=1}^{n_1} \frac{\exp\left(\frac{1}{2} x_i \right) f_{X(p,\theta)} (x_i)}{M_{X(p,\theta)}\left(\frac{1}{2}\right)}
	\prod_{j=n_1+1}^{n_1+n_2} \frac{\exp\left(\frac{1}{2} y_j \right) f_{Y(\beta,\alpha)} (y_j)}{M_{Y(\beta,\alpha)}\left(\frac{1}{2}\right)}\right.
	\nonumber\\
	&\phantom{
		\geq \times 
		\sum_{\substack{
				\left\{w_1\right\}_{j=1}^{n_2},
				\left\{y_1\right\}_{k=1}^{n_3}, 
		}}
	}
	\left. \cdot
	\prod_{k=n_1+n_2+1}^{n_1+n_2+n_3} \frac{\exp\left(\frac{1}{2} y_k \right) f_{Y(\gamma,\alpha)} (y_k)}{M_{Y(\gamma,\alpha)}\left(\frac{1}{2}\right)}
	\prod_{\ell=n_1+n_2+n_3+1}^{n_1+n_2+n_3+n_4} \frac{\exp\left(\frac{1}{2} y_\ell \right) f_{Y(\gamma,\beta)} (y_\ell)}{M_{Y(\gamma,\beta)}\left(\frac{1}{2}\right)}\right]\nonumber\\
	& = 
	\exp\left(
	n_1 \log M_{X(p,\theta)} \left(\frac{1}{2}\right) 
	+ n_2 \log M_{Y(\beta,\alpha)} \left(\frac{1}{2}\right) 
	+ n_3 \log M_{Y(\gamma,\alpha)}\left(\frac{1}{2}\right)
	+n_4 \log M_{Y(\gamma,\beta)}\left(\frac{1}{2}\right)-\frac{1}{2} \xi\right)\nonumber\\
	&\phantom{=} \times 
	\sum_{\mathcal{R}(\xi)} 
	\left[\prod_{i=1}^{n_1} f_{\widehat{X}(p,\theta)} (x_i) 
	\prod_{j=n_1+1}^{n_1+n_2} f_{\widehat{Y}(\beta,\alpha)} (y_j)
	\prod_{k=n_1+n_2+1}^{n_1+n_2+n_3} f_{\widehat{Y}(\gamma,\alpha)} (y_k)
	\prod_{\ell=n_1+n_2+n_3+1}^{n_1+n_2+n_3+n_4}  f_{\widehat{Y}(\gamma,\beta)} (y_\ell)\right]\nonumber\\
	&\stackrel{(c)}{=} \exp\left( -(1+o(1))(
	n_1 I_r
	+ n_2 \Ig
	+ n_3 \IcOne
	+ n_4 \IcTwo)
	-\frac{1}{2} \xi\right)\nonumber\\ 
	&\phantom{=} \times \mathbb{P} \left[0\leq \sum_{i=1}^{n_1}  \widehat{X}_i(p,\theta)
	+  \sum_{j=n_1+1}^{n_1+n_2}  \widehat{Y}_j(\beta,\alpha) 
	+  \sum_{k=n_1+n_2+1}^{n_1+n_2+n_3}  \widehat{Y}_k(\gamma,\alpha)
	+  \sum_{\ell=n_1+n_2+n_3+1}^{n_1+n_2+n_3+n_4}  \widehat{Y}_\ell(\gamma,\beta) < \xi\right],
	\label{eq:Xi_ineq_1}
\end{align}
where $(a)$ follows from independence of  $X_i(\cdot,\cdot)$'s and $Y_i(\cdot,\cdot)$'s variables in \eqref{eq:B-in-YX} since their indices are different, hence they are generated from independent Bernoulli random variables, and note that the summation in $(a)$ is over 
\begin{align*}
	\mathcal{R}(\xi)=\left\{
	\left\{x_i\right\}_{i=1}^{n_1},
	\left\{y_j\right\}_{j=n_1+1}^{n_1+n_2+n_3+n_4}:   0\leq \sum_{i=1}^{n_1} x_i + \sum_{j=n_1+1}^{n_1+n_2+n_3+n_4} y_j  < \xi
	\right\}.
\end{align*}
Moreover, $(b)$ holds since $\exp \left( \frac{1}{2} \left(\sum_{i=1}^{n_1} x_i + \sum_{j=n_1+1}^{n_1+n_2+n_3+n_4} y_j \right) \right) < \exp\left(\frac{1}{2} \xi \right)$; and $(c)$ holds due to the independence of $\widehat{Y}_i(\cdot,\cdot)$'s and $\widehat{X}_i(\cdot,\cdot)$'s. Finally $I_r$, $\Ig$, $\IcOne$ and $\IcTwo$ in \eqref{eq:Xi_ineq_1} are given by
\begin{align*}
	I_r &= p\left(\sqrt{1-\theta}-\sqrt{\theta}\right)^2,\\
	\Ig &= \left(\sqrt{\alpha}-\sqrt{\beta}\right)^2,\\
	\IcOne &= \left(\sqrt{\alpha}-\sqrt{\gamma}\right)^2,\\
	\IcTwo &= \left(\sqrt{\beta}-\sqrt{\gamma}\right)^2,
\end{align*}
which follow from \eqref{eq:log-M-X} and \eqref{eq:log-M-Y}. 

Note that \eqref{eq:Xi_ineq_1} holds for any value of $\xi$. In particular, we can choose $\xi_n$ satisfying 
\begin{align}
	&\lim_{n\rightarrow \infty} \frac{\xi_n}{n_1 I_r + n_2\Ig + n_3\IcOne + n_4\IcTwo}=0,\label{eq:xi-small}\\
	&\lim_{n\rightarrow \infty} \frac{n_1 p + n_2\sqrt{\alpha \beta} + n_3\sqrt{\alpha \gamma} + n_4\sqrt{\beta \gamma}}{\xi_n^2} =0.
	\label{eq:xi-large}
\end{align}
Therefore, \eqref{eq:xi-small} implies that the exponent in \eqref{eq:Xi_ineq_1} can be rewritten as 
\begin{align}
	-(1+o(1))\left(
	n_1 I_r
	+ n_2 \Ig
	+ n_3 \IcOne
	+ n_4 \IcTwo \right)
	-\frac{1}{2} \xi_n = 
	-(1+o(1))\left(
	n_1 I_r
	+ n_2 \Ig
	+ n_3 \IcOne
	+ n_4 \IcTwo\right).
	\label{eq:Xi_ineq_1:exp}
\end{align}
Moreover, the probability in \eqref{eq:Xi_ineq_1} can be bounded as
\begin{align}
	\mathbb{P} &\left[0\leq \sum_{i=1}^{n_1}  \widehat{X}_i(p,\theta)
	+  \sum_{j=n_1+1}^{n_1+n_2}  \widehat{Y}_j(\beta,\alpha) 
	+  \sum_{k=n_1+n_2+1}^{n_1+n_2+n_3}  \widehat{Y}_k(\gamma,\alpha)
	+  \sum_{\ell=n_1+n_2+n_3+1}^{n_1+n_2+n_3+n_4}  \widehat{Y}_\ell(\gamma,\beta) < \xi_n\right]\nonumber\\
	&\stackrel{(a)}{\geq} \frac{1}{2} - \mathbb{P} \left[ \sum_{i=1}^{n_1}  \widehat{X}_i(p,\theta)
	+  \sum_{j=n_1+1}^{n_1+n_2}  \widehat{Y}_j(\beta,\alpha) 
	+  \!\!\sum_{k=n_1+n_2+1}^{n_1+n_2+n_3}  \widehat{Y}_k(\gamma,\alpha)
	+  \!\!\!\sum_{\ell=n_1+n_2+n_3+1}^{n_1+n_2+n_3+n_4}  \widehat{Y}_\ell(\gamma,\beta) \geq \xi_n\right]\nonumber\\
	& \stackrel{(b)}{\geq} \frac{1}{2} - \frac{n_1 \mathsf{Var}[\widehat{X}(p,\theta)] + 
		n_2\mathsf{Var} [\widehat{Y}(\beta,\alpha)] + n_3\mathsf{Var}[\widehat{Y}(\gamma,\alpha)] + 
		n_4\mathsf{Var}[\widehat{Y}(\gamma,\beta)]  
	}{\xi_n^2}\nonumber\\
	&\stackrel{(c)}{=}\frac{1}{2} -\frac{n_1 O(p) + n_2O(\sqrt{\alpha \beta}) + n_3 O(\sqrt{\alpha \gamma}) + n_4 O(\sqrt{\beta \gamma})}{\xi_n^2}\nonumber\\
	&\stackrel{(d)}{=} \frac{1}{2} - o(1)>\frac{1}{4}, 
	\label{eq:Xi_ineq_1:prob}
\end{align}
where $(a)$ is due to the symmetry of random variables $\widehat{X}(\cdot,\cdot)$ and $\widehat{Y}(\cdot,\cdot)$, $(b)$ follows from Chebyshev's inequality, in $(c)$ the variances are replaced by \eqref{eq:Var-hat-X} and \eqref{eq:Var-hat-Y}, and finally $(d)$ is a consequence of \eqref{eq:xi-large}. Plugging \eqref{eq:Xi_ineq_1:exp} and \eqref{eq:Xi_ineq_1:prob} in \eqref{eq:Xi_ineq_1}, we get the desired bound in Lemma~\ref{lm:lowerB_prob}.
$\hfill \square$

\subsection{Proof of Lemma~\ref{lm:randGraph}}
\label{app:randGraph}
The proof hinges on the alteration method \cite{alon2004probabilistic}. We present a constructive proof for the existence of subgroups $\tG1A$ and $\tG2A$. Let $r= \frac{n}{\log^3 n}$. We start by sampling two random subsets $\uG{i}{A}$ from $G_i^A$ of size $|\uG{i}A|= 2r$, for $i=1,2$. Then, we prune these sets to obtain the desired edge free subsets. To this end, for any pair of nodes $f,g \in \uG1A \cup \uG2A$, we remove both $f$ and $g$ from $\uG1A \cup \uG2A$ if $(f,g)\in E$. We continue this process until the remaining set of nodes is edge-free. Let $\cP$ be the set of nodes we remove from $\uG1A \cup \uG2A$ throughout the pruning process. The expected value of $\cP$ can be upper bounded by 
\begin{align*}
	\E[|\cP|] &\leq 2\E\left[\sum_{f,g\in \uG1A \cup \uG2A} \indicatorFn{(f,g)\in E}\right]\nonumber\\
	&= 2\sum_{f,g\in \uG1A} \E[\indicatorFn{(f,g)\in E}] + 2\sum_{f,g\in \uG2A} \E[\indicatorFn{(f,g)\in E}]+ 
	2\sum_{f\in \uG1A} \sum_{g\in \uG2A} \E[\indicatorFn{(f,g)\in E}] \nonumber\\
	&= 2\sum_{f,g\in \uG1A} \alpha + 2\sum_{f,g\in \uG2A} \alpha + 
	2\sum_{f\in \uG1A} \sum_{g\in \uG2A} \beta \nonumber\\
	&= 2\binom{2r}{2} \alpha + 2\binom{2r}{2} \alpha + 2(2r)^2 \beta \leq 16 r^2 \alpha
\end{align*}
where the last inequality holds since $\beta <\alpha$. Using Markov's inequality for the non-negative random variable $|\cP|$, we obtain
\begin{align}
	\mathbb{P} \left[|\cP| \geq r 
	\right]
	\leq \frac{\mathbb{E} \left[N\right]}{r}
	\leq \frac{16 n}{\log^3 n} \alpha 
	=
	\Theta \left(\frac{n}{\log^3 n} \times \frac{\log n}{n}\right)
	=
	o(1).
	\label{eq:appC_markov}
\end{align}
Therefore, the number of remaining nodes (after pruning) satisfies
\begin{align*}
	\mathbb{P} \left[|\uG1A\cup\uG2A \setminus \cP| > 3r\right] = \mathbb{P} \left[ |\cP| < r\right] = 1- \mathbb{P} \left[ |\cP| \geq r\right] = 1-o(1).
\end{align*}
Hence, $\uG1A \setminus \cP$ and $\uG2A\setminus \cP$ together have at least $3r$ elements. This, together with the fact that $|\uG1A| = |\uG2A| = 2r$, implies each of $\uG1A \setminus \cP$ and $\uG2A\setminus \cP$ have at least $r$ elements. Therefore, we can choose $r$ from $\uG{i}A \setminus \cP$ to form the desired set $\tG{i}A$, for $i=1,2$. This completes the proof of Lemma~\ref{lm:randGraph}.
$\hfill \square$

\end{document}